\def\eqref#1{equation~\ref{#1}}
\def\1{\bm{1}}
\DeclareMathAlphabet{\mathsfit}{\encodingdefault}{\sfdefault}{m}{sl}
\SetMathAlphabet{\mathsfit}{bold}{\encodingdefault}{\sfdefault}{bx}{n}
\newtheorem{theorem}{Theorem}[section]
\newtheorem{proposition}{Proposition}[section]
\newtheorem{lemma}{Lemma}[section]
\newtheorem{corollary}{Corollary}[section]
\newtheorem{definition}{Definition}[section]
\newlist{Properties}{enumerate}{2}
\setlist[Properties]{label=Property \arabic*., font=\textbf, itemindent=*}
\tikzset{
  node/.style={circle, draw=black!100, thick, on grid}, 
  dangling node/.style={node, fill=black!30}
}
\title{Interventional Fairness on Partially Known Causal Graphs: A Constrained Optimization Approach}
\author{Aoqi Zuo\textsuperscript{1}, Yiqing Li\textsuperscript{1,2,3}, Susan Wei\textsuperscript{1} \& Mingming Gong\textsuperscript{1,3}
\\
\textsuperscript{1}School of Mathematics and Statistics, The University of Melbourne \\ 
\textsuperscript{2}School of Data Science, Fudan University \\
\textsuperscript{3}Mohamed bin Zayed University of Artificial Intelligence\\
\texttt{azuo@student.unimelb.edu.au} \\
\texttt{yiqingli20@fudan.edu.cn} \\
\texttt{\{susan.wei,mingming.gong\}@unimelb.edu.au}
}
\begin{document}

\maketitle

\doparttoc 
\faketableofcontents

\begin{abstract}

Fair machine learning aims to prevent discrimination against individuals or sub-populations based on sensitive attributes such as gender and race. In recent years, causal inference methods have been increasingly used in fair machine learning to measure unfairness by causal effects. However, current methods assume that the true causal graph is given, which is often not true in real-world applications. To address this limitation, this paper proposes a framework for achieving causal fairness based on the notion of interventions when the true causal graph is partially known. The proposed approach involves modeling fair prediction using a Partially Directed Acyclic Graph (PDAG), specifically, a class of causal DAGs that can be learned from observational data combined with domain knowledge. The PDAG is used to measure causal fairness, and a constrained optimization problem is formulated to balance between fairness and accuracy. Results on both simulated and real-world datasets demonstrate the effectiveness of this method.

\end{abstract}
\section{Introduction} \label{sec: Introduction}

Machine learning algorithms have demonstrated remarkable success in automating decision-making processes across a wide range of domains (e.g., hiring decisions \citep{hoffman2018discretion}, recidivism predictions \citep{dieterich2016compas, brennan2009evaluating},  and finance \citep{sweeney2013discrimination, khandani2010consumer}), providing valuable insights and predictions. However, it has become increasingly evident that these algorithms are not immune to biases in training data, potentially perpetuating discrimination against individual or sub-population group with respect to \textit{sensitive attributes}, e.g., gender and race. For example, bias against female was found in a recruiting tool built by one of Amazon's AI team to review job applicants' resume in a period of time \citep{kodiyan2019overview}. 

To achieve fair machine learning, various methods have been proposed with respect to different fairness measures. These methods can be broadly classified into two groups. The first group focuses on developing statistical fairness measures, which typically indicate the statistical discrepancy between individuals or sub-populations, e.g., statistical parity~\citep{dwork2012fairness}, equalized odds~\citep{hardt2016equality}, and predictive parity~\citep{chouldechova2017fair}. 
The second group is grounded in the causal inference framework \citep{pearl2000models}, which emphasizes understanding the causal relationships between the sensitive attribute and decision outcomes and treats the presence of causal effect of the sensitive attribute on the decision as discrimination~\citep{zhang2017causal,kilbertus2017avoiding, kusner2017counterfactual, zhang2018fairness, zhang2018equality,nabi2018fair,wu2019pc,khademi2019fairness,chiappa2019path, russell2017worlds, zhang2018causal, zhang2017anti, kusner2019making, salimi2019interventional, wu2018discrimination, galhotra2022causal, zuo2022counterfactual}. 

Causality-based fairness notions are defined within the framework of Pearl’s ladder of causation, which encompasses interventions and counterfactuals. Build on the highest rung and also the most fine-grained type of inference in the ladder, counterfactual fairness \citep{kusner2017counterfactual, chiappa2019path, russell2017worlds, wu2019counterfactual} requires the full knowledge of structural causal model or the computation of counterfactuals in the sense of \citet{pearl2000models}, thus posing extra challenges compared to the one based on interventions. 
As the most basic and general notion of causal fairness that is testable with observational data, the interventional fairness assesses the unfairness as the causal effect of the sensitive attribute on the outcome on paths defined by specific attributes. In this paper, we aim to achieve interventional fairness.

The majority of existing methods for ensuring causal fairness assume the presence of a causal directed acyclic graph (DAG) \citep{pearl2009causality}, which encodes causal relationships between variables. However, in many real-world scenarios, it is very difficult to fully specify the causal DAG due to a limited understanding of the system under study.
One potential approach is to use causal discovery methods to deduce the causal DAG from observational data \citep{spirtes1991algorithm, colombo2014order, spirtes2000constructing, chickering2002optimal, shimizu2006linear,hoyer2008nonlinear, zhang2009identifiability, peters2014causal, peters2014identifiability}. However, determining the true underlying causal graph solely based on observational data is challenging without strong assumptions about the data generation process, such as linearity \citep{shimizu2006linear} or additive noise~\citep{hoyer2008nonlinear, peters2014causal}. In general cases, causal discovery methods may produce a Markov equivalence class of DAGs that capture the same conditional independencies in the data, represented by a completely partially directed acyclic graph (CPDAG) \citep{spirtes2000causation, chickering2002optimal, meek1995causal, andersson1997characterization, chickering2002learning}. Although incorporating additional background knowledge allows for the discernment of more causal directions, resulting in a maximally partially directed acyclic graph (MPDAG) \citep{meek1995causal}, it is still unlikely to obtain a unique causal DAG.

Following this line of inquiry, a natural question arises: Can we learn interventional fairness when we only have partially knowledge of the causal graph, represented by an MPDAG? \footnote{As CPDAG is a special case of MPDAG without background knowledge, we deal with MPDAG generally.} 
Inspired by \citet{zuo2022counterfactual}, one straightforward way to achieve fair predictions for interventional fairness is to identify the definite non-descendants of the sensitive attribute on an MPDAG (see \cref{sec: IFair}). While this approach guarantees interventional fairness, disregarding the descendants results in a notable decline in performance. Thus, we propose a constrained optimisation problem that balances the inherent competition between accuracy and fairness (see \cref{sec: e-IFair}).
In our approach, to measure interventional fairness in MPDAGs, we model the prediction as the effect of all the observational variables. Interestingly, this modeling technique gives rise to another causal MPDAG, which allows us to discuss the identification of interventional fairness criterion formally. 

In this paper, we assume the absence of selection bias and latent confounders since causal discovery algorithms themselves faces difficulties in such challenging scenarios. These assumptions align with many recent related work \citep{zhang2017causal, chiappa2019path, chikahara2021learning, wu2019counterfactual}. However, we relax the assumption of a fully directed causal DAG. Based on these considerations, our main contributions on achieving interventional fairness on MPDAGs are as follows:   
\begin{itemize}
\itemsep0em
    \item We propose a modeling technique on the predictor, which gives rise to a causal MPDAG on which it is feasible to perform causal inference formally;
    \item We analyze the identification condition over interventional fairness measure on the MPDAG;
    \item We develop a framework for achieving interventional fairness on partially known causal graphs, specifically MPDAGs, as a constrained optimization problem.
\end{itemize}

\section{Background}

\subsection{Structural causal model and causal graph}

The structural causal model (SCM) \citep{pearl2000models} provides a framework for representing causal relations among variables. It comprises a triple $(U,V,F)$, wherein $V$ represents observable endogenous variables and $U$ represents unobserved exogenous variables that cannot be caused by any variable in $V$. The set $F$ consists of functions ${f_1,...,f_n}$, each associated with a variable $V_i \in V$ describing how $V_i$ depends on its direct causes:
$
V_i = f_i(pa_i, U_i)
$ .
Here, $pa_i$ denotes the observed direct causes of $V_i$ and $U_i$ is the set of unobserved direct causes of $V_i$. The exogenous $U_i$'s are required to be mutually independent. The equations in $F$ induce a causal graph $\mathcal{D}$ that represents the relationships between variables, typically in the form of a directed acyclic graph (DAG), where the direct causes of $V_i$ correspond to its parent set in the causal graph.

\textbf{DAGs, PDAGs and CPDAGs.} A directed acyclic graph (DAG) is characterized by directed edges and the absence of directed cycles. When some edges are undirected, it is referred to as a partially directed graph (PDAG). The structure of a DAG captures a collection of conditional independence relationships through the concept of \textit{d-separation} \citep{pearl1988probabilistic}. In cases where multiple DAGs encode the same conditional independence relationships, they are considered to be Markov equivalent. The \textit{Markov equivalence class} of a DAG $\mathcal{D}$ can be uniquely represented by a completed partially directed acyclic graph (CPDAG) $\mathcal{C}$, often denoted as $[\mathcal{C}]$.

\textbf{MPDAGs.} A maximally oriented PDAGs (MPDAG) \citep{meek1995causal} can be obtained by applying Meek's rules in \citet{meek1995causal} to a CPDAG with a background knowledge constraint. To construct an MPDAG $\mathcal{G}$ from a given CPDAG $\mathcal{C}$ and background knowledge $\mathcal{B}$, Algorithm 1 in \citet{perkovic2017interpreting} can be utilized, in which the background knowledge $\mathcal{B}$ is assumed to be the \textit{direct causal information} in the form $X \rightarrow Y$, indicating that $X$ is a direct cause of $Y$.\footnote{It is worth noting that other forms of background knowledge, such as tier orderings, specific model restrictions, or data obtained from previous experiments, can also induce MPDAGs \cite{scheines1998tetrad, hoyer2012causal, hauser2012characterization, eigenmann2017structure, wang2017permutation, rothenhausler2018causal}.}
The subset of Markov equivalent DAGs that align with the background knowledge $\mathcal{B}$ can be uniquely represented by an MPDAG $\mathcal{G}$, denoted as $[\mathcal{G}]$. Both a DAG and a CPDAG can be viewed as special cases of an MPDAG, where the background knowledge is fully known and unknown, respectively.


\textbf{Density.} A density $f$ over $\mathbf{V}$ is \textit{consistent} with a DAG $\mathcal{D}=(\mathbf{V}, \mathbf{E})$ if it can be factorized as $f(\mathbf{v})=\prod_{V_i \in \mathbf{V}} f(v_i | pa(v_i, \mathcal{D}))$ \citep{pearl2009causality, perkovic2017interpreting}. A density $f$ of $\mathbf{V}$ is consistent with an MPDAG $\mathcal{G}=(\mathbf{V}, \mathbf{E})$ if $f$ is consistent with a DAG in $[\mathcal{G}]$.

\subsection{Causal inference and interventional fairness}
The interventions $do(\mathbf{X} = \mathbf{x})$ or the shorthand $do(\mathbf{x})$ represent interventions that force $\mathbf{X}$ to take certain value $\mathbf{x}$. In a SCM, this means the substitution of the structural equation $\mathbf{X}=f_{\mathbf{x}}(pa_{\mathbf{x}}, U_{\mathbf{x}})$ with $\mathbf{X}=\mathbf{x}$. The causal effect of a set of treatments $\mathbf{X}$ on a set of responses $\mathbf{Y}$ can be understood as the post-interventional density of $\mathbf{Y}$ when intervening on $\mathbf{X}$ via the $do$ operator, which can be denoted as $f(\mathbf{Y}=\mathbf{y}|do(\mathbf{X}=\mathbf{x}))$ or $P(\mathbf{y}_\mathbf{x})$ \cite{pearl1995causal, pearl2009causality}. In the context of an MPDAG $\mathcal{G}$, such causal effect is identifiable if it is possible to be uniquely computed. For a formal definition and a brief review on the causal effect identification problem on MPDAG, please refer to \cref{Appendix: Additional related work on causal effect identification on MPDAGs}.

Build on the $do$-operator, interventional fairness criterion \citep{salimi2019interventional} is defined upon \textit{admissible attribute}. An \textit{admissible attribute} is one through which the causal path from the sensitive attribute to the outcome is still considered fair. For example, in a construction company hiring scenario, the attribute \textit{strength} is considered as an \textit{admissible attribute} as it is a valid criterion for assessing job candidates. Despite being causally affected by the attribute \textit{gender}, the causal path `\textit{gender} $\rightarrow$ \textit{strength} $\rightarrow$ \textit{hiring decision}' should not be regarded as discriminatory. 

Formally, let $\mathbf{A}$, $Y$ and $\mathbf{X}$ represent the sensitive attributes, outcome of interest and other observable attributes, respectively. The prediction of $Y$ is denoted by $\hat{Y}$. We say the prediction $\hat{Y}$ is interventionally fair with respect to the sensitive attributes $\mathbf{A}$ if, for any given set of admissible attributes $\mathbf{X}_{ad} \subseteq \mathbf{X}$, it satisfies the following condition:
\begin{definition}[Interventional fairness] \citep{salimi2019interventional} \label{def: interventional fairness}
We say the prediction $\hat{Y}$ is interventionally fair with respect to the sensitive attributes $\mathbf{A}$ if the following holds for any $\mathbf{X}_{ad}=\mathbf{x}_{ad}$:
    \begin{equation*}
        P(\hat{Y}=y|do(\mathbf{A}=\mathbf{a}), do(\mathbf{X}_{ad}=\mathbf{x}_{ad}))=P(\hat{Y}=y|do(\mathbf{A}=\mathbf{a'}), do(\mathbf{X}_{ad}=\mathbf{x}_{ad})),   
    \end{equation*}
for all possible values of $y$ and any value that $\mathbf{A}$ and $\mathbf{X}_{ad}$ can take.
\end{definition}

\section{Problem formulation}
In this section, we focus on the challenge of attaining interventional fairness in PDAGs, particular MPDAGs that can be learnt from observational data using causal discovery algorithms \citep{spirtes1991algorithm,colombo2014order,chickering2002learning}. 
We begin by presenting a basic implication of the definition of interventional fairness as a baseline method for achieving fairness. However, to overcome its limitations, we formulate a constrained optimisation problem.

\subsection{Fairness under MPDAGs as a graphical problem} \label{sec: IFair}
A simple but important implication in achieving interventional fairness is the following:

\begin{lemma}\label{lem: interventional fairness implication}
    Let $\mathcal{G}$ be the causal graph of the given model $(U,V,F)$. Then $\hat{Y}$ will be interventionally fair if it is a function of the admissible set $\mathbf{X}_{ad}$ and non-descendants of $\mathbf{A}$.
\end{lemma}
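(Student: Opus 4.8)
The plan is to show that if $\hat{Y}$ is computed as a function only of the admissible attributes $\mathbf{X}_{ad}$ and the non-descendants of $\mathbf{A}$, then intervening on $(\mathbf{A},\mathbf{X}_{ad})$ leaves the distribution of $\hat{Y}$ unchanged when we switch $\mathbf{A}$ from $\mathbf{a}$ to $\mathbf{a}'$. First I would fix a DAG $\mathcal{D} \in [\mathcal{G}]$ consistent with the density, so that the intervention $do(\mathbf{A}=\mathbf{a}), do(\mathbf{X}_{ad}=\mathbf{x}_{ad})$ is well-defined in the usual SCM sense: replace the structural equations for the variables in $\mathbf{A}$ and $\mathbf{X}_{ad}$ by constants. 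Let $\mathbf{Z}$ denote the non-descendants of $\mathbf{A}$ in $\mathcal{D}$ (excluding $\mathbf{A}$ itself), and write $\hat{Y} = g(\mathbf{X}_{ad}, \mathbf{Z})$ for the measurable function $g$ from the hypothesis.

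The key observation is that the post-interventional distribution of $\mathbf{Z}$ does not depend on the value assigned to $\mathbf{A}$: since no variable in $\mathbf{Z}$ is a descendant of $\mathbf{A}$, the sub-SCM generating $\mathbf{Z}$ involves only structural equations of variables whose ancestral sets exclude $\mathbf{A}$, hence the manipulation $\mathbf{A}=\mathbf{a}$ versus $\mathbf{A}=\mathbf{a}'$ propagates to none of them. More carefully, I would argue by induction on a topological order of $\mathcal{D}$: for any non-descendant $V$ of $\mathbf{A}$, all parents of $V$ are themselves non-descendants of $\mathbf{A}$, so by the induction hypothesis their joint post-interventional law is independent of the choice of $\mathbf{a}$ vs.\ $\mathbf{a}'$, and the structural equation $V = f_V(pa_V, U_V)$ with the independent exogenous $U_V$ then gives the same law for $V$ in both interventional regimes. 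Meanwhile $\mathbf{X}_{ad}$ is held fixed at $\mathbf{x}_{ad}$ by the second intervention, so the pair $(\mathbf{X}_{ad}, \mathbf{Z})$ has the same joint distribution under $do(\mathbf{A}=\mathbf{a}), do(\mathbf{X}_{ad}=\mathbf{x}_{ad})$ as under $do(\mathbf{A}=\mathbf{a}'), do(\mathbf{X}_{ad}=\mathbf{x}_{ad})$. Pushing this joint law through the fixed map $g$ yields equality of the two distributions of $\hat{Y}$, which is exactly \cref{def: interventional fairness}.

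The final step is to lift this from a single DAG to the MPDAG $\mathcal{G}$: a density consistent with $\mathcal{G}$ is consistent with some $\mathcal{D}\in[\mathcal{G}]$, and the notion ``non-descendant of $\mathbf{A}$'' in the statement should be read as definite non-descendant, i.e.\ a non-descendant in every DAG in $[\mathcal{G}]$ (equivalently, not reachable from $\mathbf{A}$ via a possibly-directed path in $\mathcal{G}$); such a vertex is then in particular a non-descendant in the chosen $\mathcal{D}$, so the DAG-level argument applies verbatim. I expect the main obstacle to be purely a matter of care rather than depth: making precise that the interventional distribution is computed via a DAG in the equivalence class and checking that the inductive propagation argument is not circular — that is, verifying that the ancestral closure of $\mathbf{Z}$ genuinely avoids $\mathbf{A}$ so that the recursion bottoms out at exogenous noise variables untouched by the intervention. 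A secondary subtlety is confirming that $\mathbf{X}_{ad}$ being intervened upon (rather than merely conditioned on) is what makes the argument clean, since it removes any need to reason about how $\mathbf{X}_{ad}$ depends on $\mathbf{A}$.
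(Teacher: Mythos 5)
Your proposal is correct and follows essentially the same route as the paper: establish that the (joint) post-interventional distribution of the non-descendants of $\mathbf{A}$ does not depend on the value $\mathbf{a}$, and then push this invariant law, together with the fixed intervened value $\mathbf{x}_{ad}$, through the prediction function. The only difference is presentational: the paper obtains the invariance in one line by citing Rule 3 of the do-calculus for each non-descendant, whereas you re-derive it from the SCM by induction along a topological order (and are somewhat more careful about the joint distribution and the definite-non-descendant reading on an MPDAG), which is a more self-contained but not substantively different argument.
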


 \citet{zuo2022counterfactual} propose a graphical criterion and algorithms for identifying the ancestral relationship between two vertices on an MPDAGs which we review in \cref{appendix: Counterfactual Fairness}. These findings form the basis for learning (exactly) interventionally fair predictions as indicated by \cref{lem: interventional fairness implication}.
However, we claim that by including descendants, the prediction accuracy is possible to be improved. We propose a constrained optimisation problem that balances the inherent competition between accuracy and fairness.


\subsection{$\epsilon$-Approximate interventional fairness} \label{sec: e-IFair}
We first establish an approximation to interventional fairness to address the problem of learning a fair prediction on an MPDAG.

\begin{definition}[$\epsilon$-Approximate Interventional Fairness]
\label{def: e-interventional fairness}
    A predictor $\hat{Y}$ is $\epsilon$-approximate interventionally fair with respect to the sensitive attribute $A$ if for any value of admissible set $\mathbf{X}_{ad}=\mathbf{x}_{ad}$, we have that:
    \begin{equation*}
        |P(\hat{Y}=y|do(\mathbf{A}=\mathbf{a}), do(\mathbf{X}_{ad}=\mathbf{x}_{ad})) - P(\hat{Y}=y|do(\mathbf{A}=\mathbf{a'}), do(\mathbf{X}_{ad}=\mathbf{x}_{ad})|
        \leq \epsilon
    \end{equation*}
    for any $\mathbf{a'} \ne \mathbf{a}$.
\end{definition}

\textbf{Objective.} Our objective is to train a model $h_\theta$ mapping from a subset of observable variables to $Y$ with parameter $\theta$ so as to accurately predict $Y$ while simultaneously achieving $\epsilon$-approximate interventional fairness. To accomplish this, we minimize the loss function $\ell (\hat{y}, y)$ under the fairness constraint. Given a dataset with $n$ observations $\{\mathbf{A}^{(i)}, \mathbf{X}^{(i)}, Y^{(i)} \}$ for $i=1,2,...,n$, where $\mathbf{X}^{(i)}_{ad} \subset \mathbf{X}^{(i)}$ represents the admissible set. For a specific intervention on $\mathbf{X}_{ad}=\mathbf{x}_{ad}$, the objective can be formulated as follows:
\begin{equation}\label{eq: objective}
    \min_{\theta} \quad \frac{1}{n} \sum_{i=1}^{n} \ell (\hat{y_i}, y_i) + \lambda |P(\hat{Y}_{\mathbf{A}\leftarrow \mathbf{a}, \mathbf{X}_{ad} \leftarrow \mathbf{x}_{ad}}) - P(\hat{Y}_{\mathbf{A}\leftarrow \mathbf{a'}, \mathbf{X}_{ad} \leftarrow \mathbf{x}_{ad}})|,
\end{equation}
where $P(\hat{Y}_{\mathbf{A}\leftarrow \mathbf{a}, \mathbf{X}_{ad} \leftarrow \mathbf{x}_{ad}})$ and $P(\hat{Y}_{\mathbf{A}\leftarrow \mathbf{a'}, \mathbf{X}_{ad} \leftarrow \mathbf{x}_{ad}})$ represent the post-interventional distributions of $\hat{Y}$ under $do(\mathbf{A}=\mathbf{a}, \mathbf{X}_{ad}=\mathbf{x}_{ad})$ and $do(\mathbf{A}=\mathbf{a'}, \mathbf{X}_{ad}=\mathbf{x}_{ad})$, respectively. The parameter $\lambda$ balances the trade-off between accuracy and fairness. In this paper, we focus on the modelling and identification of the objective function for a specific intervention on $\mathbf{X}_{ad}$. If we consider multiple values of $\mathbf{X}_{ad}$, the fairness term in \cref{eq: objective} can be replaced with the average of $|P(\hat{Y}_{\mathbf{A}\leftarrow \mathbf{a}, \mathbf{X}_{ad} \leftarrow \mathbf{x}_{ad}}) - P(\hat{Y}_{\mathbf{A}\leftarrow \mathbf{a'}, \mathbf{X}_{ad} \leftarrow \mathbf{x}_{ad}})|$ over different interventions on $\mathbf{X}_{ad}$.

This formulation raises two key challenges: 1) how to design the model $h_\theta$ (for $\hat{Y}$) over an MPDAG; 2) when and how $P(\hat{Y}=y|do(\mathbf{A}=\mathbf{a}, \mathbf{X}_{ad}=\mathbf{x}_{ad}))$ is identifiable using the observational densities within our modeling framework.

\section{Interventional Fairness under MPDAGs} \label{sec: Fairness under MPDAGs}

As mentioned in \cref{sec: Introduction}, the underlying causal DAG $\mathcal{D}$ is usually unknown for real-world datasets. At most, we can obtain a refined Markov equivalence class of DAGs represented by an MPDAG $\mathcal{G}$ from the observational data $(\mathbf{X}, \mathbf{A})$ and additional background knowledge. In this section, we model the predictor as a function of all the other observable variables, regardless of whether they are (possible) descendants or non-descendants of the sensitive attribute. Then we show that such modeling technique leads to another MPDAG over $(\mathbf{X}, \mathbf{A}, \hat{Y})$ which facilitates causal inference.

\subsection{Modeling and verification} \label{sec: Modeling and verification}


\textbf{Modeling.} We illustrate our modeling technique with \cref{fig: Modeling}. Let $\mathcal{D}=(\mathbf{V}, \mathbf{E})$ in \cref{fig: DAG_D} be the underlying causal DAG over the observational variables $\mathbf{X}$ and $\mathbf{A}$, where $\mathbf{V} = \mathbf{X} \cup \mathbf{A}$, and $f$ be the consistent observational density over $\mathbf{V}$, factorized as $f(\mathbf{v})=\prod_{V_{i} \in \mathbf{V}} f(v_{i}|pa(v_{i}, \mathcal{D}))$. We model the fair predictor $\hat{Y}$ as a function $h_{\theta}(\mathbf{x}, \mathbf{a})$ of $\mathbf{x}$ and $\mathbf{a}$ with parameter $\theta$. Under Pearl's SCM framework, our modeling technique implies 
\begin{enumerate}
    \item An underlying causal DAG $\mathcal{D^*}$ over $\mathbf{X}$, $\mathbf{A}$ and $\hat{Y}$ which includes additional edges from $V$ to $\hat{Y}$ for any $V \in \mathbf{V}$ compared with the DAG $\mathcal{D}$, as depicted in \cref{fig: DAG_D*};
    \item The density $f$ over $\mathbf{V} \cup \hat{Y}$, denoted as $f(\mathbf{v}, \hat{y})=f(\hat{y}|\mathbf{v})f(\mathbf{v})$, is consistent with $\mathcal{D^*}$, where $\hat{y}=h_{\theta}(\mathbf{x}, \mathbf{a})$;
    \item The density $f(\mathbf{v})$ is consistent with any MPDAG $\mathcal{G}$, where $\mathcal{D} \in [\mathcal{G}]$, and the density $f(\mathbf{v}, \hat{y})$ is consistent with any MPDAG $\mathcal{G^*}$, where $\mathcal{D^*} \in [\mathcal{G^*}]$. \cref{fig: MPDAG_G} and \cref{fig: MPDAG_G*} are two examples of $\mathcal{G}$ and $\mathcal{G^*}$, respectively.
\end{enumerate}

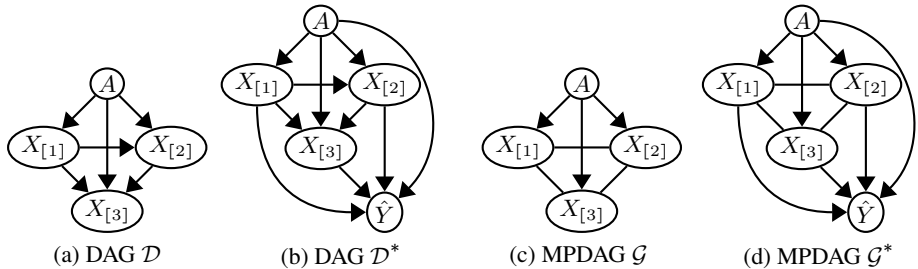
\begin{figure}[ht]
\vskip -0.2in
\centering
\subfloat[DAG $\mathcal{D}$]{%
        \label{fig: DAG_D}
        \begin{tikzpicture}[node distance={12mm}, thick, main/.style = {draw, ellipse, inner sep=1pt, font=\small}]
        \node[main] (1) {$A$}; 
        \node[main] (2) [below left of=1] {$X_{[1]}$};
        \node[main] (3) [below right of=1] {$X_{[2]}$}; 
        \node[main] (4) [below left of=3] {$X_{[3]}$};
        \draw[-{Triangle[length=2mm, width=2.5mm]}] (1) -- (2);
        \draw[-{Triangle[length=2mm, width=2.5mm]}] (1) -- (3);
        \draw[-{Triangle[length=2mm, width=2.5mm]}] (1) -- (4);
        \draw[-{Triangle[length=2mm, width=2.5mm]}] (2) -- (3);
        \draw[-{Triangle[length=2mm, width=2.5mm]}] (2) -- (4);
        \draw[-{Triangle[length=2mm, width=2.5mm]}] (3) -- (4);
        \end{tikzpicture} }
\subfloat[DAG $\mathcal{D^*}$]{%
        \label{fig: DAG_D*}
        \begin{tikzpicture}[node distance={12mm}, thick, main/.style = {draw, ellipse, inner sep=1pt, font=\small}]
        \node[main] (1) {$A$}; 
        \node[main] (2) [below left of=1] {$X_{[1]}$};
        \node[main] (3) [below right of=1] {$X_{[2]}$}; 
        \node[main] (4) [below left of=3] {$X_{[3]}$};
        \node[main] (5) [below right of=4] {$\hat{Y}$};
        \draw[-{Triangle[length=2mm, width=2.5mm]}] (1) -- (2);
        \draw[-{Triangle[length=2mm, width=2.5mm]}] (1) -- (3);
        \draw[-{Triangle[length=2mm, width=2.5mm]}] (1) -- (4);
        \draw[-{Triangle[length=2mm, width=2.5mm]}] (2) -- (3);
        \draw[-{Triangle[length=2mm, width=2.5mm]}] (2) -- (4);
        \draw[-{Triangle[length=2mm, width=2.5mm]}] (3) -- (4);
        \draw[-{Triangle[length=2mm, width=2.5mm]}] (1) to [out=0,in=45,looseness=1.2] (5);
        \draw[-{Triangle[length=2mm, width=2.5mm]}] (2) to [out=270,in=180,looseness=1.2] (5);
        \draw[-{Triangle[length=2mm, width=2.5mm]}] (3) -- (5);
        \draw[-{Triangle[length=2mm, width=2.5mm]}] (4) -- (5);
        \end{tikzpicture} }
\subfloat[MPDAG $\mathcal{G}$]{
        \label{fig: MPDAG_G}
        \begin{tikzpicture}[node distance={12mm}, thick, main/.style = {draw, ellipse, inner sep=1pt, font=\small}]
        \node[main] (1) {$A$}; 
        \node[main] (2) [below left of=1] {$X_{[1]}$};
        \node[main] (3) [below right of=1] {$X_{[2]}$}; 
        \node[main] (4) [below left of=3] {$X_{[3]}$};
        \draw[-{Triangle[length=2mm, width=2.5mm]}] (1) -- (2);
        \draw[-{Triangle[length=2mm, width=2.5mm]}] (1) -- (3);
        \draw[-{Triangle[length=2mm, width=2.5mm]}] (1) -- (4);
        \draw (2) -- (3);
        \draw (2) -- (4);
        \draw (3) -- (4);
        \end{tikzpicture} }
\subfloat[MPDAG $\mathcal{G^*}$]{
\label{fig: MPDAG_G*}
        \begin{tikzpicture}[node distance={12mm}, thick, main/.style = {draw, ellipse, inner sep=1pt, font=\small}]
        \node[main] (1) {$A$}; 
        \node[main] (2) [below left of=1] {$X_{[1]}$};
        \node[main] (3) [below right of=1] {$X_{[2]}$}; 
        \node[main] (4) [below left of=3] {$X_{[3]}$};
        \node[main] (5) [below right of=4] {$\hat{Y}$};
        \draw[-{Triangle[length=2mm, width=2.5mm]}] (1) -- (2);
        \draw[-{Triangle[length=2mm, width=2.5mm]}] (1) -- (3);
        \draw[-{Triangle[length=2mm, width=2.5mm]}] (1) -- (4);
        \draw (2) -- (3);
        \draw (2) -- (4);
        \draw (3) -- (4);
        \draw[-{Triangle[length=2mm, width=2.5mm]}] (1) to [out=0,in=45,looseness=1.2] (5);
        \draw[-{Triangle[length=2mm, width=2.5mm]}] (2) to [out=270,in=180,looseness=1.2] (5);
        \draw[-{Triangle[length=2mm, width=2.5mm]}] (3) -- (5);
        \draw[-{Triangle[length=2mm, width=2.5mm]}] (4) -- (5);
        \end{tikzpicture} }
\caption{(a) is an underlying causal DAG $\mathcal{D}$ with three variables $X_{[1]}$, $X_{[2]}$ and $X_{[3]}$ in $\mathbf{X}$; (b) is a causal DAG $\mathcal{D^*}$ under modeling on $\hat{Y}$; (c) is an example MPDAG $\mathcal{G}$ such that $\mathcal{D} \in [\mathcal{G}]$; (d) is an example MPDAG $\mathcal{G^*}$ such that $\mathcal{D^*} \in [\mathcal{G^*}]$.}
\label{fig: Modeling}
\end{figure}

Next, we introduce \cref{def: augmented-G} to formalize such modeling strategy.
\begin{definition}[Augmented-$\mathcal{G}$ with $\hat{Y}$] \label{def: augmented-G}
    For a partially directed graph $\mathcal{G}=(\mathbf{V}, \mathbf{E})$, let $\mathcal{G^{*}}$ augment $\mathcal{G}$ by (i) adding an additional vertex $\hat{Y}$; (ii) adding the edge $V \rightarrow \hat{Y}$ for each node $V \in \mathbf{V}$ in $\mathcal{G}$. The resulting graph is denoted as $\mathcal{G^{*}}=(\mathbf{V^{*}}, \mathbf{E^{*}})$, where $\mathbf{V^{*}}=\mathbf{V} \cup \hat{Y}$, $\mathbf{E^{*}}=\mathbf{E} \cup \{V \rightarrow \hat{Y}|V \in \mathbf{V}\}$. We call $\mathcal{G^{*}}$ the augmented-$\mathcal{G}$ with $\hat{Y}$.
\end{definition}


Although directly learning an MPDAG from $\mathbf{X}$, $A$ and $\hat{Y}$ is not feasible due to the unobservability of $\hat{Y}$, \cref{theo: mpdag} implies that once we obtain the MPDAG $\mathcal{G}$ from the observational data $(\mathbf{X}, A)$ with background knowledge $\mathcal{B}$, the augmented-$\mathcal{G}$ with $\hat{Y}$, $\mathcal{G^{*}}$, is exactly an MPDAG technically such that $\mathcal{D^{*}} \in [\mathcal{G^{*}}]$. Therefore, the density $f(\mathbf{v}, \hat{y})$ is consistent with $\mathcal{G^*}$. 


\begin{theorem}\label{theo: mpdag}
    For a DAG $\mathcal{D}=(\mathbf{V}, \mathbf{E})$, let $\mathcal{G}$ be an MPDAG consistent with the background knowledge $\mathcal{B}$ such that $\mathcal{D} \in  [\mathcal{G}]$. Let $\mathcal{D^{*}}$ be the augmented-$\mathcal{D}$ with $\hat{Y}$ and $\mathcal{G^{*}}$ be the augmented-$\mathcal{G}$ with $\hat{Y}$. Then the graph $\mathcal{G^{*}}$ is an MPDAG consistent with the background knowledge $\mathcal{B} \cup \{V \rightarrow \hat{Y}|V \in \mathbf{V}\}$ such that $\mathcal{D^{*}} \in [\mathcal{G^{*}}]$.
\end{theorem}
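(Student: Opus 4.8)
The plan is to characterize MPDAGs via their defining property: a PDAG $\mathcal{H}$ with background knowledge $\mathcal{B}$ is an MPDAG iff it is closed under Meek's four orientation rules and $[\mathcal{H}]$ (the DAGs consistent with $\mathcal{H}$) is nonempty. So I would establish three things about $\mathcal{G}^*$: (i) $\mathcal{D}^* \in [\mathcal{G}^*]$, so the equivalence class is nonempty and $\mathcal{G}^*$ has at least one consistent DAG extension; (ii) $\mathcal{G}^*$ is a valid PDAG (no directed cycle, and it actually arises from the CPDAG-plus-background-knowledge construction, i.e. it equals the result of running Meek's rules on the appropriate CPDAG with background knowledge $\mathcal{B} \cup \{V \to \hat Y : V \in \mathbf{V}\}$); (iii) every DAG in $[\mathcal{G}^*]$ has the same skeleton and v-structures as (and is consistent with the background knowledge of) $\mathcal{D}^*$, which pins down the Markov equivalence class correctly.

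First I would check the skeleton and v-structures. The skeleton of $\mathcal{D}^*$ is the skeleton of $\mathcal{D}$ together with edges $V - \hat Y$ for all $V \in \mathbf{V}$; since $\hat Y$ is adjacent to everything, $\hat Y$ is a sink with no non-adjacent pair of parents, so adding $\hat Y$ creates no new v-structure, and the v-structures of $\mathcal{D}^*$ are exactly those of $\mathcal{D}$. Hence the CPDAG of $\mathcal{D}^*$ is the CPDAG of $\mathcal{D}$ augmented with undirected edges $V - \hat Y$ — call this $\mathcal{C}^*$. Next, I would show that running the \citet{perkovic2017interpreting} construction (Meek's rules) on $\mathcal{C}^*$ with background knowledge $\mathcal{B}^* := \mathcal{B} \cup \{V \to \hat Y : V \in \mathbf{V}\}$ yields exactly $\mathcal{G}^*$. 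The key observation is that all new edges $V - \hat Y$ are already oriented $V \to \hat Y$ by the background knowledge, and since $\hat Y$ is a pure sink it can never be the "source" vertex that triggers a new orientation elsewhere in Meek's rules; conversely, an already-oriented edge $V \to \hat Y$ cannot feed back to orient any edge among $\mathbf{V}$ because that would require $\hat Y$ to have an outgoing edge. Therefore the Meek closure decomposes: the orientations among $\mathbf{V}$ are exactly those produced by running Meek's rules on $\mathcal{C}$ with $\mathcal{B}$ — i.e. $\mathcal{G}$ — and the edges to $\hat Y$ stay $V \to \hat Y$. This gives $\mathcal{G}^* = $ augmented-$\mathcal{G}$, matching Definition \ref{def: augmented-G}.

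Then I would confirm $\mathcal{D}^* \in [\mathcal{G}^*]$: $\mathcal{D}^*$ has the same skeleton and v-structures as $\mathcal{G}^*$ (by the skeleton/v-structure computation above combined with $\mathcal{D} \in [\mathcal{G}]$), every directed edge of $\mathcal{G}^*$ is present with the same orientation in $\mathcal{D}^*$ (the edges among $\mathbf{V}$ because $\mathcal{D} \in [\mathcal{G}]$, the edges $V \to \hat Y$ by construction of $\mathcal{D}^*$), and $\mathcal{D}^*$ respects $\mathcal{B}^*$. Finally, combining (i)–(iii) with the fact that $\mathcal{G}^*$ is closed under Meek's rules (it is the Meek closure of $\mathcal{C}^*$ with $\mathcal{B}^*$) lets me invoke the standard characterization to conclude $\mathcal{G}^*$ is an MPDAG with background knowledge $\mathcal{B}^*$ and $\mathcal{D}^* \in [\mathcal{G}^*]$.

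I expect the main obstacle to be step (ii) — rigorously arguing that Meek's rules "decompose", i.e. that adding the sink $\hat Y$ with all-incoming edges does not trigger any additional orientations among the original vertices and that no orientations fail to propagate. This requires a careful case analysis of each of Meek's four rules, checking that any rule instance involving $\hat Y$ either is already satisfied (the $V \to \hat Y$ edge is present) or cannot apply (it would need an edge out of $\hat Y$), and that $\hat Y$'s presence does not create new applicable instances among $\mathbf{V}$. A secondary subtlety is making sure the acyclicity of $\mathcal{G}^*$ is immediate — which it is, since $\hat Y$ is a sink and $\mathcal{G}$ (being an MPDAG) has no directed cycle — and that I correctly handle the definition of "consistent with background knowledge" so that the equivalence-class claim $[\mathcal{G}^*]$ is exactly the subset of $[\mathcal{G}]$-extensions augmented by the $\hat Y$ sink.
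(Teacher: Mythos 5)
Your overall route is the same as the paper's (decompose the Meek closure: the induced subgraph over $\mathbf{V}$ behaves as if $\hat{Y}$ were absent, and the new edges all end up as $V \rightarrow \hat{Y}$), but there is a concrete false step at the start. You claim that since $\hat{Y}$ is adjacent to everything it has ``no non-adjacent pair of parents,'' so $\mathcal{D}^*$ has the same v-structures as $\mathcal{D}$ and the CPDAG of $\mathcal{D}^*$ is the CPDAG of $\mathcal{D}$ with \emph{undirected} edges $V - \hat{Y}$. This is wrong: the parents of $\hat{Y}$ in $\mathcal{D}^*$ are \emph{all} of $\mathbf{V}$, and any two nodes of $\mathbf{V}$ that are non-adjacent in $\mathcal{D}$ form a new unshielded collider $V_1 \rightarrow \hat{Y} \leftarrow V_2$. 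So unless $\mathcal{D}$ is complete, $\mathcal{D}^*$ has many new v-structures and the CPDAG $\mathcal{C}^*$ generally has some (often all) $V \rightarrow \hat{Y}$ edges directed. (Compare the paper's Lemma B.1, which asserts only that the newly directed edges of $\mathcal{C}^*$ are among the $V \rightarrow \hat{Y}$ edges, not that those edges stay undirected.) This matters because those new colliders at $\hat{Y}$ are precisely what could, a priori, trigger further Meek orientations among $\mathbf{V}$ already at the CPDAG stage; ruling that out is the substantive content of the proof, and the paper does it via the Property that any Meek rule instance involving a node with no outgoing edge in the true DAG only orients edges incident to that node, applied separately at the CPDAG, CPDAG-plus-$\mathcal{B}$, and MPDAG-plus-$\{V \rightarrow \hat{Y}\}$ stages.

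The part of your argument that would carry the proof --- the ``decomposition'' claim that no rule instance involving $\hat{Y}$ can orient an edge among $\mathbf{V}$, and that $\hat{Y}$'s presence creates no new applicable instances among $\mathbf{V}$ --- is stated but explicitly deferred as ``requires a careful case analysis,'' so as written the proposal both rests on an incorrect identification of $\mathcal{C}^*$ and postpones the step that actually needs proving. The damage is repairable: because the background knowledge $\mathcal{B} \cup \{V \rightarrow \hat{Y} : V \in \mathbf{V}\}$ orients every $V - \hat{Y}$ edge anyway, you never need to know which of those edges are compelled in $\mathcal{C}^*$; what you do need is the Lemma B.1-type statement that the directed $\mathbf{V}$--$\mathbf{V}$ edges of $\mathcal{C}^*$ coincide with those of $\mathcal{C}$, and then the analogous statements after adding $\mathcal{B}$ and after adding $\{V \rightarrow \hat{Y}\}$, each justified by the sink property of $\hat{Y}$ rather than by the (false) no-new-v-structure claim.
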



\cref{theo: mpdag} is verified by exploring the property of Meek's rules. For more detail, please refer to \cref{Appendix: proof_theo: mpdag}. 
\cref{theo: mpdag} may be of independent interest as it establishes a general modeling method applicable to any problem. The theorem implies that we can directly model $\hat{Y}$ on any MPDAG $\mathcal{G}$ and remarkably, the resulting augmented-$\mathcal{G}$ with $\hat{Y}$ remains an MPDAG consistent with the given background knowledge. This augmented graph enables various causal inference tasks.



\subsection{Identification of fairness criteria}\label{sec: identification}
We denote the augmented MPDAG $\mathcal{G}=(\mathbf{V}, \mathbf{E})$ with $\hat{Y}$, where $\mathbf{V}=\mathbf{X} \cup \mathbf{A}$, by $\mathcal{G^{*}}$. In this section, we discuss the identification of the causal quantity $P(\hat{Y}=y|do(\mathbf{A}=\mathbf{a}), do(\mathbf{X}_{ad}=\mathbf{x}_{ad}))$ in \cref{eq: objective} on $\mathcal{G^{*}}$: when and how we can uniquely express such a causal quantity by the observable densities $f(\mathbf{v})$ over $\mathcal{G}$ and $f(\hat{y}|\mathbf{v})$ over $\mathcal{G^*}$. We start with the general identification problem $P(\hat{Y}=y|do(\mathbf{S}=\mathbf{s}))$, where 
$\mathbf{S} \subseteq \mathbf{V}$.


\citet[Theorem 3.6]{perkovic2020identifying} establishes the condition and formula for the identification of causal effect $P(\hat{Y}=y|do(\mathbf{S}=\mathbf{s}))$ in an MPDAG $\mathcal{G^*}$ with density $f(\mathbf{v}, \hat{y})$. Here, under our modeling strategy on $\hat{Y}$, we present such graphical condition over the MPDAG $\mathcal{G}$ and represent the identification formula based on $f(\mathbf{v})$ in $\mathcal{G}$ and $f(\hat{y}|\mathbf{v})$ in $\mathcal{G^*}$ in \cref{prop: identification formula}. We first provide the notion of \textit{partial causal ordering} as a preliminary.

\textbf{Partial Causal ordering \citet{perkovic2020identifying}.} 
Let $\mathcal{G}=(\mathbf{V}, \mathbf{E})$ be an MPDAG. Since $\mathcal{G}$ may include undirected edges ($-$), it is generally not possible to establish a causal ordering of a node set $\mathbf{V'} \in \mathbf{V}$ in $\mathcal{G}$. Instead, a \textit{partial causal ordering}, $<$, of $\mathbf{V'}$ of $\mathcal{G}$ is defined as a total causal ordering of pairwise disjoint node sets $\mathbf{V_1}$, ... $\mathbf{V_k}$, $k \ge 1$, $\cup^{k}_{i=1}{\mathbf{V_i}}=\mathbf{V'}$ that satisfies the following condition: if $\mathbf{V_i} < \mathbf{V_j}$ and there is an edge between $V_i \in \mathbf{V_i}$ and $V_j \in \mathbf{V_j}$ in $\mathcal{G}$, then $V_i \rightarrow V_j$ is in $\mathcal{G}$.

\citet{perkovic2020identifying} develops an algorithm for decomposing a set of nodes as partial causal orderings (\texttt{PCO}) in an MPDAG $\mathcal{G}$. We provide it in \cref{alg: Partial causal ordering (PCO)} in \cref{Appendix: Preliminaries}, which acts as an important component for \cref{prop: identification formula}. We denote the parents of the node $W$ in a graph $\mathcal{G}$ as $pa(W, \mathcal{G})$.

\begin{proposition} \label{prop: identification formula}
    Let $\mathbf{S}$ be a node set in an MPDAG $\mathcal{G}=(\mathbf{V},\mathbf{E})$ and let $\mathbf{V'}=\mathbf{V} \backslash \mathbf{S}$. Furthermore, let $(\mathbf{V_1,...,V_k})$ be the output of \texttt{PCO$(\mathbf{V}$,$\mathcal{G})$}. The augmented-$\mathcal{G}$ with $\hat{Y}$ is denoted as $\mathcal{G^*}$. Then for any density $f(\mathbf{v})$ consistent with $\mathcal{G}$ and the conditional density $f(\hat{y}|\mathbf{v})$ entailed by $\mathcal{G^{*}}$, we have
    \begin{enumerate}
        \item $f(\mathbf{v}, \hat{y}) = f(\hat{y}|\mathbf{v}) \prod_{\mathbf{V_{i}} \subseteq \mathbf{V}} f(\mathbf{v_{i}}|pa(\mathbf{v_{i}}, \mathcal{G}))$, 
        \item If and only if there is no pair of nodes $V \in \mathbf{V'}$ and $S \in \mathbf{S}$ such that $S-V$ in $\mathcal{G}$, $f(\mathbf{v'}|do(\mathbf{s}))$ and $f(\hat{y}|do(\mathbf{s}))$ are identifiable, and
        \begin{equation} \label{eq: identification on v}
        f(\mathbf{v'}|do(\mathbf{s})) 
        = \prod_{\mathbf{V_{i} \subseteq \mathbf{V'}}} f(\mathbf{v_i}|pa(\mathbf{v_i}, \mathcal{G}))
        \end{equation}
        \begin{equation} \label{eq: identification on augmented-G}
        f(\hat{y}|do(\mathbf{s})) 
        = \int f(\hat{y}|\mathbf{v})f(\mathbf{v'}|do(\mathbf{s})) d \mathbf{v'} 
        = \int f(\hat{y}|\mathbf{v}) \prod_{\mathbf{V_{i} \subseteq \mathbf{V'}}} f(\mathbf{v_i}|pa(\mathbf{v_i}, \mathcal{G})) d \mathbf{v'} 
        \end{equation}
    \end{enumerate}
    for values $pa(\mathbf{v_i}, \mathcal{G})$ of $Pa(\mathbf{v_i}, \mathcal{G})$ that are in agreement with $\mathbf{s}$.
\end{proposition}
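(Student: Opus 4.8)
The plan is to obtain \cref{prop: identification formula} as a specialization of the general MPDAG identification result \citet[Theorem 3.6]{perkovic2020identifying} to the augmented graph $\mathcal{G^*}$ --- which is a legitimate MPDAG by \cref{theo: mpdag} --- together with bookkeeping that isolates the contribution of the sink node $\hat{Y}$. The guiding observation throughout is that $\hat{Y}$ is childless in $\mathcal{G^*}$ and is incident only to directed edges $V\to\hat{Y}$, so (i) deleting $\hat{Y}$ from $\mathcal{G^*}$ returns $\mathcal{G}$ without altering any orientation or any structure among $\mathbf{V}$, (ii) $pa(\hat{Y},\mathcal{G^*})=\mathbf{V}$, and (iii) $pa(\mathbf{V_i},\mathcal{G^*})=pa(\mathbf{V_i},\mathcal{G})$ for every $\mathbf{V_i}\subseteq\mathbf{V}$.

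\textbf{Part 1.} I would first argue that \texttt{PCO}$(\mathbf{V^*},\mathcal{G^*})$ returns $(\mathbf{V_1},\dots,\mathbf{V_k},\{\hat{Y}\})$, i.e. the partial causal ordering of $\mathcal{G}$ with the singleton $\{\hat{Y}\}$ appended: a pure sink with no undirected edges does not affect the ordering of the remaining nodes and must itself come last in any partial causal ordering. By the modeling assumption that $f(\mathbf{v},\hat{y})=f(\hat{y}\mid\mathbf{v})f(\mathbf{v})$ is consistent with $\mathcal{D^*}\in[\mathcal{G^*}]$, the density $f(\mathbf{v},\hat{y})$ is consistent with the MPDAG $\mathcal{G^*}$, so the factorization of a density consistent with an MPDAG along a partial causal ordering \citep{perkovic2020identifying} gives $f(\mathbf{v},\hat{y})=f(\hat{y}\mid pa(\hat{y},\mathcal{G^*}))\prod_{\mathbf{V_i}\subseteq\mathbf{V}}f(\mathbf{v_i}\mid pa(\mathbf{v_i},\mathcal{G^*}))$. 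Substituting $pa(\hat{y},\mathcal{G^*})=\mathbf{V}$ and $pa(\mathbf{v_i},\mathcal{G^*})=pa(\mathbf{v_i},\mathcal{G})$ yields the stated formula. (Alternatively, one can avoid invoking the MPDAG factorization lemma by starting from consistency of $f(\mathbf{v},\hat{y})$ with $\mathcal{D^*}$ directly and then re-expressing $\prod_{V\in\mathbf{V}}f(v\mid pa(v,\mathcal{D}))$ via the \texttt{PCO} of $\mathcal{G}$.)

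\textbf{Part 2.} I would apply \citet[Theorem 3.6]{perkovic2020identifying} to $\mathcal{G^*}$ with intervention set $\mathbf{S}$ and response set $\mathbf{V^*}\setminus\mathbf{S}=\mathbf{V'}\cup\{\hat{Y}\}$. The identifiability criterion there reduces to the stated condition: since every edge incident to $\hat{Y}$ is directed and the undirected edges between $\mathbf{S}$ and $\mathbf{V'}$ are identical in $\mathcal{G^*}$ and $\mathcal{G}$, any offending proper possibly causal path from $\mathbf{S}$ into the response set must begin with an edge $S-V$ for some $S\in\mathbf{S}$, $V\in\mathbf{V'}$, and conversely such an edge (or $S-V\to\hat{Y}$) is itself such a path; hence $f(\mathbf{v'},\hat{y}\mid do(\mathbf{s}))$ is identifiable in $\mathcal{G^*}$ iff no such $S-V$ exists, and otherwise it is not identifiable. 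When identifiable, Theorem 3.6 expresses it as the truncated factorization over the \texttt{PCO}$(\mathcal{G^*})$ buckets not contained in $\mathbf{S}$; peeling off the $\{\hat{Y}\}$ bucket as in Part 1 gives $f(\mathbf{v'},\hat{y}\mid do(\mathbf{s}))=f(\hat{y}\mid\mathbf{v})\prod_{\mathbf{V_i}\subseteq\mathbf{V'}}f(\mathbf{v_i}\mid pa(\mathbf{v_i},\mathcal{G}))$ with parent values in $\mathbf{S}$ fixed to agree with $\mathbf{s}$. Integrating out $\hat{y}$ gives \cref{eq: identification on v}, and integrating $f(\mathbf{v'},\hat{y}\mid do(\mathbf{s}))$ over $\mathbf{v'}$ gives \cref{eq: identification on augmented-G}. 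A small supporting lemma is needed: under the no-$S$-$V$ condition, every \texttt{PCO} bucket lies entirely in $\mathbf{S}$ or entirely in $\mathbf{V'}$ --- otherwise an undirected intra-bucket path joining an $\mathbf{S}$-node to a $\mathbf{V'}$-node would contain an $S-V$ edge --- so $\{\mathbf{V_i}:\mathbf{V_i}\subseteq\mathbf{V'}\}$ partitions $\mathbf{V'}$ and the right-hand side of \cref{eq: identification on v} is genuinely a density in $\mathbf{v'}$.

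\textbf{Anticipated obstacle.} The delicate point is matching identification criteria: \citet[Theorem 3.6]{perkovic2020identifying} phrases its condition via proper possibly causal paths / forbidden sets rather than as ``no undirected edge between $\mathbf{S}$ and $\mathbf{V'}$'', so I must verify carefully that for the special response set $\mathbf{V'}\cup\{\hat{Y}\}$ (the complement of $\mathbf{S}$ together with a pure sink) the two formulations coincide in \emph{both} directions --- in particular that the ``only if'' (non-identifiability) half is delivered by the tightness of Perković's characterization, so that no hand construction of two DAGs in $[\mathcal{G^*}]$ and a separating density is required. The remaining ingredients --- the behaviour of \texttt{PCO} when a sink is appended, the non-straddling of buckets, and the elementary marginalizations linking \cref{eq: identification on v,eq: identification on augmented-G} --- are routine once the reduction to $\mathcal{G^*}$ is set up.
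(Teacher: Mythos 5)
Your proposal is correct and follows essentially the same route as the paper: invoke \cref{theo: mpdag} to treat $\mathcal{G^*}$ as a genuine MPDAG, then apply \citet[Theorem 3.6 and Corollary 3.7]{perkovic2020identifying} for the factorization and the ``if'' direction (using exactly the paper's observation that no $S-V$ edge forces every \texttt{PCO} bucket to lie wholly in $\mathbf{S}$ or $\mathbf{V'}$, and that $pa(\mathbf{v_i},\mathcal{G^*})=pa(\mathbf{v_i},\mathcal{G})$), and rely on Perkovi\'c's necessary amenability condition for the ``only if'' direction, where your noted paths $S-V$ and $S-V\rightarrow\hat{Y}$ are precisely the witnesses the paper uses. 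The only cosmetic difference is that you apply the identification theorem to the joint response $\mathbf{V'}\cup\{\hat{Y}\}$ and marginalize, whereas the paper first proves an intermediate version of the statement on $\mathcal{G^*}$ (taking $\mathbf{Y}=\mathbf{V'}$ and then handling $\hat{Y}$ by integration) before transferring the parent sets and the condition back to $\mathcal{G}$.
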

The proof of \cref{prop: identification formula} is based on \citet[Theorem 3.6]{perkovic2020identifying} and \cref{theo: mpdag}. The detailed proof is provided in \cref{proof_prop: identification formula}.

\textbf{Example.} We provide a simple example to illustrate \cref{prop: identification formula}. Consider the MPDAG $\mathcal{G}$ and $\mathcal{G}^*$ in \cref{fig: MPDAG_G} and \cref{fig: MPDAG_G*}, where $\mathcal{G}^*$ is the augmented-$\mathcal{G}$ with $\hat{Y}$. The partial causal ordering of $\mathbf{X} \cup A$ on $\mathcal{G}$ is $\{\{A\}, \{\mathbf{X}\}\}$. $f(\mathbf{x}, a)$ is a density consistent with $\mathcal{G}$, $\mathcal{G^*}$ entails a conditional density $f(\hat{y}|\mathbf{x},a)$. Then, by \cref{prop: identification formula} we have $f(\mathbf{x}, a, \hat{y})=f(\hat{y}|\mathbf{x},a)f(\mathbf{x}|a)f(a)$ in $\mathcal{G^*}$. Since there is no pair of nodes $A$ and $X \in \mathbf{X}$ such that $A - X$ is in $\mathcal{G}$, we have $f(\mathbf{x}|do(a))=f(\mathbf{x}|a)$ and $f(\hat{y}|do(a))=\int f(\hat{y}|\mathbf{x},a)f(\mathbf{x}|a) d \mathbf{x}$ in $\mathcal{G^*}$. For more example, please refer to \cref{Appendix: An illustration example}.

\textbf{Dealing with non-identification.} 
In cases where the causal effect of $\mathbf{S}$ on $\hat{Y}$ is not identifiable, we can list MPDAGs corresponding to all valid combinations of edge orientations for edges $V - S$, where $V \in \mathbf{V'}$ and $S \in \mathbf{S}$. In this case, the causal effect in each MPDAG can be identified as a unique functional relationship of the observable density. To address our constrained optimisation problem, we can replace the unfairness term in \cref{eq: objective} with the average of unfairness over different MPDAGs. The experimental analysis on unidentifiable cases is provided is \cref{Appendix: Experimental analysis of model performance on unidentifiable cases}.


\textbf{Identification of $P(\hat{Y}=y|do(\mathbf{A}=\mathbf{a}), do(\mathbf{X}_{ad}=\mathbf{x}_{ad}))$ in the fairness context.}
Under the modeling technique that $\hat{y}=h_{\theta}(\mathbf{x}, \mathbf{a})$, \cref{prop: identification formula} implies that the causal effect of $\mathbf{A} \cup \mathbf{X}_{ad}$ on $\hat{Y}$ is identifiable for a given set of admissible attributes $\mathbf{X}_{ad} \subset \mathbf{X}$ if and only if there is no pair of nodes $X \in \mathbf{A} \cup \mathbf{X}_{ad}$ and $V \in \mathbf{V} \backslash (\mathbf{A} \cup \mathbf{X}_{ad})$ such that $X - V$ is in $\mathcal{G}$. In such case, $f(\hat{y}|do(\mathbf{a}), do(\mathbf{x}_{ad})) = \int f(\hat{y}|\mathbf{v}) \prod_{\mathbf{V_i} \subseteq \mathbf{V}} f(\mathbf{v_i}|pa(\mathbf{v_i}, \mathcal{G})) d \mathbf{v'}$
for values $pa(\mathbf{v_i}, \mathcal{G})$ of $Pa(\mathbf{v_i}, \mathcal{G})$ that are in agreement with $\mathbf{a}$ and $\mathbf{x}_{ad}$, where $\mathbf{V'}= \mathbf{V} \backslash (\mathbf{A} \cup {\mathbf{X}_{ad}})$, $(\mathbf{V_1,...,V_m})=\texttt{PCO}(\mathbf{V'}, \mathcal{G})$.

\subsection{Solving the optimisation problem}


Addressing the optimisation problem stated in \cref{eq: objective} requires measuring the discrepancy between distributions $P(\hat{Y}=y|do(\mathbf{A}=\mathbf{a},\mathbf{X}_{ad}=\mathbf{x}_{ad}))$ and $P(\hat{Y}=y|do(\mathbf{A}=\mathbf{a'}, \mathbf{X}_{ad}=\mathbf{x}_{ad}))$. Here, we employ Maximum Mean Discrepancy (MMD) \citep{gretton2007kernel}, but other measures can also be utilized. On the other hand, as evident from the identification formula \cref{eq: identification on augmented-G}, we need to estimate the stable conditional density $f(\mathbf{v_i}|pa(\mathbf{v_i}, \mathcal{G}))$ and design the model $\hat{y} = h(\mathbf{x}, \mathbf{a})$ to approximate $f(\hat{y}|\mathbf{v})$. However, computing the MMD of two distributions entailed by a model $h$ is intractable. As a result, we resort to Monte Carlo sampling to approximate the integrals involved in the identification formula. For convenience, we estimate $f(\mathbf{v_i}|pa(\mathbf{v_i, \mathcal{G}}))$ using a conditional multivariate normal distribution, but other conditional density estimation approaches can also be employed. Then we generate the interventional data for $\mathbf{v}$ under each intervention $do(\mathbf{A}=\mathbf{a}, \mathbf{X}_{ad}=\mathbf{x}_{ad})$. Finally, a neural net can be trained for $h(\mathbf{x}, \mathbf{a})$. For more on the MMD formulation and its application in our context, please refer to \cref{appendix: MMD}.



\subsection{Applicability to other causal fairness notions}

Causal fairness notions are defined on different types of causal effects, such as interventional fairness on interventional effects, path-specific fairness on nested counterfactual queries, and counterfactual fairness on counterfactual effects. Their identifiability depends on the identifiability of these causal effects. 
Our proposed method is directly applicable to other interventional-based fairness notions but not to path-specific causal fairness or counterfactual fairness due to the ongoing challenge of (nested) counterfactual identification over MPDAGs. For more details on the related fairness notions and applicability, please refer to \cref{appendix: Applicability}.

\section{Experiment} \label{sec: Experiment}

In this section, we illustrate our approach on both synthetic and real-world datasets. We measure prediction performance using root mean squared error (RMSE) or accuracy and assess interventional unfairness by MMD. 
Here, we focus on the scenario where the admissible variable set is empty. Additional experiments involving non-empty admissible variable sets can be found in \cref{Appendix: Experiments involving non-empty admissible variable sets}.

\textbf{Baselines.} We consider three baselines: 1) \texttt{Full} model makes predictions using all attributes, including the sensitive attributes, 2) \texttt{Unaware} model uses all attributes except the sensitive attributes, and 3) \texttt{IFair} is a model mentioned in \cref{sec: IFair} which makes predictions using all definite non-descendants of the sensitive attribute in an MPDAG
\footnote{\cref{prop: identification formula} and \cref{theo: definite ancestral relationship} indicate that when the total effect of singleton $A$ on $\hat{Y}$ is identifiable in the augmented MPDAG $\mathcal{G^*}$, the ancestral relationship between $A$ and any other attribute is definite.}.
Our proposed method is \texttt{$\epsilon$-IFair}, which uses all attributes and implement the constrained optimization problem in \cref{sec: e-IFair}. 

\subsection{Synthetic Data} \label{sec: Experiment/Synthetic Data}

We first randomly generate DAGs with $d$ nodes and $s$ directed edges from the graphical model Erd{\H{o}}s-R{\'e}nyi (ER), where $d$ is chosen from $\{5,10,20,30\}$ and the correspongding $s$ is $\{8,20,40,60\}$. For each setting, we generate $10$ graphs. For each DAG $\mathcal{D'}$, we consider the last node in topological ordering as the outcome variable and we randomly select one node from the graph as the sensitive attribute. The sensitive attribute can have two or three values, drawn from a Binomial([0,1]) or Multinomial([0,1,2]) distribution separately. The weight, $\beta_{ij}$, of each directed edges $X_i \rightarrow X_j$ in the generated DAG, is drawn from a Uniform($[-1,-0.1]\cup [0.1,1]$) distribution. The synthetic data is generated according to the following linear structural equation:
\begin{equation} \label{eq: linear structual equation model}
    X_i = \sum_{X_j \in pa(X_i)} \beta_{ij}X_j + \epsilon_i, 
\end{equation}
where 
$\epsilon_i$ are independent $N(0,1)$. Then we generate a sample with size $1000$ for each DAG as the observational data. The proportion of training, validation and test data is split as $8:1:1$. 

We consider the DAG $\mathcal{D}$ over all of the variables, excluding the outcome. As the simulated DAG is known, the CPDAG can be obtained from the true DAG without running the causal discovery algorithms.
\footnote{With an ample sample size, existing causal discovery algorithms have demonstrated high accuracy in recovering the CPDAG from simulated data \cite{glymour2019review}.} 
Once we obtain the CPDAG $\mathcal{C}$, where $\mathcal{D} \in [\mathcal{C}]$, we randomly generate the direct causal information $S \rightarrow T$ as the background knowledge from the edges where $S \rightarrow T$ is in DAG $\mathcal{D}$, while $S - T$ is in CPDAG $\mathcal{C}$. Combining with the background knowledge that is necessary to identify fairness, we can obtain the corresponding MPDAG $\mathcal{G}$. We show a randomly generated DAG $\mathcal{D}$, the corresponding CPDAG $\mathcal{C}$ and MPDAG $\mathcal{G}$ as an example in \cref{fig: SimulationGraph}, see \cref{Appendix: A randomly generated causal graph example}.

To measure the unfairness, we generate 1000 interventional data points $\mathbf{X}_{A \leftarrow a}$ under different interventions on $A$ according to the identification formula. 
To approximate each term $f(\mathbf{v_i}|pa(\mathbf{v_i, \mathcal{G}}))$ in the formula, we fit a conditional multivariate Gaussian distribution using observational data.  The generation of $\mathbf{v_i}$ is based on the fitted density and follows the partial causal ordering. The proportion of training and validation for interventional data is split as $8:2$. 
Besides, we generate 1000 interventional data from the ground-truth SCM for different interventions on the sensitive attribute for test. Then we fit all models with the data. The $\lambda$ in our optimisation problem is [0, 0.5, 5, 20, 60, 100]. For additional experiments on nonlinear structural equations and varying amounts of background knowledge, please refer to \cref{Appendix: Experiment based on non-linear structural equations} and \cref{Appendix: Experiment analyzing fairness performance with varying amount of given domain knowledge}, respectively. We also analyze the model robustness experimentally when the graph is learnt by causal discovery algorithms in \cref{Appendix: Experiment analyzing method robustness on causal discovery algorithm}. The model robustness on fitting of conditional densities is provide in \cref{Appendix: Experiment testing fitting performance}.

\textbf{Results.} For each graph setting, we report the average unfairness and RMSE achieved on $10$ causal graphs 
in the corresponding trade-off plots in \cref{fig: Tradeoff_synthetic_data}. Obviously, both \texttt{Full} and \texttt{Unaware} methods exhibit lower RMSE but higher unfairness. The \texttt{IFair} method achieves nearly 0 unfairness but at the cost of significantly reduced prediction performance. However, our \texttt{$\epsilon$-IFair} approach allows for a trade-off between unfairness and RMSE with varying $\lambda$. For certain values of $\lambda$, we can even simultaneously achieve low unfairness comparable to \texttt{IFair} model and low RMSE comparable to \texttt{Full}.
Exemplary density plots of the predictions under two interventional datasets are shown in \cref{fig: Densityplot_synthetic_data}. The degree of overlap in the distributions indicates the fairness of the predictions with respect to the sensitive attribute. The more the distributions overlap, the fairer the predictions are considered to be. 
More discussion on the accuracy-fairness trade-off is included in \cref{Appendix: Discussion on Accuracy-fairness trade-off}.

\begin{figure}[ht]
\vskip -0.2in
\centering
\minipage{0.48\textwidth}
\subfloat[5nodes8edges.]{
\label{fig: Tradeoff_5nodes8edges}
\includegraphics[width=0.48\linewidth]{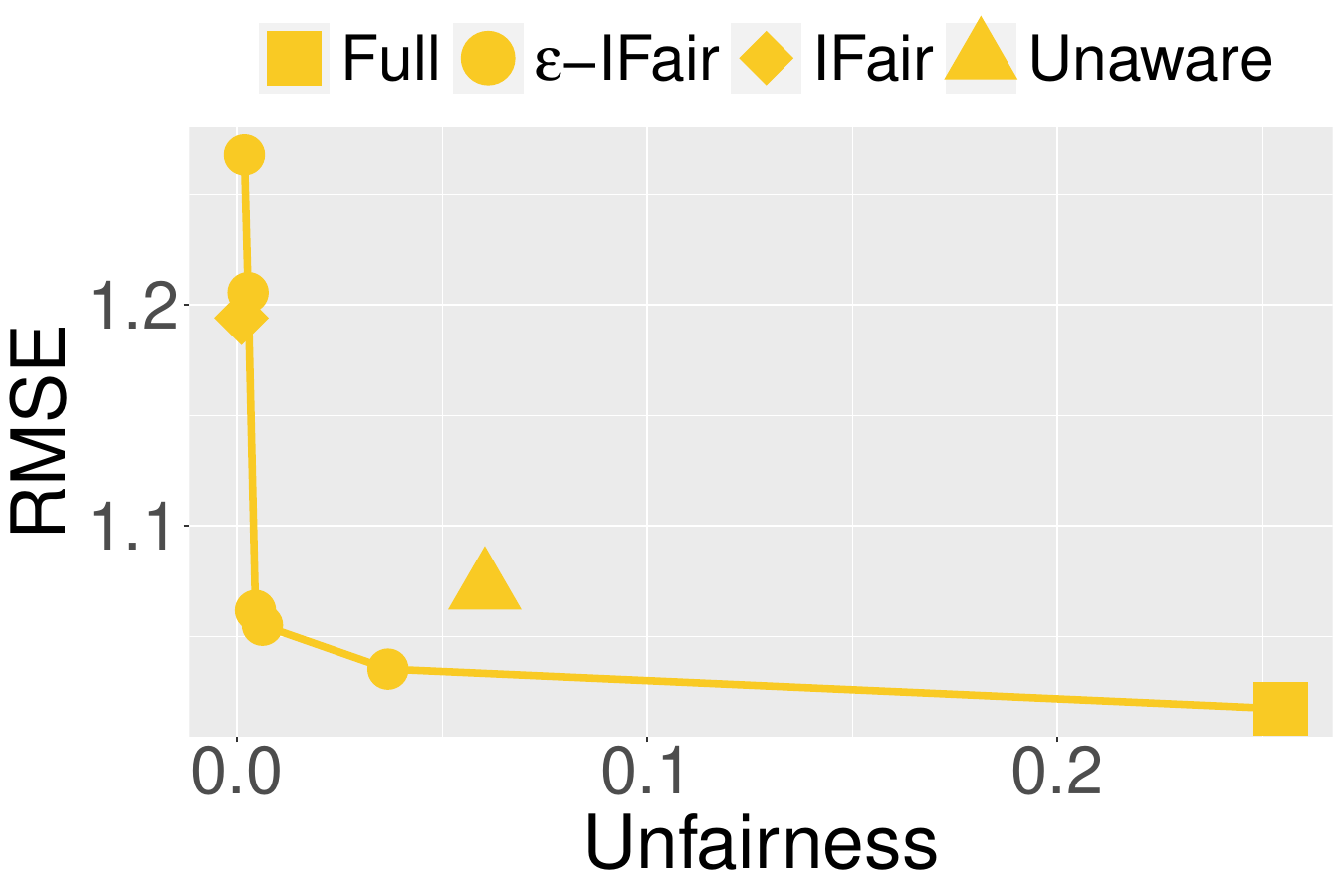}
}
\subfloat[10nodes20edges.]{
\label{fig: Tradeoff_10nodes20edges}
\includegraphics[width=0.48\linewidth]{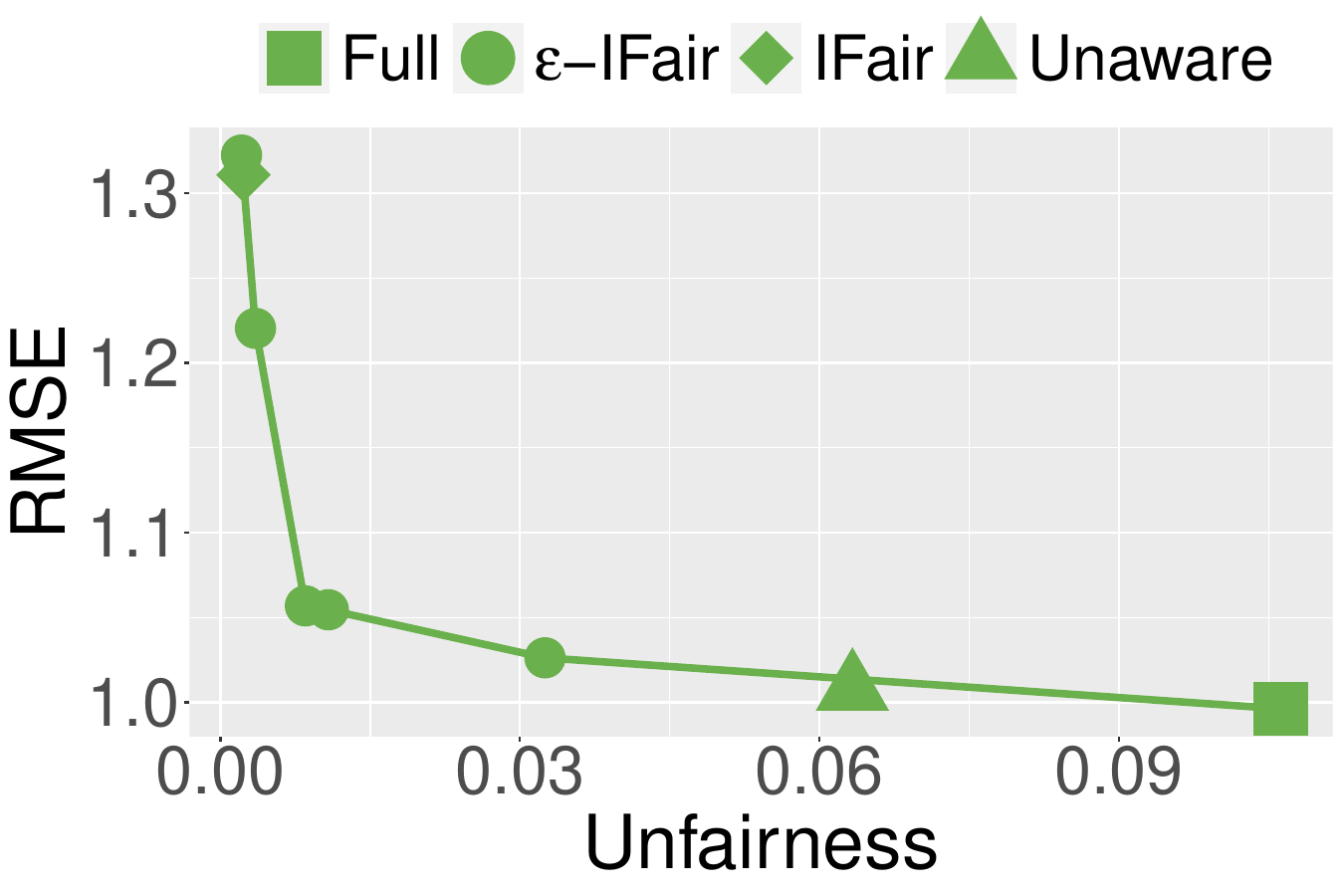}
}\hfill
\subfloat[20nodes40edges.]{
\label{fig: Tradeoff_20nodes40edges}
\includegraphics[width=0.48\linewidth]{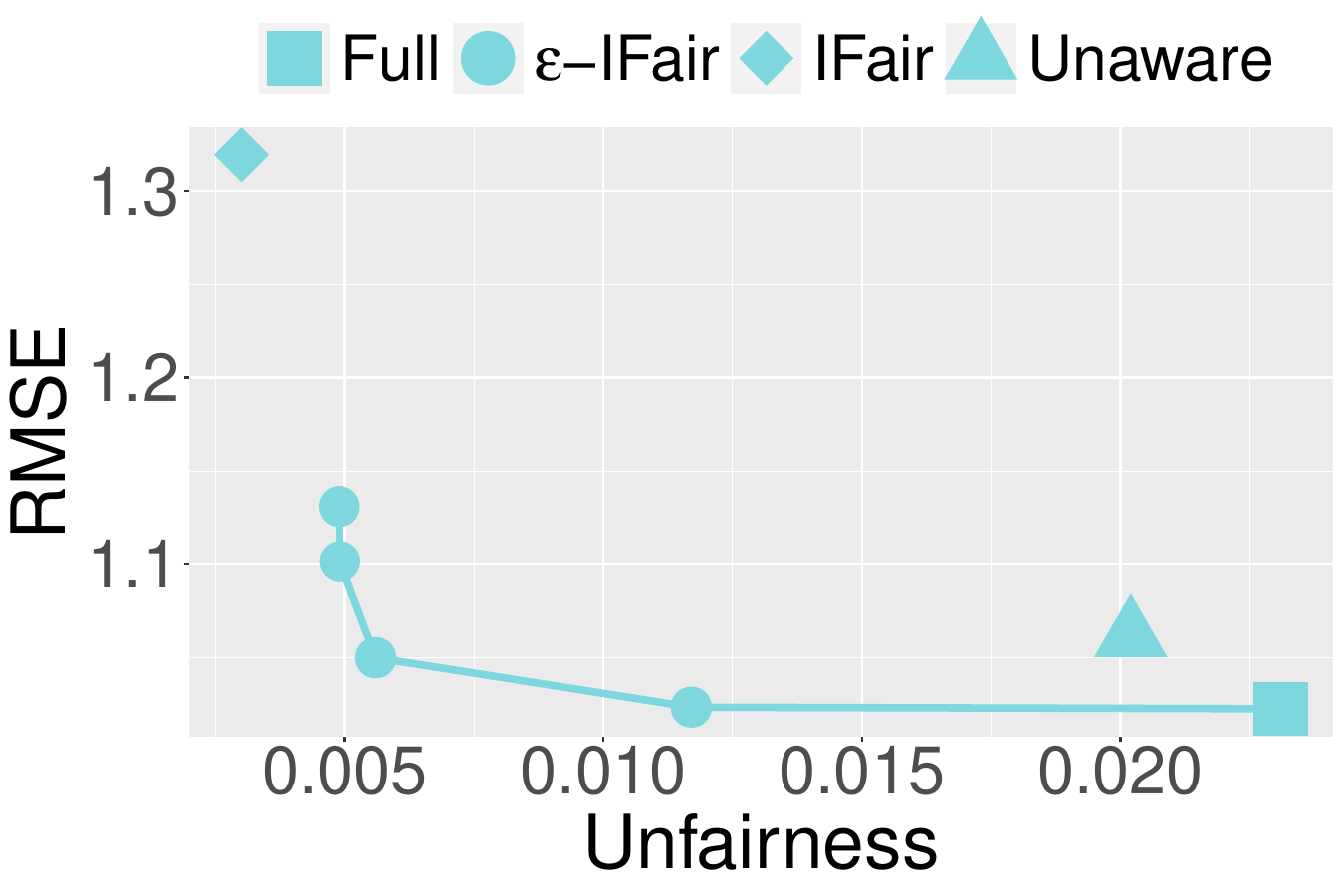}
}
\subfloat[30nodes60edges.]{
\label{fig: Tradeoff_30nodes60edges}
\includegraphics[width=0.48\linewidth]{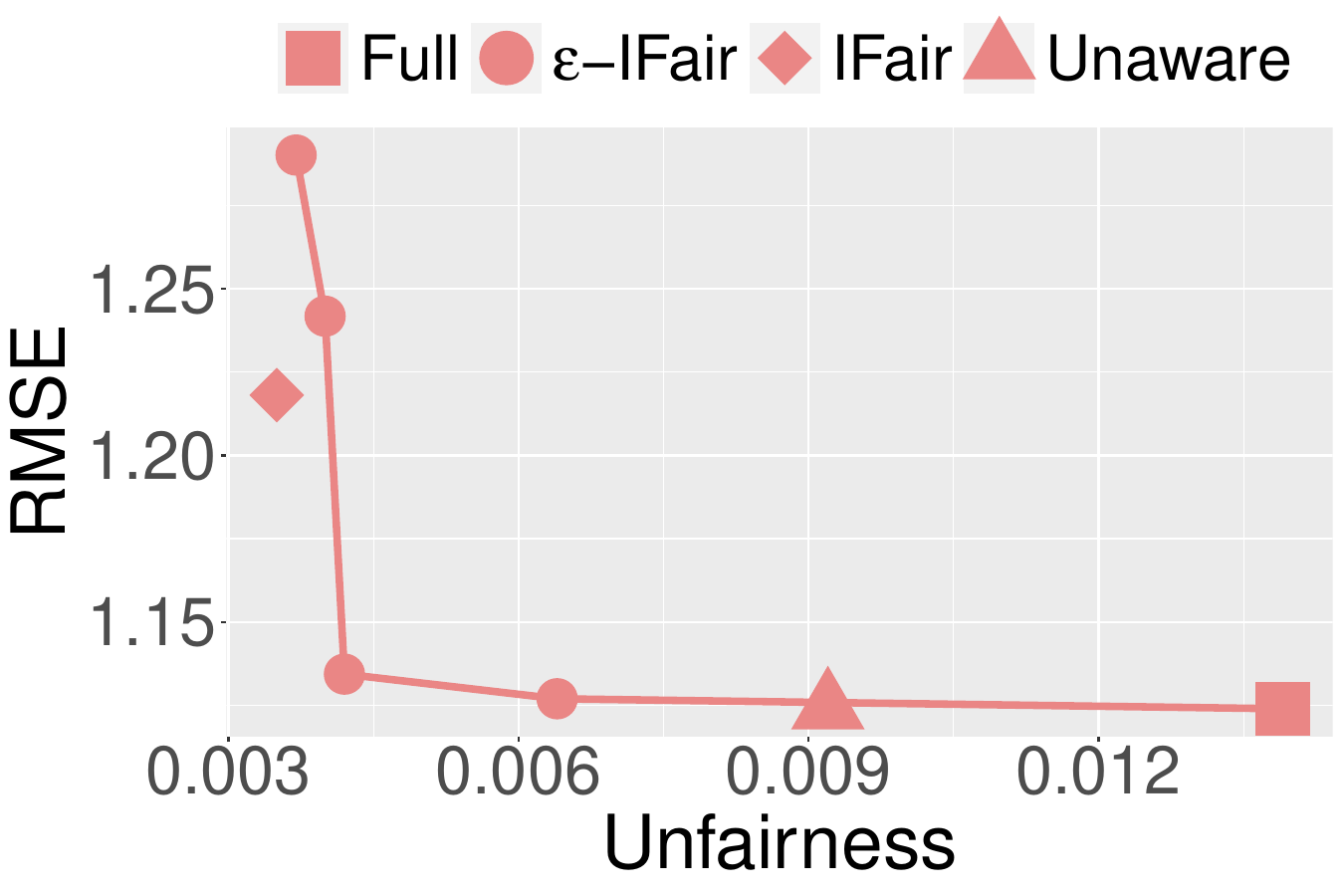}
}
\caption[]{Accuracy fairness trade-off.}
\label{fig: Tradeoff_synthetic_data}
\endminipage
\hspace{0.03\textwidth}
\minipage{0.48\textwidth}
\subfloat[]{
\label{fig: Densityplot_unfairness_simulation_1}
\includegraphics[width=0.96\linewidth]{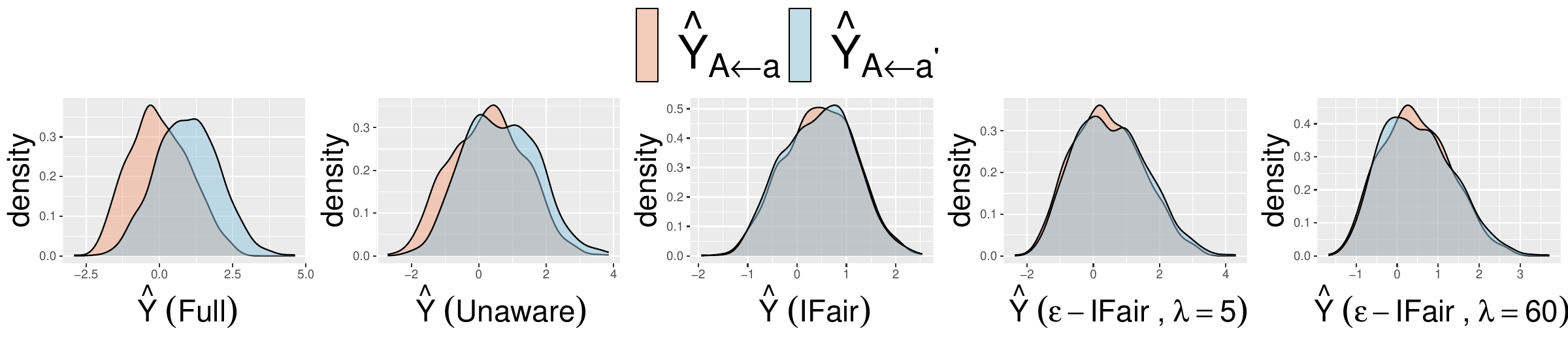}
}\hfill
\subfloat[]{
\label{fig: Densityplot_unfairness_simulation_2}
\includegraphics[width=0.96\linewidth]{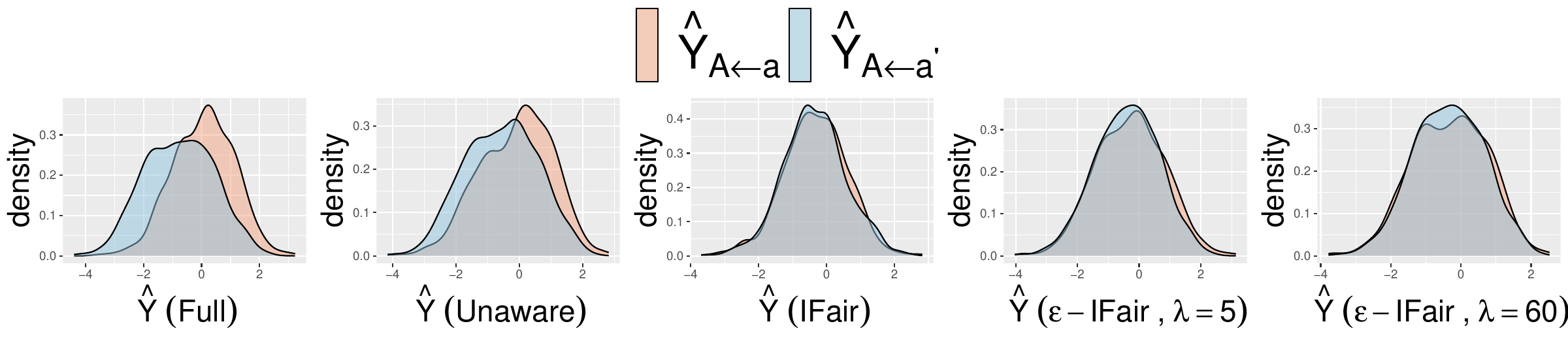}
}
\caption[]{Density plots of the predicted $Y_{A\leftarrow a}$ and $Y_{A\leftarrow a'}$ in synthetic data.}
\label{fig: Densityplot_synthetic_data}
\endminipage
\vskip -0.2in
\end{figure}


\subsection{Real Data}
\subsubsection{The UCI Student Dataset} \label{sec: The UCI Student Dataset}

Our first experiment with real-world data is based on the UCI Student Performance Data Set \citep{cortez2008using}, which contains information about students performance in Mathematics. The dataset consists of records for 395 students, with $32$ 
school-related features. In this dataset, we consider the attribute $\operatorname{sex}$ as the sensitive attribute. We create the target attribute $\operatorname{Grade}$ as the average of grades in three tests. Our experiments are carried out on the MPDAG $\mathcal{G}$ in \cref{fig: real_data_with_assumption}. Due to the space limit, \cref{fig: real_data_with_assumption} is provided in \cref{Appendix: Causal graphs for Student Dataset}. For details on graph learning, interventional data generation and model training, please refer to \cref{Appendix: Training Details on Student Data}.

We measure interventional fairness and accuracy using a similar approach as described in \cref{sec: Experiment/Synthetic Data}. The 
trade-off plot is shown in \cref{fig: Student_tradeoff_test}. The model \texttt{Full} and \texttt{Unaware} exhibit higher unfairness. The model \texttt{IFair} achieves interventionally fairness at the cost of increased RMSE. On the other hand, our $\epsilon$-\texttt{IFair} model strikes a balance between unfairness and RMSE by tuning the values of $\lambda$. Since the test set only contains 19 interventional data for $do(\operatorname{sex}=\operatorname{female})$ and 20 interventional data for $do(\operatorname{sex}=\operatorname{male})$, the unfairness measure may not be highly accurate. Therefore, we also provide the trade-off plot on the training set in \cref{fig: Student_tradeoff_train}, where the unfairness approaches zero for the \texttt{IFair} model and $\epsilon$-\texttt{IFair} model with stricter penalties on unfairness. Furthermore, the distribution of predictions on the two interventional datasets follows a similar trend as the synthetic data, as depicted in \cref{fig: Student_Density}.


\begin{figure}[ht]
\vskip -0.2in
\centering
\minipage{0.48\textwidth}
\subfloat[Test set.]{
\label{fig: Student_tradeoff_test}
\includegraphics[width=0.48\columnwidth]{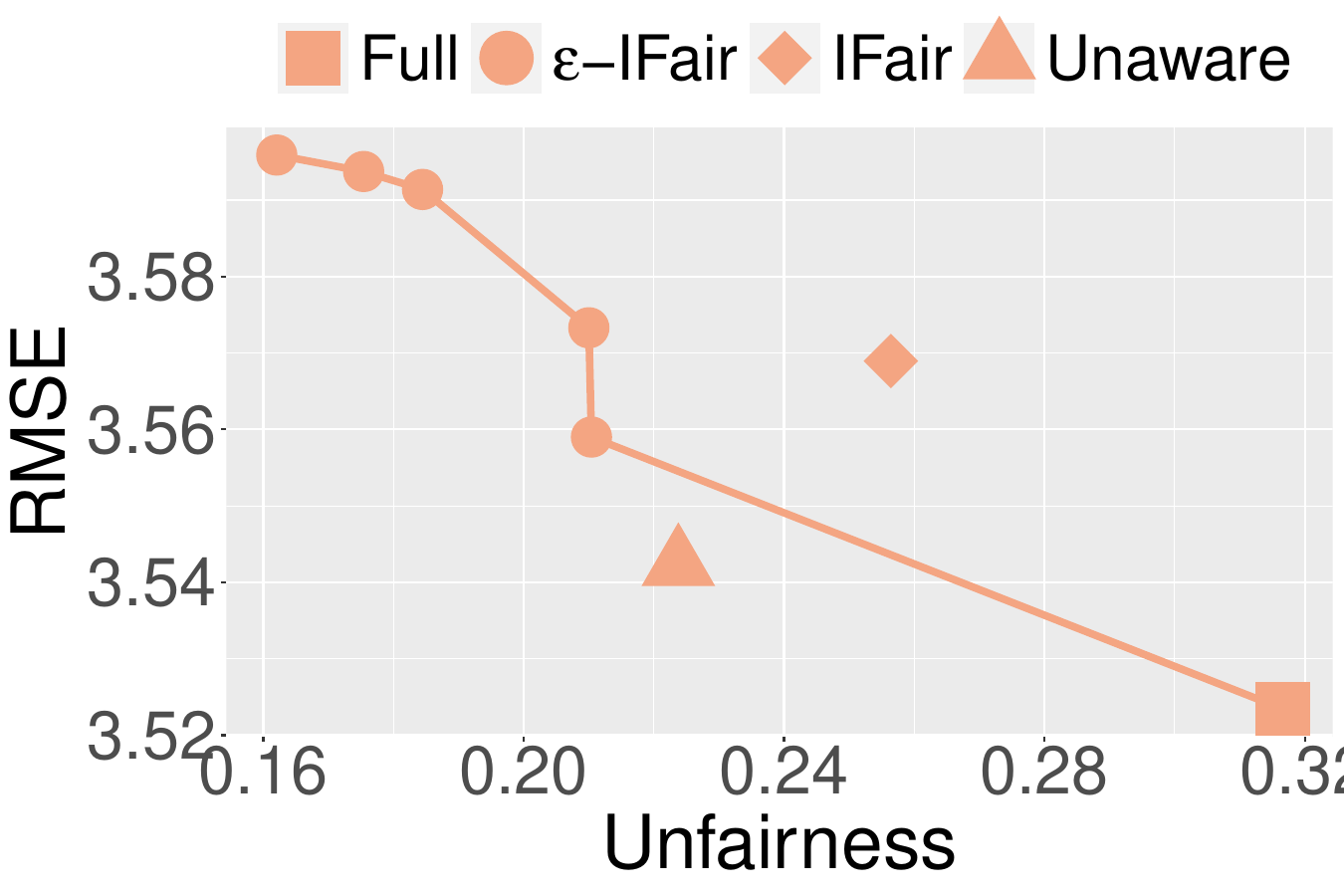}
}
\subfloat[Train set.]{
\label{fig: Student_tradeoff_train}
\includegraphics[width=0.48\columnwidth]{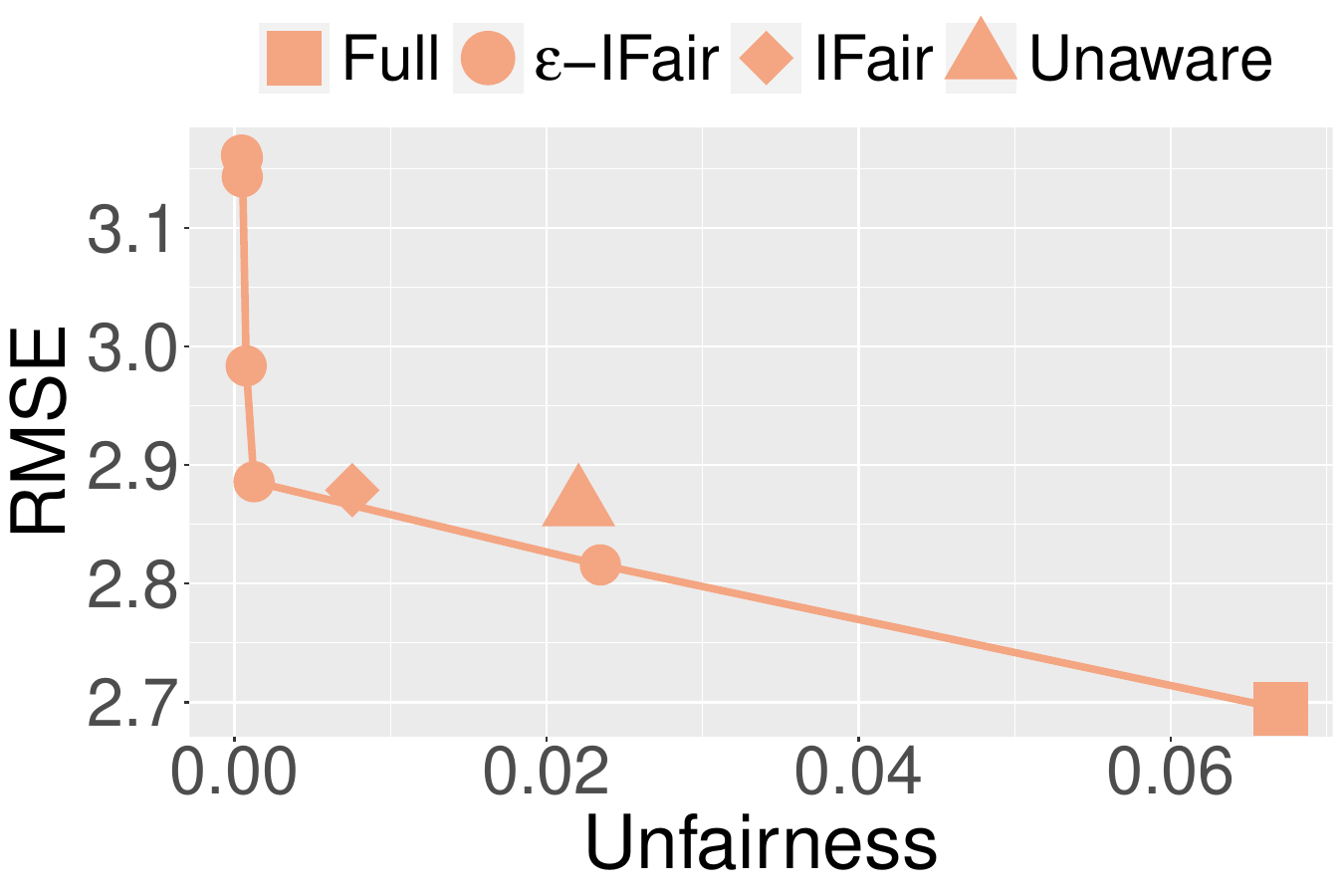}
}
\caption[]{Accuracy fairness trade-off.}
\endminipage\hfill
\minipage{0.48\textwidth}
\includegraphics[width=\columnwidth]{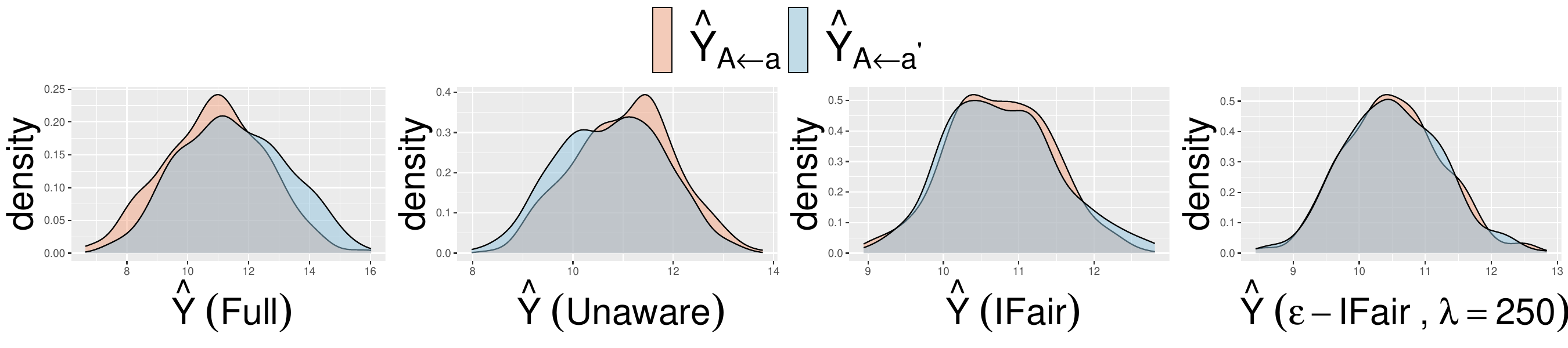}
\caption[]{Density plot of the predicted $Y_{A\leftarrow a}$ and $Y_{A\leftarrow a'}$ in Student data.}
\label{fig: Student_Density}
\endminipage
\vskip -0.2in
\end{figure}


\subsubsection{Credit Risk Dataset}
The task of credit risk assessment involves predicting the likelihood of a borrower defaulting on a loan. For our experiment, we utilize the Credit Risk Dataset,
which contains 11 features related to the repayment capability of 32,581 borrowers. In this dataset, we consider the attribute $\operatorname{Age}$ as the sensitive attribute and the target variable is $\operatorname{Loan\_status}$, which is a binary variable indicating default (1) or no default (0). We focus on two specific age groups: 23 and 30, representing the first and third quantiles, respectively. 
Our experiments are based on the MPDAG $\mathcal{G}$ in \cref{fig: credit_mpdag}. Due to the space limit, it is provided in \cref{Appendix: Causal graphs for Credit Dataset}.
For details on graph learning, interventional data generation and model training, please refer to \cref{Appendix: Training Details on Credit Data}.
\begin{wrapfigure}{r}{0.50\textwidth}
\vskip -0.1in
\centering
\subfloat[Accuracy fairness trade-off.]{
\label{fig: Credit_Tradeoff}
\includegraphics[width=0.23\columnwidth]{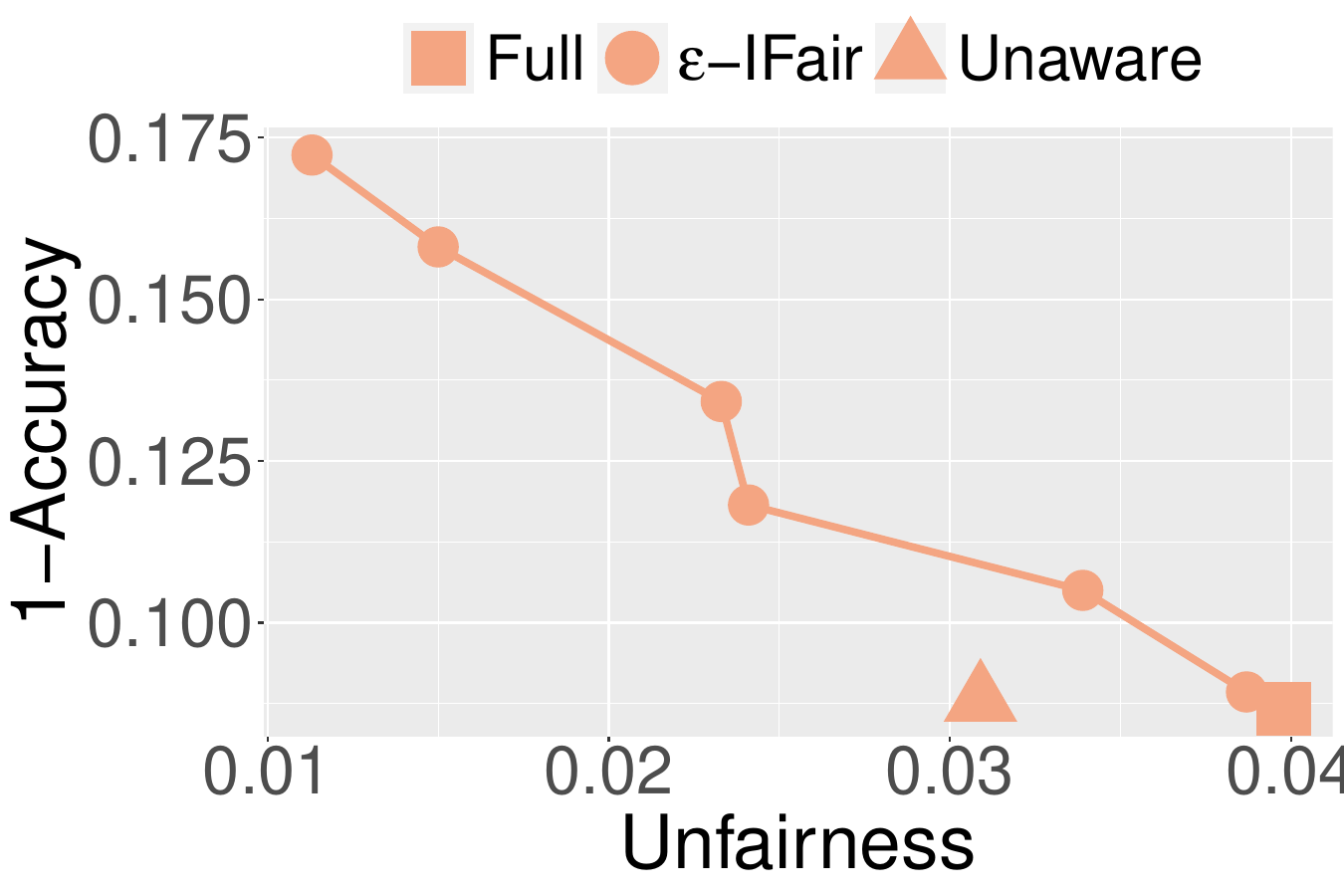}
\hspace{0.05\textwidth}
}
\subfloat[Histogram, where the $\lambda$ for $\epsilon$-\texttt{IFair} model is 10 and 150, respectively.]{
\label{fig: Credit_Density}
\includegraphics[width=0.23\columnwidth]{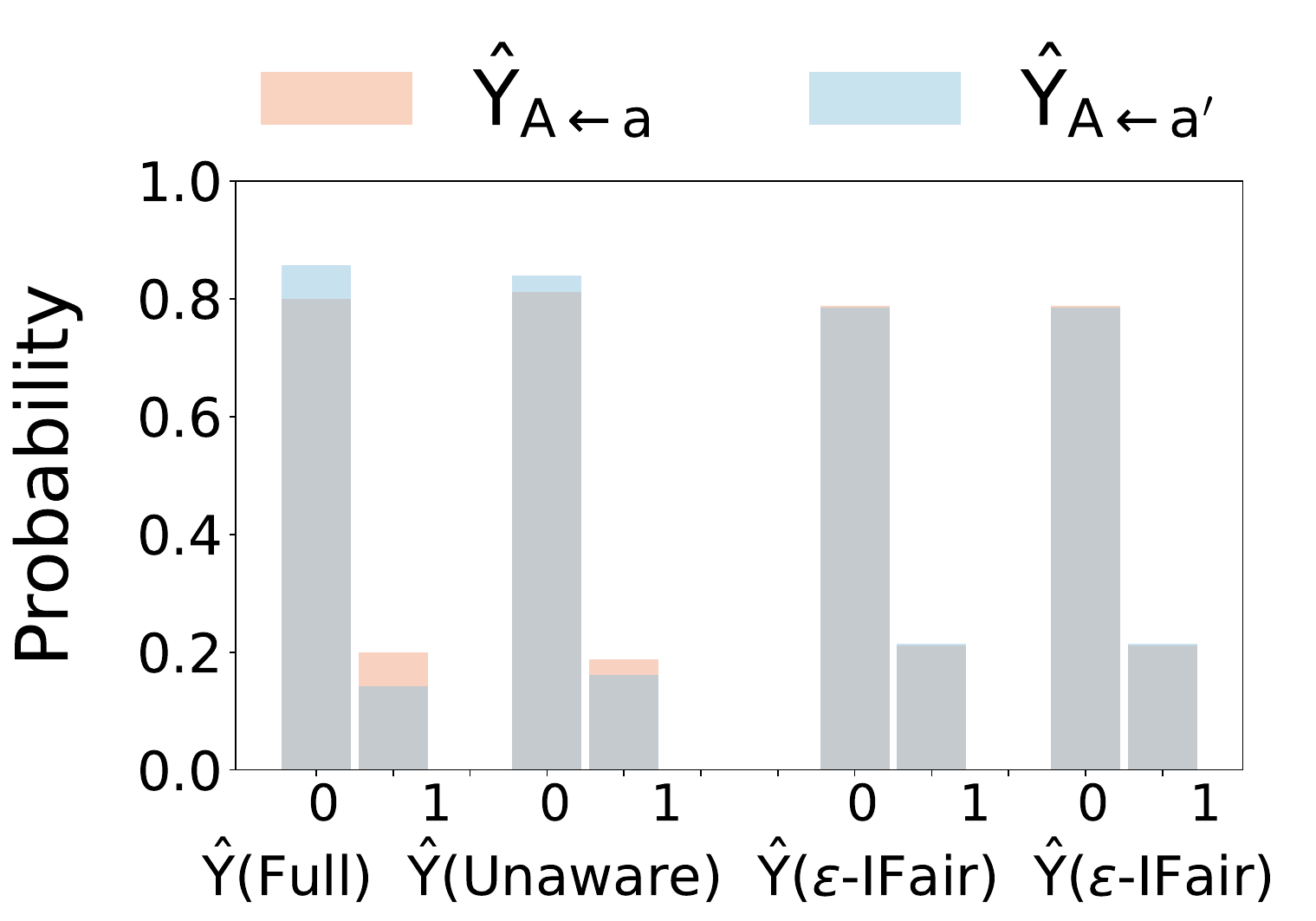}
}
\caption[]{Results on Credit Risk dataset.}
\label{fig: Credit}
\vskip -0.2in
\end{wrapfigure}

Given that the target variable $\operatorname{Loan\_status}$ is binary, we measure unfairness using the absolute difference in the means of predictions for the interventions on $\operatorname{Age}=23$ and $\operatorname{Age}=30$. Prediction performance is evaluated by accuracy. Since there is no non-descendant of $\operatorname{Age}$, we do not have the model \texttt{IFair} for this dataset.
The trade-off plot is presented in \cref{fig: Credit_Tradeoff} and the distribution of predictions for the two interventional datasets is depicted in \cref{fig: Credit_Density}. Both of them follow a similar trend as previous.
\section{Conclusion}
This paper presents a framework for achieving interventional fairness on partially known causal graphs, specifically MPDAGs. By leveraging the concept of interventions and modeling fair predictions as the effect of all observational variables, the proposed approach addresses the limitations of existing methods that assume a fully known causal DAG. Through the analysis of identification criteria and the formulation of a constrained optimization problem, the framework provides a principled approach to achieving interventional fairness while maximizing data utility. 
Experimental results on simulated and real-world datasets demonstrate the effectiveness of the proposed framework. The limitation of this work is that it assumes no selection bias or latent confounders. Future work can focus on extending the proposed framework and addressing these challenges.

\section{Acknowledgement}

AZ was supported by Melbourne Research Scholarship from the University of Melbourne. This research was undertaken using the LIEF HPC-GPGPU Facility hosted at the University of Melbourne. This Facility was established with the assistance of LIEF Grant LE170100200.
SW was supported by ARC DE200101253. 
MG was supported by ARC DE210101624.  

\bibliography{reference}
\bibliographystyle{iclr2024_conference}

\clearpage
\appendix

\part{Appendix} 
\parttoc

\section{Preliminaries} \label{Appendix: Preliminaries}

\textbf{Graph and Subgraphs.} Let $\mathcal{G}=(\mathbf{V}, \mathbf{E})$ denote a graph, where $\mathbf{V}=\{X_1,...,X_p\}$ represents a set of nodes (variables) and $\mathbf{E}$ represents a set of edges. In our consideration, the graphs can include both directed ($\rightarrow$) and undirected (-) edges, with at most one edge allowed between any two nodes. An \textit{induced subgraph} of $\mathcal{G}$ denoted by $\mathcal{G}_{\mathbf{V'}}=(\mathbf{V'}, \mathbf{E'})$ is a subgraph where $\mathbf{V'} \subseteq \mathbf{V}$ and $\mathbf{E'} \subseteq \mathbf{E}$ where $\mathbf{E'}$ consists of all the edges in $\mathbf{E}$ that connects nodes in $\mathbf{V'}$.

\textbf{Path.}
In the graph $\mathcal{G}$, a path $p$ from $S$ to $T$ is defined as a sequence of distinct nodes $\langle S,...,T\rangle$ where each consecutive pair of nodes is connected by an edge. Let $p=\langle S=V_{0},..., V_{k}=T\rangle $ represents a path in a graph $\mathcal{G}$, we say $p$ is a \textit{causal path} from $S$ to $T$ if the direction of the edges follows a pattern $V_i \rightarrow V_{i+1}$ for all $0 \leq i \leq k-1 $. On the other hand, if no edge $V_i \leftarrow V_{i+1}$ exists in $\mathcal{G}$, then $p$ is a \textit{possibly causal path} from $S$ to $T$. If there exists an edge $V_i \leftarrow V_{i+1}$ in $\mathcal{G}$, $p$ is considered a \textit{non-causal path} in $\mathcal{G}$. A cycle in the graph can be categorized as a causal cycle, possibly causal cycle, or non-causal cycle, depending on whether it represents a causal, possibly causal, or non-causal path from a vertex to itself. A path from $\mathbf{S}$ to $\mathbf{T}$ is said to be \textit{proper} with respect to $\mathbf{S}$ if only its first node is a member of $\mathbf{S}$.

\textbf{Colliders, Skeleton and Pattern.}
If, in a path $p=\langle X=V_{0},..., V_{k}=Y\rangle $, there exists a node $V_{i}$ such that $V_{i-1} \rightarrow V_i$ and $V_{i} \leftarrow V_{i+1}$, then we say $V_{i}$ is a \textit{collider} on the path $p$. If $V_{i-1}$ and $V_{i+1}$ are not adjacent, then the triple $\langle V_{i-1}, V_{i}, V_{i+1}\rangle $ is called a \textit{v-structure collided} on $V_{i}$, and $V_{i}$ can be called \textit{unshielded collider}. For a graph $\mathcal{G}$, the \textit{skeleton} of $\mathcal{G}$ is the undirected graph obtained by replacing the directed edges in $\mathcal{G}$ with undirected edges. The \textit{skeleton of} $\mathcal{G}$ \textit{with unshielded colliders} is the skeleton of $\mathcal{G}$ with the edges resulting unshielded colliders directed. \footnote{In some paper, the skeleton of a DAG $\mathcal{D}$ with unshielded colliders is also called the \textit{pattern} of $\mathcal{D}$.}

\textbf{MPDAGs Construction.} \cref{alg: ConstructMPDAG} mentioned in \citep{perkovic2017interpreting} summarizes, the process of constructing the maximal PDAG $\mathcal{G'}$ from the given maximal PDAG $\mathcal{G}$ and backgroud knowledge $\mathcal{B}$. This construction utilizes Meek's rule shown in \cref{fig: orientation_rule}. Specifically, the background knowledge $\mathcal{B}$ is assumed to be the \textit{direct causal information} in the form $S \rightarrow T$, indicating that $S$ is a direct cause of $T$. If Algorithm \ref{alg: ConstructMPDAG} does not return FAIL, then it implies that the background knowledge $\mathcal{B}$ and returned maximal PDAG $\mathcal{G'}$ are \textit{consistent} with the input maximal PDAG $\mathcal{G}$.

\begin{algorithm}
\caption{Construct MPDAG \citep{perkovic2017interpreting, meek1995causal}}
\label{alg: ConstructMPDAG}
\begin{algorithmic}[1]
\State {\bfseries Inputs:} MPDAG $\mathcal{G}$ and Background knowledge $\mathcal{B}$.
\State {\bfseries Output:} MPDAG $\mathcal{G'}$ or FAIL.
\State Let $\mathcal{G'}=\mathcal{G}$;
\While{$\mathcal{B} \neq \emptyset$}
    \State Select an edge $\{S \rightarrow T\}$ in $\mathcal{B}$;
    \State $\mathcal{B} = \mathcal{B} \backslash \{S \rightarrow T\}$;
    \If{$\{S-T\}$ OR $\{S\rightarrow T\}$ is in $\mathcal{G'}$}
        \State Orient $\{S\rightarrow T\}$ in $\mathcal{G'}$;
        \State Orienting edges in $\mathcal{G'}$ following the rules in Figure \ref{fig: orientation_rule} until no edge can be oriented;
    \Else
        \State FAIL;
    \EndIf
\EndWhile
\end{algorithmic}
\end{algorithm}

\begin{figure}[htp]
\begin{center}
\centerline{\includegraphics[width=0.45\columnwidth]{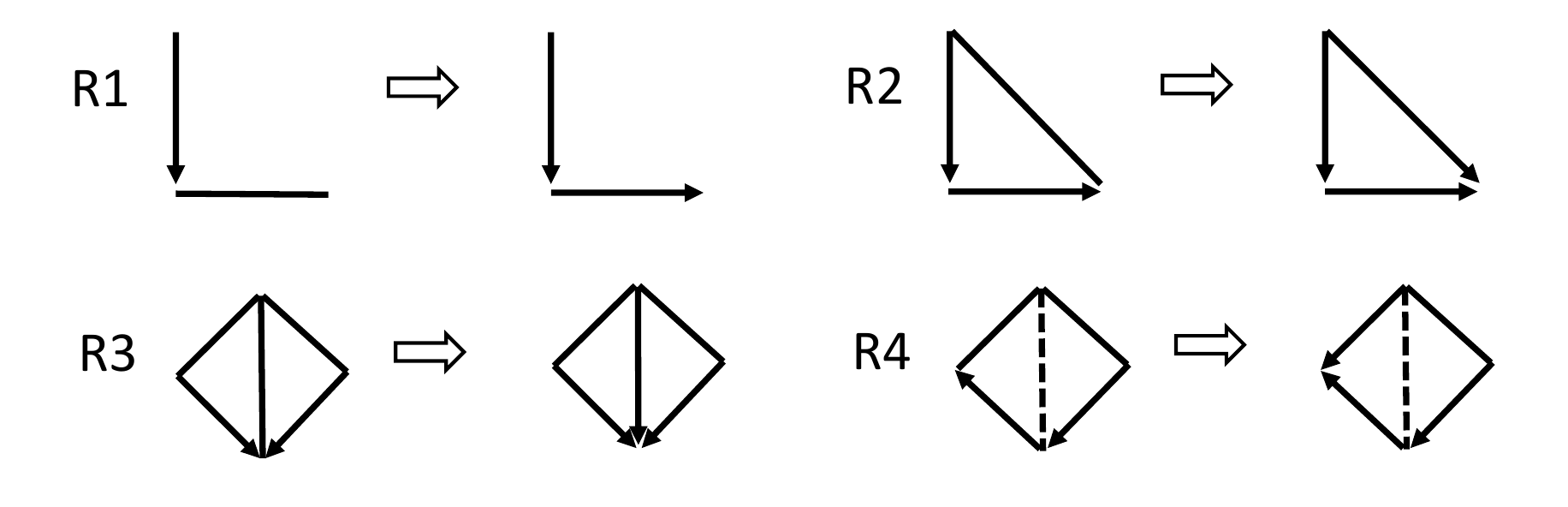}}
\caption{Meek's orientation rules: R1, R2, R3 and R4 \citep{meek1995causal}. In each rule, if the graph on the left-hand side is an induced subgraph of a PDAG $\mathcal{G}$, the undirected edge in that subgraph should be oriented according to the direction indicated on the right-hand side.}
\label{fig: orientation_rule}
\end{center}
\end{figure}

\textbf{Undirected Connected Set.} In a graph $\mathcal{G}$, a set of nodes $\mathbf{S}$ is considered an \textit{undirected connected set}, if for any two distinct nodes $S_i$ and $S_j$ in $\mathbf{S}$, there exists an undirected path from $S_i$ to $S_j$ in $\mathcal{G}$.

\textbf{Bucket.} Let $\mathbf{D}$ be a node set in an MPDAG $\mathcal{G}=(\mathbf{V}, \mathbf{E})$. If $\mathbf{B}$ is a maximal \textit{undirected connected subset} of $\mathbf{D}$ in $\mathcal{G}$, we call $\mathbf{B}$ a \textit{bucket} in $\mathbf{D}$.

\subsection{Existing Results}

\begin{lemma}\citep[Lemma B.1]{perkovic2017interpreting} \label{lem: perkovic2017interpreting/Lemma_B.1}
    Let $p=\langle V_{1},..., V_{k}\rangle$ be a b-possibly causal definite status path in an MPDAG $\mathcal{G}$. If there is a node $i \in \{1,...,n-1\}$ such that $V_i \rightarrow V_{i+1}$, then $p(V_i, V_k)$ is a causal path in $\mathcal{G}$.
\end{lemma}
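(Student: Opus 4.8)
The plan is to prove \cref{lem: perkovic2017interpreting/Lemma_B.1} by a short induction that walks forward along $p$ from $V_i$ towards $V_k$, showing that once one directed edge $V_i\to V_{i+1}$ appears on a possibly causal definite status path, every subsequent edge is forced to point forward, so that $p(V_i,V_k)=\langle V_i\to V_{i+1}\to\cdots\to V_k\rangle$ is causal. The only two facts I will use about $p$ and $\mathcal{G}$ are: (a) $p$ is possibly causal, so no edge on $p$ has the form $V_j\leftarrow V_{j+1}$, and hence each edge $\{V_j,V_{j+1}\}$ is either $V_j\to V_{j+1}$ or $V_j - V_{j+1}$; and (b) every interior node of the definite status path $p$ is either a definite collider or a definite non-collider on $p$, and $\mathcal{G}$, being an MPDAG, is closed under Meek's orientation rules (\cref{fig: orientation_rule}).

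First I would set up the induction. The base case $V_i\to V_{i+1}$ is the hypothesis. For the inductive step, suppose $V_{j-1}\to V_j$ for some index $j$ with $i+1\le j\le k-1$; I claim $V_j\to V_{j+1}$. By fact (a) the remaining possibility to exclude is $V_j - V_{j+1}$, giving the triple $\langle V_{j-1},V_j,V_{j+1}\rangle$ with $V_{j-1}\to V_j - V_{j+1}$. Since $p$ is a definite status path, $V_j$ is a definite collider or a definite non-collider on $p$; it is not a collider because the edge $V_j - V_{j+1}$ carries no arrowhead at $V_j$. So $V_j$ would have to be a definite non-collider. I would then argue that the configuration $V_{j-1}\to V_j - V_{j+1}$ cannot occur on a definite status path in an MPDAG: if $V_{j-1}$ and $V_{j+1}$ are non-adjacent, then Meek's rule R1 applied to this induced subgraph would already have oriented $V_j\to V_{j+1}$, contradicting that the edge is undirected; if $V_{j-1}$ and $V_{j+1}$ are adjacent, then the undirected edge $V_j - V_{j+1}$ can be oriented $V_{j+1}\to V_j$ in some DAG of $[\mathcal{G}]$ (a shielded collider is permissible) and $V_j\to V_{j+1}$ in another, so $V_j$ is a collider on $p$ in the first and a non-collider in the second — hence $V_j$ is not of definite status, a contradiction. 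Either way $V_j\to V_{j+1}$, and the induction gives $V_i\to V_{i+1}\to\cdots\to V_k$, i.e.\ $p(V_i,V_k)$ is causal.

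I expect the adjacent sub-case to be the main obstacle: asserting that $V_j$ is of indefinite status there really rests on the structural fact that an undirected edge of an MPDAG is oriented both ways across the DAGs in $[\mathcal{G}]$, i.e.\ on the consistency of an MPDAG with its member DAGs. In a careful write-up I would discharge this either by invoking the precise definition of "definite status path'' (under which $V_{j-1}\to V_j - V_{j+1}$ simply does not qualify as a definite non-collider) or by citing the relevant existence-of-DAG-extension lemma for MPDAGs, rather than constructing the two extensions by hand; everything else in the argument is bookkeeping along the path.
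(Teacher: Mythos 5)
Your induction is sound, but note that the paper does not prove this statement at all: it is imported verbatim as an existing result, \citep[Lemma B.1]{perkovic2017interpreting}, so the only comparison available is with the original argument there, which is shorter and purely definitional. In that argument one observes that once $V_{j-1}\rightarrow V_j$ is on $p$, the node $V_j$ cannot be a collider on $p$ (the path is possibly causal, so there is no arrowhead at $V_j$ from $V_{j+1}$), hence $V_j$ must be a \emph{definite} non-collider; by the syntactic definition of definite non-collider this forces either $V_j\rightarrow V_{j-1}$ (excluded), or $V_{j-1}-V_j-V_{j+1}$ with non-adjacent endpoints (excluded, since $V_{j-1}\rightarrow V_j$ is directed), or $V_j\rightarrow V_{j+1}$ — so the last must hold, and the induction closes with no appeal to Meek's rules or to DAG extensions. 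Your version reaches the same conclusion but routes the undirected sub-case through two heavier facts: closure of MPDAGs under R1 (fine, and correct) and, in the shielded case, the reversibility of undirected MPDAG edges across $[\mathcal{G}]$, which is a semantic characterization of definite status that you would have to cite separately (it follows from Meek's completeness results, but it is not free). You flag exactly this in your last paragraph and name the definitional shortcut, so there is no gap — just be aware that the configuration $V_{j-1}\rightarrow V_j - V_{j+1}$ is already disqualified by the definition of a definite status path, which disposes of both the shielded and unshielded sub-cases at once and is how the cited source argues it.
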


\begin{lemma}\citep[Lemma 3.6]{perkovic2017interpreting}
\label{lem: perkovic2017interpreting/Lemma_3.6}
    Let $S$ and $T$ be distinct nodes in an MPDAG $\mathcal{G}$. If $p$ is a b-possibly causal path from $S$ to $T$ in $\mathcal{G}$, then a subsequence $p^*$ of $p$ forms a b-possibly causal unshielded path from $S$ to $T$ in $\mathcal{G}$.
\end{lemma}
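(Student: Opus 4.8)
The plan is to prove the statement by a \emph{shortest-subsequence} argument. Among all subsequences of $p$ that still form a b-possibly causal path from $S$ to $T$ in $\mathcal{G}$ --- a collection that is nonempty, since $p$ itself qualifies, and finite, since $p$ has finitely many nodes --- I would fix one of minimum length and call it $p^{*}=\langle S=V_0,\dots,V_m=T\rangle$. It then suffices to show that this minimal $p^{*}$ is unshielded, i.e.\ that no triple $\langle V_{j-1},V_j,V_{j+1}\rangle$ has $V_{j-1}$ adjacent to $V_{j+1}$. The whole argument is a proof by contradiction: if such a shielded triple existed, I would ``cut the corner'' at $V_j$, replacing the subpath $\langle V_{j-1},V_j,V_{j+1}\rangle$ with the single edge $V_{j-1}-V_{j+1}$, and argue that the resulting shorter subsequence is again a b-possibly causal path from $S$ to $T$, contradicting minimality of $p^{*}$.

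The key local fact I would establish is that a shielded interior triple on a b-possibly causal path must carry the orientation $V_j\to V_{j+1}$. This is where the \emph{definite-status} character of these paths is essential, and it is exactly what excludes the otherwise fatal configuration $V_{j-1}-V_j-V_{j+1}$ with a backward chord $V_{j+1}\to V_{j-1}$. Because the path is b-possibly causal, neither incident edge can have an arrowhead at $V_j$ coming from the right, so $V_j$ cannot be a collider; hence it is a definite non-collider. A definite non-collider whose endpoints are \emph{adjacent} cannot be realized through the ``both edges undirected with non-adjacent endpoints'' case, so it must have a tail at $V_j$; since the backward mark $V_j\to V_{j-1}$ is forbidden, this tail forces $V_j\to V_{j+1}$.

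With $V_j\to V_{j+1}$ in hand, I would rule out a backward chord $V_{j+1}\to V_{j-1}$. If the left edge is $V_{j-1}\to V_j$, then $V_{j+1}\to V_{j-1}$ would close the directed cycle $V_{j-1}\to V_j\to V_{j+1}\to V_{j-1}$, impossible in an MPDAG. If the left edge is undirected, $V_{j-1}-V_j$, then the pattern $V_j\to V_{j+1}\to V_{j-1}$ together with $V_j-V_{j-1}$ triggers Meek's rule R2 (\cref{fig: orientation_rule}) and would force $V_j\to V_{j-1}$, contradicting that $\mathcal{G}$ is maximally oriented with $V_{j-1}-V_j$ undirected. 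Hence the chord between $V_{j-1}$ and $V_{j+1}$ is either $V_{j-1}\to V_{j+1}$ or $V_{j-1}-V_{j+1}$ --- in both cases not backward --- so the corner-cut subsequence is still b-possibly causal, is strictly shorter, and contradicts minimality.

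I expect the main obstacle to be precisely this corner-cutting step: one must not only exclude the backward chord (which leans on the definite-status property together with closure under Meek's rules, especially R2, and acyclicity) but also check that the shortened path remains of the required b-possibly causal type, since cutting the corner alters the on-path neighbourhoods of $V_{j-1}$ and $V_{j+1}$. I would handle this by verifying that the new consecutive edges inherit non-backward orientations from the case analysis above, appealing to \cref{lem: perkovic2017interpreting/Lemma_B.1} to propagate the causal orientation along the remainder of the path once a directed edge appears. The remaining bookkeeping --- distinctness of the retained nodes, adjacency of consecutive chosen nodes, and termination of the cutting process --- is routine.
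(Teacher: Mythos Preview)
The paper does not prove this lemma: it is quoted verbatim as \citep[Lemma 3.6]{perkovic2017interpreting} in the ``Existing Results'' subsection of the appendix and is used as a black box (e.g.\ in the proof of \cref{lem: definite ancestral relationship between X and W}). There is therefore no in-paper proof to compare your proposal against.

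For what it is worth, your shortest-subsequence/corner-cutting strategy is the standard way this type of statement is established and matches the argument in the cited source. One point to tighten: you pass from ``$V_j$ is not a collider'' to ``$V_j$ is a definite non-collider'' by invoking the definite-status character of the path, but the lemma as stated says only ``b-possibly causal,'' not ``b-possibly causal definite status.'' In Perkovi\'c et al.'s framework these coincide on MPDAGs in the relevant sense, but you should make that identification explicit (or cite it) rather than assume it, since your exclusion of the $V_{j-1}-V_j-V_{j+1}$ configuration with backward chord relies on it. The rest of your outline --- ruling out $V_{j+1}\to V_{j-1}$ via acyclicity and Meek's R2, and propagating orientations via \cref{lem: perkovic2017interpreting/Lemma_B.1} --- is sound.
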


\begin{corollary}[Bucket Decomposition] \citep[Corollary 3.4]{perkovic2020identifying} \label{cor: bucket decomposition}
    Let $\mathbf{D}$ be a node set in an MPDAG $\mathcal{G}=(\mathbf{V}, \mathbf{E})$. Then there is a unique partition of $\mathbf{D}$ into $\mathbf{B_1},..., \mathbf{B_k}$, $k \ge 1$ in $\mathcal{G}$, where $\mathbf{B_1},..., \mathbf{B_k}$ are buckets in $\mathcal{G}$. That is
    \begin{itemize}
        \item $\mathbf{D}=\bigcup_{i=1}^{k} \mathbf{B_i}$, and
        \item $\mathbf{B_i} \cap \mathbf{B_j} = \emptyset$, $i,j \in \{1,...,k\}$, $i \ne j$, and
        \item $\mathbf{B_i}$ is a bucket in $\mathbf{D}$ for each $i \in \{1,...,k\}$.
    \end{itemize}
\end{corollary}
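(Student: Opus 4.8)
The plan is to recognize this as the standard fact that the connected components of a graph partition its vertex set, instantiated for the undirected structure of $\mathcal{G}$ restricted to $\mathbf{D}$. Concretely, I would define a binary relation $\sim$ on $\mathbf{D}$ by declaring $U \sim W$ whenever $U = W$ or there is an undirected path between $U$ and $W$ with all nodes on the path lying in $\mathbf{D}$. The entire argument then reduces to verifying that $\sim$ is an equivalence relation and that its equivalence classes are exactly the buckets of $\mathbf{D}$, after which the three bulleted properties are automatic.

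First I would check that $\sim$ is an equivalence relation. Reflexivity holds by definition. Symmetry is immediate because reversing an undirected path yields an undirected path, since undirectedness means every edge on the path is $-$, a symmetric relation. The only step requiring care is transitivity: given undirected paths witnessing $U \sim W$ and $W \sim Z$ inside $\mathbf{D}$, their concatenation is an undirected \emph{walk} from $U$ to $Z$ whose nodes all lie in $\mathbf{D}$; shortcutting repeated vertices extracts a genuine undirected \emph{path} from $U$ to $Z$ still contained in $\mathbf{D}$, so $U \sim Z$. With $\sim$ an equivalence relation on the nonempty set $\mathbf{D}$, its classes $\mathbf{B_1}, \dots, \mathbf{B_k}$ with $k \ge 1$ form a partition by the usual reasoning: they are nonempty, pairwise disjoint, and cover $\mathbf{D}$, which discharges the second and third bullets as soon as each class is shown to be a bucket.

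Next I would identify the classes with the buckets. Each class $\mathbf{B_i}$ is undirected connected directly by the definition of $\sim$, so it remains to prove maximality: if $\mathbf{B_i} \subsetneq \mathbf{B'} \subseteq \mathbf{D}$ with $\mathbf{B'}$ undirected connected, then any $W \in \mathbf{B'} \setminus \mathbf{B_i}$ would be joined to some $U \in \mathbf{B_i}$ by an undirected path inside $\mathbf{B'} \subseteq \mathbf{D}$, forcing $U \sim W$ and hence $W \in \mathbf{B_i}$, a contradiction. Conversely, any bucket $\mathbf{B}$ of $\mathbf{D}$, being undirected connected, is contained in a single class $\mathbf{B_i}$; maximality of $\mathbf{B}$ together with the fact that $\mathbf{B_i}$ is itself undirected connected forces $\mathbf{B} = \mathbf{B_i}$. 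Thus the buckets of $\mathbf{D}$ are precisely the equivalence classes of $\sim$.

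Finally, uniqueness follows at once: any partition of $\mathbf{D}$ into buckets must consist of equivalence classes of $\sim$ by the correspondence just established, and the collection of equivalence classes is uniquely determined by $\sim$. The only genuinely delicate point in the whole argument is transitivity — more precisely, the walk-to-path extraction while keeping every intermediate node inside $\mathbf{D}$ — together with pinning down whether the ambient definition of \emph{undirected connected set} intends paths confined to the set or merely paths in $\mathcal{G}$; under either reading the resulting objects still partition $\mathbf{D}$, so the stated conclusion is unaffected.
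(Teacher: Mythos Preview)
The paper does not supply its own proof of this corollary: it is listed in the Appendix under ``Existing Results'' as a citation of \cite[Corollary 3.4]{perkovic2020identifying}, with no accompanying argument. So there is no in-paper proof to compare against.

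Your proposal is correct and is the canonical argument: buckets are precisely the connected components of the undirected subgraph induced on $\mathbf{D}$, and connected components always partition the vertex set. The equivalence-relation formulation is clean, the walk-to-path extraction handles transitivity properly, and the two-way inclusion between equivalence classes and maximal undirected-connected subsets is the right way to pin down that these notions coincide. Your closing remark about the ambiguity in the definition of \emph{undirected connected set} is also well taken: the paper's definition requires only an undirected path ``in $\mathcal{G}$'' rather than one confined to the set, but as you note, the partition conclusion survives either reading because the relation $\sim$ is still an equivalence relation on $\mathbf{D}$ in both cases and the maximality argument goes through unchanged.
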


\begin{lemma} \citep[Lemma 3.5]{perkovic2020identifying}
    Let $\mathbf{D}$ be a node set in an MPDAG $\mathcal{G}=(\mathbf{V}, \mathbf{E})$ and let $(\mathbf{B_1},..., \mathbf{B_k})$, $k \geq 1$, be the output of \texttt{PCO}$(\mathbf{D}, \mathcal{G})$ which is presented in \cref{alg: Partial causal ordering (PCO)}. Then for each $i,j \in \{1,...,k\}$, $\mathbf{B_i}$ and $\mathbf{B_j}$ are buckets in $\mathbf{D}$ and if $i < j$, then $\mathbf{B_i} < \mathbf{B_j}$ in $\mathcal{G}$. 
\end{lemma}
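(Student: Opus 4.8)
The plan is to establish the two claims of the lemma in turn: (i) every set $\mathbf{B}_i$ returned by \texttt{PCO} is a genuine bucket of $\mathbf{D}$, and (ii) the listed order realizes a partial causal ordering, i.e.\ $\mathbf{B}_i < \mathbf{B}_j$ whenever $i < j$. The common engine for both is a single structural fact about buckets that I would isolate first: \emph{a bucket is undirected-edge-isolated from the rest of $\mathbf{D}$}. Concretely, if $\mathbf{B}$ is a maximal undirected connected subset of $\mathbf{D}$ and some $B \in \mathbf{B}$ were joined by an undirected edge to $V \in \mathbf{D} \setminus \mathbf{B}$, then $B \sim V$ would place $V$ in the same undirected connected component, contradicting maximality. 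Hence every edge between two distinct buckets of $\mathbf{D}$ is directed, and (taking undirected connectivity within the induced subgraph on $\mathbf{D}$) no undirected path among the remaining nodes can route through a removed bucket.

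For claim (i) I would induct on the iteration count of \texttt{PCO} in \cref{alg: Partial causal ordering (PCO)}, using the unique partition $\mathbf{D} = \mathbf{B}_1' \cup \dots \cup \mathbf{B}_m'$ of \cref{cor: bucket decomposition} as a fixed reference. The structural fact shows that deleting a whole bucket leaves the undirected connectivity of the surviving nodes untouched, so the buckets of the residual set $\mathbf{D} \setminus (\mathbf{B}_1 \cup \dots \cup \mathbf{B}_{i-1})$ are precisely the as-yet-unremoved members of the reference partition. The extraction rule at step $i$ then selects exactly one of these, never a union or a proper part of one, which identifies each $\mathbf{B}_i$ with some $\mathbf{B}_j'$ and proves it is a bucket of $\mathbf{D}$.

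For claim (ii) I would use that \texttt{PCO} extracts at each step a \emph{source} bucket, one with no incoming directed edge from the nodes still present. Fixing $i < j$ and any edge between $V \in \mathbf{B}_i$ and $W \in \mathbf{B}_j$: the edge is directed by the structural fact, and since $W$ was still present when $\mathbf{B}_i$ was extracted, the edge cannot point into $\mathbf{B}_i$ (else $\mathbf{B}_i$ would not have been a source), so it is oriented $V \rightarrow W$ — exactly the defining condition of $\mathbf{B}_i < \mathbf{B}_j$. To close the argument I would verify that a source bucket exists at every iteration: because $\mathcal{G}$ is an MPDAG it admits a consistent DAG, in which all directed edges of $\mathcal{G}$ are preserved, so the quotient obtained by contracting each bucket to a point has no directed cycle and therefore always has a source.

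The step I expect to be the main obstacle is the inductive bookkeeping in claim (i): one must argue carefully that the dynamic notion of ``bucket of the current residual subgraph'' stays synchronized with the static decomposition of \cref{cor: bucket decomposition}, and that the algorithm's rule peels off a single whole bucket each time. Everything there rests on the undirected-edge isolation of buckets together with the uniqueness in \cref{cor: bucket decomposition}; once those are in hand, both the bucket identification and the source/acyclicity reasoning for the ordering are routine.
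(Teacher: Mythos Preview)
The paper does not prove this lemma itself; it is quoted as a preliminary result from Perkovi\'c (2020), so there is no in-paper argument to compare against. Assessing your proposal on its own terms, the overall strategy (identify each output set as a bucket, then read off the ordering from the extraction rule plus acyclicity) is the right one, but two points do not match the algorithm as actually stated.

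First, \texttt{PCO} does not iterate over buckets of $\mathbf{D}$: it forms the bucket decomposition of the \emph{entire} vertex set $\mathbf{V}$, peels off one bucket $\mathbf{C}$ of $\mathbf{V}$ at a time, and records $\mathbf{C}\cap\mathbf{D}$ when nonempty. Your inductive bookkeeping about a shrinking ``residual $\mathbf{D}$'' is therefore aimed at the wrong object. Claim (i) instead reduces to the static observation that $\mathbf{C}\cap\mathbf{D}$ is either empty or a bucket of $\mathbf{D}$, which is immediate from the definitions: two nodes of $\mathbf{D}$ lie in the same bucket of $\mathbf{D}$ iff an undirected path in $\mathcal{G}$ joins them, i.e.\ iff they lie in the same bucket of $\mathbf{V}$. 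Second, the extraction rule selects a \emph{sink} bucket (``all edges between $\mathbf{C}$ and $\bar{\mathbf{C}}$ are into $\mathbf{C}$'') and \emph{prepends} it to the list --- it does not select a source. Your temporal reasoning in claim (ii) is consequently inverted: for $i<j$ it is $\mathbf{B}_j$ that is removed earlier while (the $\mathbf{V}$-bucket containing) $\mathbf{B}_i$ is still present, and the sink condition at that step forces any edge between $V\in\mathbf{B}_i$ and $W\in\mathbf{B}_j$ to be $V\rightarrow W$. Your conclusion is correct, but the sentence ``$W$ was still present when $\mathbf{B}_i$ was extracted \dots\ $\mathbf{B}_i$ would not have been a source'' is backwards as written. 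The acyclicity argument you give for existence transfers to sinks without change.
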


\begin{algorithm}
\caption{Partial causal ordering (PCO) \citep[Algorithm 1]{perkovic2020identifying}}
\label{alg: Partial causal ordering (PCO)}
\begin{algorithmic}[1]
\State {\bfseries Inputs:} Node set $\mathbf{D}$ in MPDAG $\mathcal{G}=(\mathbf{V}, \mathbf{E})$. 
\State {\bfseries Output:} An ordered list $\mathbf{B}=(\mathbf{B_1},..., \mathbf{B_k})$, $k \ge 1$ of the bucket decomposition of $\mathbf{D}$ in $\mathcal{G}$.
\State Let $\mathbf{ConComp}$ be the bucket decomposition of $\mathbf{V}$ in $\mathcal{G}$;
\State Let $\mathbf{ConComp}$ be an empty list;
\While{$\mathcal{B} \neq \emptyset$}
    \State Let $\mathbf{C} \in \mathbf{ConComp}$;
    \State Let $\mathbf{\Bar{C}}$ be the set of nodes in $\mathbf{ConComp}$ that are not in $\mathbf{C}$;
    \If{all edges between $\mathbf{C}$ and $\mathbf{\Bar{C}}$ are into $\mathbf{C}$ in $\mathcal{G}$}
        \State Remove $\mathbf{C}$ from $\mathbf{ConComp}$;
        \State Let $\mathbf{B_{*}} = \mathbf{C} \cup \mathbf{D}$;
        \If{$\mathbf{B_{*}} \neq \emptyset$}
            \State Add $\mathbf{B_{*}}$ to the beginning of $\mathbf{B}$;
        \EndIf
    \EndIf
\EndWhile
\end{algorithmic}
\end{algorithm}

\citet{perkovic2020identifying} introduces a necessary criterion, referred to as amenability by \citep{perkovic2015complete, perkovic2017interpreting}, for the identifiability of causal effects in MPDAGs. This criterion is summarized in \cref{prop: necessary condition}.

\begin{proposition}\citep[Proposition 3.2]{perkovic2020identifying} \label{prop: necessary condition}
    Let $\mathbf{X}$ and $\mathbf{Y}$ be disjoint node sets in an MPDAG $\mathcal{G}=(\mathbf{V}, \mathbf{E})$. If there is a proper possibly causal path from $\mathbf{X}$ to $\mathbf{Y}$ that starts with an undirected edge in $\mathcal{G}$, then the causal effect of $\mathbf{X}$ on $\mathbf{Y}$ is not identifiable in $\mathcal{G}$.
\end{proposition}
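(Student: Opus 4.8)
The statement asserts non-identifiability, so the plan is the standard one for such results: produce two DAGs $\mathcal{D}_1,\mathcal{D}_2\in[\mathcal{G}]$ together with a single observational density $f$ consistent with both, for which the truncated-factorization (g-)formula evaluated in $\mathcal{D}_1$ and in $\mathcal{D}_2$ returns different post-intervention densities $f(\mathbf{y}\mid do(\mathbf{x}))$. Since a density consistent with one of two Markov-equivalent DAGs is automatically consistent with the other, any $f$ arising from an SCM along $\mathcal{D}_1$ will serve; and because identifiability in $\mathcal{G}$ requires the interventional law to be the same functional of $f$ across all DAGs in $[\mathcal{G}]$, exhibiting one such $f$ on which the two g-formulas disagree finishes the proof.

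First I would normalize the hypothesized path. Let $p$ be the given proper possibly causal path from $\mathbf{X}$ to $\mathbf{Y}$ starting with an undirected edge. By \cref{lem: perkovic2017interpreting/Lemma_3.6} I may pass to a proper possibly causal \emph{unshielded} subpath; truncating it at its first vertex in $\mathbf{Y}$ and invoking properness, I obtain $p=\langle X=V_0,V_1,\dots,V_m=Y\rangle$ with $X\in\mathbf{X}$, $Y\in\mathbf{Y}$, $V_0-V_1$ undirected in $\mathcal{G}$, consecutive nodes adjacent, nonconsecutive nodes on $p$ nonadjacent, and $V_i\notin\mathbf{X}\cup\mathbf{Y}$ for $0<i<m$. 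By \cref{lem: perkovic2017interpreting/Lemma_B.1}, $p$ has the shape $V_0-\dots-V_j\to V_{j+1}\to\dots\to V_m$, i.e.\ an undirected prefix followed by a directed suffix.

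Next I would build the two DAGs. Because $V_0-V_1$ is still undirected in the maximally oriented $\mathcal{G}$, both orientations extend to members of $[\mathcal{G}]$: choose $\mathcal{D}_1\in[\mathcal{G}]$ with $V_0\to V_1$ and $\mathcal{D}_2\in[\mathcal{G}]$ with $V_1\to V_0$. A v-structure argument along the unshielded path $p$ (no triple $\langle V_{i-1},V_i,V_{i+1}\rangle$ on $p$ can be a v-structure, since $\mathcal{G}$ has no such directed structure and $p$ is possibly causal) shows that in $\mathcal{D}_1$ every edge of $p$ points forward, so $X\to V_1\to\dots\to Y$ is a directed path; in $\mathcal{D}_2$ the edge incident to $X$ on $p$ points into $X$, so this route is not causal. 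I would then take $f$ from a linear-Gaussian SCM compatible with $\mathcal{D}_1$ in which the coefficients on the edges of $p$ are nonzero while the coefficients on all other edges are set to zero; this makes $\{V_0,\dots,V_m\}$ a Markov chain and renders everything else independent of it. Evaluating the g-formula of $\mathcal{D}_1$ then yields $f(y\mid do(x))$ depending genuinely on $x$, whereas in $\mathcal{D}_2$, since $X$ has no descendant (in the support of $f$) on $p$, the g-formula collapses to $f(y)$, independent of $x$. Hence the effect is not identifiable in $\mathcal{G}$.

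The step I expect to be the main obstacle is the structural bookkeeping around $\mathcal{D}_2$: one must ensure that flipping $V_0-V_1$, together with whatever Meek-forced reorientations accompany it, does not leave some other directed $X$-to-$Y$ route that, under the chosen $f$, reintroduces a dependence on $x$ in the $\mathcal{D}_2$ g-formula. Controlling this is exactly where the unshieldedness of $p$, \cref{lem: perkovic2017interpreting/Lemma_B.1}, and the ``no spurious v-structure'' property of MPDAGs have to be used carefully, perhaps by first restricting attention to a suitable induced subgraph containing $p$ before choosing the parametrization. Once the orientations and the zeroed coefficients are pinned down, the two g-formula evaluations are routine.
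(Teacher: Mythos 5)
You are re-proving a result the paper only imports: \cref{prop: necessary condition} is quoted from Perkovi\'c (2020, Proposition 3.2) and the paper gives no proof of it, so the comparison is with that reference. Your overall strategy is the right (and standard) one: use the undirected first edge $X-V_1$ to pick $\mathcal{D}_1,\mathcal{D}_2\in[\mathcal{G}]$ with the two orientations, take one observational density $f$ consistent with both, and show the two truncated-factorization formulas disagree, which is exactly what \cref{def: Identifiability of Causal Effects} requires. However, there is a genuine gap at your very first step. \cref{lem: perkovic2017interpreting/Lemma_3.6} gives an unshielded possibly causal subsequence of $p$, but it does \emph{not} preserve the property of starting with an undirected edge, and in general no unshielded proper possibly causal path from $\mathbf{X}$ to $\mathbf{Y}$ with an undirected first edge need exist. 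Concretely, take the MPDAG $\mathcal{G}$ with edges $X - V_1$, $V_1 \to Y$, $X \to Y$ (a valid MPDAG: no Meek rule orients $X-V_1$). The only proper possibly causal path from $X$ to $Y$ starting with an undirected edge is $\langle X, V_1, Y\rangle$, which is shielded by the chord $X\to Y$; its unshielded subsequence is $\langle X, Y\rangle$, whose first edge is directed. So the normalized path you build the whole argument on does not exist, yet the conclusion of the proposition still holds in this graph (the two orientations of $X-V_1$ yield the adjusted versus unadjusted formula for $V_1$, which differ), so the case cannot be discarded. Everything downstream of the normalization — the no-new-v-structure argument forcing $p$ to be causal in $\mathcal{D}_1$, and the refactorization argument in $\mathcal{D}_2$ — uses unshieldedness, so the proof as written fails on exactly such graphs. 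The standard repair is to take a \emph{shortest} proper possibly causal path starting with an undirected edge and control its chords directly (minimality kills all chords except edges of the form $X\to V_j$ and backward chords), which is essentially how the original proof proceeds.

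On the step you yourself flag as the main obstacle: once $p$ really is unshielded, your $\mathcal{D}_2$ computation can be made rigorous and is not the problem. In $\mathcal{D}_2$ the path carries no collider (an unshielded collider would be a v-structure of $\mathcal{G}$, contradicting possible causality), so with $V_1\to X$ its edges point away from some source node; consequently $X$ is a parent of no path node in $\mathcal{D}_2$, off-path parents are independent of everything under your zero-coefficient density, and the product of the path-node conditionals is just a refactorization of the observational marginal of $(V_1,\dots,V_m)$ along a collider-free orientation of a Markov chain, giving $f_2(y\mid do(\mathbf{x}))=f(y)$ while $f_1(y\mid do(\mathbf{x}))$ has mean proportional to $x$. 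So the bookkeeping you worried about goes through in the unshielded case; the real missing piece is the reduction itself, i.e.\ handling shielded paths, which your appeal to \cref{lem: perkovic2017interpreting/Lemma_3.6} does not deliver.
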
   

In \citet{perkovic2020identifying}, it is shown that the condition stated in \cref{prop: necessary condition} is not only necessary, but also sufficient for the identification of causal effects in MPDAGs. This result is established in \cref{theo: causal identification formula}. 

\begin{theorem}\citep[Theorem 3.6]{perkovic2020identifying} \label{theo: causal identification formula}
    Let $\mathbf{X}$ and $\mathbf{Y}$ be disjoint node sets in an MPDAG $\mathcal{G}=(\mathbf{V}, \mathbf{E})$. If there is no proper possibly causal path from $\mathbf{X}$ to $\mathbf{Y}$ in $\mathcal{G}$ that starts with an undirected edge, then for any density $f$ consistent with $\mathcal{G}$ we have 
    \begin{equation*}
        f(\mathbf{y}|do(\mathbf{x})) = \int \prod_{i=1}^{k} f(\mathbf{b_{i}}|pa(b_{i}, \mathcal{G})) d \mathbf{b} 
    \end{equation*}
    for values pa($\mathbf{b_i}$, $\mathcal{G}$) of Pa($\mathbf{b_i}$, $\mathcal{G}$) that are in agreement with $\mathbf{x}$, where $(\mathbf{B_1},..., \mathbf{B_k})=\texttt{PCO} (An(\mathbf{Y, \mathcal{G_{\mathbf{V} \backslash \mathbf{X}}}}), \mathcal{G})$ and $\mathbf{B}=An(\mathbf{Y}, \mathcal{G_{\mathbf{V} \backslash \mathbf{X}}}) \backslash \mathbf{Y}$.
\end{theorem}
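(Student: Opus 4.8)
The plan is to show that for \emph{every} DAG $\mathcal{D}\in[\mathcal{G}]$ whose distribution is $f$, the right-hand side of the displayed formula equals the ordinary truncated-factorization expression for $f(\mathbf{y}\mid do(\mathbf{x}))$ in $\mathcal{D}$. Since that right-hand side is assembled only from $f$ and from combinatorial data of $\mathcal{G}$ (the buckets of $\mathbf{D}:=An(\mathbf{Y},\mathcal{G}_{\mathbf{V}\setminus\mathbf{X}})$ and the parent sets in $\mathcal{G}$), it does not depend on which consistent $\mathcal{D}$ we pick, so one such computation simultaneously proves that $P(\mathbf{y}_{\mathbf{x}})$ is identifiable in $\mathcal{G}$ and that it is given by the stated formula. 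Fix such a $\mathcal{D}$, write $\mathbf{D}=An(\mathbf{Y},\mathcal{G}_{\mathbf{V}\setminus\mathbf{X}})$ (so $\mathbf{B}=\mathbf{D}\setminus\mathbf{Y}$), and let $(\mathbf{B_1},\dots,\mathbf{B_k})=\texttt{PCO}(\mathbf{D},\mathcal{G})$.

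The first step is two structural facts that use the amenability hypothesis together with $\mathcal{D}\in[\mathcal{G}]$. (a) No undirected edge of $\mathcal{G}$ goes from a node outside $\mathbf{D}$ to a node $V\in\mathbf{D}$: prepending such a neighbour $W$ to a possibly causal, $\mathbf{X}$-avoiding path from $V$ to $\mathbf{Y}$ would put $W\in\mathbf{D}$ if $W\notin\mathbf{X}$, and would exhibit a proper possibly causal path from $\mathbf{X}$ to $\mathbf{Y}$ starting with an undirected edge if $W\in\mathbf{X}$, contradicting amenability. Hence the buckets of $\mathbf{D}$ are exactly the undirected connected components of $\mathcal{G}_{\mathbf{D}}$, distinct buckets are joined in $\mathcal{G}$ only by directed edges, and — using the \texttt{PCO} ordering lemma — every $\mathcal{G}$-parent of a bucket $\mathbf{B_i}$ lies in $\mathbf{X}$ or in a strictly earlier bucket. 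The same argument shows that a $\mathcal{D}$-parent $W$ of a node $V\in\mathbf{B_i}$ with $W\notin\mathbf{B_i}$ is joined to $V$ by a directed edge of $\mathcal{G}$ (an undirected $\mathcal{G}$-edge $W-V$ would force $W\in\mathbf{B_i}$ or contradict (a)), so $W\in pa(\mathbf{B_i},\mathcal{G})$; since within-bucket $\mathcal{D}$-parents precede $V$ in any $\mathcal{D}$-topological order of $\mathbf{B_i}$, this gives $pa(V,\mathcal{D})\subseteq\{\text{predecessors of }V\text{ in }\mathbf{B_i}\}\cup pa(\mathbf{B_i},\mathcal{G})$. (b) In particular every $\mathcal{D}$-parent of a node of $\mathbf{D}$ lies in $\mathbf{D}\cup\mathbf{X}$, i.e.\ $\mathbf{D}\cup\mathbf{X}$ is ancestrally closed in $\mathcal{D}$, and since $\mathbf{D}\cap\mathbf{X}=\emptyset$ the standard reduced g-formula over an ancestral set gives $f(\mathbf{y}\mid do(\mathbf{x}))=\int\prod_{V\in\mathbf{D}}f\!\left(v\mid pa(v,\mathcal{D})\right)\,d(\mathbf{d}\setminus\mathbf{y})$, with parent values in agreement with $\mathbf{x}$.

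It remains to prove the algebraic identity $\prod_{V\in\mathbf{D}}f(v\mid pa(v,\mathcal{D}))=\prod_{i=1}^{k}f(\mathbf{b_i}\mid pa(\mathbf{b_i},\mathcal{G}))$ at parent values agreeing with $\mathbf{x}$. First substitute $\mathbf{X}=\mathbf{x}$ and pass to the manipulated DAG $\mathcal{D}_{\overline{\mathbf{X}}}$, so that the nodes of $\mathbf{X}$ become sources and $f(\cdot\mid do(\mathbf{x}))$ is Markov with respect to $\mathcal{D}_{\overline{\mathbf{X}}}$. Fix a bucket $\mathbf{B_i}$ and a $\mathcal{D}$-topological order $v_{i,1},\dots,v_{i,m_i}$ of $\mathbf{B_i}$; by the chain rule $f(\mathbf{b_i}\mid pa(\mathbf{b_i},\mathcal{G}))=\prod_{l}f\!\left(v_{i,l}\mid v_{i,1},\dots,v_{i,l-1},pa(\mathbf{b_i},\mathcal{G})\right)$. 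By step (a) the conditioning set contains $pa(v_{i,l},\mathcal{D})$; and it consists of non-descendants of $v_{i,l}$ in $\mathcal{D}_{\overline{\mathbf{X}}}$, because a causal path of $\mathcal{D}_{\overline{\mathbf{X}}}$ leaving $v_{i,l}$ can never re-enter $\mathbf{B_i}$ (a re-entry edge would come from an $\mathbf{X}$-source or from a strictly earlier bucket, neither reachable from $v_{i,l}$) and inter-bucket edges point forward. The local Markov property of $f(\cdot\mid do(\mathbf{x}))$ therefore collapses the factor to $f(v_{i,l}\mid pa(v_{i,l},\mathcal{D}))$; multiplying over $l$ and then over $i$ yields the identity, and combining with step (b) yields the theorem.

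The step I expect to be the main obstacle is part (a): making precise exactly how amenability forbids the ``bad'' undirected edges, and then using the maximality of buckets together with the \texttt{PCO} ordering to classify \emph{all} $\mathcal{D}$-parents of a bucket node as within the bucket, in an earlier bucket, or in $\mathbf{X}$ — this is the one place where the graphical hypothesis is genuinely coupled to the distributional factorization. A secondary nuisance is the bookkeeping needed to justify working in $\mathcal{D}_{\overline{\mathbf{X}}}$: without moving to the manipulated graph an $\mathbf{X}$-node could a priori be a descendant of a bucket node, and the local Markov argument for the non-descendant condition would not be immediate. The remaining ingredients (the reduced g-formula over an ancestral set, the chain rule, and the $\mathcal{D}$-independence of the final expression) are routine.
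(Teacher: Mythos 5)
You should note at the outset that the paper itself contains no proof of \cref{theo: causal identification formula}: it is quoted verbatim as an ``existing result'' from \citet{perkovic2020identifying}, so there is no internal argument to compare against. Your outline does, in fact, follow the same overall strategy as the original source: fix an arbitrary DAG $\mathcal{D}\in[\mathcal{G}]$ generating $f$, apply the g-formula over the ancestral set $An(\mathbf{Y},\mathcal{G}_{\mathbf{V}\setminus\mathbf{X}})$, regroup the factors by the buckets returned by \texttt{PCO}, and conclude identifiability because the resulting expression depends only on $\mathcal{G}$ and $f$. Your step (a) (no undirected edge of $\mathcal{G}$ crosses into $An(\mathbf{Y},\mathcal{G}_{\mathbf{V}\setminus\mathbf{X}})$; every out-of-bucket $\mathcal{D}$-parent of a bucket node is a directed $\mathcal{G}$-parent lying in $\mathbf{X}$ or an earlier bucket) is the right structural input and is essentially the content of the bucket lemmas used there; note, though, that your prepending argument only works if $An(\cdot)$ is read as \emph{possible} ancestors (membership witnessed by a possibly causal path) --- under a strictly directed-ancestor reading, step (a) is false (e.g.\ $W - V$, $V\rightarrow Y$ with $W$ not a directed ancestor).

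The genuine gap is in the collapse of the chain-rule factors. The identity you need concerns the \emph{observational} conditional $f(\mathbf{b_i}\mid pa(\mathbf{b_i},\mathcal{G}))$, yet you justify the collapse by ``the local Markov property of $f(\cdot\mid do(\mathbf{x}))$'' with respect to $\mathcal{D}_{\overline{\mathbf{X}}}$. Either you work observationally, in which case you must show the conditioning set consists of non-descendants of $v_{i,l}$ in $\mathcal{D}$ itself --- i.e.\ the scenario you flag (an $\mathbf{X}$-node that is simultaneously a $\mathcal{G}$-parent of the bucket and a $\mathcal{D}$-descendant of a bucket node) must be \emph{excluded}, not dodged; or you work interventionally, in which case what your computation delivers is $f(\mathbf{b_i}\mid pa(\mathbf{b_i},\mathcal{G}))$ evaluated under $do(\mathbf{x})$, and you still owe the invariance of this conditional under the intervention, which is precisely the point at issue. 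The missing ingredient that repairs both routes is the standard MPDAG bucket property: if $W\notin\mathbf{B_i}$ is a parent of some node of $\mathbf{B_i}$ in $\mathcal{G}$, then Meek's R1/R2 closure combined with your step (a) forces $W\rightarrow b$ in $\mathcal{G}$ for \emph{every} $b\in\mathbf{B_i}$; hence $W$ is a $\mathcal{D}$-parent of every bucket node and, by acyclicity, cannot be a $\mathcal{D}$-descendant of any of them, so the observational collapse goes through directly and the detour through $\mathcal{D}_{\overline{\mathbf{X}}}$ becomes unnecessary. As written, the architecture is right but this key invariance step is assumed rather than proved.
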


\begin{corollary}[Factorization and truncated factorization formula in MPDAGs]\citep[Corollary 3.7]{perkovic2020identifying} \label{cor: perkovic factorization formula}
    Let $\mathbf{X}$ be a node set in an MPDAG $\mathcal{G}=(\mathbf{V}, \mathbf{E})$ and let $\mathbf{V'}=\mathbf{V} \backslash \mathbf{X}$. Furthermore, let $(\mathbf{V_1,...,V_k})$ be the output of \texttt{PCO$(\mathbf{V}$,$\mathcal{G})$}. Then for any density $f$ consistent with $\mathcal{G}$ we have
    \begin{enumerate}
        \item $f(\mathbf{v}) = \prod_{\mathbf{V_{i}} \subseteq \mathbf{V}} f(\mathbf{v_{i}}|pa(\mathbf{v_{i}}, \mathcal{G}))$, 
        \item If there is no pair of nodes $V \in \mathbf{V'}$ and $X \in \mathbf{X}$ such that $X-V$ is in $\mathcal{G}$, then
        \begin{equation*} \label{eq: identification}
        f(\mathbf{v'}|do(\mathbf{x})) 
        = \prod_{\mathbf{V_{i} \subseteq \mathbf{V'}}} f(\mathbf{v_i}|pa(\mathbf{v_i}, \mathcal{G})) d \mathbf{v'} 
    \end{equation*}
    \end{enumerate}
    for values $pa(\mathbf{v_i}, \mathcal{G})$ of $Pa(\mathbf{v_i}, \mathcal{G})$ that are in agreement with $\mathbf{x}$.
\end{corollary}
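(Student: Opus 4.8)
The plan is to derive both parts of \cref{cor: perkovic factorization formula} as special cases of \cref{theo: causal identification formula}, picking the treatment set in each case so that the ancestral marginalization appearing in that theorem collapses to a bare product of bucket densities.

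For part 1 I would invoke \cref{theo: causal identification formula} with $\mathbf{X}=\emptyset$ and $\mathbf{Y}=\mathbf{V}$. Its hypothesis (no proper possibly causal path from $\mathbf{X}$ to $\mathbf{Y}$ beginning with an undirected edge) is vacuous here. Since $An(\mathbf{V},\mathcal{G})=\mathbf{V}$, the set $\mathbf{B}=An(\mathbf{V},\mathcal{G})\backslash\mathbf{V}$ is empty, so the integral $\int\cdots d\mathbf{b}$ in the formula disappears, and $(\mathbf{B_1},\dots,\mathbf{B_k})=\texttt{PCO}(\mathbf{V},\mathcal{G})=(\mathbf{V_1},\dots,\mathbf{V_k})$; the theorem then reads $f(\mathbf{v})=\prod_i f(\mathbf{v_i}\mid pa(\mathbf{v_i},\mathcal{G}))$, which is exactly part 1.

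For part 2 I would invoke \cref{theo: causal identification formula} with treatment set $\mathbf{X}$ and response set $\mathbf{Y}=\mathbf{V'}=\mathbf{V}\backslash\mathbf{X}$. First I verify its hypothesis: a proper possibly causal path from $\mathbf{X}$ to $\mathbf{V'}$ that starts with an undirected edge would begin $X_0 - W_1 - \cdots$ with $X_0\in\mathbf{X}$, and since the path is proper, $W_1\notin\mathbf{X}$, i.e. $W_1\in\mathbf{V'}$; then $X_0 - W_1$ is an undirected edge between $\mathbf{X}$ and $\mathbf{V'}$, contradicting the hypothesis of part 2, so no such path exists. Next, $\mathcal{G}_{\mathbf{V}\backslash\mathbf{X}}=\mathcal{G}_{\mathbf{V'}}$ has node set $\mathbf{V'}$, so $An(\mathbf{V'},\mathcal{G}_{\mathbf{V'}})=\mathbf{V'}$, the set $\mathbf{B}$ is again empty, and the integral again vanishes; \cref{theo: causal identification formula} gives $f(\mathbf{v'}\mid do(\mathbf{x}))=\prod_j f(\mathbf{b_j}\mid pa(\mathbf{b_j},\mathcal{G}))$ over $(\mathbf{B_1},\dots,\mathbf{B_m})=\texttt{PCO}(\mathbf{V'},\mathcal{G})$, for parent values in agreement with $\mathbf{x}$.

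What remains — and the only step I expect to need real care — is to show that the buckets $\{\mathbf{B_1},\dots,\mathbf{B_m}\}$ produced by $\texttt{PCO}(\mathbf{V'},\mathcal{G})$ are precisely $\{\mathbf{V_i}:\mathbf{V_i}\subseteq\mathbf{V'}\}$, i.e. that the bucket decomposition is compatible with the split $\mathbf{V}=\mathbf{X}\cup\mathbf{V'}$. The key observation is that, under the part-2 hypothesis, every edge between $\mathbf{X}$ and $\mathbf{V'}$ is directed, so no undirected path crosses that split; hence each bucket $\mathbf{V_i}$ of $\mathbf{V}$ is contained wholly in $\mathbf{X}$ or wholly in $\mathbf{V'}$, and any $\mathbf{V_i}\subseteq\mathbf{V'}$ is still a maximal undirected-connected subset of $\mathbf{V'}$. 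Thus $\{\mathbf{V_i}:\mathbf{V_i}\subseteq\mathbf{V'}\}$ is a partition of $\mathbf{V'}$ into buckets, and by uniqueness of the bucket decomposition (\cref{cor: bucket decomposition}) and the fact that \cref{alg: Partial causal ordering (PCO)} outputs exactly the ordered bucket decomposition (\citet[Lemma 3.5]{perkovic2020identifying}), $\{\mathbf{B_1},\dots,\mathbf{B_m}\}=\{\mathbf{V_i}:\mathbf{V_i}\subseteq\mathbf{V'}\}$. Since a product of densities is insensitive to the order of its factors and every $pa(\cdot,\mathcal{G})$ is taken in the same graph $\mathcal{G}$, this yields $f(\mathbf{v'}\mid do(\mathbf{x}))=\prod_{\mathbf{V_i}\subseteq\mathbf{V'}}f(\mathbf{v_i}\mid pa(\mathbf{v_i},\mathcal{G}))$ with parent values in agreement with $\mathbf{x}$, which is part 2. (As an alternative, self-contained route to part 1, one could expand $f$ through a DAG $\mathcal{D}\in[\mathcal{G}]$ that refines the partial causal ordering and regroup the DAG factorization bucket by bucket, but this essentially re-proves the $d$-separation bookkeeping already packaged inside \cref{theo: causal identification formula}.)
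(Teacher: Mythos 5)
Your proposal is correct. Note that the paper does not prove \cref{cor: perkovic factorization formula} at all: it is imported verbatim from \citet{perkovic2020identifying} and listed among the existing results, so there is no in-paper argument to compare against; your derivation from \cref{theo: causal identification formula} is the natural route (and is essentially how the cited work obtains its Corollary 3.7 from its Theorem 3.6). The only delicate step, identifying the buckets of \texttt{PCO}$(\mathbf{V'},\mathcal{G})$ with $\{\mathbf{V_i}:\mathbf{V_i}\subseteq\mathbf{V'}\}$, you handle correctly: under the part-2 hypothesis any undirected path in $\mathcal{G}$ from $\mathbf{X}$ to $\mathbf{V'}$ would have to contain an undirected edge with one endpoint in each set, which is excluded, so every bucket of $\mathbf{V}$ lies wholly in $\mathbf{X}$ or wholly in $\mathbf{V'}$; maximality within $\mathbf{V'}$ is inherited from maximality within $\mathbf{V}$, and uniqueness of the bucket decomposition (\cref{cor: bucket decomposition}) together with the fact that \cref{alg: Partial causal ordering (PCO)} outputs exactly that decomposition finishes the identification, with the choice $\mathbf{Y}=\mathbf{V'}$ making $\mathbf{B}=\emptyset$ so the integral in \cref{theo: causal identification formula} disappears. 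The proper-path check for amenability and the $\mathbf{X}=\emptyset$ specialization for part 1 are likewise sound.
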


\begin{Properties}
  \item \citep[Property 1]{katz2019size}\label{property} If a node $v$ is involved in any of the four Meek rules and if the node $v$ does not have an outgoing edge in the original causal DAG, then the oriented edge (in the right hand side of any of the four rules in \cref{fig: orientation_rule}) is incident to $v$.
\end{Properties}

\begin{theorem}[Orientation completeness]\citep[Theorem 3]{meek1995causal} \label{theo: orientation completeness}
    The result of applying rules R1, R2 and R3 in \cref{fig: orientation_rule} to a pattern of a DAG is a CPDAG.
\end{theorem}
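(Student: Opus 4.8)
Write $\mathcal{D}$ for the DAG, $P_0$ for its pattern (the skeleton of $\mathcal{D}$ with its unshielded colliders oriented), and $P$ for the partially directed graph obtained by applying R1, R2, R3 to $P_0$ until no further edge can be oriented. Let $\mathcal{C}$ be the CPDAG of the Markov equivalence class $[\mathcal{D}]$: by the characterization of Markov equivalence, $[\mathcal{D}]$ consists of all DAGs sharing the skeleton and the unshielded colliders of $\mathcal{D}$, and $\mathcal{C}$ has the skeleton of $\mathcal{D}$, orienting an edge $X \to Y$ exactly when it is \emph{compelled} (oriented identically in every member of $[\mathcal{D}]$) and leaving every reversible edge undirected. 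Since the rules only orient existing edges, $P$ has the skeleton of $\mathcal{D}$, so it suffices to show that $P$ and $\mathcal{C}$ agree edge by edge. The plan is to prove two inclusions: \emph{(Soundness)} every directed edge of $P$ is directed the same way in $\mathcal{C}$, and \emph{(Completeness)} every directed edge of $\mathcal{C}$ is already directed in $P$.

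For soundness I would argue by induction on the order in which edges are oriented, maintaining the invariant that every directed edge of the current graph is compelled. The base case holds because the directed edges of $P_0$ are exactly the unshielded-collider edges, which are shared by all of $[\mathcal{D}]$ and hence compelled. For the inductive step I check each rule in \cref{fig: orientation_rule}: in R1 ($A \to B - C$ with $A,C$ nonadjacent), orienting $C \to B$ would create a new unshielded collider at $B$, changing the equivalence class; in R2 ($A \to B \to C$, $A - C$) the reverse orientation closes the directed cycle $A \to B \to C \to A$, violating acyclicity; in R3 the opposite orientation likewise forces either a new collider or a cycle. In each case the forbidden alternative cannot occur in any DAG of $[\mathcal{D}]$, so the oriented edge is compelled and the invariant is preserved.

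Completeness is the substantive part: I must show that every edge left \emph{undirected} after exhaustive application of R1--R3 is genuinely reversible, hence undirected in $\mathcal{C}$. The plan has two stages. First, I would show that the undirected subgraph of $P$ splits into connected (chain) components each of which is \emph{chordal}; the idea is that any chordless undirected cycle, or more generally any configuration that would compel an orientation, cannot persist once R1--R3 have terminated, so termination itself forces chordality. Second, using that a chordal graph admits a perfect elimination ordering, I would construct, for any fixed undirected edge $X - Y$ inside a component, two acyclic orientations of that component that create no induced unshielded colliders and orient $X - Y$ in opposite directions; gluing either with the already-directed part of $P$ yields a DAG having the skeleton and colliders of $\mathcal{D}$, i.e. a member of $[\mathcal{D}]$. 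Exhibiting both orientations shows $X - Y$ is reversible, so it is undirected in $\mathcal{C}$, completing the second inclusion.

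The step I expect to be the main obstacle is exactly this completeness argument: deducing chordality of the undirected components purely from the fact that the rules have terminated, and then verifying that the orientations built from an elimination ordering are \emph{globally} acyclic and introduce \emph{no} new unshielded colliders, including at the boundary where undirected edges meet the compelled directed edges of $P$. Controlling these global side effects, rather than the local per-rule reasoning of soundness, is where the delicate combinatorial bookkeeping lies, and it is precisely why the statement is restricted to patterns of genuine DAGs, where no background knowledge can obstruct the existence of such consistent extensions.
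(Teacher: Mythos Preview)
The paper does not prove this statement at all: it is quoted verbatim as an existing result from \citet{meek1995causal} in the preliminaries appendix and is used only as a black box (specifically, to certify that the graph obtained in the proof of \cref{lem: cpdag} is the CPDAG). There is therefore no ``paper's own proof'' to compare your attempt against.

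For what it is worth, your proposal is the standard architecture of Meek's original argument: soundness of each rule by a local case analysis, and completeness by showing that the residual undirected components are chordal and then using perfect elimination orderings to exhibit two DAG extensions orienting any given undirected edge both ways without introducing new unshielded colliders or cycles. You correctly identify the delicate part as the boundary interaction between the oriented and unoriented parts when gluing. So the sketch is sound, but it is a reconstruction of Meek's proof rather than of anything in this paper.
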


\begin{theorem}[Orientation completeness w.r.t background knowledge]\citep[Theorem 4]{meek1995causal} \label{theo: orientation completeness with bk}
    Let $\mathcal{B}$ be a set of background knowledge consistent with a pattern of a DAG. The result of applying rules R1, R2, R3 and R4 (and orienting edges according to $\mathcal{B}$) to the pattern is an MPDAG with respect to $\mathcal{B}$.
\end{theorem}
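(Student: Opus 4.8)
The plan is to show that the graph $\mathcal{G}$ produced by the procedure represents \emph{exactly} the set $\mathcal{M}$ of DAGs that share the skeleton and unshielded colliders of the input pattern and are consistent with $\mathcal{B}$, and that it does so \emph{maximally}: a directed edge appears in $\mathcal{G}$ if and only if it has a common orientation across all members of $\mathcal{M}$. I would split this into (i) a \textbf{well-definedness} step, in which the procedure terminates and, under the standing assumption that $\mathcal{B}$ is consistent with the pattern, does not return FAIL, so $\mathcal{M} \neq \emptyset$; (ii) \textbf{soundness}, that every edge the rules orient is invariant over $\mathcal{M}$; and (iii) \textbf{completeness}, that every edge left undirected is genuinely non-invariant. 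Termination is immediate, since each rule application and each step processing $\mathcal{B}$ strictly increases the number of directed edges, a quantity bounded by the number of edges in the skeleton.

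For soundness I would proceed by induction on the orientation steps, maintaining the invariant that the directed edges of the current PDAG form a subset of the edges that are oriented identically in \emph{every} $\mathcal{D} \in \mathcal{M}$. The base case is the pattern (v-structures are invariant by Markov equivalence) together with the edges of $\mathcal{B}$ (invariant by the consistency assumption). The inductive step is a rule-by-rule check unified by a single principle: each of R1--R4 orients an undirected edge precisely in a configuration where the opposite orientation, given the already-invariant directed edges, would force either a directed cycle or a new unshielded collider at some node. Since no $\mathcal{D} \in \mathcal{M}$ can contain such a cycle (DAGs are acyclic) or such a collider (it would alter the v-structures and leave the equivalence class), the edge must point as the rule dictates in all of $\mathcal{M}$, which preserves the invariant.

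Completeness is the crux and the step I expect to be the main obstacle. Here I must show that when no rule applies, each remaining undirected edge $u-v$ admits both orientations within $\mathcal{M}$, i.e., there exist $\mathcal{D}_1, \mathcal{D}_2 \in \mathcal{M}$ with $u \to v$ in $\mathcal{D}_1$ and $v \to u$ in $\mathcal{D}_2$. My approach has two ingredients. First, a \emph{consistent-extension lemma}: a PDAG that is closed under R1--R4 and consistent with $\mathcal{B}$ can be completed to a DAG in $\mathcal{M}$ by a sink-elimination (Dor--Tarsi style) ordering of its undirected part, where closure under the rules is exactly what guarantees that at each stage a legal sink exists and that orienting its incident edges inward creates no new unshielded collider or cycle. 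Second, for a fixed target edge $u-v$ I would bias this extension: construct an admissible ordering that places $v$ as late as possible relative to $u$ to realize $u \to v$, and symmetrically to realize $v \to u$, verifying in each case that the resulting DAG still lies in $\mathcal{M}$.

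The delicate part, relative to the pure CPDAG setting governed by R1--R3, is the interaction of R4 with the background-knowledge orientations: one must verify that closure under R4 leaves no ``hidden'' forced edge undirected, so that the two biased extensions are both valid and the set of directed edges output by the procedure coincides with the invariant set. Once soundness and completeness are in hand, order-independence (confluence) of the rule applications follows automatically, because the output is then characterized intrinsically as the invariant edges of $\mathcal{M}$; this certifies that $\mathcal{G}$ is a well-defined MPDAG with respect to $\mathcal{B}$.
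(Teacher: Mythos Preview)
The paper does not contain a proof of this statement. It appears in the appendix under ``Existing Results'' as a theorem cited verbatim from \cite{meek1995causal} (Theorem~4 there), and is used as a black-box ingredient in the proofs of Lemmas~\ref{lem: cpdag} and~\ref{lem: cpdag2mpdag}. So there is no paper proof to compare your proposal against.

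That said, your outline is essentially the standard route to Meek's result and is sound in structure: termination is trivial, soundness by induction on rule applications (each rule blocks a cycle or a new unshielded collider) is correct, and completeness via a consistent-extension lemma of Dor--Tarsi type is indeed the key step. Two remarks on the completeness part. First, in Meek's argument one does not ``bias'' a single extension to realize each orientation of a target edge $u-v$; rather, one shows that orienting $u-v$ either way and then re-closing under R1--R4 still yields a PDAG that admits a consistent extension, which is a cleaner way to get both $\mathcal{D}_1$ and $\mathcal{D}_2$. Second, your caveat about R4 is well placed: the extension lemma for PDAGs closed under R1--R3 alone does not suffice once background-knowledge orientations are present, and the role of R4 is precisely to restore the structural property (no ``hidden'' forced edges) that makes the sink-elimination argument go through. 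If you intend to fill in the details, that is the place to focus.
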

\section{Detailed proofs}

\subsection{Proof of \cref{lem: interventional fairness implication}} \label{proof_lem: interventional fairness implication}
\begin{proof}
    Let $W$ be any non-descendant of $A$ in $\mathcal{G}$. Then $P(W|do(A=a))=P(W|do(A=a'))=P(W)$ (according to the Rule 3 of the do-calculus \citep{pearl2009causality}). Since $W_{A \leftarrow a}$ and $W_{A \leftarrow a'}$ have the same distribution, for any sensitive attribute, the distribution of $\hat{Y}$ as a function of the non-descendants of $A$ and the intervened admissible variable $\mathbf{X}_{ad}$ is invariant.
\end{proof}
  
\subsection{Proof of \cref{theo: mpdag}} \label{Appendix: proof_theo: mpdag} 


According to the definition of MPDAG, $\mathcal{G}$ can be induced from the Markov equivalence class of DAG $\mathcal{D}=(\mathbf{V}, \mathbf{E})$ with background knowledge $\mathcal{B}$. By exploring Meek's rules, specifically, \ref{property}, we show that $\mathcal{G^*}$ is an MPDAG that can be derived from the DAG $\mathcal{D^*}$ with background knowledge $\mathcal{B} \cup \{V \rightarrow \hat{Y}|V \in \mathbf{V}\}$. We provide the details in checking with Meek's rules in this section.

\subsubsection{Technical lemmas}

In this section, we introduce some lemmas that are useful in the proof of \cref{theo: mpdag}.

\begin{lemma} \label{lem: cpdag}
    For a DAG $\mathcal{D}=(\mathbf{V}, \mathbf{E})$, let $\mathcal{D^{*}}$ be the augmented-$\mathcal{D}$ with $\hat{Y}$. The CPDAG of $\mathcal{D}$ and $\mathcal{D^{*}}$ are denoted as $\mathcal{C}$ and $\mathcal{C^{*}}$, respectively. Then, compared with $\mathcal{C}$, newly directed edges in $\mathcal{C^{*}}$ only contain $V \rightarrow \hat{Y}$ for some $V \in \mathbf{V}$.
    
\end{lemma}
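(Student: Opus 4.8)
The plan is to analyze how the CPDAG of $\mathcal{D}^*$ relates to the CPDAG of $\mathcal{D}$, using the characterization of CPDAGs in terms of skeleton plus unshielded colliders (\cref{theo: orientation completeness}). First I would show that the skeleton of $\mathcal{D}^*$ is exactly the skeleton of $\mathcal{D}$ together with the edges $\{V - \hat{Y} : V \in \mathbf{V}\}$; this is immediate from the definition of augmented-$\mathcal{D}$, since the only new adjacencies involve $\hat{Y}$, and $\hat{Y}$ is adjacent to every node of $\mathbf{V}$. Next I would identify the unshielded colliders of $\mathcal{D}^*$ and compare them to those of $\mathcal{D}$.

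The key observation is that $\hat{Y}$ is a sink in $\mathcal{D}^*$ (it has no outgoing edges) and is adjacent to \emph{every} node in $\mathbf{V}$. Therefore $\hat{Y}$ can never be the middle node of an unshielded collider, because any two of its neighbors lie in $\mathbf{V}$ and hence are adjacent to each other in $\mathcal{D}^*$ (the induced subgraph on $\mathbf{V}$ is $\mathcal{D}$, a DAG in which... wait, that's not right — $\mathcal{D}$ need not be complete). Let me restate: I would instead argue that $\hat{Y}$ cannot be an \emph{endpoint} of any new unshielded collider either. A triple $\langle V_{i-1}, V_i, \hat{Y}\rangle$ with $V_{i-1} \to V_i \leftarrow \hat{Y}$ is impossible because all edges incident to $\hat{Y}$ point \emph{into} $\hat{Y}$, so there is no edge $V_i \leftarrow \hat{Y}$. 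Hence $\hat{Y}$ participates in no unshielded collider of $\mathcal{D}^*$ at all (neither as middle node, by the edge-orientation direction argument applied symmetrically, nor as endpoint). Consequently, the unshielded colliders of $\mathcal{D}^*$ are precisely those among nodes of $\mathbf{V}$, which are exactly the unshielded colliders of $\mathcal{D}$ (since the induced subgraph of $\mathcal{D}^*$ on $\mathbf{V}$ equals $\mathcal{D}$ and adding $\hat{Y}$ as a common child of everything cannot shield any previously-unshielded pair, as $\hat{Y} \notin \mathbf{V}$).

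Given that $\mathcal{C}$ is obtained from the skeleton-of-$\mathcal{D}$-with-unshielded-colliders by applying Meek's rules R1--R3 to a fixpoint, and likewise $\mathcal{C}^*$ from the skeleton-of-$\mathcal{D}^*$-with-unshielded-colliders, I would then compare the two fixpoint computations. The pattern of $\mathcal{D}^*$ is the pattern of $\mathcal{D}$ plus the undirected edges $V - \hat{Y}$ for all $V$ (none of them oriented initially, since $\hat{Y}$ is in no unshielded collider). I claim that running Meek's rules on this augmented pattern orients the $\mathbf{V}$-$\mathbf{V}$ edges in exactly the same way as running them on the pattern of $\mathcal{D}$, and orients each $V - \hat{Y}$ as $V \to \hat{Y}$ (and never as $\hat{Y} \to V$). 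The first half follows because any rule application orienting a $\mathbf{V}$-$\mathbf{V}$ edge using a configuration involving $\hat{Y}$ would require $\hat{Y}$ to have an outgoing edge or to sit in a v-structure, which one shows inductively cannot happen; more carefully, I would invoke \ref{property}: since $\hat{Y}$ has no outgoing edge in $\mathcal{D}^*$, any Meek rule in which $\hat{Y}$ participates orients an edge \emph{incident to} $\hat{Y}$, hence never a $\mathbf{V}$-$\mathbf{V}$ edge, and moreover that oriented edge must point into $\hat{Y}$ (it cannot point out, or $\hat{Y}$ would acquire an outgoing edge). So every $V - \hat{Y}$ that gets oriented becomes $V \to \hat{Y}$. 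Finally, since $\mathcal{D}^* \in [\mathcal{C}^*]$ and in $\mathcal{D}^*$ every edge $V - \hat{Y}$ is in fact $V \to \hat{Y}$, and since $\mathcal{C}^*$ is the CPDAG (so undirected edges in $\mathcal{C}^*$ can be oriented either way within the equivalence class), \emph{every} edge $V - \hat{Y}$ must already be directed $V \to \hat{Y}$ in $\mathcal{C}^*$ — an undirected $V - \hat{Y}$ in $\mathcal{C}^*$ would permit a member of $[\mathcal{C}^*]$ with $\hat{Y} \to V$, creating a directed cycle or a new v-structure at $\hat{Y}$, contradiction. Putting it together: restricted to $\mathbf{V}$, $\mathcal{C}^*$ agrees with $\mathcal{C}$, and the only additional edges are the directed $V \to \hat{Y}$, which is the claim.

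The main obstacle I anticipate is making the "the two Meek-rule fixpoints agree on $\mathbf{V}$-$\mathbf{V}$ edges" argument fully rigorous — in principle, the presence of the extra edges $V - \hat{Y}$ and extra oriented edges $V \to \hat{Y}$ could in some rule configuration (say R1: $a \to \hat{Y} - V$ would force $\hat{Y} \to V$, but that is an edge incident to $\hat{Y}$, not a $\mathbf{V}$-$\mathbf{V}$ edge; or R3/R2 patterns) trigger an orientation not present in the $\mathcal{D}$-only run. The clean way around this is precisely \ref{property} together with the fact that $\hat{Y}$ is a universal sink: these jointly guarantee that $\hat{Y}$'s involvement in any Meek rule only ever produces edges pointing into $\hat{Y}$, so the $\mathbf{V}$-restricted dynamics decouple from $\hat{Y}$ entirely. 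I would spend the bulk of the write-up carefully checking each of R1--R4 against a triple/configuration containing $\hat{Y}$ to confirm this decoupling, then conclude as above. (This is, I expect, exactly the route the authors take in \cref{Appendix: proof_theo: mpdag}, since they flag \ref{property} right before this lemma.)
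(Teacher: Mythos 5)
Your overall route---compare skeletons with unshielded colliders, then run Meek's rules and use \ref{property} to decouple the $\mathbf{V}$--$\mathbf{V}$ orientations from $\hat{Y}$---is the same as the paper's proof, but two of your intermediate claims are false. First, $\hat{Y}$ \emph{does} participate in unshielded colliders of $\mathcal{D^*}$: for any two non-adjacent $V_i, V_j \in \mathbf{V}$, the triple $V_i \rightarrow \hat{Y} \leftarrow V_j$ is an unshielded collider with $\hat{Y}$ as the middle node. Your own parenthetical doubt was the correct one ($\mathcal{D}$ need not be complete), and the ``edge-orientation direction argument applied symmetrically'' only rules out $\hat{Y}$ as an \emph{endpoint} of a collider, not as its middle node. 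Hence the pattern of $\mathcal{D^*}$ is not ``the pattern of $\mathcal{D}$ plus undirected edges $V - \hat{Y}$'': it already contains directed edges $V \rightarrow \hat{Y}$ at every such collider. The paper's proof states exactly this (the new v-structures are edges of the form $V \rightarrow \hat{Y}$ colliding on $\hat{Y}$), and the conclusion of the lemma is unaffected since these extra orientations are themselves of the permitted form, but as written your characterization of the starting point of the Meek fixpoint is wrong and the rest of the argument must be run from the correct pattern.

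Second, your closing claim that \emph{every} edge $V - \hat{Y}$ must already be directed $V \rightarrow \hat{Y}$ in $\mathcal{C^*}$ is false, and the contradiction you invoke does not exist. Take $\mathcal{D}$ to be the single edge $V_1 \rightarrow V_2$; then $\mathcal{D^*}$ is a triangle with no unshielded colliders, so $\mathcal{C^*}$ is the fully undirected triangle and $V_1 - \hat{Y}$, $V_2 - \hat{Y}$ are undirected in $\mathcal{C^*}$. An undirected $V - \hat{Y}$ in $\mathcal{C^*}$ only means some DAG in $[\mathcal{C^*}]$ has $\hat{Y} \rightarrow V$, and such a DAG can exist without any cycle or new v-structure (e.g.\ $\hat{Y} \rightarrow V_1$, $\hat{Y} \rightarrow V_2$, $V_1 \rightarrow V_2$ above). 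Fortunately the lemma does not need this stronger statement: it only asserts that the edges that \emph{are} newly directed in $\mathcal{C^*}$ are of the form $V \rightarrow \hat{Y}$. The correct wrap-up, as in the paper, is that (i) the pattern of $\mathcal{D^*}$ restricted to $\mathbf{V}$ equals the pattern of $\mathcal{D}$; (ii) by \ref{property}, any Meek-rule application involving $\hat{Y}$ orients only edges incident to $\hat{Y}$ (necessarily into $\hat{Y}$), so the $\mathbf{V}$--$\mathbf{V}$ orientations evolve exactly as in the computation of $\mathcal{C}$; and (iii) by \cref{theo: orientation completeness} the fixpoint is $\mathcal{C^*}$, so the only newly directed edges are some $V \rightarrow \hat{Y}$. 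Keep your decoupling argument, but drop the two incorrect claims.
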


\begin{proof} 
    As discussed in Meek \cite{meek1995causal}, deducing the CPDAG from a DAG consists of two phases: 
    \begin{itemize}
        \item Phase \uppercase\expandafter{\romannumeral1}. Find the skeleton of the DAG with unshielded colliders.
        \item Phase \uppercase\expandafter{\romannumeral2}. On the returned graph by phase \uppercase\expandafter{\romannumeral1}, orient every edge that can be oriented by successive applications of Meek's rules R1, R2 and R3.
    \end{itemize}
     So here, before identifying the difference between two CPDAGs $\mathcal{C}$ and $\mathcal{C^{*}}$, we first identify the difference between two skeletons with unshielded colliders $\mathcal{K}$ and $\mathcal{K^{*}}$, where $\mathcal{K}$ and $\mathcal{K^{*}}$ are the corresponding skeletons of the DAG $\mathcal{D}$ and $\mathcal{D^{*}}$ with unshielded colliders, respectively. 
    
    Phase \uppercase\expandafter{\romannumeral1}. Compared with $\mathcal{D}$, the new v-structures in $\mathcal{D^{*}}$ are edges in $\{V \rightarrow \hat{Y}|V \in \mathbf{V}\}$ that form unshielded colliders on $\hat{Y}$. Therefore, it is obvious that compared with $\mathcal{K}$, the newly directed edges in $\mathcal{K^{*}}$ are edges in $\{V \rightarrow \hat{Y}|V \in \mathbf{V}\}$ that form unshielded colliders on $\hat{Y}$ in $\mathcal{D^{*}}$.    
    
    Phase \uppercase\expandafter{\romannumeral2}. Suppose the application of Meek's rules to $\mathcal{K}$ results the CPDAG $\mathcal{C}$.
    Next, we apply Meek's rules on $\mathcal{K^{*}}$. Since the application of Meek's rules checks one edge per time, orienting edges on $\mathcal{K^{*}}$ is equivalent to the following recursive two-step procedure: 
    \begin{enumerate}
    \itemsep0em
        \item[S1.] Check and orient edges that can be oriented between any $V \in \mathbf{V}$;
        \item[S2.] Check and orient edges that can be oriented between any $V \in \mathbf{V}$ and $\hat{Y}$ on the returned partially directed graph from the step S1;
        \item[S3.] Go to S1 until no more edges can be oriented.
    \end{enumerate}
    
    Clearly, compared with the CPDAG $\mathcal{C}$ oriented from $\mathcal{K}$, newly directed edges in $\mathcal{C^{*}}$ only contain $V \rightarrow \hat{Y}$ for some $V \in \mathbf{V}$ if and only if any directed edge in $\mathcal{C}$ gets directed in $\mathcal{C^{*}}$ and the step S1, S2 and S3 in the above does not orient more edges except $V \rightarrow \hat{Y}$ for some $V \in \mathbf{V}$ compared with $\mathcal{C}$.
    
    We first prove that all directed edges in $\mathcal{C}$ get directed in $\mathcal{C^{*}}$ by step S1 and step S1 cannot orient more edges except $V \rightarrow \hat{Y}$ for some $V \in \mathbf{V}$ compared with $\mathcal{C}$. Suppose that the CPDAG $\mathcal{C}$ is obtained by applying Meek's rules on $\mathcal{K}$ in the sequence $seq$. We denote the partially directed acyclic graph obtained by applying Meek's rules in the same sequence $seq$ on $\mathcal{K^{*}}$ as $\mathcal{C^{*}}_{\uppercase\expandafter{\romannumeral1}}$. Since the induced subgraph of $\mathcal{K^{*}}$ over $\mathbf{V}$ is exactly the graph $\mathcal{K}$, therefore, the induced subgraph of $\mathcal{K^{*}}$ over $\mathbf{V}$ will be oriented exactly as the CPDAG $\mathcal{C}$. Then we check the possibility of the further orientations on edges between any $V \in \mathbf{V}$ considering the edge directions between any $V \in \mathbf{V}$ and $\hat{Y}$. 
    \ref{property} implies that no more edges between any $V \in \mathbf{V}$ can be oriented. The returned partially directed acyclic graph from this step is $\mathcal{C^{*}}_{\uppercase\expandafter{\romannumeral1}}$.
    \ref{property} also implies that step S2 does not orient more edges except $V \rightarrow \hat{Y}$ for some $V \in \mathbf{V}$ compared with $\mathcal{C}$. The returned partially directed acyclic graph from step S2 is denoted as $\mathcal{C^{*}}_{\uppercase\expandafter{\romannumeral2}}$.
    Then we move onto step S3 and show that no more edge can be oriented in $\mathcal{C^{*}}_{\uppercase\expandafter{\romannumeral2}}$ any more. Due to the same reason as in step S1 that there is no edge directed out of $\hat{Y}$ in $\mathcal{D^{*}}$, no more edges between any $V \in \mathbf{V}$ can be oriented. Therefore, the application of Meek's rules on $\mathcal{K^{*}}$ stops here. According to the orientation completeness presented in \cref{theo: orientation completeness}, the returned graph $\mathcal{C^{*}}_{\uppercase\expandafter{\romannumeral2}}$ is exactly the CPDAG $\mathcal{C^{*}}$.
\end{proof}    

\begin{lemma} \label{lem: cpdag2mpdag}
    For a DAG $\mathcal{D}=(\mathbf{V}, \mathbf{E})$, let $\mathcal{D^{*}}$ be the augmented-$\mathcal{D}$ with $\hat{Y}$, $\mathcal{C}$ and $\mathcal{C^{*}}$ be the CPDAG of $\mathcal{D}$ and $\mathcal{D^{*}}$, respectively. Given additional background knowledge $\mathcal{B_C}$ that is consistent with $\mathcal{C}$, we denote the returned MPDAG as $\mathcal{G}$ when applying \cref{alg: ConstructMPDAG} on $\mathcal{C}$ and $\mathcal{B_C}$ and as $\mathcal{G^{*}}$ when applying \cref{alg: ConstructMPDAG} on $\mathcal{C^{*}}$ and $\mathcal{B_C}$. Then, compared with $\mathcal{G}$, newly directed edges in $\mathcal{G^{*}}$ only contain $V \rightarrow \hat{Y}$ for some $V \in \mathbf{V}$. 
\end{lemma}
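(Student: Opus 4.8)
The plan is to mirror the two-phase argument behind \cref{lem: cpdag}, this time following the run of \cref{alg: ConstructMPDAG} rather than the pattern-to-CPDAG construction. I will start from two structural facts. First, by \cref{lem: cpdag} the induced subgraph of $\mathcal{C^*}$ on $\mathbf{V}$ is exactly $\mathcal{C}$, every node of $\mathbf{V}$ is adjacent to $\hat{Y}$ in $\mathcal{C^*}$ (since $\mathcal{D^*}$ adds $V \to \hat{Y}$ for each $V \in \mathbf{V}$), and each edge of $\mathcal{C^*}$ incident to $\hat{Y}$ is either $V \to \hat{Y}$ or an undirected $V - \hat{Y}$. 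Second, $\mathcal{B_C}$ only constrains pairs of nodes inside $\mathbf{V}$, so it is a legitimate input for the run of \cref{alg: ConstructMPDAG} on $\mathcal{C^*}$ as well, and the ``FAIL'' test only ever inspects edges inside $\mathbf{V}$.

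I would then run the two instances of \cref{alg: ConstructMPDAG} in lockstep, processing the background edges of $\mathcal{B_C}$ in the same order, and prove by induction on the loop iterations the invariant: after processing the current background edge and closing under Meek's rules R1--R4, the starred graph restricted to $\mathbf{V}$ coincides with the unstarred graph, and the only extra oriented edges of the starred graph are of the form $V \to \hat{Y}$. The base case is the first structural fact above. In the inductive step, the command ``orient $S \to T$'' acts identically in both runs (and triggers FAIL in one iff in the other), since $S,T \in \mathbf{V}$ and the $\mathbf{V}$-restrictions agree. For the ensuing Meek closure I would reuse the S1/S2/S3 decomposition from the proof of \cref{lem: cpdag}: \ref{property} says that any rule application orienting an edge inside $\mathbf{V}$ does not involve $\hat{Y}$ (as $\hat{Y}$ is childless in $\mathcal{D^*}$), so it is a rule application already available on the unstarred graph; while \ref{property} also says that any rule application that does involve $\hat{Y}$ orients an edge incident to $\hat{Y}$, hence some edge $V - \hat{Y}$.

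The crux — and the step I expect to be the main obstacle — is showing that such an edge is always oriented \emph{into} $\hat{Y}$, i.e.\ that none of R1--R4 can produce $\hat{Y} \to V$. I would argue this uniformly: inspecting the four rule templates, producing an edge out of $\hat{Y}$ requires either a pre-existing edge out of $\hat{Y}$ (impossible by the invariant, which keeps $\hat{Y}$ childless; this disposes of R2), or the existence of two nodes $U, W \in \mathbf{V}$ that are mutually nonadjacent and, at the current stage, both adjacent to $\hat{Y}$ via an edge that is not oriented out of $\hat{Y}$ (the situation forced by R1, R3, R4). But in $\mathcal{D^*}$ every node of $\mathbf{V}$ points into $\hat{Y}$, so $\langle U, \hat{Y}, W\rangle$ is then a v-structure of $\mathcal{D^*}$; by \cref{lem: cpdag} the edges $U - \hat{Y}$ and $W - \hat{Y}$ are already oriented into $\hat{Y}$ in $\mathcal{C^*}$, and since \cref{alg: ConstructMPDAG} never undoes orientations, they are still oriented into $\hat{Y}$ at the current stage — contradicting that one of them was available for the rule. (When $\mathcal{D} \in [\mathcal{G}]$, so that $\mathcal{B_C}$ agrees with $\mathcal{D^*}$, one can alternatively invoke soundness of Meek's rules to conclude in one line that no produced graph has an edge out of the childless node $\hat{Y}$.) With the invariant established, evaluating it when the loop terminates yields precisely the claim: $\mathcal{G^*}$ differs from $\mathcal{G}$ only in edges $V \to \hat{Y}$, $V \in \mathbf{V}$.
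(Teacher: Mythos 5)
Your proposal is correct and takes essentially the same route as the paper's proof: it reduces to \cref{lem: cpdag} for the relationship between $\mathcal{C}$ and $\mathcal{C^*}$ on $\mathbf{V}$, runs \cref{alg: ConstructMPDAG} on the two graphs in lockstep, and uses \ref{property} (with $\hat{Y}$ childless) to confine every new orientation to an edge incident to $\hat{Y}$. Your explicit v-structure argument that none of R1--R4 can orient an edge \emph{out of} $\hat{Y}$ is a point the paper leaves implicit in its appeal to \ref{property}, so your write-up is, if anything, slightly more complete on that step.
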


\begin{proof}
    The proof is similar to the proof of \cref{lem: cpdag}. According to \cref{lem: cpdag}, compared with $\mathcal{C}$, the newly directed edges in $\mathcal{C^{*}}$ only contain $V \rightarrow \hat{Y}$ for some $V \in \mathbf{V}$. The application of \cref{alg: ConstructMPDAG} to $\mathcal{C}$ and $\mathcal{B_C}$ results the MPDAG $\mathcal{G}$. Next, we apply \cref{alg: ConstructMPDAG} on $\mathcal{C^{*}}$ and $\mathcal{B_C}$. Since the application of Meek's rules checks one edge per time, orienting edges on $\mathcal{C^{*}}$ is equivalent to the following recursive two-step procedure: 
    \begin{enumerate}
    \itemsep0em
        \item[S1.] Check and orient edges that can be oriented between any $V \in \mathbf{V}$;
        \item[S2.] Check and orient edges that can be oriented between any $V \in \mathbf{V}$ and $\hat{Y}$ on the returned partially directed graph from the last step;
        \item[S3.] Go to S1 until no more edges can be oriented.
    \end{enumerate}

    Clearly, compared with the MPDAG $\mathcal{G}$ oriented from $\mathcal{C}$ given the background knowledge $\mathcal{B_C}$, newly directed edges in $\mathcal{G^{*}}$ only contain $V \rightarrow \hat{Y}$ for some $V \in \mathbf{V}$ if and only if any directed edge in $\mathcal{G}$ gets directed in $\mathcal{G^{*}}$ and the step S1, S2 and S3 in the above does not orient more edges except $V \rightarrow \hat{Y}$ for some $V \in \mathbf{V}$ compared with $\mathcal{G}$.

    We first prove that all directed edges in $\mathcal{G}$ get directed in $\mathcal{G^{*}}$ by step S1 and the step S1 cannot orient more edges except $V \rightarrow \hat{Y}$ for some $V \in \mathbf{V}$ compared with $\mathcal{G}$. According to \cref{lem: cpdag}, the induced subgraph of $\mathcal{C^{*}}$ over $\mathbf{V}$ is exactly the same as the graph $\mathcal{C}$. Suppose that the MPDAG $\mathcal{G}$ is obtained by applying Meek's rules on $\mathcal{C}$ with the sequence $seq$. We denote the partially directed acyclic graph obtained by applying Meek's rules with the same sequence $seq$ on $\mathcal{C^{*}}$ as $\mathcal{G^{*}}_{\uppercase\expandafter{\romannumeral1}}$. Since the induced subgraph of $\mathcal{C^{*}}$ over $\mathbf{V}$ is exactly the graph $\mathcal{C}$, therefore, the induced subgraph of $\mathcal{C^{*}}$ over $\mathbf{V}$ will be oriented exactly as the MPDAG $\mathcal{G}$. Then we check the possibility of the further orientation on edges between any $V \in \mathbf{V}$ with considering the directions between any $V \in \mathbf{V}$ and $\hat{Y}$. 
    \ref{property} implies that no more edges between any $V \in \mathbf{V}$ can be oriented. The returned partially directed acyclic graph from this step is $\mathcal{G^{*}}_{\uppercase\expandafter{\romannumeral1}}$.
    \ref{property} also implies that step S2 does not orient more edges except $V \rightarrow \hat{Y}$ for some $V \in \mathbf{V}$ compared with $\mathcal{G}$. The returned partially directed acyclic graph from step S2 is denoted as $\mathcal{G^{*}}_{\uppercase\expandafter{\romannumeral2}}$.
    Then we move onto step S3 and show that no more edge can be oriented in $\mathcal{G^{*}}_{\uppercase\expandafter{\romannumeral2}}$ any more. Due to the same reason as in step S1 that there is no edge directed out of $\hat{Y}$ in $\mathcal{D^{*}}$, no more edges between any $V \in \mathbf{V}$ can be oriented. Therefore, the application of Meek's rules on $\mathcal{C^{*}}$ stops here. According to the orientation completeness presented in \cref{theo: orientation completeness with bk}, the returned graph $\mathcal{G^{*}}_{\uppercase\expandafter{\romannumeral2}}$ is exactly the MPDAG $\mathcal{G^{*}}$.
\end{proof}

\begin{lemma} \label{lem: mpdag2mpdag}
    For a DAG $\mathcal{D}=(\mathbf{V}, \mathbf{E})$, let $\mathcal{D^{*}}$ be the augmented-$\mathcal{D}$ with $\hat{Y}$, $\mathcal{C}$ and $\mathcal{C^{*}}$ be the CPDAG of $\mathcal{D}$ and $\mathcal{D^{*}}$, respectively. Given additional background knowledge $\mathcal{B_C}$ that is consistent with $\mathcal{C}$, we denote the returned MPDAG as $\mathcal{G}$ when applying \cref{alg: ConstructMPDAG} on $\mathcal{C}$ and $\mathcal{B_C}$. Given the additional background knowledge $\mathcal{B_{C^{*}}} = \mathcal{B_C} \cup \{V \rightarrow \hat{Y}|V \in \mathbf{V}\}$ that is consistent with $\mathcal{C^{*}}$, the corresponding MPDAG is denoted as $\mathcal{G^{*}}$. Then, the MPDAG $\mathcal{G^{*}}$ is exactly the augmented-$\mathcal{G}$.
\end{lemma}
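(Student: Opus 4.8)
The plan is to run \cref{alg: ConstructMPDAG} on $\mathcal{C^{*}}$ with the background knowledge $\mathcal{B_{C^{*}}} = \mathcal{B_C} \cup \{V \rightarrow \hat{Y} \mid V \in \mathbf{V}\}$ and keep careful track of which edges get oriented, reducing the argument to \cref{lem: cpdag,lem: cpdag2mpdag} and \ref{property}. First I would observe that, by \cref{theo: orientation completeness with bk}, the MPDAG returned by \cref{alg: ConstructMPDAG} is the unique maximal orientation of the input consistent with the given background knowledge, and hence does not depend on the order in which the edges of $\mathcal{B_{C^{*}}}$ are selected in line~5. This lets me choose a convenient order: process all edges of $\mathcal{B_C}$ first, and the edges $\{V \rightarrow \hat{Y} \mid V \in \mathbf{V}\}$ afterwards. (That $\mathcal{B_{C^{*}}}$ is consistent with $\mathcal{C^{*}}$, so that the algorithm does not return FAIL, is part of the hypothesis; it is in any case automatic, since $\mathcal{D^{*}} \in [\mathcal{C^{*}}]$ already contains every edge $V \rightarrow \hat{Y}$ and, by \cref{lem: cpdag}, the induced subgraph of $\mathcal{C^{*}}$ on $\mathbf{V}$ equals $\mathcal{C}$, with which $\mathcal{B_C}$ is consistent.)

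For the first phase, processing $\mathcal{B_C}$ on $\mathcal{C^{*}}$ is exactly the setting of \cref{lem: cpdag2mpdag}: when this phase ends, the current graph coincides with $\mathcal{G}$ on the induced subgraph over $\mathbf{V}$, and every directed edge not already present in $\mathcal{G}$ is of the form $V \rightarrow \hat{Y}$ for some $V \in \mathbf{V}$. In particular there is no edge $\hat{Y} \rightarrow V$, because $\hat{Y}$ has no outgoing edge in $\mathcal{D^{*}}$. Hence, entering the second phase, the only edges that can still be undirected beyond those undirected in $\mathcal{G}$ are some of the $V - \hat{Y}$.

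For the second phase I would process the edges $V \rightarrow \hat{Y}$ one at a time. If $V \rightarrow \hat{Y}$ is already oriented, lines~7--9 re-orient it as itself and run Meek's rules, which add nothing; if $V - \hat{Y}$ is still undirected, it becomes $V \rightarrow \hat{Y}$, and then Meek's rules are applied. In both cases \ref{property}, applied with the node $\hat{Y}$ (which has no outgoing edge in $\mathcal{D^{*}}$), forces every edge newly oriented by rules R1--R4 of \cref{fig: orientation_rule} to be incident to $\hat{Y}$, hence of the form $V' \rightarrow \hat{Y}$; and no rule avoiding $\hat{Y}$ can fire, since the induced subgraph over $\mathbf{V}$ is already closed under Meek's rules (it is $\mathcal{G}$). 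So no edge within $\mathbf{V}$ is ever altered in this phase, and once all of $\{V \rightarrow \hat{Y} \mid V \in \mathbf{V}\}$ has been processed, every edge between $\mathbf{V}$ and $\hat{Y}$ is oriented $V \rightarrow \hat{Y}$. The output is therefore $\mathcal{G}$ on $\mathbf{V}$ together with the edges $\{V \rightarrow \hat{Y} \mid V \in \mathbf{V}\}$, i.e., precisely the augmented-$\mathcal{G}$, which is the claim.

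The proof is mostly bookkeeping layered on \cref{lem: cpdag} and \cref{lem: cpdag2mpdag}; the step I expect to require the most care is the repeated invocation of \ref{property} during the second phase, i.e., verifying that the undirected edges incident to $\hat{Y}$ can never propagate an orientation into the $\mathbf{V}$-subgraph — this relies on $\hat{Y}$ being a sink in $\mathcal{D^{*}}$, and together with the order-independence from \cref{theo: orientation completeness with bk} it is what makes the two-phase decomposition legitimate. This lemma then supplies the missing piece for \cref{theo: mpdag}: since $\mathcal{D^{*}}$ augments $\mathcal{D} \in [\mathcal{G}]$ by exactly the vertex and edges added in forming the augmented-$\mathcal{G}$, we obtain $\mathcal{D^{*}} \in [\mathcal{G^{*}}]$, and $\mathcal{G^{*}}$ is an MPDAG consistent with $\mathcal{B} \cup \{V \rightarrow \hat{Y} \mid V \in \mathbf{V}\}$.
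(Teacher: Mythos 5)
Your proposal is correct and follows essentially the same route as the paper's proof: the same two-phase processing of the background knowledge (first $\mathcal{B_C}$, reducing to \cref{lem: cpdag2mpdag}, then the edges $V \rightarrow \hat{Y}$), the same appeal to \ref{property} with $\hat{Y}$ a sink in $\mathcal{D^{*}}$ to show no edge within $\mathbf{V}$ gets oriented in the second phase, and the same invocation of \cref{theo: orientation completeness with bk} to identify the output with the MPDAG $\mathcal{G^{*}}$. Your explicit justification of order-independence and of the consistency of $\mathcal{B_{C^{*}}}$ with $\mathcal{C^{*}}$ are small additions the paper leaves implicit, but the argument is the same.
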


\begin{proof}
    \cref{alg: ConstructMPDAG} can be used to construct the MPDAG from the CPDAG and background knowledge, by leveraging Meek's rules. Since \cref{alg: ConstructMPDAG} checks one background knowledge per time, applying Meek's rules to $\mathcal{C^{*}}$ and $\mathcal{B_{C^*}}$ is equivalent to the following two-steps procedures:
    \begin{enumerate}
    \itemsep0em
        \item[S1.] Select the background knowledge in $\mathcal{B_C}$ and check and orient every edge that can be oriented in $\mathcal{C^*}$. The returned partially directed graph is denoted as $\mathcal{G^{*}}_{\uppercase\expandafter{\romannumeral1}}$.
        \item[S2.] Select the background knowledge in $\{V \rightarrow \hat{Y}|V \in \mathbf{V}\}$ and check and orient every edge that can be oriented on $\mathcal{G^{*}}_{\uppercase\expandafter{\romannumeral1}}$.
    \end{enumerate}
    After the step S1, according to \cref{lem: cpdag2mpdag}, compared with $\mathcal{G}$, newly directed edges in $\mathcal{G^{*}}_{\uppercase\expandafter{\romannumeral1}}$ only contain $V \rightarrow \hat{Y}$ for some $V \in \mathbf{V}$. Then we will show that in the step S2, after orienting $V \rightarrow \hat{Y}$ for any $V \in \mathbf{V}$ in $\mathcal{C^{*}}_{\uppercase\expandafter{\romannumeral1}}$, no more edges in $V - V'$ for any $V, V' \in \mathbf{V}$ can be oriented.

    According to \ref{property}, since there is no such edge directed out of $\hat{Y}$ in $\mathcal{D^{*}}$, no more edge between any $V \in \mathbf{V}$ can be oriented. According to the orientation completeness presented in \cref{theo: orientation completeness with bk}, the returned graph from this step is the MPDAG $\mathcal{G^{*}}$, which is exactly the augmented-$\mathcal{G}$ defined by \cref{def: augmented-G}.
\end{proof}    

\subsubsection{Proof of \cref{theo: mpdag}}  \label{proof_theo: mpdag}
\begin{proof}
    \cref{fig: Proof structure of theo} shows how all lemmas fit together to prove the \cref{theo: mpdag}. 
    Suppose the MPDAG $\mathcal{G}$ can be constructed from the Markov equivalence class of the DAG $\mathcal{D}=(\mathbf{V}, \mathbf{E})$ with the background knowledge $\mathcal{B}$. Denote the augmented-$\mathcal{D}$ with $\hat{Y}$ by $\mathcal{D^*}$ and the augmented-$\mathcal{G}$ with $\hat{Y}$ by $\mathcal{G^*}$. Following from \cref{lem: cpdag}, \cref{lem: cpdag2mpdag} and \cref{lem: mpdag2mpdag}, we can see that $\mathcal{G^*}$ can be constructed from the Markov equivalence class of the $\mathcal{D^*}$ with the background knowledge $\mathcal{B} \cup \{V \rightarrow \hat{Y}|V \in \mathbf{V}\}$ by leveraging Meek's rules. Therefore, the graph $\mathcal{G^*}$ is still an MPDAG and $\mathcal{D^*} \in [\mathcal{G^*}]$.
\end{proof}

\begin{figure}[htp]
\vskip -0.2in
\begin{center}
    \begin{tikzpicture}[node distance={25mm}, thick, main/.style = {draw}] 
    \node[] (1) {\textbf{\cref{theo: mpdag}}}; 
    \node[] (3) [left of=1] {\cref{lem: mpdag2mpdag}}; 
    \node[] (4) [left of=3] {\cref{lem: cpdag2mpdag}};
    \node[] (5) [left of=4] {\cref{lem: cpdag}};
    \draw[->] (3) -- (1);
    \draw[->] (4) -- (3);
    \draw[->] (5) -- (4);
    \end{tikzpicture}
\caption{Proof structure of \cref{theo: mpdag}}
\label{fig: Proof structure of theo}
\end{center}
\end{figure}

\subsection{Proof of \cref{prop: identification formula}} \label{proof_prop: identification formula}
We first show in \cref{prop: identification formula in G*} a sufficient and necessary condition and the formula for the identification of $P(\hat{Y}=y|do(\mathbf{S}=\mathbf{s}))$ in an MPDAG $\mathcal{G^*}$ based on the consistent density $f(\mathbf{v}, \hat{y})$. Then, utilizing the relationship between the MPDAG $\mathcal{G}$ and $\mathcal{G^*}$, we derive \cref{prop: identification formula}.

\subsubsection{Technical lemmas}
\begin{proposition}\label{prop: identification formula in G*}
    Let $\mathbf{S}$ be a node set in an MPDAG $\mathcal{G}=(\mathbf{V},\mathbf{E})$ and let $\mathbf{V'}=\mathbf{V} \backslash \mathbf{S}$. Furthermore, let $(\mathbf{V_1,...,V_k})$ be the output of \texttt{PCO$(\mathbf{V}$, $\mathcal{G})$}. The augmented-$\mathcal{G}$ with node $\hat{Y}$ is denoted as $\mathcal{G^*}$. Then for any density $f(\mathbf{v}, \hat{y})$ consistent with $\mathcal{G^{*}}$, we have
    \begin{enumerate}
        \item $f(\mathbf{v}, \hat{y}) = f(\hat{y}|\mathbf{v}) \prod_{\mathbf{V_{i}} \subseteq \mathbf{V}} f(\mathbf{v_{i}}|pa(\mathbf{v_{i}}, \mathcal{G^*}))$, 
        \item If and only if there is no pair of nodes $V \in \mathbf{V'}$ and $S \in \mathbf{S}$ such that $S-V$ in $\mathcal{G^*}$, $f(\mathbf{v'}|do(\mathbf{s}))$ and $f(\hat{y}|do(\mathbf{s}))$ are identifiable, and
        \begin{equation*}
        f(\mathbf{v'}|do(\mathbf{s})) 
        = \prod_{\mathbf{V_{i} \subseteq \mathbf{V'}}} f(\mathbf{v_i}|pa(\mathbf{v_i}, \mathcal{G^*}))
        \end{equation*}
        \begin{equation*}
        f(\hat{y}|do(\mathbf{s})) 
        = \int f(\hat{y}|\mathbf{v})f(\mathbf{v'}|do(\mathbf{s})) d \mathbf{v'} 
        = \int f(\hat{y}|\mathbf{v}) \prod_{\mathbf{V_{i} \subseteq \mathbf{V'}}} f(\mathbf{v_i}|pa(\mathbf{v_i}, \mathcal{G^*})) d \mathbf{v'} 
        \end{equation*}
    \end{enumerate}
    for values $pa(\mathbf{v_i}, \mathcal{G^*})$ of $Pa(\mathbf{v_i}, \mathcal{G^*})$ that are in agreement with $\mathbf{s}$.
\end{proposition}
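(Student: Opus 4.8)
The plan is to obtain \cref{prop: identification formula in G*} by specializing the MPDAG (truncated) factorization formula of \citet{perkovic2020identifying} --- our \cref{cor: perkovic factorization formula} --- together with the necessity criterion for identifiability --- our \cref{prop: necessary condition} --- to the augmented graph $\mathcal{G^*}$. The only preliminary work is to record how passing from $\mathcal{G}$ to $\mathcal{G^*}$ affects bucket decompositions, partial causal orderings, and parent sets.

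First I would make the following structural observation. By \cref{theo: mpdag}, $\mathcal{G^*}$ is an MPDAG, so both \cref{cor: perkovic factorization formula} and \cref{prop: necessary condition} apply to it. The edges added in forming $\mathcal{G^*}$ from $\mathcal{G}$ are exactly the directed edges $V\to\hat Y$ for $V\in\mathbf{V}$; hence $\hat Y$ is childless, has no undirected incident edge, satisfies $pa(\hat Y,\mathcal{G^*})=\mathbf{V}$, and the induced subgraph of $\mathcal{G^*}$ on $\mathbf{V}$ is precisely $\mathcal{G}$, so $pa(V_i,\mathcal{G^*})=pa(V_i,\mathcal{G})$ for every $V_i\in\mathbf{V}$. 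Since $\hat Y$ lies on no undirected path, $\{\hat Y\}$ is a bucket of $\mathbf{V^*}:=\mathbf{V}\cup\{\hat Y\}$ in $\mathcal{G^*}$ on its own, the remaining buckets of $\mathbf{V^*}$ coincide with the buckets of $\mathbf{V}$ in $\mathcal{G}$, and, being a sink joined to every $V\in\mathbf{V}$ by an incoming edge, $\{\hat Y\}$ is last in any partial causal ordering. Therefore $\texttt{PCO}(\mathbf{V^*},\mathcal{G^*})$ may be taken to be $(\mathbf{V_1},\dots,\mathbf{V_k},\{\hat Y\})$ with $(\mathbf{V_1},\dots,\mathbf{V_k})=\texttt{PCO}(\mathbf{V},\mathcal{G})$.

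With this in hand, part~1 is immediate from \cref{cor: perkovic factorization formula}(1) applied to $\mathcal{G^*}$ on $\mathbf{V^*}$: it gives $f(\mathbf{v},\hat y)=\prod_{\mathbf{W}}f(\mathbf{w}|pa(\mathbf{w},\mathcal{G^*}))$ over the buckets $\mathbf{W}$ of $\mathbf{V^*}$, and peeling off $\{\hat Y\}$, whose factor is $f(\hat y|pa(\hat Y,\mathcal{G^*}))=f(\hat y|\mathbf{v})$, leaves exactly $\prod_{\mathbf{V_i}\subseteq\mathbf{V}}f(\mathbf{v_i}|pa(\mathbf{v_i},\mathcal{G^*}))$. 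For part~2, the ``if'' direction applies \cref{cor: perkovic factorization formula}(2) to $\mathcal{G^*}$ with intervention set $\mathbf{S}$ and $\mathbf{V^*}\setminus\mathbf{S}=\mathbf{V'}\cup\{\hat Y\}$; its amenability hypothesis (no undirected edge between $\mathbf{S}$ and $\mathbf{V^*}\setminus\mathbf{S}$ in $\mathcal{G^*}$) is equivalent to the stated condition (no $S-V$ in $\mathcal{G^*}$ with $S\in\mathbf{S}$, $V\in\mathbf{V'}$), because $\hat Y$ has no undirected incident edge. The corollary then yields $f(\mathbf{v'},\hat y|do(\mathbf{s}))=f(\hat y|\mathbf{v})\prod_{\mathbf{V_i}\subseteq\mathbf{V'}}f(\mathbf{v_i}|pa(\mathbf{v_i},\mathcal{G^*}))$ (the bucket $\{\hat Y\}$ is always contained in $\mathbf{V^*}\setminus\mathbf{S}$); integrating out $\hat y$ gives the formula for $f(\mathbf{v'}|do(\mathbf{s}))$, and integrating out $\mathbf{v'}$ gives the two equivalent expressions for $f(\hat y|do(\mathbf{s}))$, all with parent values read off in agreement with $\mathbf{s}$. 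The ``only if'' direction is \cref{prop: necessary condition}: if some $S-V$ with $S\in\mathbf{S}$, $V\in\mathbf{V'}$ is present, then $\langle S,V\rangle$ is a proper possibly causal path from $\mathbf{S}$ to $\mathbf{V'}$ that starts with an undirected edge, so the effect of $\mathbf{S}$ on $\mathbf{V'}$ --- that is, $f(\mathbf{v'}|do(\mathbf{s}))$ --- is not identifiable in $\mathcal{G^*}$, and hence the two effects are not both identifiable.

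I expect the only real obstacle to be the structural observation in the second paragraph: verifying carefully that augmenting by the childless, all-parent node $\hat Y$ leaves the bucket decomposition, the partial causal ordering, and the parent sets within $\mathbf{V}$ intact, so that both the factorization machinery of \cref{cor: perkovic factorization formula} and the amenability condition collapse onto $\mathbf{V}$. Everything after that is direct substitution and marginalization.
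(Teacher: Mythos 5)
Your proof is correct and takes essentially the same route as the paper's: both rest on \cref{theo: mpdag} to treat $\mathcal{G^*}$ as an MPDAG, the observation that \texttt{PCO}$(\mathbf{V}\cup\hat{Y},\mathcal{G^*})=(\mathbf{V_1},\dots,\mathbf{V_k},\hat{Y})$ with $pa(\hat{Y},\mathcal{G^*})=\mathbf{V}$ and unchanged buckets/parents within $\mathbf{V}$, the factorization and truncated-factorization results of \citet{perkovic2020identifying} (\cref{cor: perkovic factorization formula}, \cref{theo: causal identification formula}) for sufficiency, and \cref{prop: necessary condition} for necessity. The only minor difference is in the ``only if'' direction: the paper additionally shows that $f(\hat{y}|do(\mathbf{s}))$ itself is non-identifiable (an undirected edge $S-V$ extends to the proper possibly causal path $S-V\rightarrow\hat{Y}$), whereas you only exhibit non-identifiability of $f(\mathbf{v'}|do(\mathbf{s}))$, which still suffices for the conjunction as stated.
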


\begin{proof}
    Since in the MPDAG $\mathcal{G^*}$, there is the directed edge $V \rightarrow \hat{Y}$ for any $V \in \mathbf{V}$, the output of \texttt{PCO($\mathbf{V} \cup \hat{Y}$, $\mathcal{G^*}$)} is $(\mathbf{V_1,...,V_k}, \hat{Y})$. The parent of $\hat{y}$ is $\mathbf{v}$. The first statement then follows from the first statement in \cref{cor: perkovic factorization formula}.

    For the second statement, $\texttt{PCO} (An(\mathbf{V'}, \mathcal{G^{*}_{\mathbf{V'} \cup \hat{Y}}}), \mathcal{G^*}) = \texttt{PCO} (\mathbf{V'}, \mathcal{G^*})$. We first prove the sufficiency. That there is no pair of nodes $V \in \mathbf{V'}$ and $S \in \mathbf{S}$ such that $S-V$ is in the MPDAG $\mathcal{G^*}$ means $S$ and $V$ cannot be in the same bucket. Therefore, some of the buckets $\mathbf{V_i}$, $i \in \{1,...,k\}$ in the bucket decomposition of $\mathbf{V}$ will contain only nodes in $\mathbf{S}$. Hence, obtaining the bucket decomposition of $\mathbf{V'}$ is the same as leaving out buckets $\mathbf{V_i}$ that contain nodes in $\mathbf{S}$ from $\mathbf{V_1},..., \mathbf{V_k}$. Following from \cref{theo: causal identification formula} when taking $\mathbf{Y} = \mathbf{V'}$, we have $f(\mathbf{v'}|do(\mathbf{s})) 
        = \prod_{\mathbf{V_{i} \subseteq \mathbf{V'}}} f(\mathbf{v_i}|pa(\mathbf{v_i}, \mathcal{G^*}))$.
     Then $f(\hat{y}|do(\mathbf{s}))$ can be identified directly. 
     Next, we show the necessity. According to \cref{prop: necessary condition}, the necessary condition for $f(\mathbf{v'}|do(\mathbf{s}))$ to be identifiable is that there is no pair of nodes $V \in \mathbf{V'}$ and $S \in \mathbf{S}$ such that $S-V$ in $\mathcal{G^*}$. The necessary condition for $f(\mathbf{\hat{y}}|do(\mathbf{s}))$ to be identifiable is that there is no proper possibly causal path from $\mathbf{S}$ to $\hat{Y}$ that starts with an undirected edge in $\mathcal{G^*}$. Since $V \rightarrow \hat{Y}$ for any $V \in \mathbf{V}$, this necessary condition is equivalent to the condition that there is no proper possibly causal path from $\mathbf{S}$ to $\mathbf{V'}$ that starts with an undirected edge in $\mathcal{G^*}$. This completes the proof of \cref{prop: identification formula}.
\end{proof}

\subsubsection{Proof of \cref{prop: identification formula}}
\begin{proof}
For any density $f(\mathbf{v})$ consistent with the MPDAG $\mathcal{G}$, there exists a DAG $\mathcal{D} \in [\mathcal{G}]$ such that $f(\mathbf{v})$ is consistent with $\mathcal{D}$. Then the density $f(\mathbf{v}, \hat{y})$ factorized as $f(\mathbf{v}, \hat{y}) = f(\hat{y}|\mathbf{v})f(\mathbf{v})$ is consistent with the DAG augmented-$\mathcal{D}$ with $\hat{Y}$, denoted by $\mathcal{D^*}$. \cref{theo: mpdag} implies that $f(\mathbf{v}, \hat{y})$ is consistent with the MPDAG augmented-$\mathcal{G}$ with $\hat{Y}$, denoted by $\mathcal{G^*}$. Therefore, two statements in \cref{prop: identification formula in G*} applies here. Besides, we have $f(\mathbf{v_i}|pa(\mathbf{v_i}, \mathcal{G}^*)) = f(\mathbf{v_i}|pa(\mathbf{v_i}, \mathcal{G}))$ for any $\mathbf{V_{i} \subseteq \mathbf{V'}}$; if there is no pair of nodes $V \in \mathbf{V'}$ and $S \in \mathbf{S}$ such that $S-V$ is in $\mathcal{G^*}$, there is no pair of nodes $V \in \mathbf{V'}$ and $S \in \mathbf{S}$ such that $S-V$ is in $\mathcal{G}$. Modifying the condition for second statement in \cref{prop: identification formula in G*} and replacing $f(\mathbf{v_i}|pa(\mathbf{v_i}, \mathcal{G}^*))$ with $f(\mathbf{v_i}|pa(\mathbf{v_i}, \mathcal{G}))$ leads to \cref{prop: identification formula}.
\end{proof}

\section{An illustration example for \cref{prop: identification formula}} \label{Appendix: An illustration example}

Consider the MPDAG $\mathcal{G}=(\mathbf{V}, \mathbf{E})$ in \cref{fig: illustration_MPDAG_G}, where $\mathbf{V}=(A,B,C,D,E,R,L,M,N)$. \cref{fig: illustration_MPDAG_G*} is the augmented-$\mathcal{G}$ with $\hat{Y}$, denoted as $\mathcal{G}^*$. The partial causal ordering of $\mathbf{V}$ on $\mathcal{G}$ is $\{\{B,C\}, \{A,E\}, \{M,L\}, \{D\}, \{R\}, \{N\}\}$. $f(\mathbf{v})$ is a density consistent with $\mathcal{G}$ and $\mathcal{G^*}$ entails a conditional density $f(\hat{y}|\mathbf{v})$. Then, by \cref{prop: identification formula}, we have $f(\mathbf{v}, \hat{y})=f(\hat{y}|\mathbf{v})f(n|a,m,l,r)f(r|e)f(d|b,e)f(m,l)f(a,e)f(b,c)$ in $\mathcal{G^*}$. Let $\mathbf{S}=\{A,E\}$ and $\mathbf{V'}=\mathbf{V} \backslash \mathbf{S}$.  Since there is no pair of nodes $S \in \mathbf{S}$ and $V \in \mathbf{V}$ such that $S - V$ is in $\mathcal{G}$, we have $f(\mathbf{v'}|do(\mathbf{s}))=f(n|a,m,l,r)f(r|e)f(d|b,e)f(m,l)f(b,c)$ and $f(\hat{y}|do(\mathbf{s}))=\int f(\hat{y}|\mathbf{v})f(n|a,m,l,r)f(r|e)f(d|b,e)f(m,l)f(b,c) d \mathbf{v'}$ in $\mathcal{G^*}$.

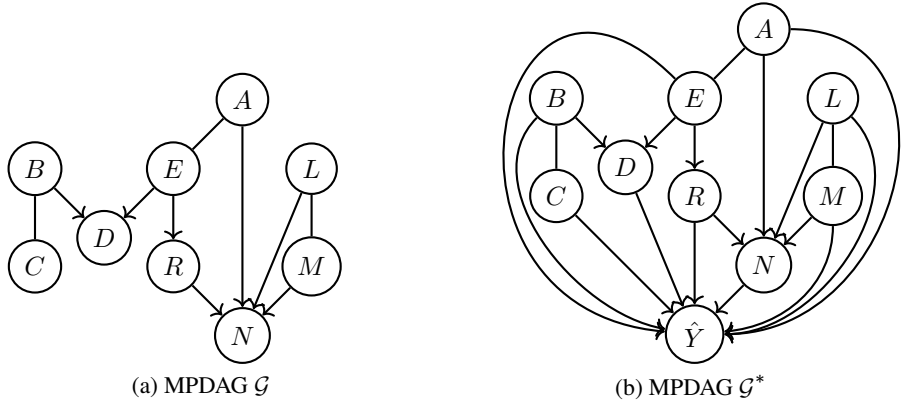
\begin{figure}[ht]
\vskip -0.5in
\centering
\subfloat[MPDAG $\mathcal{G}$]{%
        \label{fig: illustration_MPDAG_G}
        \begin{tikzpicture}[node distance={13mm}, thick, main/.style = {draw, circle}]
        \node[main] (2) {$A$}; 
        \node[main] (5) [below left of=2] {$E$};
        \node[main] (8) [below of=5] {$R$};
        \node[main] (7) [below left of=5] {$D$};
        \node[main] (1) [above left of=7] {$B$};
        \node[main] (4) [below of=1] {$C$};
        \node[main] (9) [below right of=8] {$N$};
        \node[main] (6) [above right of=9] {$M$};
        \node[main] (3) [above of=6] {$L$};
        \draw (1) -- (4);
        \draw[->] (1) -- (7);
        \draw (2) -- (5);
        \draw[->] (5) -- (7);
        \draw[->] (5) -- (8);
        \draw[->] (2) -- (9);
        \draw (3) -- (6);
        \draw[->] (6) -- (9);
        \draw[->] (8) -- (9);
        \draw[->] (3) -- (9);
        \end{tikzpicture} 
        \hspace{0.05\textwidth}}
\subfloat[MPDAG $\mathcal{G^*}$]{%
        \label{fig: illustration_MPDAG_G*}
        \begin{tikzpicture}[node distance={13mm}, thick, main/.style = {draw, circle}]
        \node[main] (2) {$A$}; 
        \node[main] (5) [below left of=2] {$E$};
        \node[main] (8) [below of=5] {$R$};
        \node[main] (7) [below left of=5] {$D$};
        \node[main] (1) [above left of=7] {$B$};
        \node[main] (4) [below of=1] {$C$};
        \node[main] (9) [below right of=8] {$N$};
        \node[main] (6) [above right of=9] {$M$};
        \node[main] (3) [above of=6] {$L$};
        \node[main] (10) [below left of=9] {$\hat{Y}$};
        \draw (1) -- (4);
        \draw[->] (1) -- (7);
        \draw (2) -- (5);
        \draw[->] (5) -- (7);
        \draw[->] (5) -- (8);
        \draw[->] (2) -- (9);
        \draw (3) -- (6);
        \draw[->] (6) -- (9);
        \draw[->] (8) -- (9);
        \draw[->] (3) -- (9);
        \draw[->] (1) to [out=225,in=170,looseness=0.95] (10);
        \draw[->] (2) to [out=0,in=360,looseness=1.5] (10);
        \draw[->] (3) to [out=315,in=0,looseness=1] (10);
        \draw[->] (4) -- (10);
        \draw[->] (5) to [out=135,in=175,looseness=2.6] (10);
        \draw[->] (6) to [out=270,in=5,looseness=1] (10);
        \draw[->] (7) -- (10);
        \draw[->] (8) -- (10);
        \draw[->] (9) -- (10);
        \end{tikzpicture} }
\caption{(a) is an MPDAG $\mathcal{G}$; (b) is the extended-$\mathcal{G}$ with $\hat{Y}$, denoted as $\mathcal{G^*}$.}
\label{fig: illustration example}
\vskip 0.2in
\end{figure}

\section{Fairness under MPDAGs as a graphical problem} \label{appendix: Fairness}

In this section, we first review the graphical criterion and algorithms presented in \cite{zuo2022counterfactual} for identifying ancestral relations in an MPDAG. Then as a complementary finding to \citep[Proposition 5.2]{zuo2022counterfactual}, we present a sufficient and necessary graphical criterion that establishes the condition under which any ancestral relationship with regards to a specific vertex in an MPDAG can be definitively determined.

\subsection{Identifying ancestral relations in an MPDAG} \label{appendix: Counterfactual Fairness}
We first clarify the types of ancestral relations in an MPDAG. With respect to a variable $S$, a variable $T$ can be either
\begin{itemize}
\itemsep0em
    \item a \textit{definite descendant} of $S$ if $T$ is a descendant of $S$ in every equivalent DAG,
    \item a \textit{definite non-descendant} of $S$ if $T$ is a non-descendant of $S$ in every equivalent DAG,
    \item a \textit{possible descendant} of $S$ if $T$ is neither a definite descendant nor a definite non-descendant of $S$.
\end{itemize}

\citet{perkovic2017interpreting} state that $T$ is a definite non-descendant of $S$ in an MPDAG $\mathcal{G}$ if and only if there is no possibly causal path from $S$ to $T$ in $\mathcal{G}$. \citet{zuo2022counterfactual} propose a sufficient and necessary graphical criterion to identify whether $T$ is a definite descendant of $S$ in an MPDAG in \cref{theo}.

We first introduce some terms as a preliminary. In the graph $\mathcal{G}$, a \textit{chord} of a path refers to any edge that connects two non-consecutive vertices on the path. Conversely, a \textit{chordless path} is a path that does not contain any such edges. A graph is considered \textit{complete} if every pair of distinct vertices in the graph is adjacent to each other. Consider an MPDAG denoted as $\mathcal{G}$. Let $S$ and $T$ represent two distinct vertices in $\mathcal{G}$. The \textit{critical set} of $S$ with respect to $T$ in $\mathcal{G}$ consists of all adjacent vertices of $S$ that lie on at least one chordless, possibly causal path from $S$ to $T$.

\begin{theorem}\citep[Theorem 4.5]{zuo2022counterfactual}\label{theo}
    Let $S$ and $T$ be two distinct vertices in an MPDAG $\mathcal G$, and $\mathbf{C}$ be the critical set of $S$ with respect to $T$ in $\mathcal G$. Then $T$ is a definite descendant of $S$ if and only if either $S$ has a definite arrow into $\mathbf{C}$, that is $\mathbf{C} \cap{ch(S, \mathcal G)} \ne \emptyset$, or $S$ does not have a definite arrow into $\mathbf{C}$ but $\mathbf{C}$ is non-empty and induces an incomplete subgraph of $\mathcal{G}$.
\end{theorem}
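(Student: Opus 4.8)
The plan is to translate both sides of the equivalence into statements about chordless possibly causal paths from $S$ to $T$, and to repeatedly use the standard fact that every DAG $\mathcal{D}\in[\mathcal{G}]$ shares the skeleton and the unshielded colliders of $\mathcal{G}$. I would first record the boundary case: $\mathbf{C}=\emptyset$ if and only if $T$ is a definite non-descendant of $S$. Indeed, by \citet{perkovic2017interpreting} the latter holds iff there is no possibly causal path from $S$ to $T$ in $\mathcal{G}$, and by \cref{lem: perkovic2017interpreting/Lemma_3.6} a possibly causal $S$-$T$ path exists iff a chordless one does, whose second vertex is an adjacency of $S$ lying on it and hence belongs to $\mathbf{C}$; conversely, any $C\in\mathbf{C}$ witnesses such a path. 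Note also that a chordless path is automatically of definite status (every triple on it is unshielded), so \cref{lem: perkovic2017interpreting/Lemma_B.1} applies to it. The key sub-lemma is: if $p=\langle S,C,\dots,T\rangle$ is a chordless possibly causal path in $\mathcal{G}$ and $S\rightarrow C$ holds in some $\mathcal{D}\in[\mathcal{G}]$, then $p$ is a directed path from $S$ to $T$ in $\mathcal{D}$. To see this, if $V_j\leftarrow V_{j+1}$ were the first edge of $p$ pointing back toward $S$ in $\mathcal{D}$, then $V_{j-1}\rightarrow V_j\leftarrow V_{j+1}$ would be an unshielded collider of $\mathcal{D}$ (by chordlessness), hence of $\mathcal{G}$, contradicting that $p$ is possibly causal in $\mathcal{G}$; alternatively one cites \cref{lem: perkovic2017interpreting/Lemma_B.1}.

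\textbf{Sufficiency.} Suppose the displayed condition holds. If $\mathbf{C}\cap ch(S,\mathcal{G})\ne\emptyset$, pick $C\in\mathbf{C}\cap ch(S,\mathcal{G})$ and a chordless possibly causal path $p$ realising $C\in\mathbf{C}$; since $S\rightarrow C$ holds in every member DAG, the sub-lemma makes $T$ a descendant of $S$ in every $\mathcal{D}\in[\mathcal{G}]$, so $T$ is a definite descendant. If instead $S$ has no arrow into $\mathbf{C}$, while $\mathbf{C}$ is non-empty and induces an incomplete subgraph, fix an arbitrary $\mathcal{D}\in[\mathcal{G}]$ and non-adjacent $C_a,C_b\in\mathbf{C}$; the edges $S - C_a$ and $S - C_b$ are undirected in $\mathcal{G}$, so $\langle C_a,S,C_b\rangle$ is an unshielded triple that is not a v-structure of $\mathcal{G}$, hence not of $\mathcal{D}$, and therefore $S\rightarrow C_a$ or $S\rightarrow C_b$ in $\mathcal{D}$. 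Applying the sub-lemma to the corresponding witnessing path shows $T$ is a descendant of $S$ in $\mathcal{D}$. As $\mathcal{D}$ was arbitrary, $T$ is a definite descendant of $S$.

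\textbf{Necessity (contrapositive).} Suppose the displayed condition fails. Since $\mathbf{C}\cap ch(S,\mathcal{G})=\emptyset$ already says $S$ has no arrow into $\mathbf{C}$, the only remaining possibilities are (a) $\mathbf{C}=\emptyset$, or (b) $\mathbf{C}\ne\emptyset$ and $\mathbf{C}$ induces a complete subgraph of $\mathcal{G}$. In case (a), $T$ is a definite non-descendant of $S$ by the boundary observation, hence not a definite descendant. In case (b), I would form $\mathcal{G}''$ by running \cref{alg: ConstructMPDAG} on $\mathcal{G}$ with the extra background knowledge $\{C_i\rightarrow S : C_i\in\mathbf{C}\}$, so that $[\mathcal{G}'']\subseteq[\mathcal{G}]$, and then show $\mathcal{G}''$ has no chordless possibly causal path from $S$ to $T$: such a path $q$ has the same skeleton in $\mathcal{G}$, none of its edges is oriented backward in $\mathcal{G}''$ and hence none in $\mathcal{G}$ (which has even fewer oriented edges), so $q$ is a chordless possibly causal $S$-$T$ path in $\mathcal{G}$ and its second vertex lies in $\mathbf{C}$; but in $\mathcal{G}''$ that initial edge is oriented into $S$, contradicting that $q$ is possibly causal from $S$ in $\mathcal{G}''$. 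By \cref{lem: perkovic2017interpreting/Lemma_3.6} there is then no possibly causal path from $S$ to $T$ in $\mathcal{G}''$, so $T$ is a definite non-descendant of $S$ in $\mathcal{G}''$; any $\mathcal{D}\in[\mathcal{G}'']\subseteq[\mathcal{G}]$ then exhibits $T$ as a non-descendant of $S$, so $T$ is not a definite descendant of $S$ in $\mathcal{G}$.

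\textbf{Main obstacle.} The delicate step is, in case (b) of necessity, verifying that $\mathcal{G}''$ is a genuine MPDAG, i.e. that \cref{alg: ConstructMPDAG} does not FAIL, equivalently that the orientation $\mathbf{C}\rightarrow S$ extends to some DAG in $[\mathcal{G}]$. This is precisely where completeness of $\mathbf{C}$ and the hypothesis $\mathbf{C}\cap ch(S,\mathcal{G})=\emptyset$ are used: one must rule out (i) creation of a new unshielded collider at $S$, which cannot occur among the $C_i$ since they are pairwise adjacent, and cannot occur between some $C_i$ and another neighbour $W$ of $S$ with $W\rightarrow S$, because if $W$ were non-adjacent to $C_i$ then Meek's rule R1 would already have oriented $S - C_i$ out of $S$ in $\mathcal{G}$, contradicting $\mathbf{C}\cap ch(S,\mathcal{G})=\emptyset$; and (ii) creation of a directed cycle during the Meek closure, which requires a short case analysis on Meek's rules in the spirit of the proof of \cref{theo: mpdag}. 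Once this is in place, the remaining bookkeeping is routine.
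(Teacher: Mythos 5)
You should know at the outset that the paper does not actually prove \cref{theo}; it is imported verbatim from \citet{zuo2022counterfactual}, so there is no internal proof to compare against and I am judging your argument on its own terms. Your architecture is the natural one, and the sufficiency direction is essentially complete: the boundary equivalence ($\mathbf{C}=\emptyset$ iff $T$ is a definite non-descendant), the sub-lemma that a chordless possibly causal path whose first edge is $S\rightarrow C$ in some $\mathcal{D}\in[\mathcal{G}]$ must be directed in $\mathcal{D}$ (your first-backward-edge/unshielded-collider argument is the right one; note that invoking \cref{lem: perkovic2017interpreting/Lemma_B.1} inside $\mathcal{D}$ instead would be circular, since whether the path is possibly causal in $\mathcal{D}$ is what is being decided), and both sufficiency cases, including the observation that the edges from $S$ into $\mathbf{C}$ must be undirected and that two nonadjacent $C_a,C_b$ cannot both point into $S$ in any member DAG, are sound.

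The genuine gap sits exactly where you put your ``main obstacle,'' and your sketch does not close it. In case (b) of necessity you need the existence of a DAG in $[\mathcal{G}]$ in which \emph{every} element of $\mathbf{C}$ is a parent of $S$, i.e.\ that the background knowledge $\{C\rightarrow S:C\in\mathbf{C}\}$ is consistent with $\mathcal{G}$. Arguing that \cref{alg: ConstructMPDAG} ``does not FAIL'' cannot establish this: \cref{theo: orientation completeness with bk} (Meek's Theorem~4) takes consistency as a hypothesis, so non-failure of the closure certifies nothing about $[\mathcal{G''}]\neq\emptyset$ unless you prove a converse you do not have. Your local checks are also not enough: ruling out a new unshielded collider at $S$ (your point (i), which is correct and genuinely uses the clique condition together with R1-closure) does not control the global cascades of R1--R4 set off by orienting $\mathbf{C}\rightarrow S$, and the promised ``short case analysis in the spirit of \cref{theo: mpdag}'' does not transfer, because that proof leans on \ref{property}, which applies only because $\hat{Y}$ there is a sink adjacent to all vertices --- not the situation here. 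What is required is a real extendability lemma for MPDAGs: that a clique of undirected neighbours of $S$ can always be jointly oriented into $S$ within $[\mathcal{G}]$. For CPDAGs this is the classical clique criterion of \citet{maathuis2009estimating}, whose proof exploits chordality of chain components and their independence from the directed part; neither property is available in an MPDAG, which is why this step is the nontrivial ingredient that \citet{zuo2022counterfactual} must supply (cf.\ the IDA-with-background-knowledge line of work, \citep{fang2020ida}). Once such a lemma is in hand, you can even drop the $\mathcal{G''}$ detour: in any $\mathcal{D}\in[\mathcal{G}]$ with $\mathbf{C}\rightarrow S$, a directed $S$--$T$ path $p$ in $\mathcal{D}$ is possibly causal in $\mathcal{G}$, its chordless subsequence (\cref{lem: perkovic2017interpreting/Lemma_3.6}) starts with some $C\in\mathbf{C}$ that lies on $p$, and $C\rightarrow S$ then closes a directed cycle in $\mathcal{D}$; hence $T$ is a non-descendant of $S$ in $\mathcal{D}$ and not a definite descendant. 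So the skeleton of your proof is right, but as written the necessity direction rests on an unproved (and genuinely nontrivial) existence claim.
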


Based on \cref{theo}, \citet{zuo2022counterfactual} propose \cref{alg: ancestral relation between X and Y} to identify the type of ancestral relation between two vertices. The set of definite descendants, possible descendants and definite non-descendants of the sensitive attribute can be identified by leveraging \cref{alg: ancestral relation between X and Y} on each pair of sensitive and any other attribute. In this way, we can select the definite non-descendants to make a counterfactually (and also interventionally) fair prediction or use both definite non-descendants and possible descendants of $A$ to increase the prediction accuracy at the cost of a violation of counterfactual (and also interventional) fairness.

\begin{algorithm}
\begin{algorithmic}[1]
\caption{Identify the type of ancestral relation of $S$ with respect to $T$ in an MPDAG \cite[Algorithm 2]{zuo2022counterfactual}}
\label{alg: ancestral relation between X and Y}
\State \textbf{Input:} MPDAG $\mathcal{G}$, two distinct variables $S$ and $T$ in $\mathcal{G}$.
\State \textbf{Output:} The type of ancestral relation between $S$ and $T$.
\State Find the critical set $\mathbf{C}$ of $S$ with respect to $T$ in $\mathcal{G}$ by Algorithm \ref{alg: FindingCriticalSet}.
\If{$|\mathbf{C}|=0$}
    \State \textbf{return} $T$ is a definite non-descendant of $S$.
\EndIf
\If{$S$ has an arrow into $\mathbf{C}$ or $\mathbf{C}$ induces an incomplete subgraph of $\mathcal{G}$}
    \State \textbf{return} $T$ is a definite descendant of $S$.
\EndIf
\State \textbf{return} $T$ is a possible descendant of $S$.
\end{algorithmic}
\end{algorithm}

\begin{algorithm}
\caption{Finding the critical set of $S$ with respect to $T$ in an MPDAG \cite[Algorithm 1]{zuo2022counterfactual}}
\label{alg: FindingCriticalSet}
\begin{algorithmic}[1]
\State \textbf{Input:} MPDAG $\mathcal{G}$, two distinct vertices $S$ and $T$ in $\mathcal{G}$.
\State \textbf{Output:} The critical set $\mathbf{C}$ of $S$ with respect to $T$ in $\mathcal{G}$.
\State Initialize $\mathbf{C}=\emptyset$, a waiting queue $\mathcal{Q}=[]$, and a set $\mathcal{H}=\emptyset$,
\For{$\alpha \in sib(S) \cup ch(S)$}
    \State add $(\alpha, S, \alpha)$ to the end of $\mathcal{Q}$,
\EndFor
\While{$\mathcal{Q} \neq \emptyset$}
    \State take the first element $(\alpha, \phi, \tau)$ out of $\mathcal{Q}$ and add it to $\mathcal{H}$;
    \If{$\tau = T$}
        \State {add $\alpha$ to $\mathbf{C}$, and remove from $\mathcal{S}$ all triples where the first element is $\alpha$;}
    \Else
        \For{each node $\beta$ in $\mathcal{G}$}
            \If{$\tau \rightarrow \beta$ or $\tau - \beta$}
                \If{$\tau \rightarrow \beta$ or $\phi$ is not adjacent with $\beta$ or $\tau$ is the endnode}
                    \If{$\beta$ and $S$ are not adjacent}
                        \If{$(\alpha, \tau, \beta) \notin \mathcal{H}$ and $(\alpha, \tau, \beta) \notin \mathcal{Q}$}
                        \State add $(\alpha, \tau, \beta)$ to the end of $\mathcal{Q}$,
                        \EndIf
                    \EndIf
                \EndIf
            \EndIf
        \EndFor
    \EndIf
\EndWhile
\State \textbf{return} $\mathbf{C}$
\end{algorithmic}
\end{algorithm}

\subsection{Additional identification results}

In \citet{zuo2022counterfactual}, the authors discuss that in the root node case (when the sensitive attribute do not have any parent in an MPDAG), any ancestral relationship between the sensitive attribute and other attribute is definite in an MPDAG. Here, we claim that the root node assumption is sufficient but not necessary and provide a complete graphical criterion under which any ancestral relationship with regards to a specific vertex is definite on an MPDAG in \cref{theo: definite ancestral relationship}.  

We first introduce a technical lemma in \cref{lem: definite ancestral relationship between X and W} that is useful in the proof of \cref{theo: definite ancestral relationship}

\begin{lemma} \label{lem: definite ancestral relationship between X and W}
    Let $X$ and $W$ be two distinct vertices in an MPDAG $\mathcal{G}=(\mathbf{V}, \mathbf{E})$. The ancestral relationship between $X$ and $W$ is definite if there is no proper chordless possibly causal path from $X$ to $W$ in $\mathcal{G}$ that starts with an undirected edge.
\end{lemma}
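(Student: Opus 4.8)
The plan is to argue by contraposition, but phrased directly through the critical set machinery of \cref{appendix: Counterfactual Fairness}. Let $\mathbf{C}$ be the critical set of $X$ with respect to $W$ in $\mathcal{G}$, i.e. the set of vertices adjacent to $X$ that lie on at least one chordless possibly causal path from $X$ to $W$. First I would show that the hypothesis forces $\mathbf{C}\subseteq ch(X,\mathcal{G})$. Given this inclusion, I would split into two cases: if $\mathbf{C}=\emptyset$, then $W$ is a definite non-descendant of $X$ (by the characterisation of definite non-descendants of \citet{perkovic2017interpreting} recalled in \cref{appendix: Counterfactual Fairness}, equivalently the $|\mathbf{C}|=0$ branch of \cref{alg: ancestral relation between X and Y}); if $\mathbf{C}\neq\emptyset$, then $\mathbf{C}\cap ch(X,\mathcal{G})=\mathbf{C}\neq\emptyset$, so $X$ has a definite arrow into $\mathbf{C}$ and \cref{theo} gives that $W$ is a definite descendant of $X$. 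In either case $W$ is not a possible descendant of $X$, so the ancestral relationship between $X$ and $W$ is definite, as required.

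The heart of the argument is the inclusion $\mathbf{C}\subseteq ch(X,\mathcal{G})$. Fix $c\in\mathbf{C}$, so $c$ is adjacent to $X$ and lies on some chordless possibly causal path $p=\langle X=V_0,\dots,V_k=W\rangle$ in $\mathcal{G}$. I would first observe that $c$ must be $V_1$: if $c=V_j$ with $j\ge 2$, then $V_0$ and $V_j$ are non-consecutive vertices of $p$ joined by an edge, i.e. $p$ has a chord, contradicting chordlessness. Hence $p=\langle X,c,V_2,\dots,W\rangle$; since the vertices of $p$ are distinct, $p$ is a proper path with respect to $\{X\}$. Next I would inspect the first edge of $p$. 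It cannot be $X\leftarrow c$, since a possibly causal path contains no edge of the form $V_i\leftarrow V_{i+1}$; and it cannot be the undirected edge $X-c$, since then $p$ would be a proper chordless possibly causal path from $X$ to $W$ starting with an undirected edge, contradicting the hypothesis. Therefore the first edge is $X\to c$, i.e. $c\in ch(X,\mathcal{G})$, which proves the inclusion.

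The step I expect to be the main obstacle is the case $\mathbf{C}=\emptyset$ in the first paragraph: to conclude that an empty critical set certifies $W$ as a definite non-descendant of $X$, one needs the fact that whenever there is a possibly causal path from $X$ to $W$ there is also a chordless one from $X$ to $W$ (otherwise $\mathbf{C}$ could be empty while $W$ is still a possible descendant). This refinement property, together with \citet[Theorem 4.5]{zuo2022counterfactual} (\cref{theo}) and the definite-non-descendant criterion of \citet{perkovic2017interpreting}, is precisely what underlies the correctness of \cref{alg: ancestral relation between X and Y}, so I would cite those results rather than re-derive them. The remaining details — distinctness of path vertices yielding properness, and matching ``starts with an undirected edge'' to the edge $X-c$ — are routine.
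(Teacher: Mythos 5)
Your proof is correct, but it takes a different route from the paper's. You work through the critical-set machinery: you show the hypothesis forces the critical set $\mathbf{C}$ of $X$ with respect to $W$ to satisfy $\mathbf{C}\subseteq ch(X,\mathcal{G})$ (your argument that any $c\in\mathbf{C}$ must be the first vertex after $X$ on a chordless possibly causal path, and that the first edge can only be $X\rightarrow c$, is sound, and properness with respect to the singleton $\{X\}$ is indeed automatic), and then conclude via \cref{theo} in the nonempty case and via the definite-non-descendant criterion plus the path-refinement fact in the empty case. The paper instead argues directly by contradiction without ever mentioning critical sets: assuming $W$ is a possible descendant, it takes a possibly causal path from $X$ to $W$, refines it to a chordless one by \cref{lem: perkovic2017interpreting/Lemma_3.6}, uses the hypothesis to force the first edge to be $X\rightarrow V_1$, and then invokes \cref{lem: perkovic2017interpreting/Lemma_B.1} to promote the entire chordless possibly causal path to a causal path, so $W$ is a definite descendant, contradicting possibility. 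The paper's argument is more elementary and self-contained, resting only on the two Perkovi\'c lemmas already recalled in the appendix; yours buys a cleaner two-case structure and makes transparent how the lemma interfaces with the critical-set criterion and \cref{alg: ancestral relation between X and Y}, but at the price of invoking the substantially stronger \cref{theo} where the paper needs only a one-line path-promotion lemma, and of relying on the same refinement lemma implicitly (in your $\mathbf{C}=\emptyset$ case) that the paper uses explicitly. You correctly identified that the empty-critical-set step is where the hidden content lies, and citing the refinement property there closes it, so there is no gap.
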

\begin{proof}
    Suppose for a contradiction that $W$ is a possible descendant of $A$, then there is a possibly causal path $p$ from $A$ to $W$. By \cref{lem: perkovic2017interpreting/Lemma_3.6}, a subsequence $p^{*}$ of $p$ forms a chordless possibly causal path from $A$ to $W$. Suppose $p^{*} = \left \langle A=V_0,...,V_k = W \right \rangle$. As there is no proper chordless possibly causal path from $A$ to $W$ starts with undirected edge, node $A$ must be directed towards $V_1$ as $A \rightarrow V_1$. By \cref{lem: perkovic2017interpreting/Lemma_B.1}, $p^{*}$ is a causal path from $A$ to $W$ in $\mathcal{G}$. Therefore, $W$ is a definite descendant of $A$. The ancestral relationship between $X$ and $W$ is definite.
\end{proof}

\begin{theorem} \label{theo: definite ancestral relationship}
Let $A$ be a vertex in an MPDAG $\mathcal{G}=(\mathbf{V}, \mathbf{E})$. The ancestral relationship between $A$ and any other vertex is definite if and only if there are no undirected edges connected to $A$ in $\mathcal{G}$. Moreover, an arbitrary attribute $W$ is a definite descendant of $A$ if and only if there is a causal path from $A$ to $W$ in $\mathcal{G}$.
\end{theorem}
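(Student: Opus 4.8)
The plan is to prove both directions of the biconditional, then the final characterization of definite descendants. For the ``if'' direction, suppose there are no undirected edges incident to $A$ in $\mathcal{G}$. Then for any vertex $W \ne A$, any proper chordless possibly causal path from $A$ to $W$ must start with a directed edge $A \rightarrow V_1$ (it cannot start with $A - V_1$ since no such edge exists). Hence there is no proper chordless possibly causal path from $A$ to $W$ that starts with an undirected edge, and \cref{lem: definite ancestral relationship between X and W} immediately gives that the ancestral relationship between $A$ and $W$ is definite. Since $W$ was arbitrary, the ``if'' direction follows.

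For the ``only if'' direction, I would argue by contraposition: assume there is an undirected edge $A - W$ in $\mathcal{G}$ and show the ancestral relationship between $A$ and $W$ is not definite. The key fact is that an undirected edge in an MPDAG means the edge orientation is not determined by the equivalence class plus background knowledge, so there exist a DAG $\mathcal{D}_1 \in [\mathcal{G}]$ with $A \rightarrow W$ and a DAG $\mathcal{D}_2 \in [\mathcal{G}]$ with $A \leftarrow W$. In $\mathcal{D}_1$, $W$ is a descendant of $A$; in $\mathcal{D}_2$, since $W \rightarrow A$, $W$ is not a descendant of $A$ (acyclicity forbids $A$ being an ancestor of $W$ when $W \rightarrow A$). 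Therefore $W$ is neither a definite descendant nor a definite non-descendant of $A$, i.e., the relationship is not definite. The one subtlety here is justifying that both orientations of the undirected edge extend to valid DAGs in $[\mathcal{G}]$; I would invoke the standard property of MPDAGs (consequence of Meek-rule completeness, \cref{theo: orientation completeness with bk}) that every undirected edge is oriented in both directions across the DAGs in the equivalence class — equivalently, that an MPDAG contains no edge orientation not forced by the equivalence class and background knowledge.

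For the moreover part, I would show $W$ is a definite descendant of $A$ if and only if there is a causal (fully directed) path from $A$ to $W$ in $\mathcal{G}$, under the standing hypothesis that no undirected edge touches $A$ (which holds whenever any such relationship is definite). The ``if'' direction is immediate: a causal path in $\mathcal{G}$ is a causal path in every DAG of $[\mathcal{G}]$, so $W$ is a descendant of $A$ in every equivalent DAG. For the ``only if'' direction, suppose $W$ is a definite descendant of $A$; then there is a possibly causal path from $A$ to $W$, and by \cref{lem: perkovic2017interpreting/Lemma_3.6} a chordless possibly causal subpath $p^*=\langle A=V_0,\dots,V_k=W\rangle$. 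Since no undirected edge is incident to $A$, the first edge is $A\rightarrow V_1$, and then \cref{lem: perkovic2017interpreting/Lemma_B.1} upgrades $p^*$ to a causal path from $A$ to $W$ in $\mathcal{G}$.

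The main obstacle I anticipate is the ``only if'' direction of the biconditional: cleanly establishing that an undirected edge $A-W$ genuinely admits both orientations among the DAGs consistent with $\mathcal{G}$ and its background knowledge. This is intuitively the defining feature of a \emph{maximally} oriented PDAG, but it needs to be cited carefully — it is precisely the completeness statement of Meek's rules relative to background knowledge (\cref{theo: orientation completeness with bk}), which guarantees that $\mathcal{G}$ already contains every forced orientation; hence an edge left undirected cannot be forced, so both extensions exist. The rest of the argument is then a routine assembly of \cref{lem: definite ancestral relationship between X and W}, \cref{lem: perkovic2017interpreting/Lemma_3.6}, and \cref{lem: perkovic2017interpreting/Lemma_B.1}.
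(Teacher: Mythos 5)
Your proposal is correct and takes essentially the same route as the paper's proof: sufficiency via \cref{lem: definite ancestral relationship between X and W} (no undirected edge at $A$ rules out any proper chordless possibly causal path starting undirected), and necessity by exhibiting two DAGs in $[\mathcal{G}]$ orienting $A - W$ in opposite directions, which the paper asserts directly and you justify via Meek-rule completeness (\cref{theo: orientation completeness with bk}). Your handling of the ``moreover'' clause simply reuses the argument of \cref{lem: definite ancestral relationship between X and W} (via \cref{lem: perkovic2017interpreting/Lemma_3.6} and \cref{lem: perkovic2017interpreting/Lemma_B.1}), matching what the paper leaves implicit.
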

\begin{proof}
    First, we prove the sufficiency. If there is no undirected edges connected to $A$ in an MPDAG $\mathcal{G}$, for any vertex $W$, there is no proper chordless possibly causal path from $A$ to $W$ that starts with an undirected edge. According to \cref{lem: definite ancestral relationship between X and W}, the ancestral relationship between $A$ and any other vertex is definite. Next, we prove the necessity. If there is an undirected edge connected to $A$, such as $A - W$, in $\mathcal{G}$, then there exists a DAG $\mathcal{D}1 \in [\mathcal{G}]$ with $A \leftarrow W$ so that $W$ is the non-descendant of $A$ in $\mathcal{D}1$. Meanwhile, there exists a DAG $\mathcal{D}2 \in [\mathcal{G}]$ with $A \rightarrow W$ so that $W$ is the descendant of $A$ in $\mathcal{D}2$. In this way, the ancestral relationship between $A$ and the vertex $W$ is not definite.
\end{proof}


\section{MMD in the fairness context} \label{appendix: MMD}

The MMD (Maximum Mean Discrepancy) is used to measure the distance between two data distributions. It is accomplished by mapping each sample to a Reproducing Kernel Hilbert Space (RKHS) using the kernel embedding trick firstly and then comparing the samples using a Gaussian kernel. In our context, let $\mathbf{\hat{y}_a} = (\hat{y}_a^1,...,\hat{y}_a^{N_a})$ and $\mathbf{\hat{y}_{a'}} = (\hat{y}_{a'}^1,...,\hat{y}_{a'}^{N_{a'}})$ be the prediction of the samples under intervention $do(\mathbf{A}=\mathbf{a},\mathbf{X}_{ad}=\mathbf{x}_{ad})$ and $do(\mathbf{A}=\mathbf{a'},\mathbf{X}_{ad}=\mathbf{x}_{ad})$, respectively, where $N_a$ and $N_{a'}$ are number of the generated interventional samples under two interventions respectively. 
Then we can express the $\text{MMD} ^ 2$ between predictions across different interventions as
\begin{equation*}
    \text{MMD}^2(\mathbf{\hat{y}_a}, \mathbf{\hat{y}_{a'}}) = \left \| \frac{1}{N_a} \sum_{i=1}^{N_a} \varphi (\hat{y}_a^i) - 
    \frac{1}{N_{a'}} \sum_{j=1}^{N_{a'}} \varphi (\hat{y}_{a'}^j) \right\|^2,
\end{equation*}
where $\varphi (\cdot)$ represents the mapping function to \text{RKHS}. After substituting the kernel functions for the inner products, then the squared \text{MMD} is:
\begin{equation*}
\frac{1}{(N_a)^2}\sum_{i=1}^{N_a}\sum_{j=1}^{N_a}k(\hat{y}_{a}^i, \hat{y}_{a}^j) + \frac{1}{(N_{a'})^2}\sum_{i=1}^{N_{a'}}\sum_{j=1}^{N_{a'}}k(\hat{y}_{a'}^i, \hat{y}_{a'}^i) -
\frac{2}{N_aN_{a'}}\sum_{i=1}^{N_a}\sum_{j=1}^{N_{a'}}k(\hat{y}_{a}^i, \hat{y}_{a'}^j),
\end{equation*}
where $k(\cdot)$ is a kernel function. The widely used one is the Gaussian RBF kernel $k(x, x')=exp(-\frac{1}{\sigma}\|x, x^{'}\|^2)$, with bandwidth $\sigma$. 
\section{Applicability to other fairness notions} \label{appendix: Applicability}
Our proposed method is directly applicable to other interventional-based fairness notions, such as ones defined on total effect (TE) and controlled direct effect (CDE) \citep{pearl2009causality} and no proxy discrimination \citep{kilbertus2017avoiding}.
In this section, we provide a detailed review of the related fairness notions and examine the applicability or inapplicability of the proposed identification criteria to these notions. Similar to the main text, let $\mathbf{A}$, $Y$ and $\mathbf{X}$ represent the sensitive attributes, outcome of interest and other observable attributes, respectively. The prediction of $Y$ is denoted by $\hat{Y}$. Besides, as in \cref{sec: identification}, we denote the augmented MPDAG $\mathcal{G}=(\mathbf{V}, \mathbf{E})$ with $\hat{Y}$, where $\mathbf{V}=\mathbf{X} \cup \mathbf{A}$, by $\mathcal{G^{*}}$. TE of $\mathbf{A}$ on $\hat{Y}$ corresponds to when $\mathbf{X}_{ad}=\emptyset$. It is identifiable if and only if no pair of nodes $A \in \mathbf{A}$ and $V \in \mathbf{V} \backslash \mathbf{A}$ such that $A - V$ is in $\mathcal{G}$. CDE of $\mathbf{A}$ on $\hat{Y}$ corresponds to $\mathbf{X}_{ad}=\mathbf{X}$. It is always identifiable under our modeling strategy \footnote{This meets CDE identification criterion on an MPDAG proposed by \citet[Theorem 5.4]{flanagan2020identification}.}. 
The causal query in no proxy discrimination, $P(\hat{Y}|do(\mathbf{P}=\mathbf{p}))$, where $\mathbf{P}$ is the proxy, corresponds to $\mathbf{A}=\mathbf{P}$ and $\mathbf{X}_{ad}=\emptyset$.

More specifically, we start with the total causal effect (TE), which is defined as follows:
\begin{definition}[Total causal effect]\citep{pearl2009causality}
    The total causal effect of the value change of $\mathbf{A}$ from $\mathbf{a}$ to $\mathbf{a'}$ on $\hat{Y}=\hat{y}$ is given by 
    \begin{equation*}
    TE(\mathbf{a},\mathbf{a'})=P(\hat{Y}=\hat{y}|do(\mathbf{A}=\mathbf{a}))-P(\hat{Y}=\hat{y}|do(\mathbf{A}=\mathbf{a'})).
    \end{equation*}
\end{definition}
The fairness criterion defined on total causal effect claims that the prediction $\hat{Y}$ is fair with respect to the sensitive attribute $\mathbf{A}$ if $P(\hat{Y}=\hat{y}|do(\mathbf{A}=\mathbf{a}))=P(\hat{Y}=\hat{y}|do(\mathbf{A}=\mathbf{a'}))$ holds for all possible values of $\hat{y}$ and any value that $\mathbf{A}$ can take. This notion corresponds to the notion of interventional fairness where $\mathbf{X}_{ad}=\emptyset$. Under the proposed modelling technique on $\hat{Y}$, \cref{prop: identification formula} implies that the term $P(\hat{Y}=\hat{y}|do(\mathbf{A}=\mathbf{a}))$ is identifiable if and only if no pair of nodes $A \in \mathbf{A}$ and $V \in \mathbf{V} \backslash \mathbf{A}$ such that $A - V$ is in $\mathcal{G}$. In this case, total causal effect can be expressed as $f(\hat{y}|do(\mathbf{a})) = \int f(\hat{y}|\mathbf{v}) \prod_{\mathbf{V_i} \subseteq \mathbf{V}} f(\mathbf{v_i}|pa(\mathbf{v_i}, \mathcal{G})) d \mathbf{v'}$, where $\mathbf{V'}= \mathbf{V} \backslash \mathbf{A}$, $(\mathbf{V_1,...,V_m})=\texttt{PCO}(\mathbf{V'}, \mathcal{G})$, with $pa(\mathbf{v_i}, \mathcal{G})$ representing values of $Pa(\mathbf{v_i}, \mathcal{G})$ that agree with $\mathbf{a}$.

\begin{definition}[Controlled direct causal effect]\citep{pearl2009causality}
    The controlled direct effect of the value change of $\mathbf{A}$ from $\mathbf{a}$ to $\mathbf{a'}$ on $\hat{Y}=\hat{y}$, while fixing all the other variables $\mathbf{X}$ to $\mathbf{x}$ is given by 
    \begin{equation*}
        CDE(\mathbf{a},\mathbf{a'}, \mathbf{x})=P(\hat{Y}=\hat{y}|do(\mathbf{A}=\mathbf{a}, \mathbf{X}=\mathbf{x}))-P(\hat{Y}=\hat{y}|do(\mathbf{A}=\mathbf{a'},\mathbf{X}=\mathbf{x})).
    \end{equation*}
\end{definition}
The fairness criterion defined on controlled direct causal effect claims that the prediction $\hat{Y}$ is fair with respect to the sensitive attribute $\mathbf{A}$ if $P(\hat{Y}=\hat{y}|do(\mathbf{A}=\mathbf{a}, \mathbf{X}=\mathbf{x}))=P(\hat{Y}=\hat{y}|do(\mathbf{A}=\mathbf{a'},\mathbf{X}=\mathbf{x}))$ holds for all possible values of $\hat{y}$ and any value that $\mathbf{A}$ and $\mathbf{X}$ can take. This notion corresponds to the notion of interventional fairness where $\mathbf{X}_{ad}=\mathbf{X}$. \cref{prop: identification formula} implies that it is always identifiable under our modeling strategy and is given by $f(\hat{y}|do(\mathbf{a}), do(\mathbf{x}))=f(\hat{y}|\mathbf{a}, \mathbf{x})$.

\begin{definition}[No proxy discrimination]\citep{kilbertus2017avoiding}
    A predictor $\hat{Y}$ exhibits no proxy discrimination based on the proxy $\mathbf{P}$ if for any value that $\mathbf{P}$ can take, 
    \begin{equation*}
    P(\hat{Y}=\hat{y}|do(\mathbf{P}=\mathbf{p}))=P(\hat{Y}=\hat{y}|do(\mathbf{P}=\mathbf{p'})).
    \end{equation*}
\end{definition}
This fairness criterion corresponds to the interventional fairness where $\mathbf{A}=\mathbf{P}$ and $\mathbf{X}_{ad}=\emptyset$. Under the proposed modelling technique on $\hat{Y}$, \cref{prop: identification formula} implies that the term $P(\hat{Y}=\hat{y}|do(\mathbf{P}=\mathbf{p}))$ is identifiable if and only if there exists no pair of nodes $P \in \mathbf{P}$ and $V \in \mathbf{V} \backslash \mathbf{P}$ such that $P - V$ is in $\mathcal{G}$. In this case, the causal query $f(\hat{y}|do(\mathbf{p}))$ can be expressed as $f(\hat{y}|do(\mathbf{p})) = \int f(\hat{y}|\mathbf{v}) \prod_{\mathbf{V_i} \subseteq \mathbf{V}} f(\mathbf{v_i}|pa(\mathbf{v_i}, \mathcal{G})) d \mathbf{v'}$, where $\mathbf{V'}= \mathbf{V} \backslash \mathbf{P}$, $(\mathbf{V_1,...,V_m})=\texttt{PCO}(\mathbf{V'}, \mathcal{G})$, with $pa(\mathbf{v_i}, \mathcal{G})$ representing values of $Pa(\mathbf{v_i}, \mathcal{G})$ that agree with $\mathbf{p}$.

\begin{definition}[Path-specific causal effect]\citep{avin2005identifiability}
    Given a causal path set $\pi$, the $\pi$-specific effect of the value change of $\mathbf{A}$ from $\mathbf{a}$ to $\mathbf{a'}$ on $\hat{Y}=\hat{y}$ through $\pi$ is given by
    \begin{equation*}
        PE_{\pi}(\mathbf{a},\mathbf{a'})=P(\hat{Y}=\hat{y}|do(\mathbf{A}=\mathbf{a'}|\pi), do(\mathbf{A}=\mathbf{a}|\Bar{\pi}))-P(\hat{Y}=\hat{y}|do(\mathbf{A}=\mathbf{a})),
    \end{equation*}
    where $P(\hat{Y}=\hat{y}|do(\mathbf{A=a'}|\pi), do(\mathbf{A=a}|\Bar{\pi}))$ represents the post-intervention distribution of $\hat{Y}$ where the effect of intervention $do(\mathbf{A=a'})$ is transmitted only along $\pi$ while the effect of intervention $do(\mathbf{A=a})$ is transmitted along the other paths.
\end{definition}

The fairness criterion defined on path-specific causal effect claims that a predictor $\hat{Y}$ achieves path-specific causal fairness with respect to the sensitive attribute $\mathbf{A}$ and the path set $\pi$ if $PE_{\pi}(\mathbf{a},\mathbf{a'})=0$ holds for all possible value of $\hat{y}$ and any value that $\mathbf{A}$ can take. However, the term in $PE_{\pi}(\mathbf{a},\mathbf{a'})$, $P(\hat{Y}=\hat{y}|do(\mathbf{A=a'}|\pi), do(\mathbf{A=a}|\Bar{\pi}))$, also known as a nested counterfactual query, cannot be identified by the formula in \cref{prop: identification formula} on MPDAG. Similarly, our proposed method cannot deal with the counterfactual fairness or path-specific counterfactual fairness defined below. That is because the ongoing challenge of the identification of the counterfactual queries in these notions on MPDAGs, specifically, $P(\hat{Y}_{\mathbf{A} \gets \mathbf{a'}}(U)=y|\mathbf{X=x},\mathbf{A=a})$ and $P(\hat{Y}_{\mathbf{A=a'}|\pi, \mathbf{A=a}|\Bar{\pi}}|\mathbf{O=o})$.

\begin{definition}[Counterfactual fairness]\citep[Definition 5]{kusner2017counterfactual} \label{def: counterfactual fairness}
    We say the prediction $\hat{Y}$ is counterfactually fair if under any context $\mathbf{X=x}$ and $\mathbf{A=a}$,
    \begin{equation*}\label{eq: counterfactual fairness}
        P(\hat{Y}_{\mathbf{A} \gets \mathbf{a}}(U)=\hat{y}|\mathbf{X=x},\mathbf{A=a})=P(\hat{Y}_{\mathbf{A}\gets{\mathbf{a'}}}(U)=\hat{y}|\mathbf{X=x},\mathbf{A=a}),    
    \end{equation*}
    for all possible value of $\hat{y}$ and any value attainable by $\mathbf{A}$.
\end{definition}

\begin{definition}[Path-specific counterfactual fairness] \citep{chiappa2019path,wu2019pc}
    Given a factual condition $\mathbf{O=o}$ where $\mathbf{O} \subseteq \{\mathbf{A},\mathbf{X},\hat{Y}\}$ and a causal path set $\pi$, predictor $\hat{Y}$ achieves the path-specific counterfactual fairness if 
    \begin{equation*}
        P(\hat{Y}_{\mathbf{A=a'}|\pi, \mathbf{A=a}|\Bar{\pi}}=\hat{y}|\mathbf{O=o})=P(\hat{Y}_{\mathbf{A=a}}=\hat{y}|\mathbf{O=o}),
    \end{equation*}
    for all possible value of $\hat{y}$ and any value attainable by $\mathbf{A}$.
\end{definition}
\newpage
\section{Supplementary experimental results}

\subsection{Causal graphs for one simulation} \label{Appendix: A randomly generated causal graph example}
\begin{figure}[ht]
\vskip 0.2in
\centering
\subfloat[DAG $\mathcal{D'}$]{%
        \begin{tikzpicture}[node distance={11mm}, thick, main/.style = {draw, circle}]
        \tikzstyle{every node}=[font=\tiny]
        \node[main] (2) {$A$}; 
        \node[main] (4) [below right of=2] {$D$};
        \node[main] (5) [below left of=2] {$B$};
        \node[main] (3) [below of=2] {$C$};
        \node[main] (7) [below of=3] {$M$};
        \node[main] (6) [right of=7] {$N$};
        \node[main] (8) [left of=7] {$F$};
        \node[main] (9) [below left of=8] {$L$};
        \node[main] (1) [above of=9] {$E$};
        \node[main] (10) [below right of=9] {$Y$};
        \draw[->] (2) -- (3);
        \draw[->] (2) -- (4);
        \draw[->] (2) -- (5);
        \draw[->] (2) -- (6);
        \draw[->] (3) -- (6);
        \draw[->] (3) -- (7);
        \draw[->] (3) to [out=300,in=0,looseness=1.7] (9);
        \draw[->] (5) -- (7);
        \draw[->] (5) -- (8);
        \draw[->] (1) -- (9);
        \draw[->] (2) to [out=180,in=135,looseness=1] (9);
        \draw[->] (8) -- (9);
        \draw[->] (2) to [out=180,in=180,looseness=1.5] (10);
        \draw[->] (4) to [out=0,in=0,looseness=1.5] (10);
        \draw[->] (9) -- (10);
        \end{tikzpicture} }
\subfloat[DAG $\mathcal{D}$]{%
        \begin{tikzpicture}[node distance={11mm}, thick, main/.style = {draw, circle}]
        \tikzstyle{every node}=[font=\tiny]
        \node[main] (2) {$A$}; 
        \node[main] (4) [below right of=2] {$D$};
        \node[main] (5) [below left of=2] {$B$};
        \node[main] (3) [below of=2] {$C$};
        \node[main] (7) [below of=3] {$M$};
        \node[main] (6) [right of=7] {$N$};
        \node[main] (8) [left of=7] {$F$};
        \node[main] (9) [below left of=8] {$L$};
        \node[main] (1) [above of=9] {$E$};
        \draw[->] (2) -- (3);
        \draw[->] (2) -- (4);
        \draw[->] (2) -- (5);
        \draw[->] (2) -- (6);
        \draw[->] (3) -- (6);
        \draw[->] (3) -- (7);
        \draw[->] (3) to [out=300,in=0,looseness=1.7] (9);
        \draw[->] (5) -- (7);
        \draw[->] (5) -- (8);
        \draw[->] (1) -- (9);
        \draw[->] (2) to [out=180,in=135,looseness=1] (9);
        \draw[->] (8) -- (9);
        \end{tikzpicture} }
        \hfill
\subfloat[CPDAG $\mathcal{C}$]{%
        \begin{tikzpicture}[node distance={11mm}, thick, main/.style = {draw, circle}]
        \tikzstyle{every node}=[font=\tiny]
        \node[main] (2) {$A$}; 
        \node[main] (4) [below right of=2] {$D$};
        \node[main] (5) [below left of=2] {$B$};
        \node[main] (3) [below of=2] {$C$};
        \node[main] (7) [below of=3] {$M$};
        \node[main] (6) [right of=7] {$N$};
        \node[main] (8) [left of=7] {$F$};
        \node[main] (9) [below left of=8] {$L$};
        \node[main] (1) [above of=9] {$E$};
        \draw (2) -- (3);
        \draw (2) -- (4);
        \draw (2) -- (5);
        \draw (2) -- (6);
        \draw (3) -- (6);
        \draw[->] (3) -- (7);
        \draw[->] (3) to [out=300,in=0,looseness=1.7] (9);
        \draw[->] (5) -- (7);
        \draw (5) -- (8);
        \draw[->] (1) -- (9);
        \draw[->] (2) to [out=180,in=135,looseness=1] (9);
        \draw[->] (8) -- (9);
        \end{tikzpicture} }
\subfloat[MPDAG $\mathcal{G}$]{%
        \begin{tikzpicture}[node distance={11mm}, thick, main/.style = {draw, circle}]
        \tikzstyle{every node}=[font=\tiny]
        \node[main] (2) {$A$}; 
        \node[main] (4) [below right of=2] {$D$};
        \node[main] (5) [below left of=2] {$B$};
        \node[main] (3) [below of=2] {$C$};
        \node[main] (7) [below of=3] {$M$};
        \node[main] (6) [right of=7] {$N$};
        \node[main] (8) [left of=7] {$F$};
        \node[main] (9) [below left of=8] {$L$};
        \node[main] (1) [above of=9] {$E$};
        \draw[->] (2) -- (3);
        \draw[->] (2) -- (4);
        \draw[->] (2) -- (5);
        \draw[->] (2) -- (6);
        \draw (3) -- (6);
        \draw[->] (3) -- (7);
        \draw[->] (3) to [out=300,in=0,looseness=1.7] (9);
        \draw[->] (5) -- (7);
        \draw[->] (5) -- (8);
        \draw[->] (1) -- (9);
        \draw[->] (2) to [out=180,in=135,looseness=1] (9);
        \draw[->] (8) -- (9);
        \end{tikzpicture} }
\subfloat[MPDAG $\mathcal{G^*}$]{%
        \begin{tikzpicture}[node distance={11mm}, thick, main/.style = {draw, circle}]
        \tikzstyle{every node}=[font=\tiny]
        \node[main] (2) {$A$}; 
        \node[main] (4) [below right of=2] {$D$};
        \node[main] (5) [below left of=2] {$B$};
        \node[main] (3) [below of=2] {$C$};
        \node[main] (7) [below of=3] {$M$};
        \node[main] (6) [right of=7] {$N$};
        \node[main] (8) [left of=7] {$F$};
        \node[main] (9) [below left of=8] {$L$};
        \node[main] (1) [above of=9] {$E$};
        \node[main] (10) [below right of=9] {$\hat{Y}$};
        \draw[->] (2) -- (3);
        \draw[->] (2) -- (4);
        \draw[->] (2) -- (5);
        \draw[->] (2) -- (6);
        \draw (3) -- (6);
        \draw[->] (3) -- (7);
        \draw[->] (3) to [out=225,in=0,looseness=1] (9);
        \draw[->] (5) -- (7);
        \draw[->] (5) -- (8);
        \draw[->] (1) -- (9);
        \draw[->] (2) to [out=180,in=135,looseness=1] (9);
        \draw[->] (8) -- (9);
        \draw[->] (1) -- (10);
        \draw[->] (2) to [out=180,in=180,looseness=1.5] (10);
        \draw[->] (3) to [out=315,in=15,looseness=1] (10);
        \draw[->] (4) to [out=0,in=0,looseness=1.5] (10);
        \draw[->] (5) to [out=270,in=60,looseness=1] (10);
        \draw[->] (6) to [out=245,in=0,looseness=1] (10);
        \draw[->] (7) -- (10);
        \draw[->] (8) -- (10);
        \draw[->] (9) -- (10);
        \end{tikzpicture} }
\caption{(a) is one of the generated DAG $\mathcal{D'}$ with $10$ nodes and $15$ directed edges. The randomly selected sensitive attribute is represented by $A$ and the outcome attribute is $Y$; (b) is the subgraph of $\mathcal{D'}$ over all of the observable variables except $Y$, denoted by DAG $\mathcal{D}$; (c) is the corresponding CPDAG $\mathcal{C}$ such that $\mathcal{D} \in [\mathcal{C}]$; (d) With the background knowledge $\{A \rightarrow B, A \rightarrow C, A \rightarrow D, A \rightarrow N\}$, we can obtain the MPDAG $\mathcal{G}$, such that $\mathcal{D} \in [\mathcal{G}]$. The definite non-descendant of $A$ can be found to be $\{E\}$ by leveraging \cref{alg: ancestral relation between X and Y}; (e) is the extended-$\mathcal{G}$ with $\hat{Y}$, denoted as $\mathcal{G^*}$.}
\label{fig: SimulationGraph}
\end{figure}
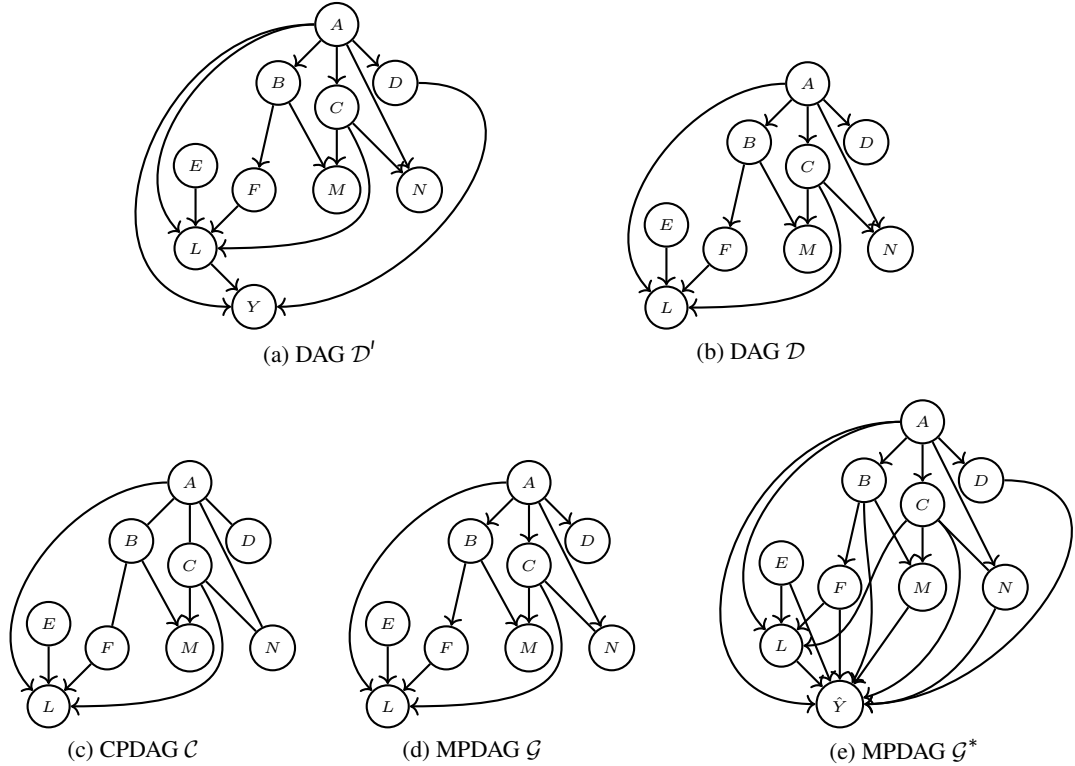


\newpage
\subsection{Causal graphs for the UCI Student Dataset} \label{Appendix: Causal graphs for Student Dataset}

The causal graphs for the UCI Student Dataset is provided in \cref{fig: causal_graph_student}. The attribute information can be found at \url{https://archive.ics.uci.edu/ml/datasets/Student+Performance}.

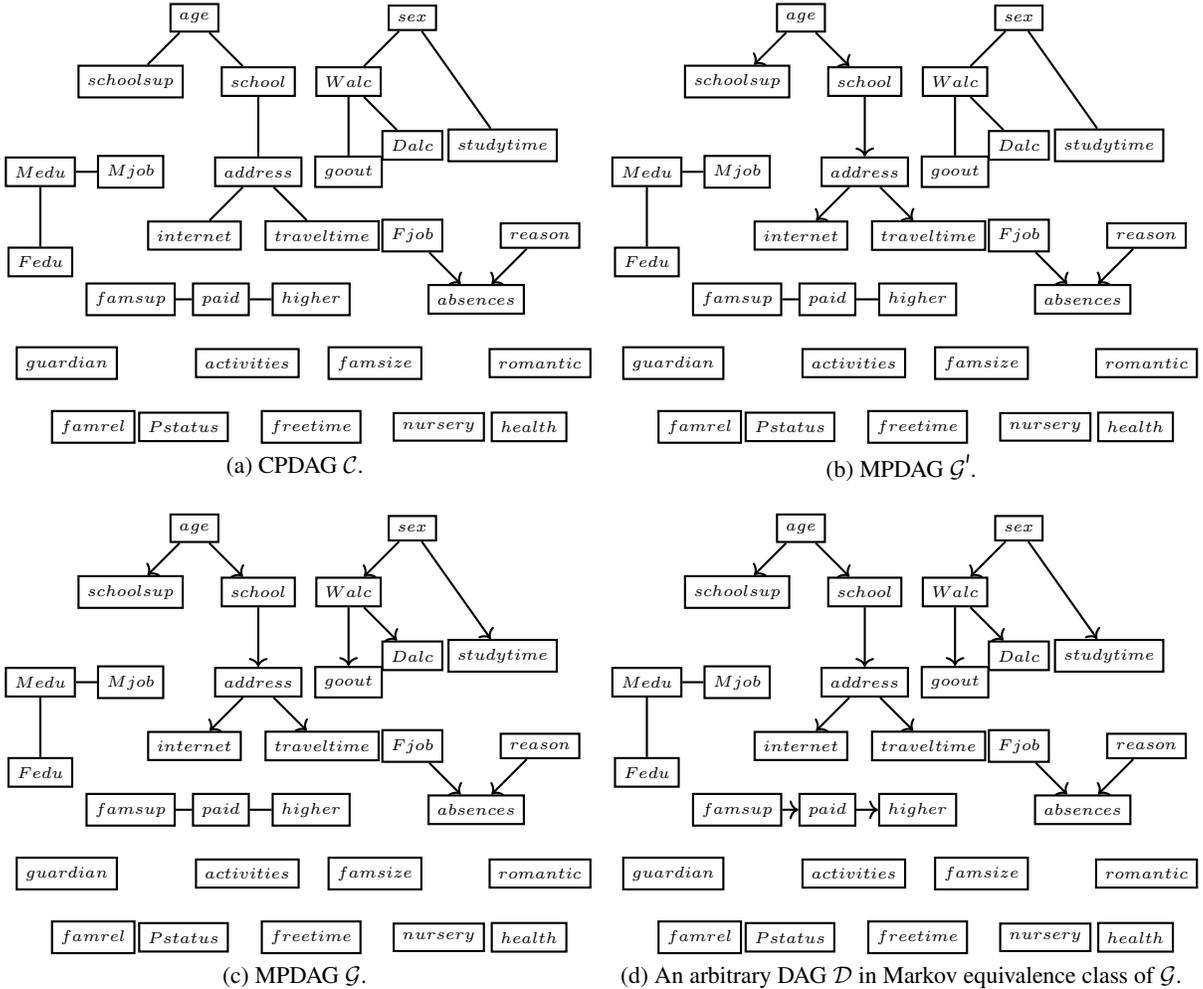
\begin{figure}[ht]
\centering
\subfloat[CPDAG $\mathcal{C}$.]{
    \label{fig: real_data_cpdag}
    \begin{tikzpicture}[node distance={12mm}, thick, main/.style = {draw, rectangle}]
    \tikzstyle{every node}=[font=\tiny]
    \node[main] (2) [below of=3] {$Medu$};
    \node[main] (1) [below of=2]{$Fedu$}; 
    \node[main] (4) [right of=2] {$Mjob$};
    \node[main] (5) [above of=4] {$schoolsup$}; 
    \node[main] (6) [above right of=5] {$age$};
    \node[main] (7) [below right of=6] {$school$};
    \node[main] (8) [below of=7] {$address$};
    \node[main] (9) [below left of=8] {$internet$}; 
    \node[main] (10) [below right of=8] {$traveltime$};
    \node[main] (13) [right of=8] {$goout$};
    \node[main] (12) [above of=13] {$Walc$};
    \node[main] (14) [below right of=12] {$Dalc$};
    \node[main] (11) [above right of=12] {$sex$}; 
    \node[main] (15) [right of=14] {$studytime$};
    \node[main] (16) [below of=14] {$Fjob$};
    \node[main] (18) [below right of=16] {$absences$};
    \node[main] (17) [above right of=18] {$reason$};
    \node[main] (19) [below left of=9] {$famsup$};
    \node[main] (20) [right of=19] {$paid$};
    \node[main] (21) [right of=20] {$higher$};
    \node[main] (24) [below left of=21] {$activities$};
    \node[main] (25) [below right of=21] {$famsize$};
    \node[main] (27) [below left of=19] {$guardian$};
    \node[main] (23) [below left of=24] {$Pstatus$};
    \node[main] (22) [left of=23] {$famrel$};
    \node[main] (29) [below right of=25] {$nursery$};
    \node[main] (28) [right of=29] {$health$};
    \node[main] (26) [below right of=24] {$freetime$};
    \node[main] (30) [below right of=18] {$romantic$};
    \draw (1) -- (2);
    \draw (2) -- (4);
    \draw (6) -- (5);
    \draw (6) -- (7);
    \draw (7) -- (8);
    \draw (8) -- (9);
    \draw (8) -- (10);
    \draw (11) -- (12);
    \draw (12) -- (13);
    \draw (12) -- (14);
    \draw (11) -- (15);
    \draw[->] (16) -- (18);
    \draw[->] (17) -- (18);
    \draw (19) -- (20);
    \draw (20) -- (21);
    \end{tikzpicture} }
\subfloat[MPDAG $\mathcal{G'}$.]{
    \label{fig: real_data_without_assumption}
    \begin{tikzpicture}[node distance={12mm}, thick, main/.style = {draw, rectangle}]
    \tikzstyle{every node}=[font=\tiny]
    \node[main] (2) [below of=3] {$Medu$};
    \node[main] (1) [below of=2]{$Fedu$}; 
    \node[main] (4) [right of=2] {$Mjob$};
    \node[main] (5) [above of=4] {$schoolsup$}; 
    \node[main] (6) [above right of=5] {$age$};
    \node[main] (7) [below right of=6] {$school$};
    \node[main] (8) [below of=7] {$address$};
    \node[main] (9) [below left of=8] {$internet$}; 
    \node[main] (10) [below right of=8] {$traveltime$};
    \node[main] (13) [right of=8] {$goout$};
    \node[main] (12) [above of=13] {$Walc$};
    \node[main] (14) [below right of=12] {$Dalc$};
    \node[main] (11) [above right of=12] {$sex$}; 
    \node[main] (15) [right of=14] {$studytime$};
    \node[main] (16) [below of=14] {$Fjob$};
    \node[main] (18) [below right of=16] {$absences$};
    \node[main] (17) [above right of=18] {$reason$};
    \node[main] (19) [below left of=9] {$famsup$};
    \node[main] (20) [right of=19] {$paid$};
    \node[main] (21) [right of=20] {$higher$};
    \node[main] (24) [below left of=21] {$activities$};
    \node[main] (25) [below right of=21] {$famsize$};
    \node[main] (27) [below left of=19] {$guardian$};
    \node[main] (23) [below left of=24] {$Pstatus$};
    \node[main] (22) [left of=23] {$famrel$};
    \node[main] (29) [below right of=25] {$nursery$};
    \node[main] (28) [right of=29] {$health$};
    \node[main] (26) [below right of=24] {$freetime$};
    \node[main] (30) [below right of=18] {$romantic$};
    \draw (1) -- (2);
    \draw (2) -- (4);
    \draw[->] (6) -- (5);
    \draw[->] (6) -- (7);
    \draw[->] (7) -- (8);
    \draw[->] (8) -- (9);
    \draw[->] (8) -- (10);
    \draw (11) -- (12);
    \draw (12) -- (13);
    \draw (12) -- (14);
    \draw (11) -- (15);
    \draw[->] (16) -- (18);
    \draw[->] (17) -- (18);
    \draw (19) -- (20);
    \draw (20) -- (21);
    \end{tikzpicture} }\\
\subfloat[MPDAG $\mathcal{G}$.]{
    \label{fig: real_data_with_assumption}
    \begin{tikzpicture}[node distance={12mm}, thick, main/.style = {draw, rectangle}]
    \tikzstyle{every node}=[font=\tiny]
    \node[main] (2) [below of=3] {$Medu$};
    \node[main] (1) [below of=2]{$Fedu$}; 
    \node[main] (4) [right of=2] {$Mjob$};
    \node[main] (5) [above of=4] {$schoolsup$}; 
    \node[main] (6) [above right of=5] {$age$};
    \node[main] (7) [below right of=6] {$school$};
    \node[main] (8) [below of=7] {$address$};
    \node[main] (9) [below left of=8] {$internet$}; 
    \node[main] (10) [below right of=8] {$traveltime$};
    \node[main] (13) [right of=8] {$goout$};
    \node[main] (12) [above of=13] {$Walc$};
    \node[main] (14) [below right of=12] {$Dalc$};
    \node[main] (11) [above right of=12] {$sex$}; 
    \node[main] (15) [right of=14] {$studytime$};
    \node[main] (16) [below of=14] {$Fjob$};
    \node[main] (18) [below right of=16] {$absences$};
    \node[main] (17) [above right of=18] {$reason$};
    \node[main] (19) [below left of=9] {$famsup$};
    \node[main] (20) [right of=19] {$paid$};
    \node[main] (21) [right of=20] {$higher$};
    \node[main] (24) [below left of=21] {$activities$};
    \node[main] (25) [below right of=21] {$famsize$};
    \node[main] (27) [below left of=19] {$guardian$};
    \node[main] (23) [below left of=24] {$Pstatus$};
    \node[main] (22) [left of=23] {$famrel$};
    \node[main] (29) [below right of=25] {$nursery$};
    \node[main] (28) [right of=29] {$health$};
    \node[main] (26) [below right of=24] {$freetime$};
    \node[main] (30) [below right of=18] {$romantic$};
    \draw (1) -- (2);
    \draw (2) -- (4);
    \draw[->] (6) -- (5);
    \draw[->] (6) -- (7);
    \draw[->] (7) -- (8);
    \draw[->] (8) -- (9);
    \draw[->] (8) -- (10);
    \draw[->] (11) -- (12);
    \draw[->] (12) -- (13);
    \draw[->] (12) -- (14);
    \draw[->] (11) -- (15);
    \draw[->] (16) -- (18);
    \draw[->] (17) -- (18);
    \draw (19) -- (20);
    \draw (20) -- (21);
    \end{tikzpicture} 
}
\subfloat[An arbitrary DAG $\mathcal{D}$ in Markov equivalence class of $\mathcal{G}$.]{ 
    \label{fig: real_data_DAG}
    \begin{tikzpicture}[node distance={12mm}, thick, main/.style = {draw, rectangle}]
    \tikzstyle{every node}=[font=\tiny]
    \node[main] (2) [below of=3] {$Medu$};
    \node[main] (1) [below of=2]{$Fedu$}; 
    \node[main] (4) [right of=2] {$Mjob$};
    \node[main] (5) [above of=4] {$schoolsup$}; 
    \node[main] (6) [above right of=5] {$age$};
    \node[main] (7) [below right of=6] {$school$};
    \node[main] (8) [below of=7] {$address$};
    \node[main] (9) [below left of=8] {$internet$}; 
    \node[main] (10) [below right of=8] {$traveltime$};
    \node[main] (13) [right of=8] {$goout$};
    \node[main] (12) [above of=13] {$Walc$};
    \node[main] (14) [below right of=12] {$Dalc$};
    \node[main] (11) [above right of=12] {$sex$}; 
    \node[main] (15) [right of=14] {$studytime$};
    \node[main] (16) [below of=14] {$Fjob$};
    \node[main] (18) [below right of=16] {$absences$};
    \node[main] (17) [above right of=18] {$reason$};
    \node[main] (19) [below left of=9] {$famsup$};
    \node[main] (20) [right of=19] {$paid$};
    \node[main] (21) [right of=20] {$higher$};
    \node[main] (24) [below left of=21] {$activities$};
    \node[main] (25) [below right of=21] {$famsize$};
    \node[main] (27) [below left of=19] {$guardian$};
    \node[main] (23) [below left of=24] {$Pstatus$};
    \node[main] (22) [left of=23] {$famrel$};
    \node[main] (29) [below right of=25] {$nursery$};
    \node[main] (28) [right of=29] {$health$};
    \node[main] (26) [below right of=24] {$freetime$};
    \node[main] (30) [below right of=18] {$romantic$};
    \draw (1) -- (2);
    \draw (2) -- (4);
    \draw[->] (6) -- (5);
    \draw[->] (6) -- (7);
    \draw[->] (7) -- (8);
    \draw[->] (8) -- (9);
    \draw[->] (8) -- (10);
    \draw[->] (11) -- (12);
    \draw[->] (12) -- (13);
    \draw[->] (12) -- (14);
    \draw[->] (11) -- (15);
    \draw[->] (16) -- (18);
    \draw[->] (17) -- (18);
    \draw[->] (19) -- (20);
    \draw[->] (20) -- (21);
    \end{tikzpicture} }
\caption{The causal graphs for Student dataset. (a) is the CPDAG $\mathcal{C}$ learnt from the observational data; (b) Given the background knowledge that the $\operatorname{age}$ is the parent of $\operatorname{schoolsup}$ and $\operatorname{school}$, we can have the MPDAG $\mathcal{G'}$ by leveraging Meek's rule; (c) Considering the additional background knowledge constraint that the sensitive attribute that cannot have any parent in the graph, we can obtain the MPDAG $\mathcal{G}$; (d) is an arbitary DAG $\mathcal{D}$ in the Markov equivalence class of $\mathcal{G}$.}
\label{fig: causal_graph_student}
\end{figure}

\subsection{More experimental details on the UCI Student Dataset} \label{Appendix: Training Details on Student Data}
We employ the GES structure learning algorithm  \citep{chickering2002learning} implemented in TETRAD \citep{ramsey2018tetrad}, a general causal discovery software, to learn the CPDAG from the dataset, excluding the target attribute $\operatorname{Grade}$. After uploading the preprocessed data, we can learn the CPDAG $\mathcal{C}$. The evolution of the CPDAG to MPDAG is shown in \cref{fig: causal_graph_student} in \cref{Appendix: Causal graphs for Student Dataset}. Our experiments are carried out on the MPDAG $\mathcal{G}$ in \cref{fig: real_data_with_assumption}.
In this dataset, the definite descendants of $\operatorname{sex}$ can be identified as $\operatorname{\{Walc, goout, Dalc, studytime\}}$ and all the other nodes are definite non-descendants of $\operatorname{sex}$.

The dataset is divided into three sets: training, validation, and test, in an 8:1:1 ratio. Interventional data on $\operatorname{sex=female}$ and $\operatorname{sex=male}$ in the test set is obtained via splitting the test data by different values of $\operatorname{sex}$. On the other hand, the interventional data for training and validation sets are generated according to the causal identification formula. To generate interventional data, we first apply \cref{alg: Partial causal ordering (PCO)} to the MPDAG $\mathcal{G}$ in \cref{fig: real_data_with_assumption}. This gives us an ordered list of the bucket decomposition of vertices in $\mathcal{G}$: \{$\operatorname{\{age\}}$, $\operatorname{\{schoolsup\}}$, $\operatorname{\{school\}}$, $\operatorname{\{address\}}$, $\operatorname{\{internet\}}$, $\operatorname{\{traveltime\}}$, $\operatorname{\{sex\}}$, $\operatorname{\{Walc\}}$, $\operatorname{\{studytime\}}$, $\operatorname{\{Dalc\}}$, $\operatorname{\{goout\}}$, $\operatorname{\{Mjob, Medu, Fedu\}}$, $\operatorname{\{Fjob\}}$, $\operatorname{\{reason\}}$, $\operatorname{\{famsup, paid, higher\}}$, $\operatorname{\{guardian\}}$, $\operatorname{\{activities\}}$, $\operatorname{\{famsize\}}$, $\operatorname{\{romantic\}}$, $\operatorname{\{famrel\}}$, $\operatorname{\{Pstatus\}}$, $\operatorname{\{freetime\}}$, $\operatorname{\{nursery\}}$, $\operatorname{\{health\}}$, $\operatorname{\{absences\}\}}$. For each pair of buckets and its parents, we fit a conditional multivariate distribution using mixture density networks \cite{bishop1994mixture} for continuous variables and probability contingency tables for discrete variables. For example, \cref{fig: student1_Age} demonstrates the fitting performance on the attributes $\operatorname{school, address, internet}$ and $\operatorname{traveltime}$. Based on the fitted conditional distribution, we generate 395 interventional data points for $\operatorname{sex=female}$ and $\operatorname{sex=male}$, respectively. The proportion for training and validation interventional data is 8:2. Additionally, in our $\epsilon$-\texttt{IFair} model, the $\lambda$ is set to be $[0,1,40,100,130,175,250]$. For each model, we run it 20 times with different seeds and report the average results in the main section.


\begin{figure}[htp]
\vskip 0.2in
\begin{center}
\centerline{\includegraphics[width=0.85\columnwidth]{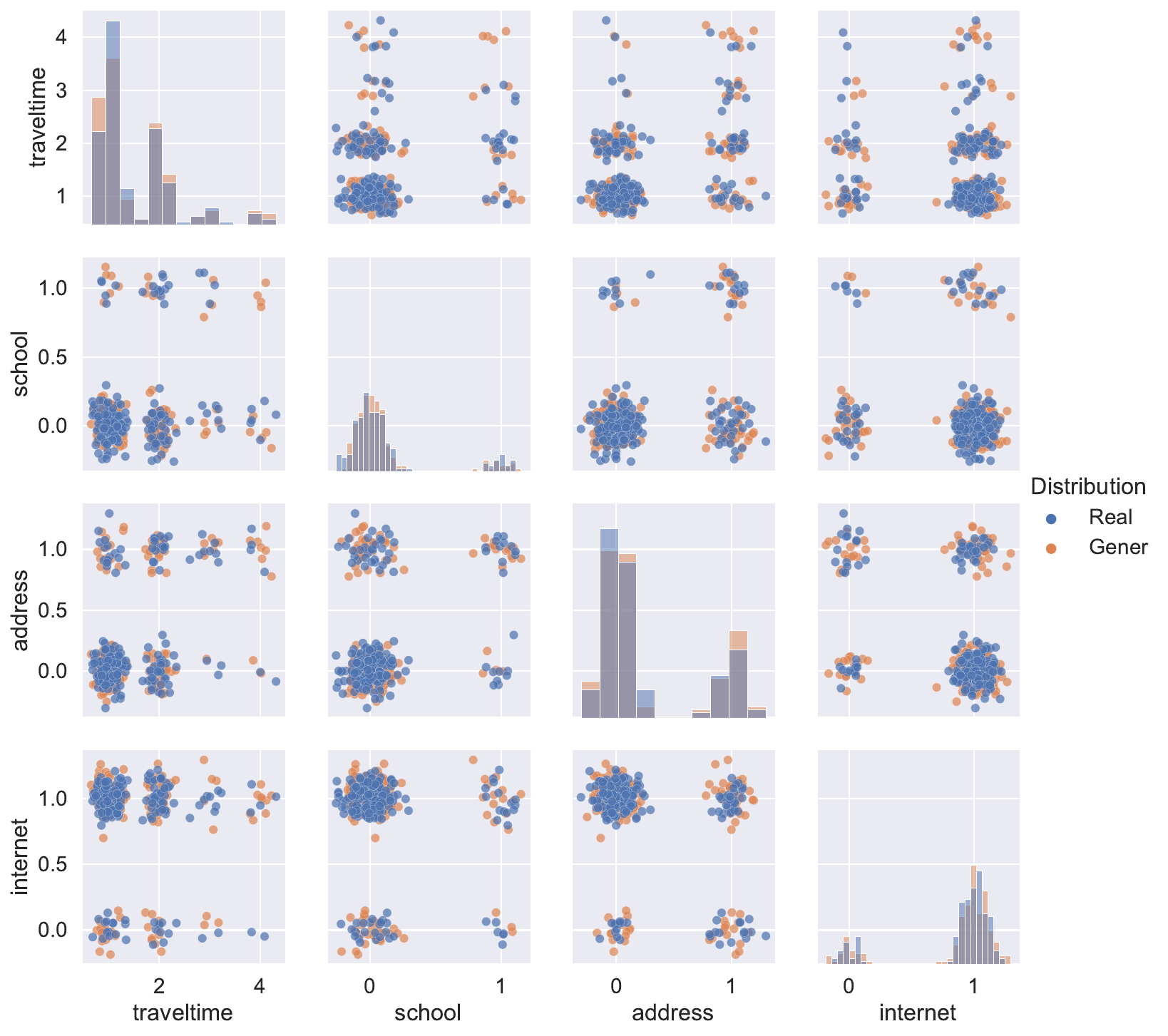}}
\caption{The fitting performance on the attributes $\operatorname{school}$, $\operatorname{address}$, $\operatorname{internet}$ and $\operatorname{traveltime}$ in the Student dataset. `Real' represents the observational data where $\operatorname{sex=male}$, while `Gener' represents the generated interventional data on $\operatorname{sex=male}$ with the fitted conditional density. To better show the fitting performance, a very small amount of noise is added to discrete variables.}
\label{fig: student1_Age}
\end{center}
\end{figure}

\newpage
\subsection{Causal graphs for the Credit Risk Dataset} \label{Appendix: Causal graphs for Credit Dataset}
The causal graphs for the Credit Risk Dataset is provided in \cref{fig: causal_graph_credit}. The attribute information can be found at \url{https://www.kaggle.com/datasets/laotse/credit-risk-dataset}.

\begin{figure}[ht]
\centering
\subfloat[MPDAG $\mathcal{G}$.]{
    \label{fig: credit_graph}
    \begin{tikzpicture}[node distance={12mm}, thick, main/.style = {draw, rectangle}]
    \tikzstyle{every node}=[font=\tiny]
    \node[main] (1) {$Age$};
    \node[main] (4) [below of=1] {$Home\_Status$};
    \node[main] (2) [above left of=4, xshift=-15mm] {$Historical\_Length$}; 
    \node[main] (6) [above right of=4, xshift=15mm]{$Employment\_Length$}; 
    \node[main] (3) [below of=2] {$Loan\_Intent$};
    \node[main] (5) [below of=6, yshift=-5mm] {$Income$};
    \node[main] (7) [below left of=4, yshift=-3mm] {$Interest\_Rate$};
    \node[main] (8) [below of=7] {$Loan\_Grade$};
    \node[main] (9) [below left of=8, xshift=-10mm] {$Historical\_Default$}; 
    \node[main] (10) [below right of=8, xshift=5mm] {$Loan\_Amount$};
    \node[main] (11) [below right of=10, xshift=8mm] {$Loan\_Percent\_Income$};
    \draw[->] (1) -- (2);
    \draw[->] (1) -- (3);
    \draw[->] (1) -- (4);
    \draw[->] (1) -- (6);
    \draw[->] (1) -- (5);
    \draw (4) -- (5);
    \draw (4) -- (6);
    \draw[->] (4) -- (11);
    \draw[->] (4) -- (7);
    \draw[->] (5) -- (10);
    \draw[->] (5) -- (11);
    \draw[->] (7) -- (8);
    \draw[->] (8) -- (9);
    \draw[->] (8) -- (10);
    \draw[->] (9) -- (11);
    \draw[->] (10) -- (11);
    \label{fig: credit_mpdag}
    \end{tikzpicture} }
\subfloat[An arbitrary DAG $\mathcal{D}$ in Markov equivalence class of $\mathcal{G}$.]{
    \begin{tikzpicture}[node distance={12mm}, thick, main/.style = {draw, rectangle}]
    \tikzstyle{every node}=[font=\tiny]
    \node[main] (1) {$Age$};
    \node[main] (4) [below of=1] {$Home\_Status$};
    \node[main] (2) [above left of=4, xshift=-15mm] {$Historical\_Length$}; 
    \node[main] (6) [above right of=4, xshift=15mm]{$Employment\_Length$}; 
    \node[main] (3) [below of=2] {$Loan\_Intent$};
    \node[main] (5) [below of=6, yshift=-5mm] {$Income$};
    \node[main] (7) [below left of=4, yshift=-3mm] {$Interest\_Rate$};
    \node[main] (8) [below of=7] {$Loan\_Grade$};
    \node[main] (9) [below left of=8, xshift=-10mm] {$Historical\_Default$}; 
    \node[main] (10) [below right of=8, xshift=5mm] {$Loan\_Amount$};
    \node[main] (11) [below right of=10, xshift=8mm] {$Loan\_Percent\_Income$};
    \draw[->] (1) -- (2);
    \draw[->] (1) -- (3);
    \draw[->] (1) -- (4);
    \draw[->] (1) -- (6);
    \draw[->] (1) -- (5);
    \draw[->] (4) -- (5);
    \draw[<-] (4) -- (6);
    \draw[->] (4) -- (11);
    \draw[->] (4) -- (7);
    \draw[->] (5) -- (10);
    \draw[->] (5) -- (11);
    \draw[->] (7) -- (8);
    \draw[->] (8) -- (9);
    \draw[->] (8) -- (10);
    \draw[->] (9) -- (11);
    \draw[->] (10) -- (11);
    \label{fig: credit_dag}
    \end{tikzpicture} }
\caption{The causal graphs for the Credit Risk dataset.
(a) is the learned MPDAG $\mathcal{G}$ from the observational data with tier ordering constraint: $\operatorname{Age}$ is placed in the first tier, while all other variables are in the second tier; (b) is an arbitrary DAG $\mathcal{D}$ in the Markov equivalence class of $\mathcal{G}$.}
\label{fig: causal_graph_credit}
\end{figure}
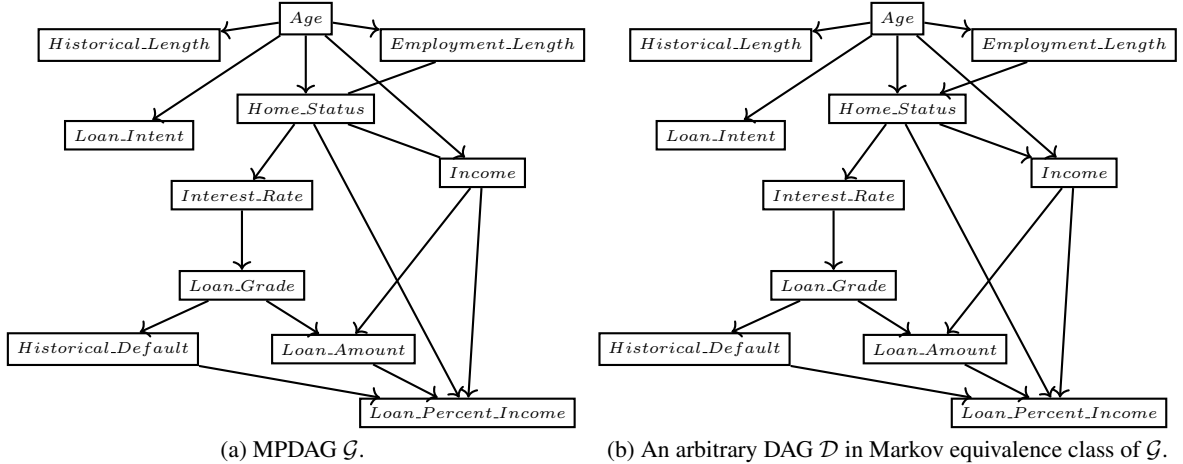

\subsection{More experimental details on the Credit Risk Dataset} \label{Appendix: Training Details on Credit Data}




To obtain an MPDAG over all variables except $\operatorname{Loan\_status}$, we utilize the GES structure learning algorithm \citep{chickering2002learning} implemented in TETRAD. When learning the graph, we consider the tier ordering as background knowledge, where $\operatorname{Age}$ is placed in the first tier while all other variables are in the second tier. The resulting MPDAG $\mathcal{G}$ is depicted in \cref{fig: credit_mpdag} in \cref{Appendix: Causal graphs for Credit Dataset}. Since there are no definite non-descendants of $\operatorname{Age}$ in this dataset, an \texttt{IFair} model is not applicable. The process of interventional data generation and model training follows a similar approach as \cref{sec: The UCI Student Dataset}.

After preprocessing, we focus on two specific age groups: 23 and 30. The age group of 23 consists of 3,413 records, while the age group of 30 has 1,126 records. We divide the whole dataset of 4539 records into three sets: training, validation, and test, using an 8:1:1 ratio. Interventional data for the test set is obtained by splitting the test data by $\operatorname{Age=23}$ and $\operatorname{Age=30}$. Similarly, the interventional data for training and validation sets are generated according to the causal identification formula. To generate interventional data, we first apply \cref{alg: Partial causal ordering (PCO)} to the MPDAG $\mathcal{G}$ in \cref{fig: credit_mpdag}. This provides us with an ordered list of the bucket decomposition of vertices in $\mathcal{G}$: \{\{$\operatorname{Age}\}$, $\operatorname{\{Historical\_Length}\}$, $\{\operatorname{Loan\_Intent}\}$, $\operatorname{\{Income, Home\_Status, Employment\_Length\}}$, $\operatorname{\{Interest\_Rate\}}$, $\operatorname{\{Loan\_Grade\}}$, $\operatorname{\{Historical\_Default\}}$, $\operatorname{\{Loan\_Amount\}}$, $\operatorname{\{Loan\_Percent\_Income\}}$. For each pair of buckets and its parents, we fit a conditional multivariate distribution using mixture density network \cite{bishop1994mixture} for continuous variables and probability contingency tables for discrete variables. As an example, \cref{fig: credit1_loan_percent_income} demonstrates the fitting performance on the attribute $\operatorname{Loan\_Percent\_Income}$ and its parents $\operatorname{Income, Loan\_Amount, Home\_Status, Historical\_Default}$. Based on the fitted conditional distribution, we generate 4539 interventional data points for $\operatorname{age=23}$ and $\operatorname{age=30}$, respectively. The proportion for training and validation interventional data is 8:2. Additionally, in our $\epsilon$-\texttt{IFair} model, the $\lambda$ is set to be $[2,4,6,10,15,150]$. For each model, we run it 10 times with different seeds and report the average results in the main section.

\begin{figure}[htp]
\begin{center}
\centerline{\includegraphics[width=0.95\columnwidth]{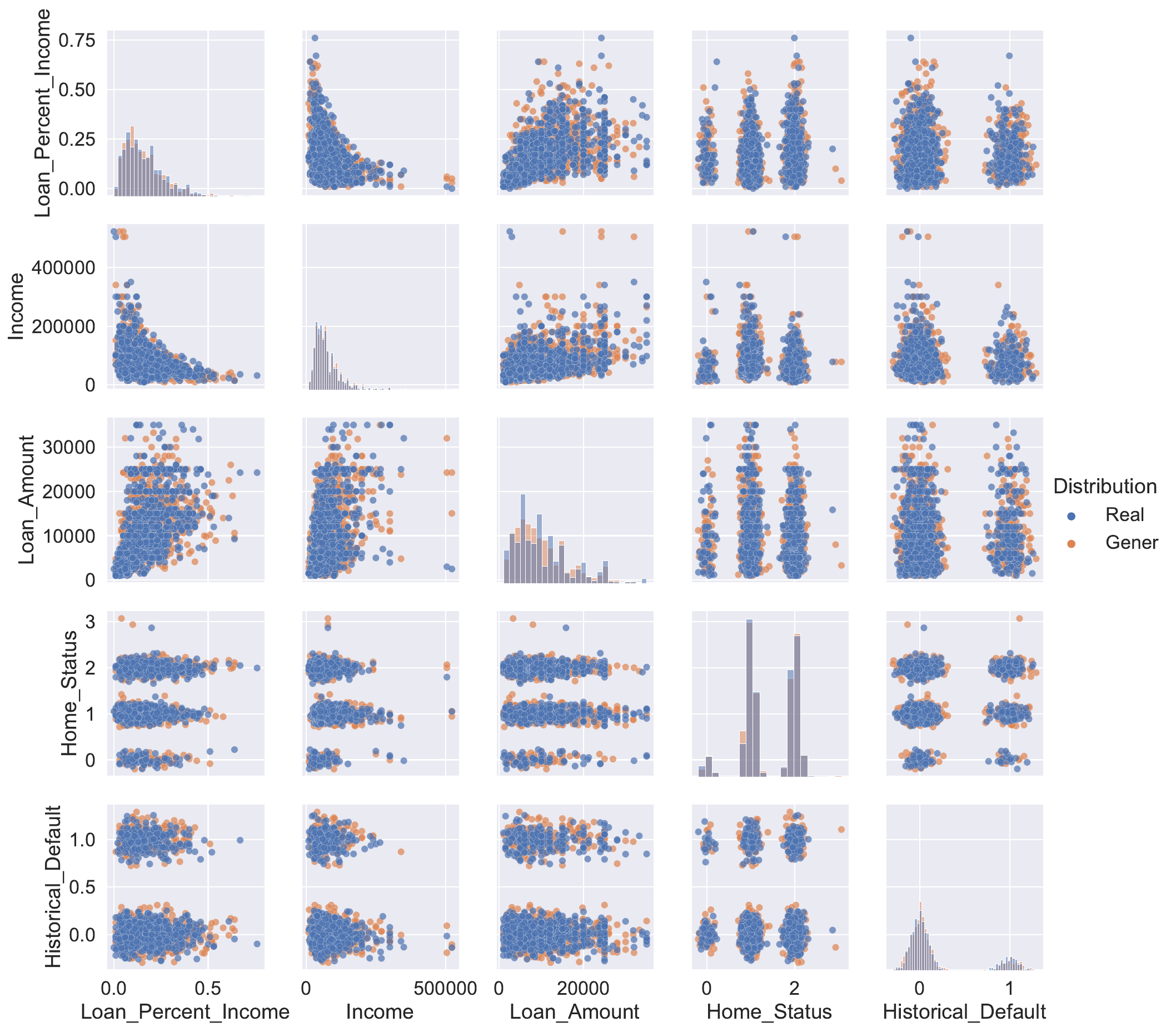}}
\caption{The fitting performance on the attribute $\operatorname{Loan\_Percent\_Income}$ and its parents $\operatorname{Income}$, $\operatorname{Loan\_Amount}$, $\operatorname{Home\_Status}$, $\operatorname{Historical\_Default}$ in the Credit Risk dataset. `Real' represents the observational data where $\operatorname{Age=30}$, while `Gener' represents the generated interventional data on $\operatorname{Age=30}$ by the fitted conditional density. To better show the fitting performance, a very small amount of noise is added to discrete variables.}
\label{fig: credit1_loan_percent_income}
\end{center}
\end{figure}
\newpage
\subsection{Experiments involving non-empty admissible variable set} \label{Appendix: Experiments involving non-empty admissible variable sets}

For graphs with node $d=\{5,10,20,30\}$ and the corresponding edge $s=\{8,20,40,60\}$, with the same basic settings and evaluation metrics as \cref{sec: Experiment/Synthetic Data}, we randomly choose one, one, two and three admissible variables for each graph setting respectively. The admissible variable is intervened to be the mean of that variable in the observational data. The accuracy fairness trade-off plot is shown in \cref{fig: Tradeoff_non_empty_admissible_variable_set}. We can see that it follows a similar trend as \cref{sec: Experiment/Synthetic Data}. For certain values of $\lambda$, we can simultaneously achieve low RMSE comparable to \texttt{Full} model and low unfairness comparable to \texttt{IFair}.

\begin{figure}[ht]
\vskip -0.1in
\centering
\subfloat[5nodes8edges.]{
\label{fig: Tradeoff_5nodes8edges_1adm}
\includegraphics[width=0.23\columnwidth,height=0.15\columnwidth]{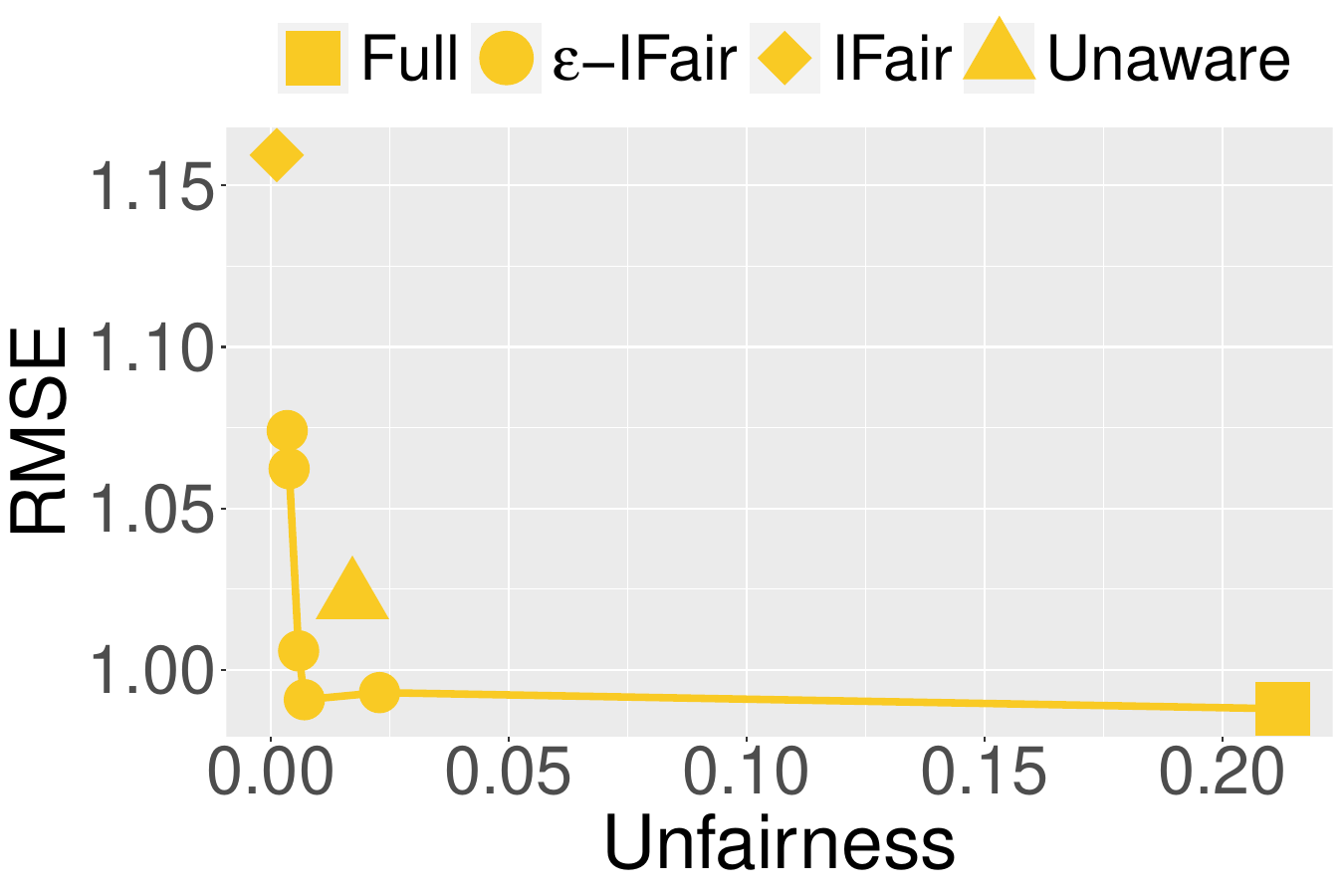}
}
\subfloat[10nodes20edges.]{
\label{fig: Tradeoff_10nodes20edges_1adm}
\includegraphics[width=0.23\columnwidth,height=0.15\columnwidth]{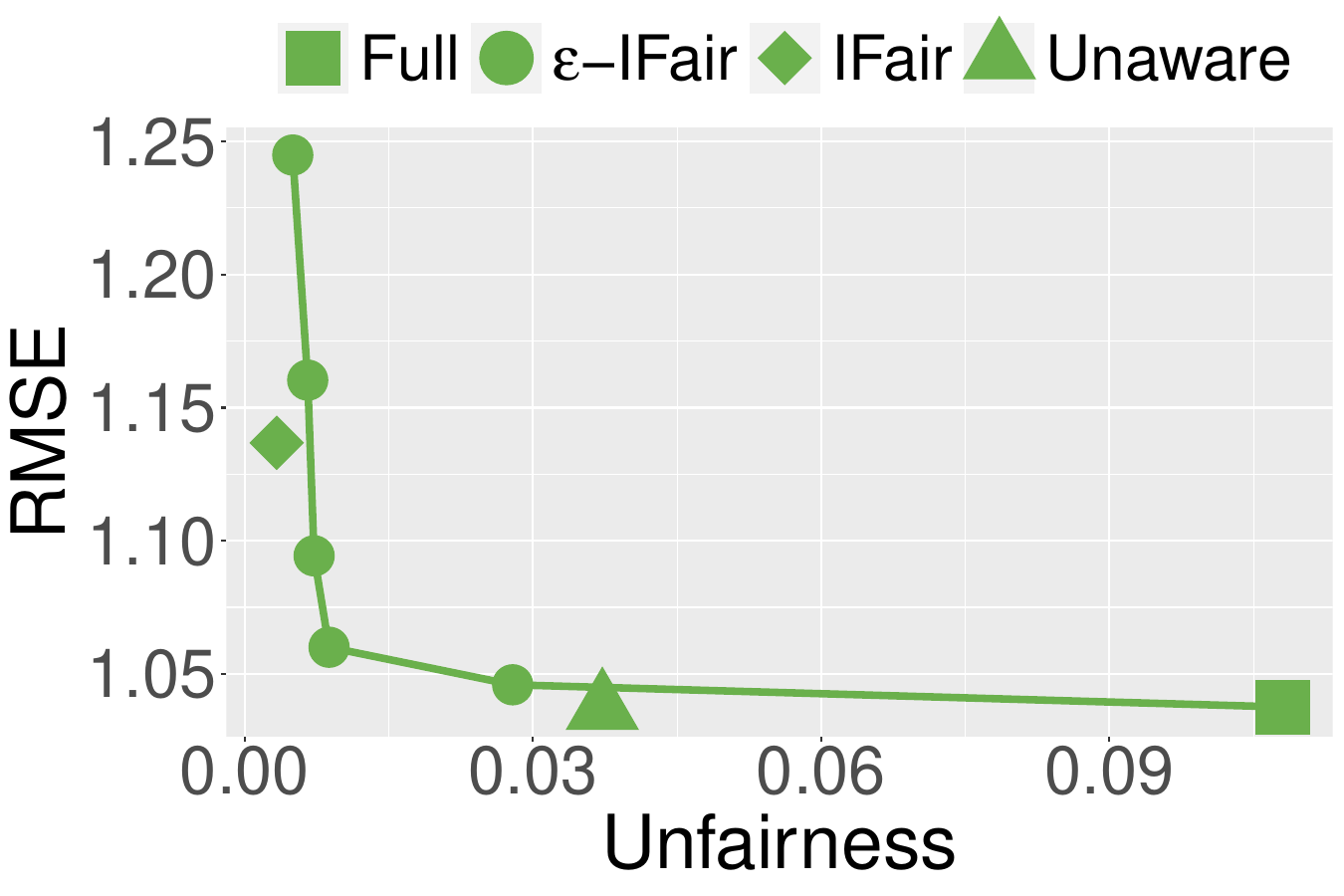}
}
\subfloat[20nodes40edges.]{
\label{fig: Tradeoff_20nodes40edges_2adm}
\includegraphics[width=0.23\columnwidth,height=0.15\columnwidth]{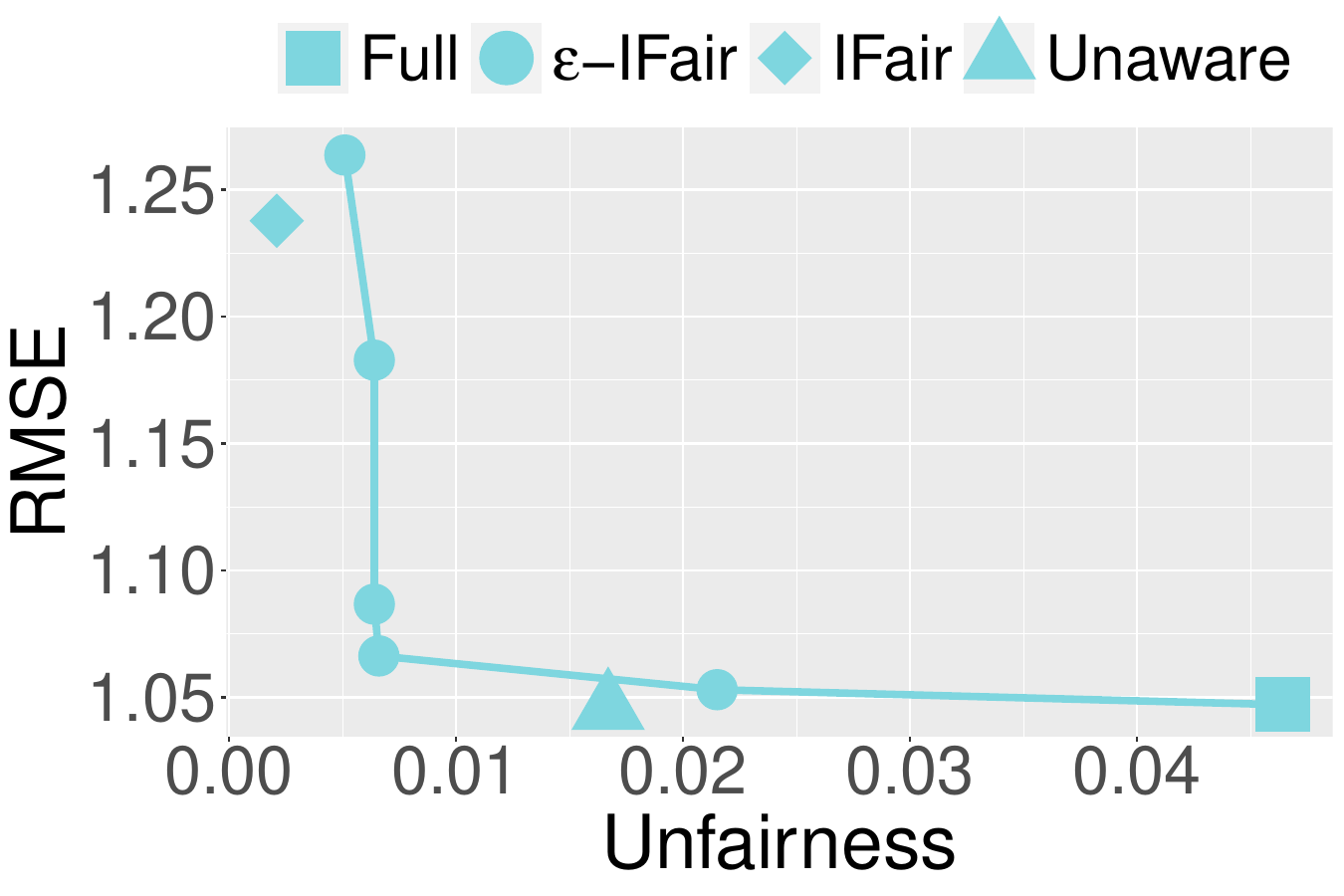}
}
\subfloat[30nodes60edges.]{
\label{fig: Tradeoff_30nodes60edges_3adm}
\includegraphics[width=0.23\columnwidth,height=0.15\columnwidth]{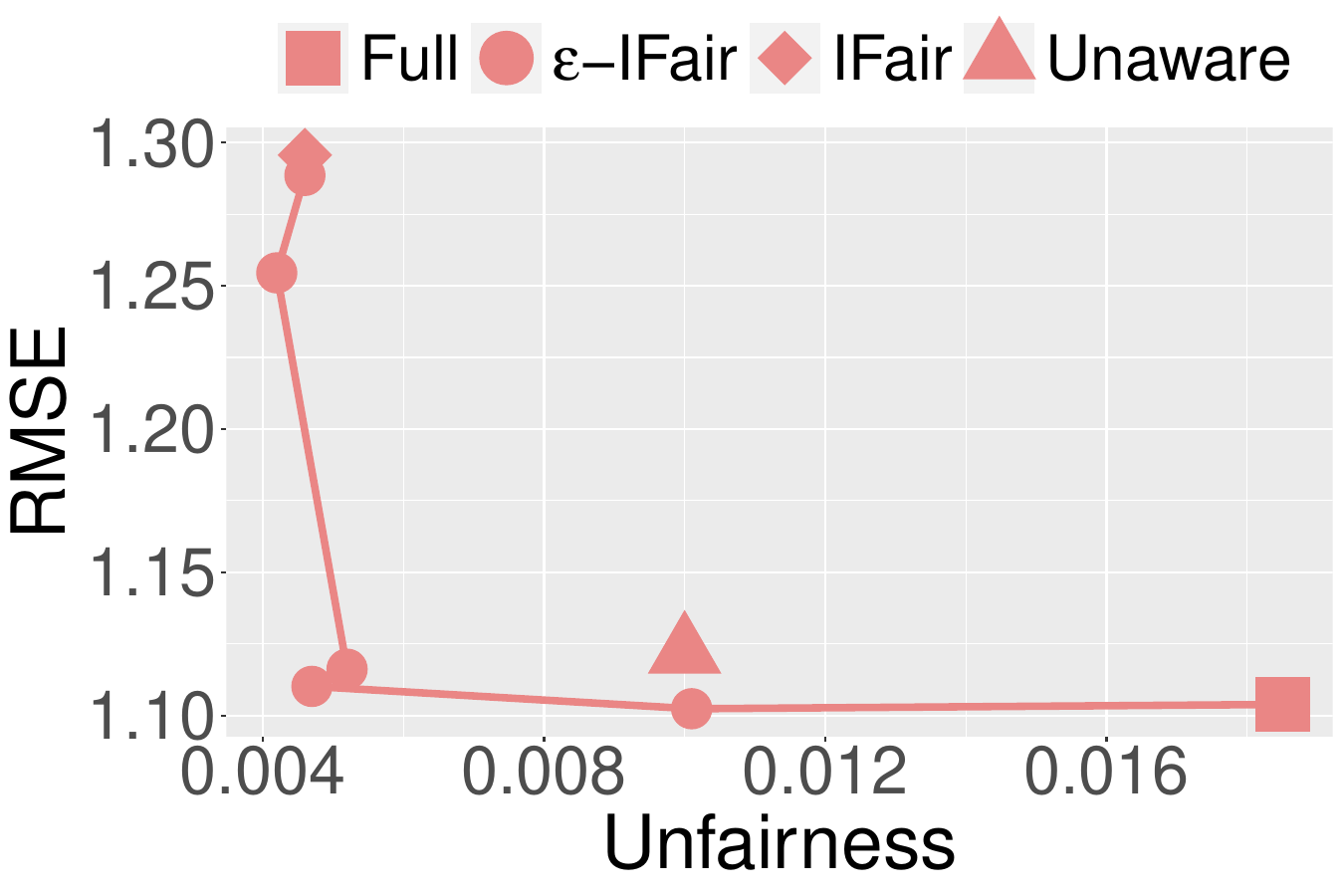}
}
\caption[]{Accuracy fairness trade-off for the case involving non-empty admissible variable set.}
\label{fig: Tradeoff_non_empty_admissible_variable_set}
\end{figure}

\subsection{Experiments based on more complicated structural equations} \label{Appendix: Experiment based on non-linear structural equations}

To demonstrate the generality of our method, we generate each variable $X_i$ based on a non-linear structural equation given by:
\begin{equation} \label{eq: nonlinear structual equation model}
    X_i = f_i(pa(X_i)+\epsilon_i), i=1,...,n,
\end{equation}
where the causal mechanism $f_i$ is randomly selected from functions such as \textit{linear}, \textit{sin}, \textit{cos}, \textit{tanh}, \textit{sigmoid}, and their combinations. The noise term $\epsilon_i$ is sampled from \textit{Gaussian} distribution. With the same basic settings and evaluation metrics as described in \cref{sec: Experiment/Synthetic Data}, we present the accuracy fairness trade-off plot in \cref{fig: Tradeoff_nonlinear} in solid line. We can see that it follows a similar trend as \cref{sec: Experiment/Synthetic Data}. For some $\lambda$, we can simultaneously achieve low RMSE comparable to \texttt{Full} model and low unfairness comparable to \texttt{IFair}. Additionally, we also include the results obtained when the interventional data is generated from the ground-truth structural equations in dotted line. The discrepancy between the two lines reflects the bias introduced by the approximation of conditional densities. The minimal discrepancy observed in \cref{fig: Tradeoff_nonlinear} suggests that the conditional densities are well fitted. Specifically, we analyze model robustness on the approximation of conditional densities on the linear data in \cref{Appendix: Experiment testing fitting performance}.

\begin{figure}[ht]
\centering
\subfloat[5nodes8edges.]{
\label{fig: Tradeoff_5nodes8edges_nonlinear}
\includegraphics[width=0.42\columnwidth,height=0.26\columnwidth]{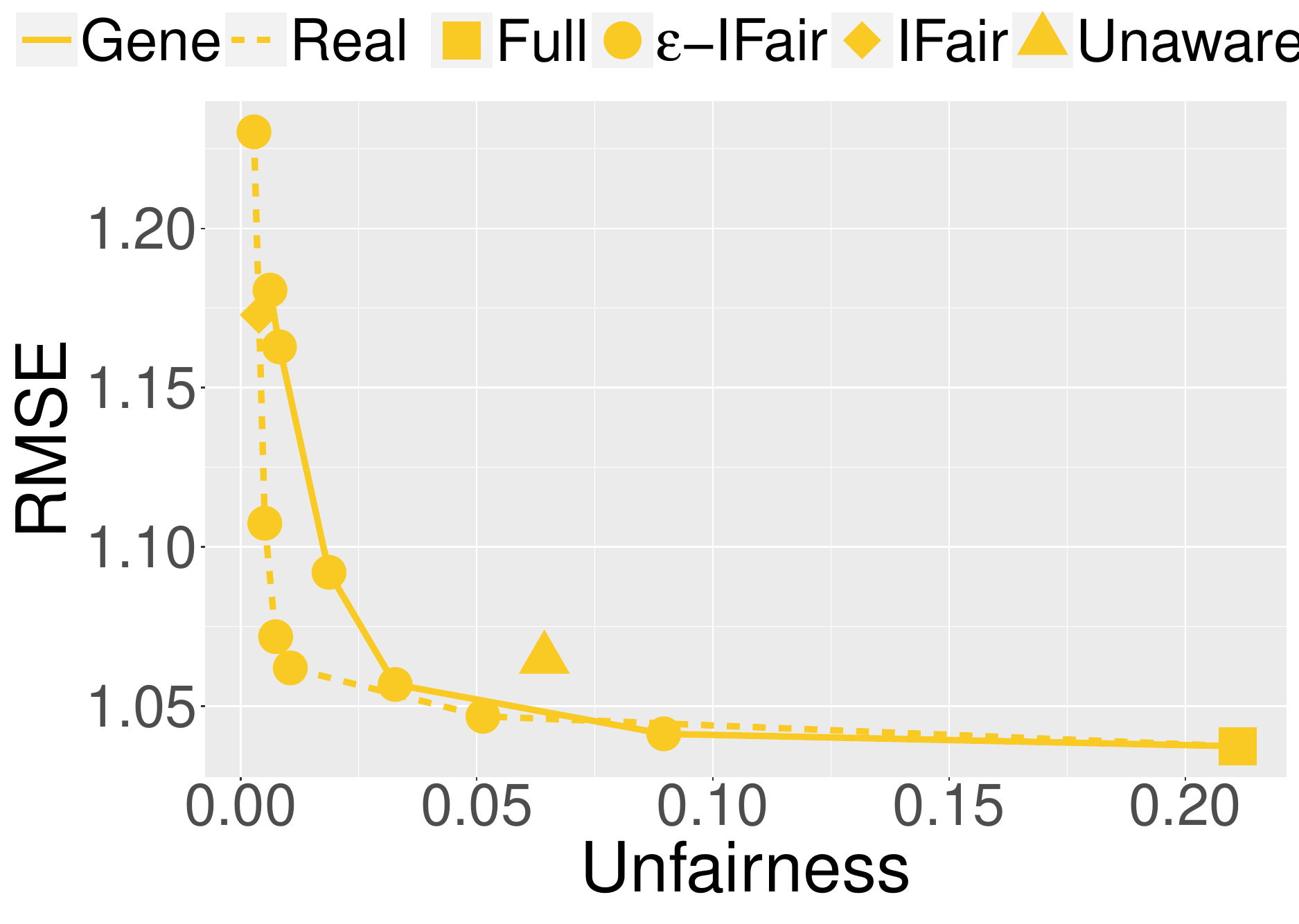}
}
\hspace{0.02\columnwidth}
\subfloat[10nodes20edges.]{
\label{fig: Tradeoff_10nodes20edges_nonlinear}
\includegraphics[width=0.42\columnwidth,height=0.26\columnwidth]{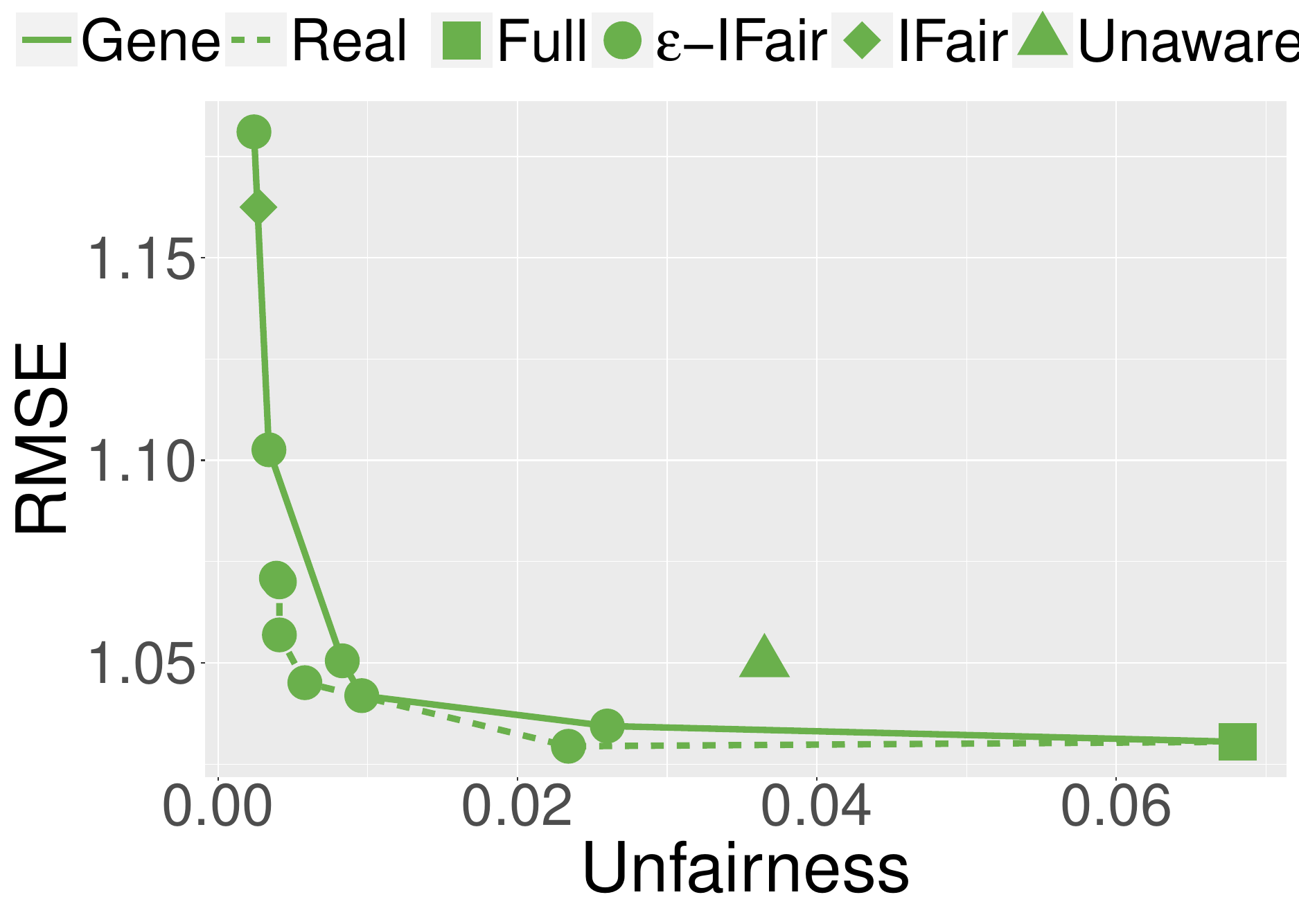}
}\hfill
\subfloat[20nodes40edges.]{
\label{fig: Tradeoff_20nodes40edges_nonlinear}
\includegraphics[width=0.42\columnwidth,height=0.26\columnwidth]{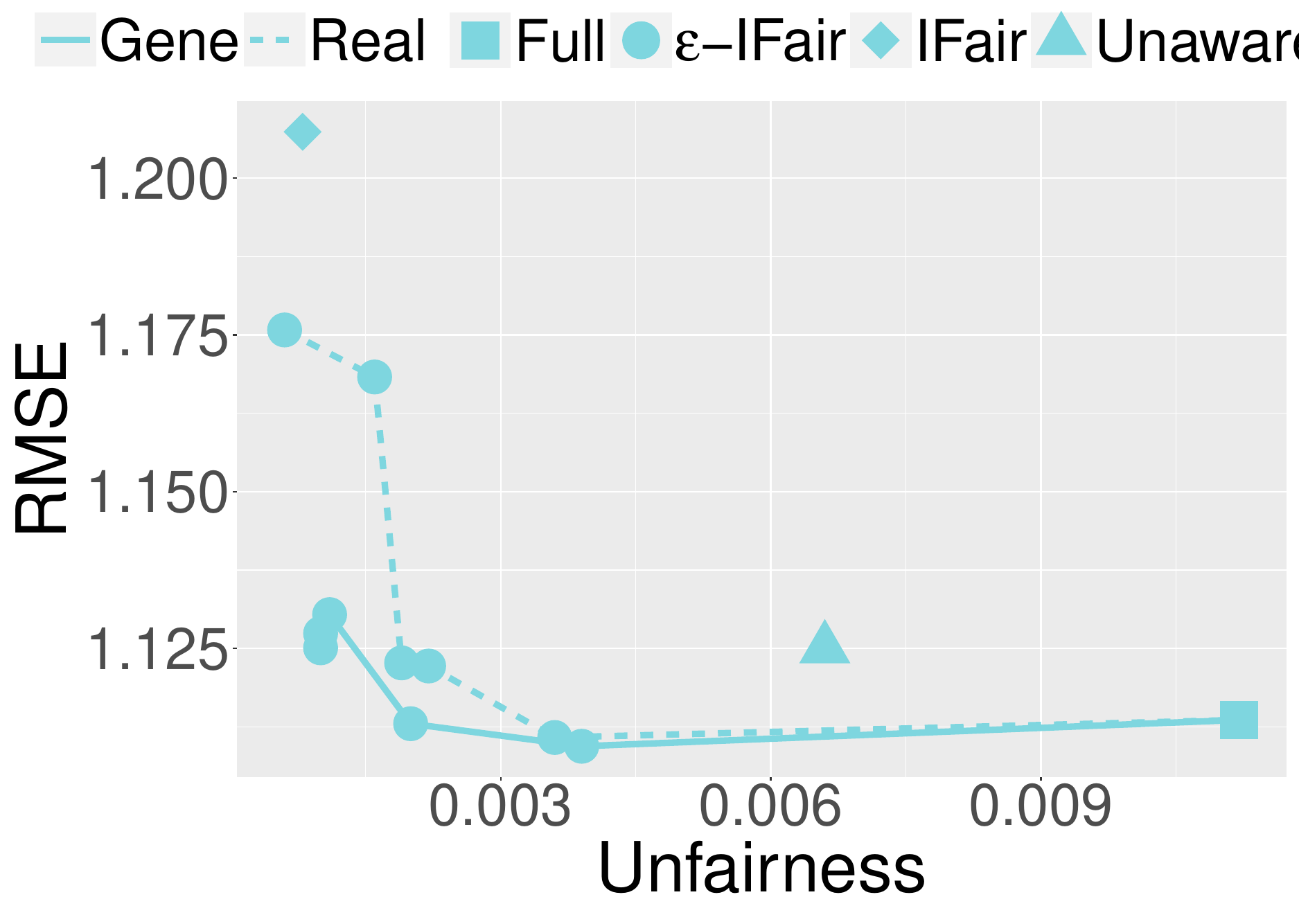}
}
\hspace{0.02\columnwidth}
\subfloat[30nodes60edges.]{
\label{fig: Tradeoff_30nodes60edges_nonlinear}
\includegraphics[width=0.42\columnwidth,height=0.26\columnwidth]{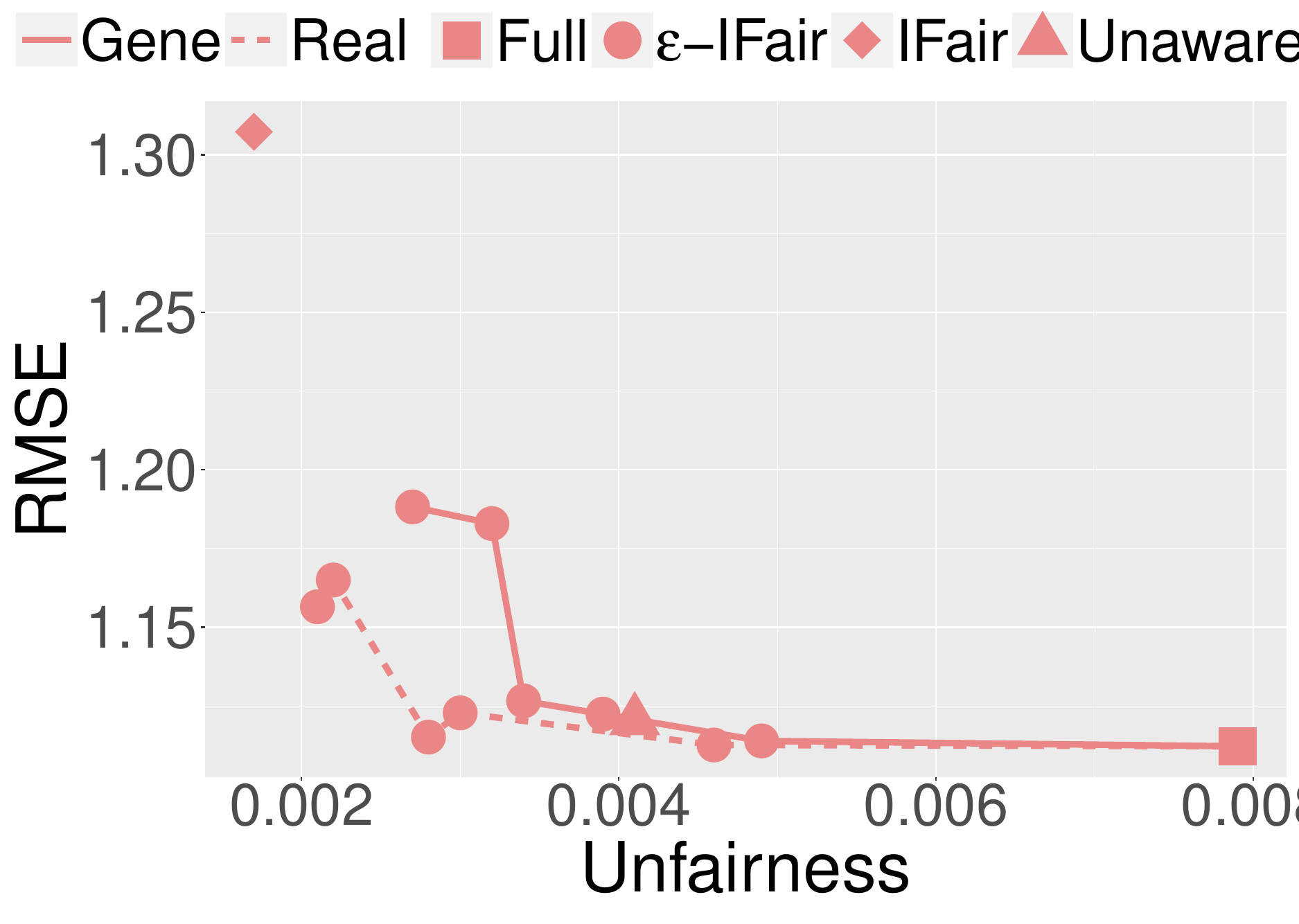}
}
\caption[]{Accuracy fairness trade-off plots for the non-linear synthetic data. The solid line (`Gene') depicts the results when the interventional data is generated by the fitted conditional densities, while the dotted line (`Real') depicts the results when the interventional data is generated by the ground-truth structural equations.}
\label{fig: Tradeoff_nonlinear}
\end{figure}

\subsection{Experimental analysis on model performance with varying amount of given domain knowledge} \label{Appendix: Experiment analyzing fairness performance with varying amount of given domain knowledge}

The performance of baseline models \texttt{Full} and \texttt{Unaware} remain unaffected by the amount of background knowledge. When all ancestral relations between the sensitive attribute and other variables are definite, the performance of \texttt{IFair} model remains consistent regardless of the additional background knowledge. On the other hand, for the $\epsilon$-\texttt{IFair} model, once the causal effect on an MPDAG is identifiable, additional background knowledge does not theoretically impact the performance at a given $\lambda$ since the same causal identification formula can be applied. However, in practical experiments, slight variations may arise due to the error in fitting of different conditional densities. To vary the amount of given background knowledge, we adjust the proportion of the undirected edges' true orientation in a CPDAG that is considered as the background knowledge. In the `10nodes20edges' setting, we increase the proportion from 0.1, 0.3, 0.6 to 1.0. The `BK (\%)' value quantifies the proportion of the undirected edges' true orientation that is considered background knowledge. The trade-off plots in \cref{fig: Tradeoff_varying_bk} illustrate this relationship between the amount of background knowledge and the resulting tradeoff between fairness and accuracy.

\begin{figure}[ht]
\centering
\subfloat[BK(\%)=0.1]{
\label{fig: Tradeoff_5nodes8edges_bk_0.1}
\includegraphics[width=0.23\columnwidth,height=0.15\columnwidth]{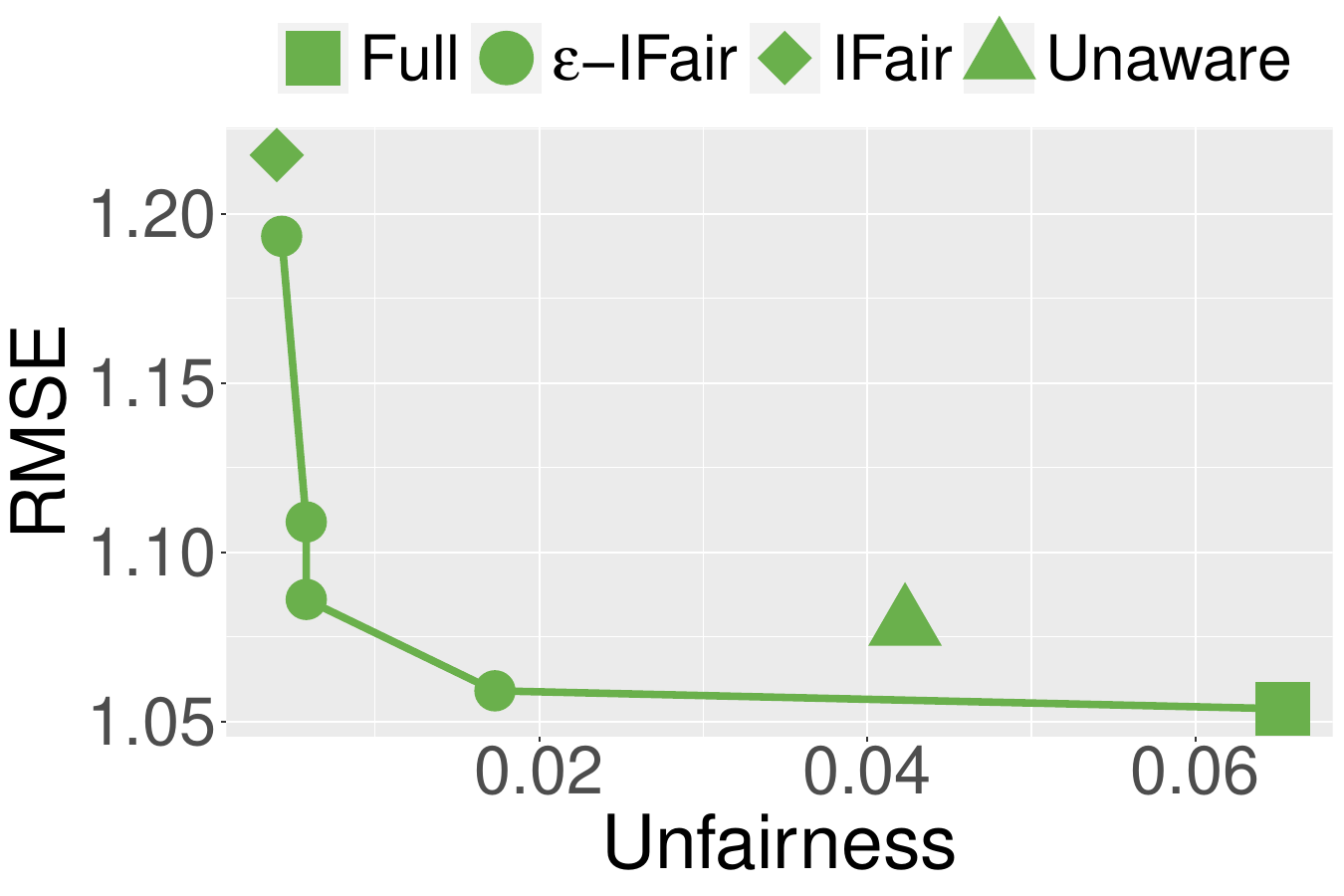}
}
\subfloat[BK(\%)=0.3]{
\label{fig: Tradeoff_5nodes8edges_bk_0.3}
\includegraphics[width=0.23\columnwidth,height=0.15\columnwidth]{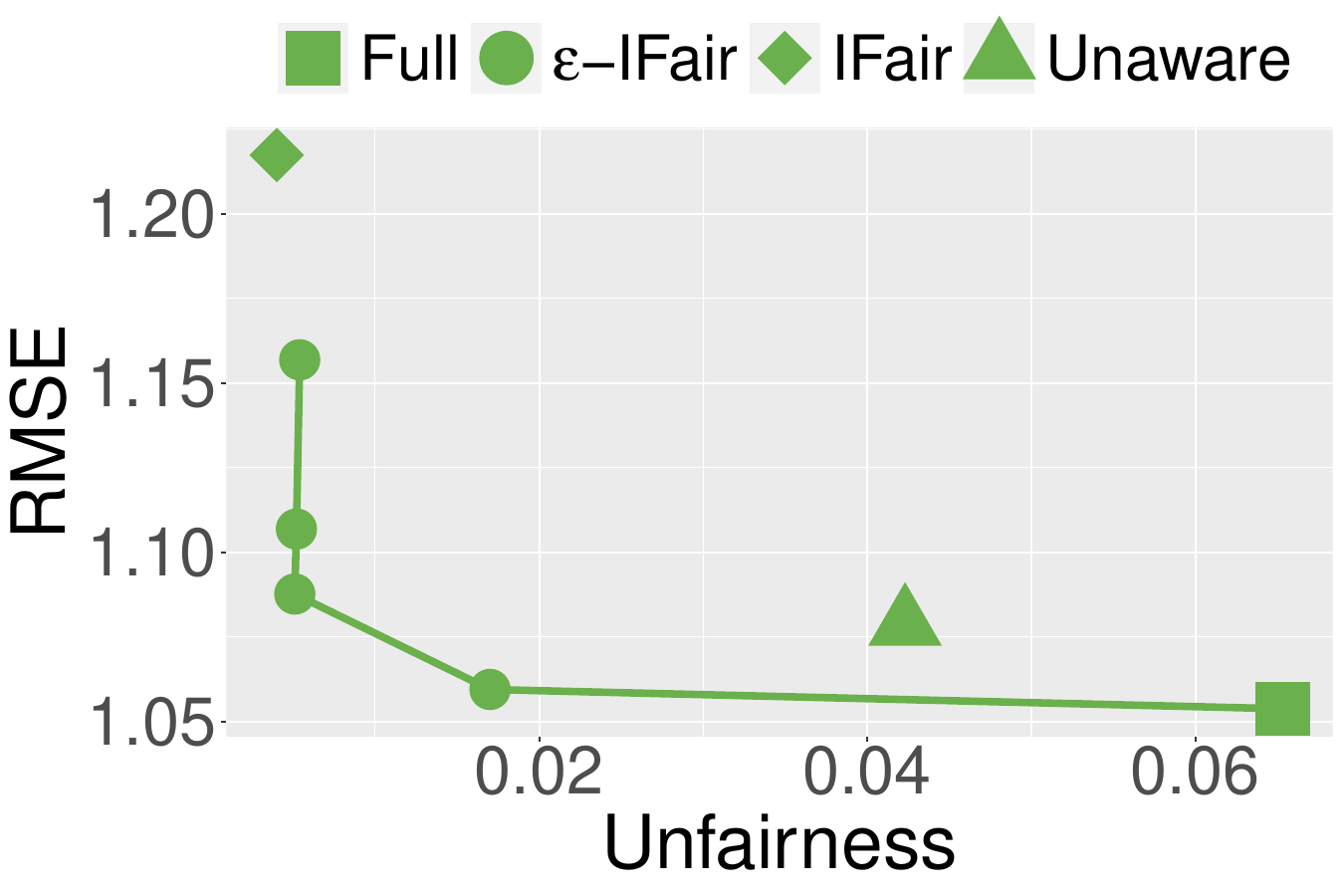}
}
\subfloat[BK(\%)=0.6]{
\label{fig: Tradeoff_5nodes8edges_bk_0.6}
\includegraphics[width=0.23\columnwidth,height=0.15\columnwidth]{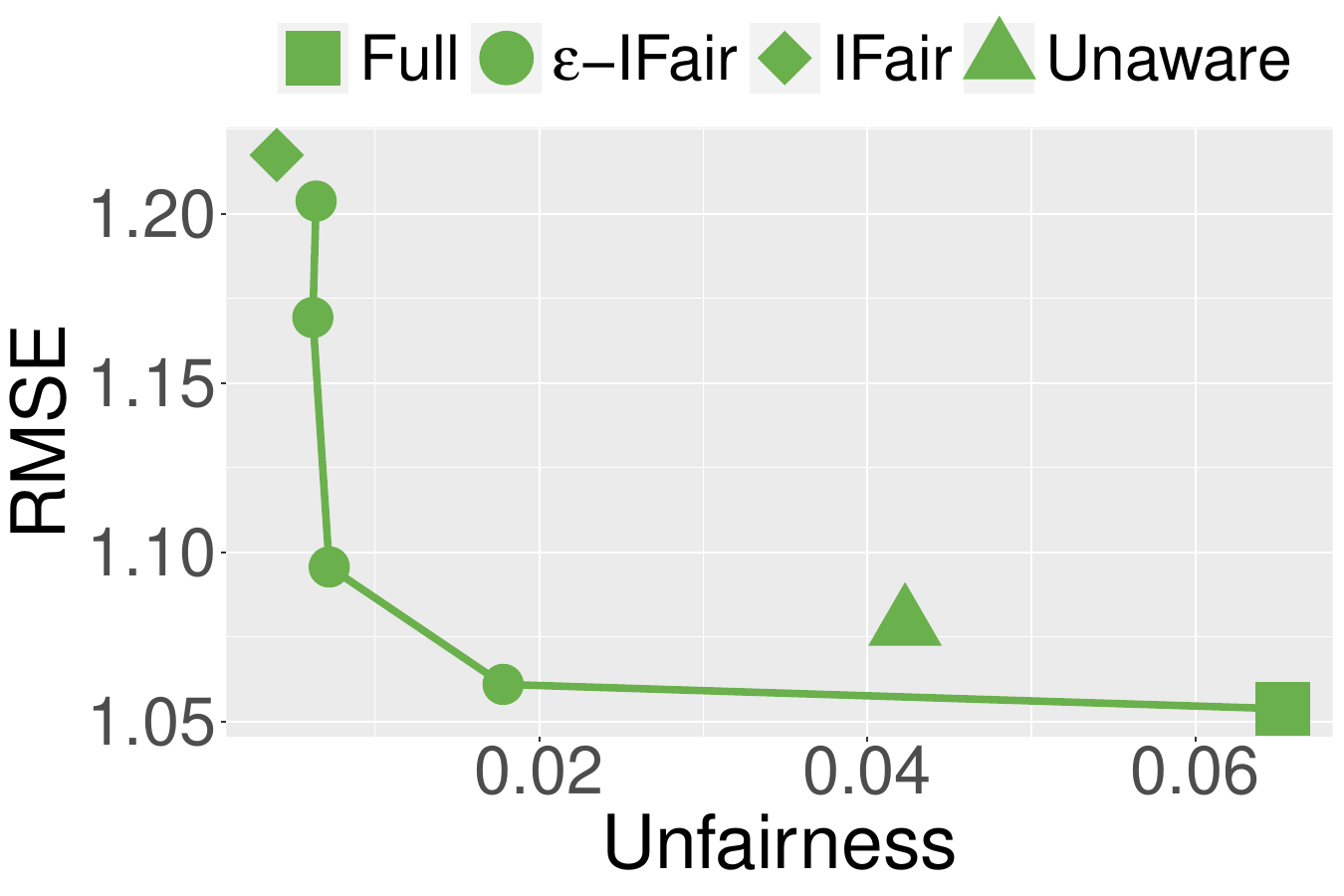}
}
\subfloat[BK(\%)=1.0]{
\label{fig: Tradeoff_5nodes8edges_bk_1.0}
\includegraphics[width=0.23\columnwidth,height=0.15\columnwidth]{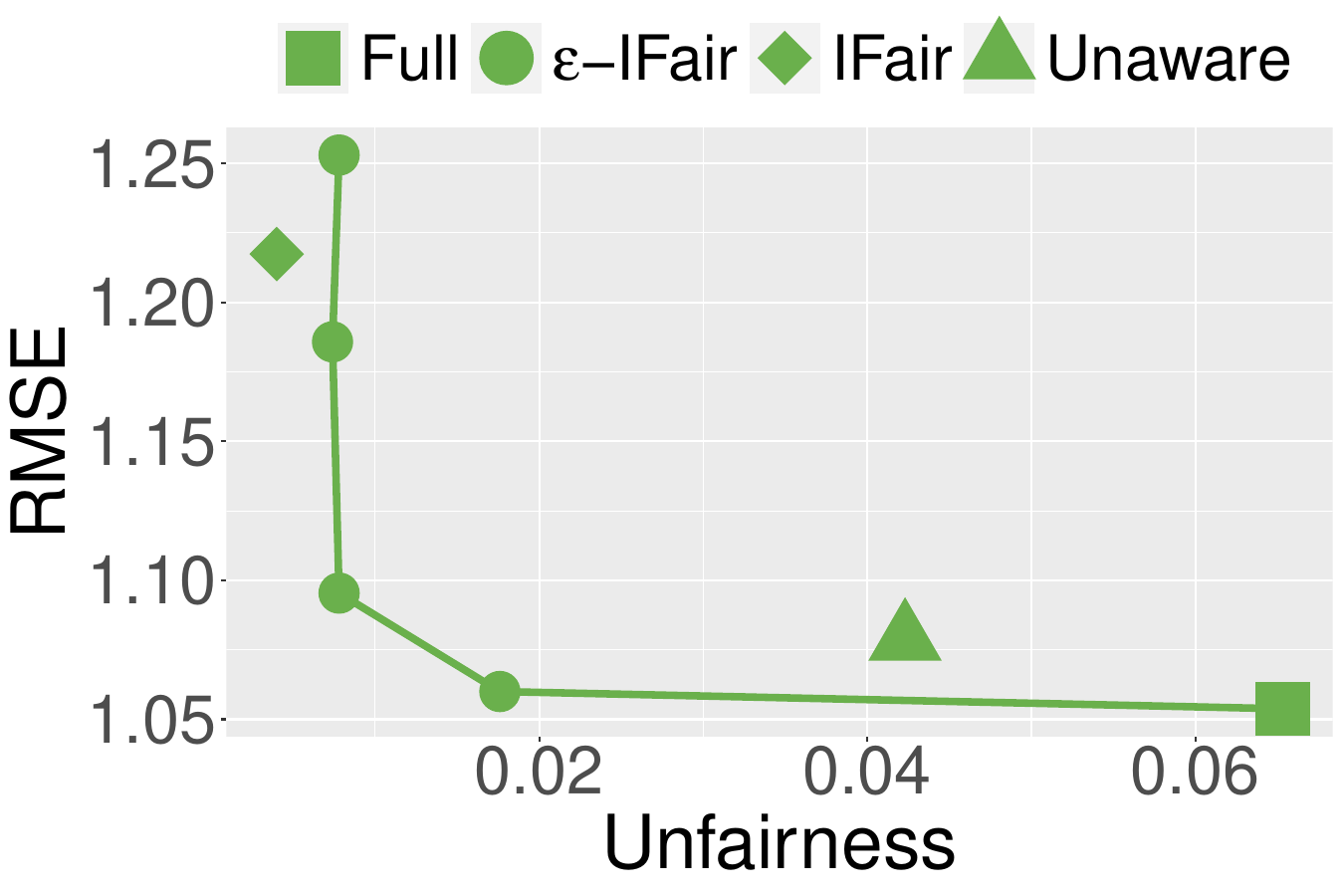}
}
\caption[]{Accuracy fairness trade-off plots with varying amount of given background knowledge on the graph setting `10nodes20edges'.}
\label{fig: Tradeoff_varying_bk}
\end{figure}

\subsection{Experimental analysis on model robustness on causal discovery algorithms} \label{Appendix: Experiment analyzing method robustness on causal discovery algorithm}

In \cref{sec: Experiment/Synthetic Data}, we derive the ture CPDAG directly from the known DAG without utilizing any causal discovery algorithm. However, in practical scenarios where the true DAG is unknown, the CPDAG can only be obtained from causal discovery algorithms. To assess the robustness of our model with respect to causal discovery algorithms, we employed the Greedy Equivalence Search (GES) procedure \cite{chickering2002learning} to learn the corresponding CPDAG from the synthetic data. Under the `10nodes20edges' graph setting, \cref{fig: Tradeoff_causal_discovery_10nodes20edges_compare} provides a comparison between the results obtained when the CPDAG is derived from the true DAG and when it is learned from the observational data using the GES algorithm. Both sets of results exhibit a similar trend, indicating that our model is robust to the GES algorithm for causal discovery. Additionally, for reference, we show each case with the scenario where the interventional data is generated from the ground-truth structural equation in \cref{fig: Tradeoff_causal_discovery_10nodes20edges_DAG2MPDAG} and \cref{fig: Tradeoff_causal_discovery_10nodes20edges_DATA2MPDAG} respectively. Similarly, we obtain analogous results for the `20nodes40edges' graph setting, which are displayed in \cref{fig: Tradeoff_causal_discovery_20nodes40edges_compare}, \cref{fig: Tradeoff_causal_discovery_20nodes40edges_DAG2MPDAG} and \cref{fig: Tradeoff_causal_discovery_20nodes40edges_DATA2MPDAG}. 

\begin{figure}[ht]
\centering
\subfloat[Comparison.\\(10nodes20edge)]{
\label{fig: Tradeoff_causal_discovery_10nodes20edges_compare}
\includegraphics[width=0.31\columnwidth,height=0.21\columnwidth]{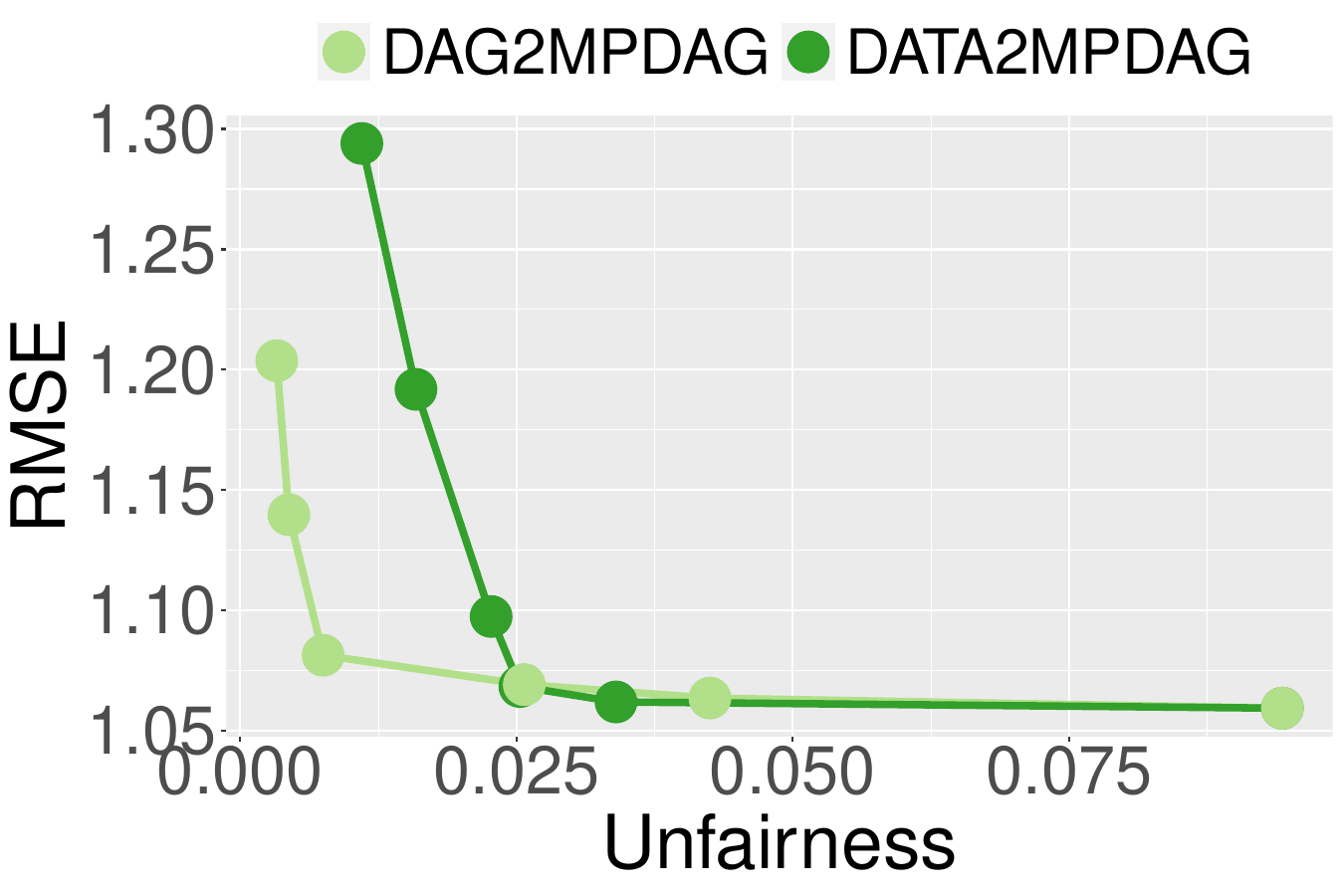}
}
\subfloat[DAG2MPDAG.\\(10nodes20edge)]{
\label{fig: Tradeoff_causal_discovery_10nodes20edges_DAG2MPDAG}
\includegraphics[width=0.31\columnwidth,height=0.21\columnwidth]{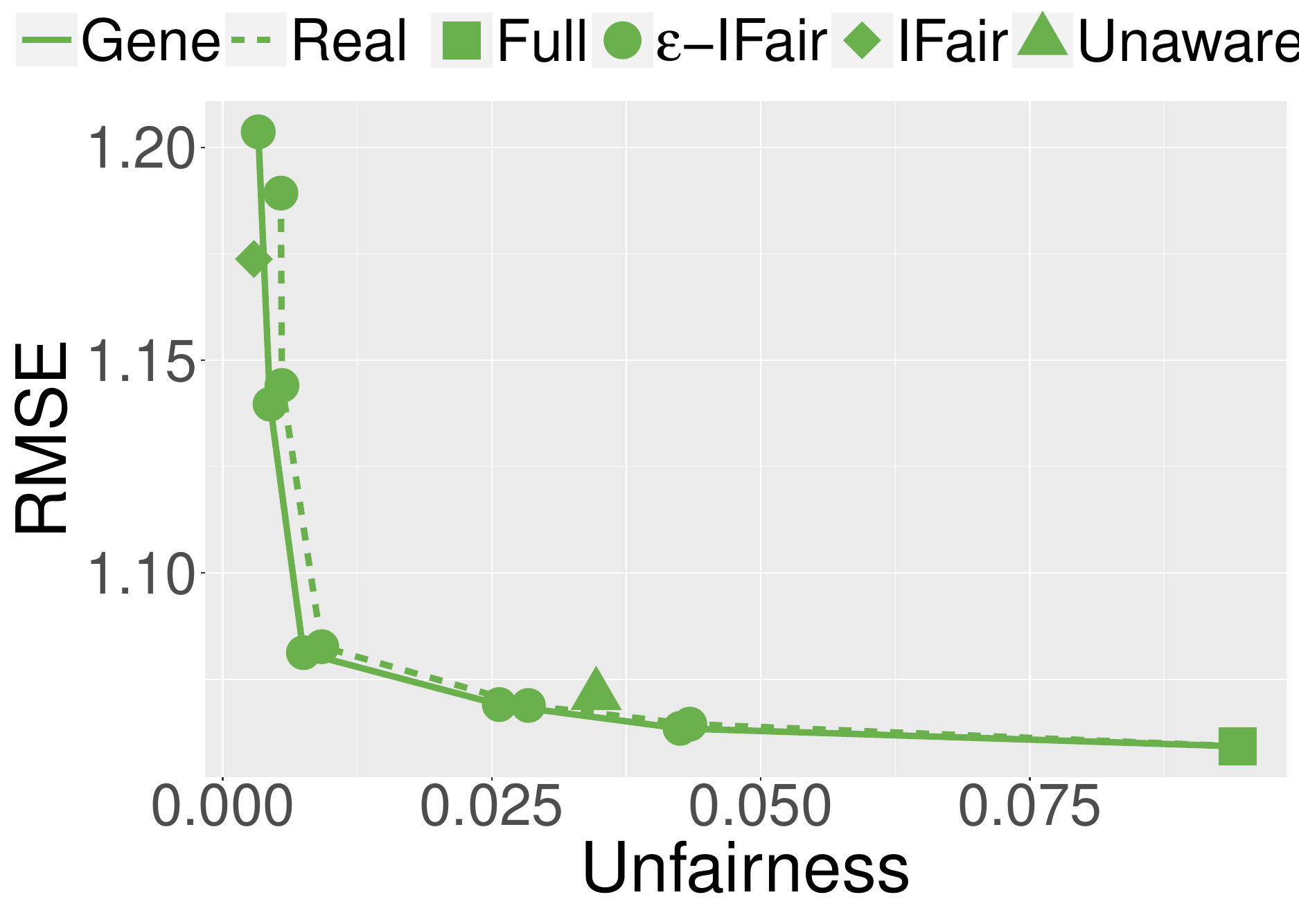}
}
\subfloat[DATA2MPDAG.\\(10nodes20edge)]{
\label{fig: Tradeoff_causal_discovery_10nodes20edges_DATA2MPDAG}
\includegraphics[width=0.31\columnwidth,height=0.21\columnwidth]{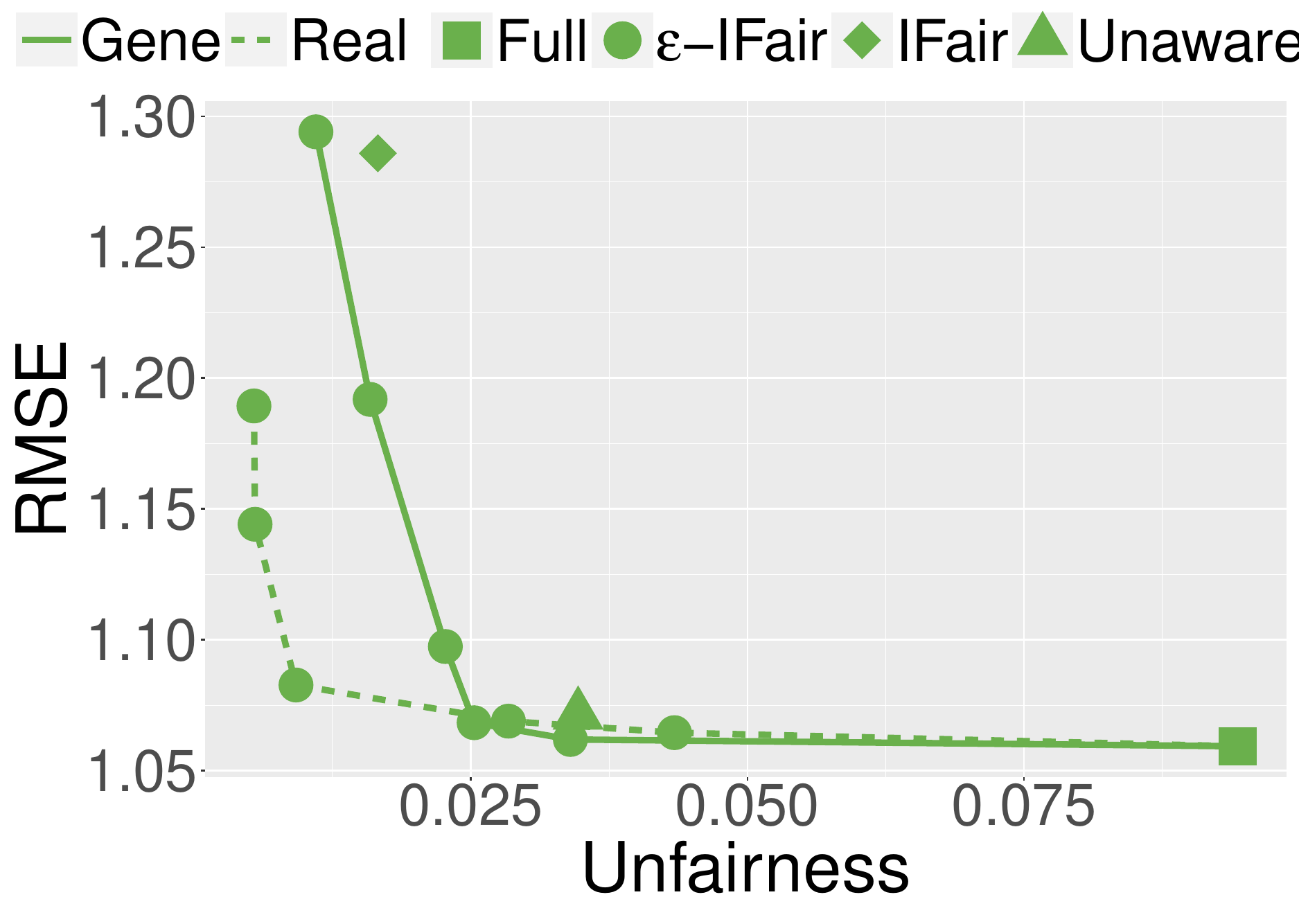}
}
\hfill
\subfloat[Comparison.\\(20nodes40edge)]{
\label{fig: Tradeoff_causal_discovery_20nodes40edges_compare}
\includegraphics[width=0.31\columnwidth,height=0.21\columnwidth]{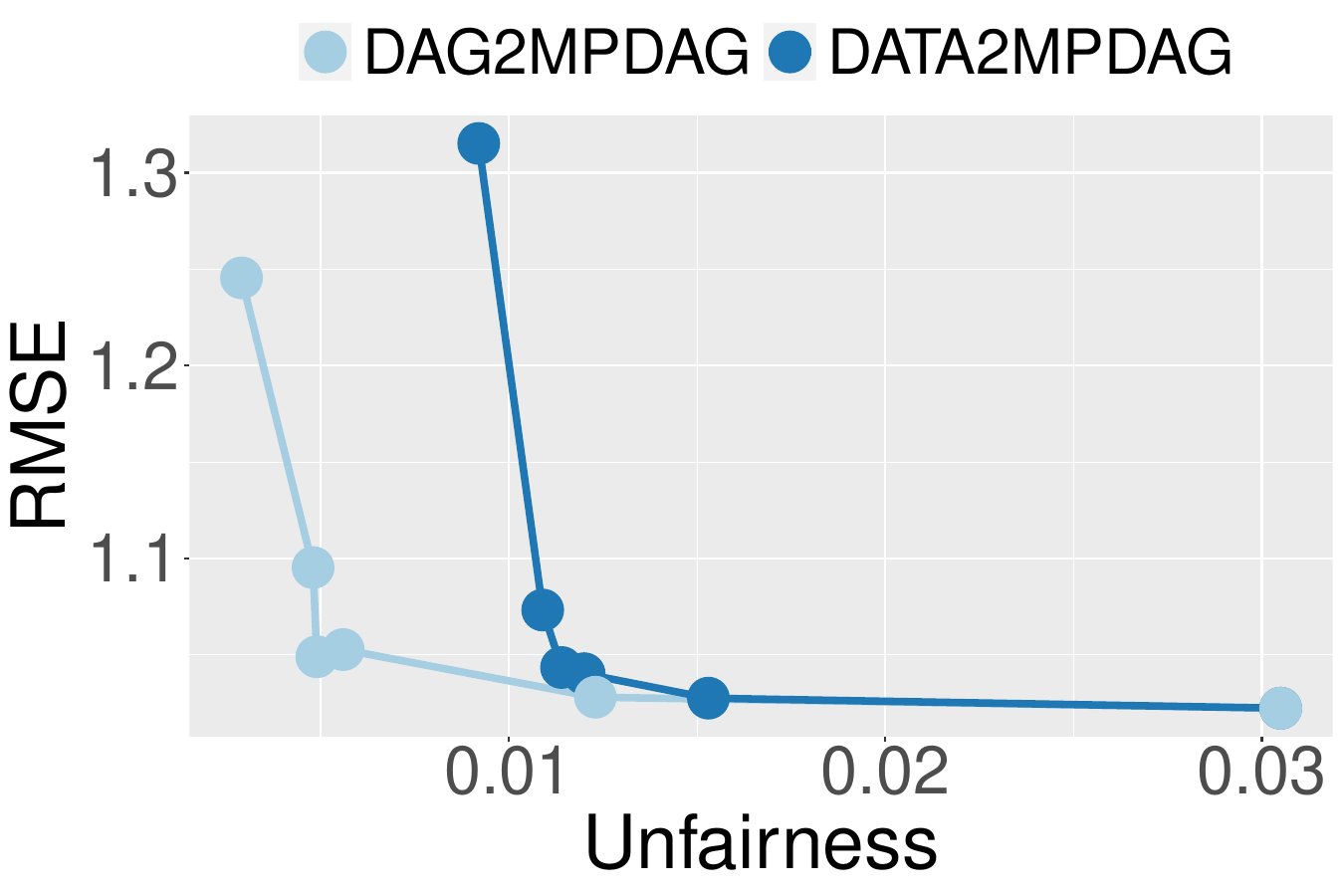}
}
\subfloat[DAG2MPDAG.\\(20nodes40edge)]{
\label{fig: Tradeoff_causal_discovery_20nodes40edges_DAG2MPDAG}
\includegraphics[width=0.31\columnwidth,height=0.21\columnwidth]{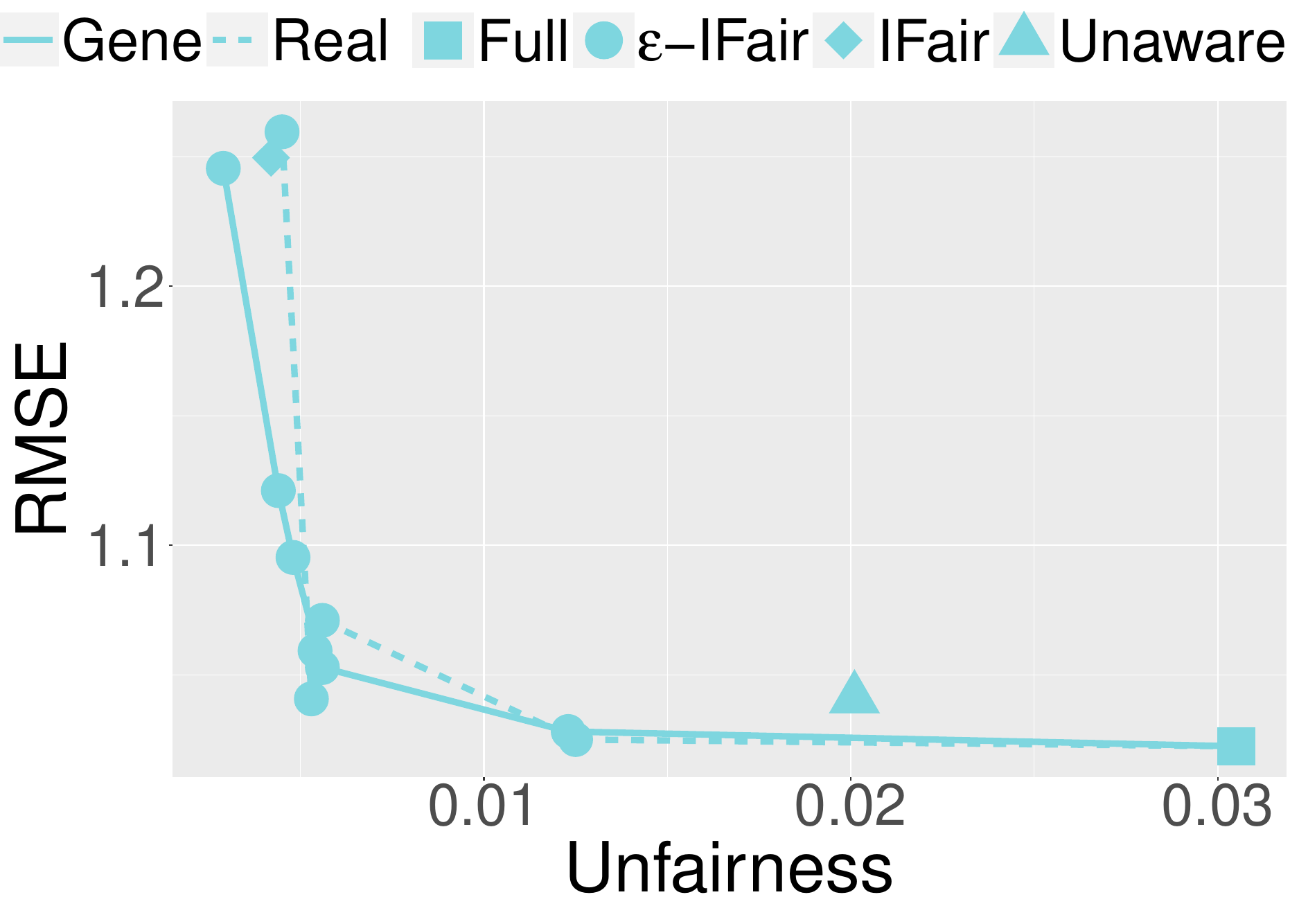}
}
\subfloat[DATA2MPDAG.\\(20nodes40edge)]{
\label{fig: Tradeoff_causal_discovery_20nodes40edges_DATA2MPDAG}
\includegraphics[width=0.31\columnwidth,height=0.21\columnwidth]{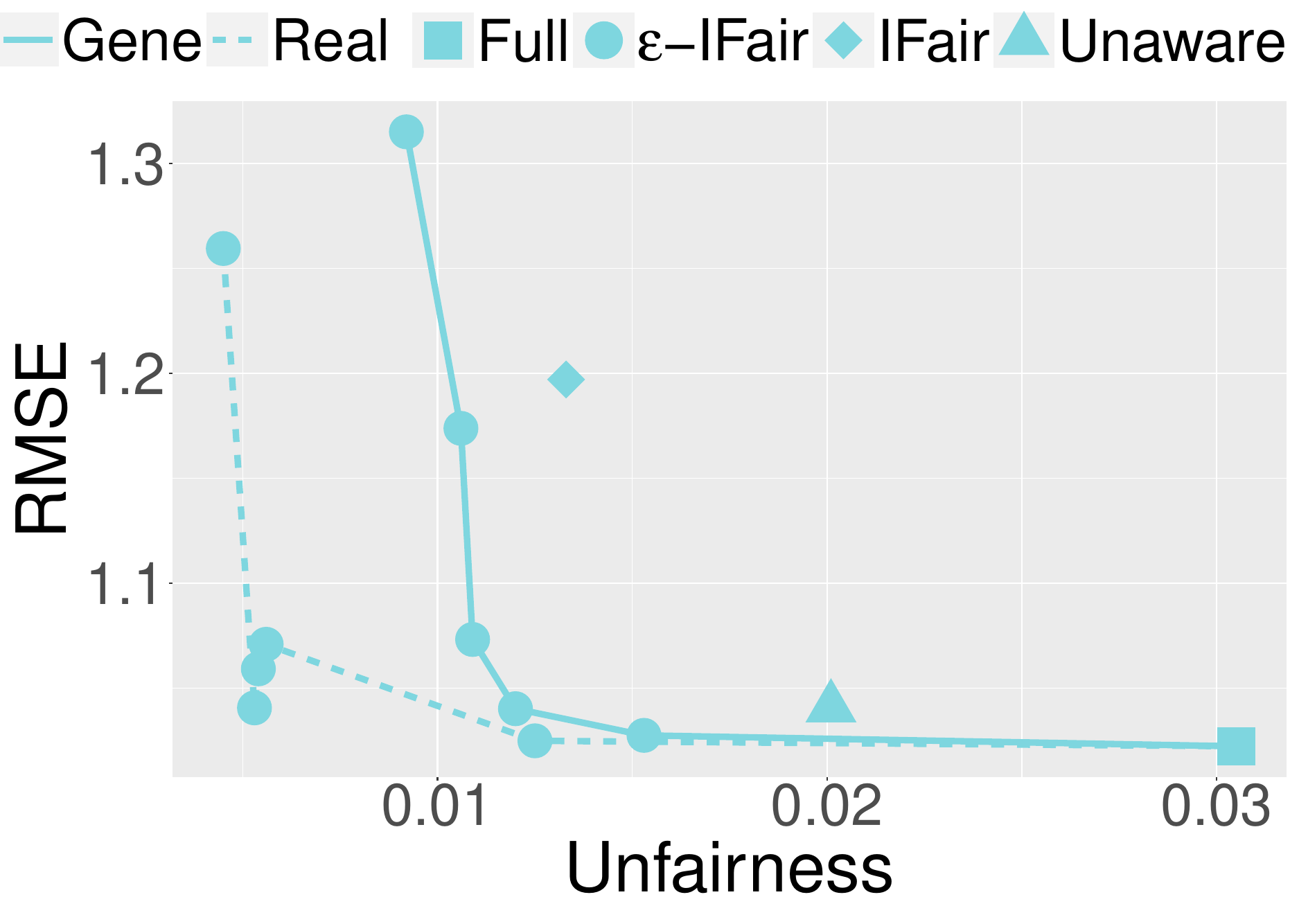}
}
\caption[]{Accuracy fairness trade-off plots analyzing model robustness on causal discovery algorithms on the graph settings `10nodes20edges' and `20nodes40edges'. In (b),(c),(e) and (f), the solid line (`Gene') depicts the results when the interventional data is generated by the fitted conditional densities, while the dotted line (`Real') depicts the results when the interventional data is generated by the ground-truth structural equations.}
\label{fig: Tradeoff_causal_discovery_10nodes20edges}
\end{figure}


\subsection{Experimental analysis on model robustness on conditional densities approximation} \label{Appendix: Experiment testing fitting performance}

Using the same experimental setting as described in \cref{sec: Experiment/Synthetic Data}, we present the accuracy fairness trade-off plot for linear synthetic data in \cref{fig: Tradeoff_linear} in a solid line. In order to analyze the extent to which bias is introduced by approximating conditional densities, we also include the results obtained when the interventional data is generated from the ground-truth structural equations in \cref{fig: Tradeoff_linear} in dotted line. The discrepancy between the two lines reflect the bias introduced by fitting the conditional densities. The minimal discrepancy observed in \cref{fig: Tradeoff_linear} suggests that the conditional densities are well fitted.

\begin{figure}[ht]
\centering
\subfloat[5nodes8edges.]{
\label{fig: Tradeoff_5nodes8edges_real}
\includegraphics[width=0.42\columnwidth,height=0.26\columnwidth]{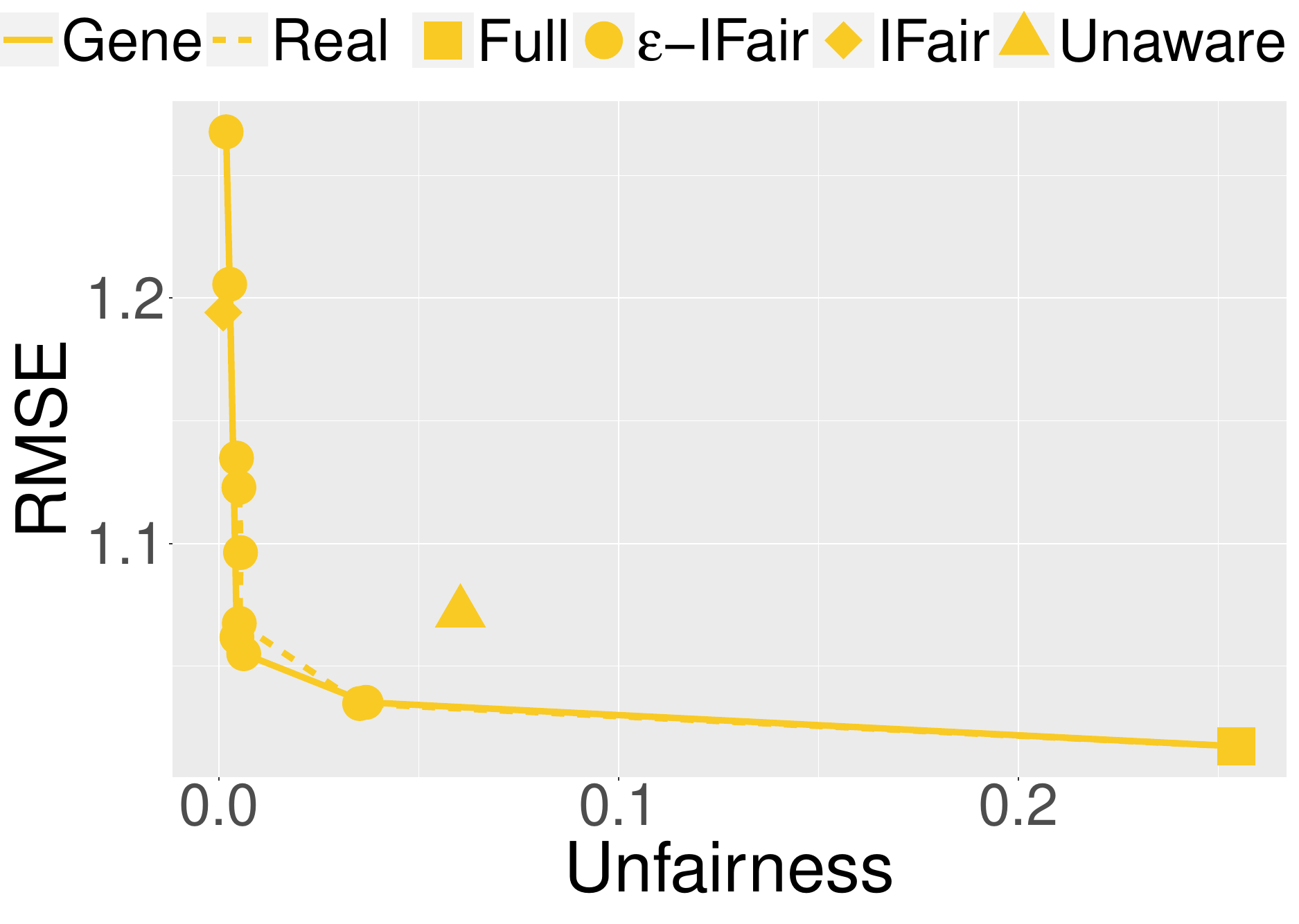}
}
\hspace{0.02\columnwidth}
\subfloat[10nodes20edges.]{
\label{fig: Tradeoff_10nodes20edges_real}
\includegraphics[width=0.42\columnwidth,height=0.26\columnwidth]{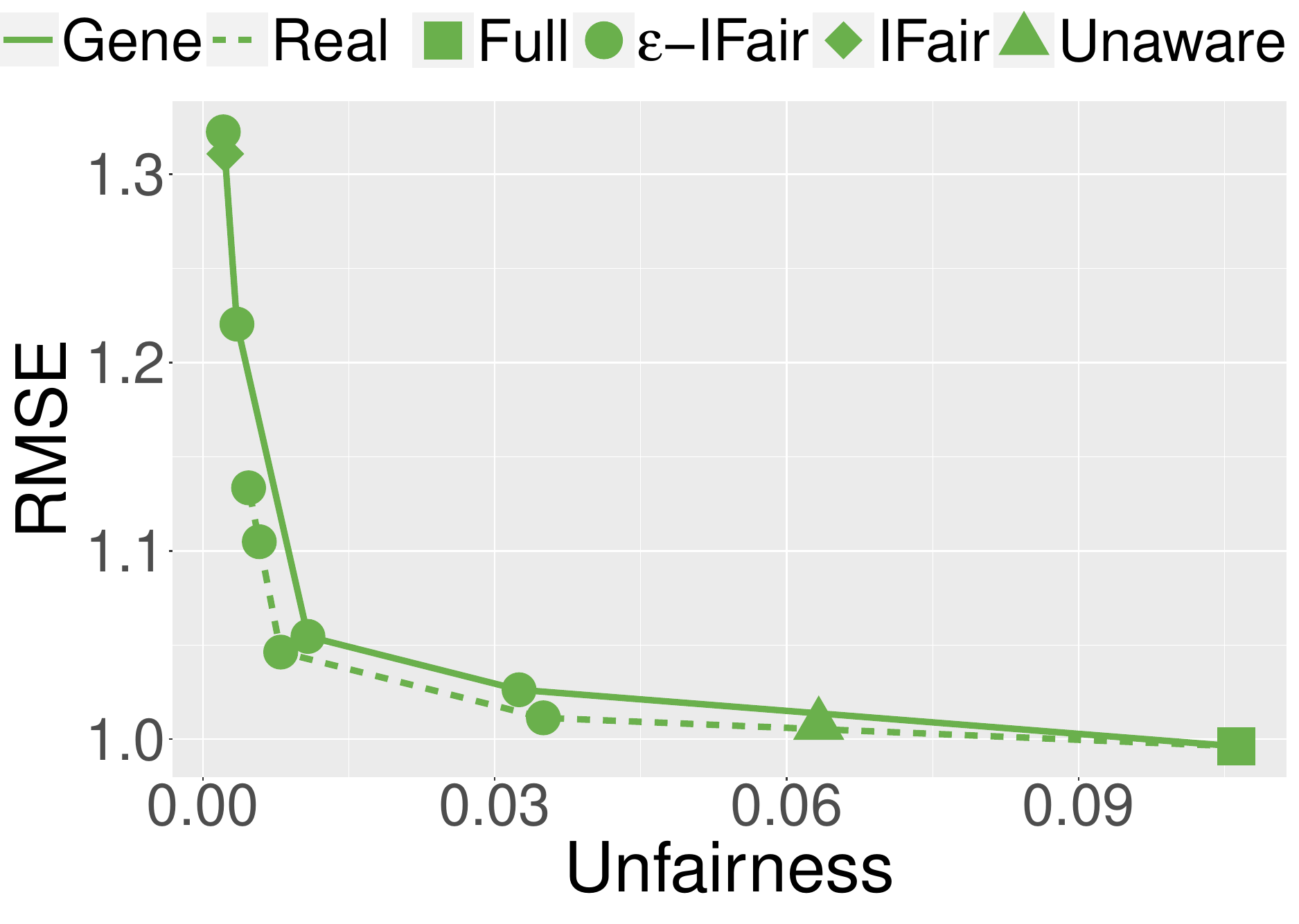}
}\hfill
\subfloat[20nodes40edges.]{
\label{fig: Tradeoff_20nodes40edges_real}
\includegraphics[width=0.42\columnwidth,height=0.26\columnwidth]{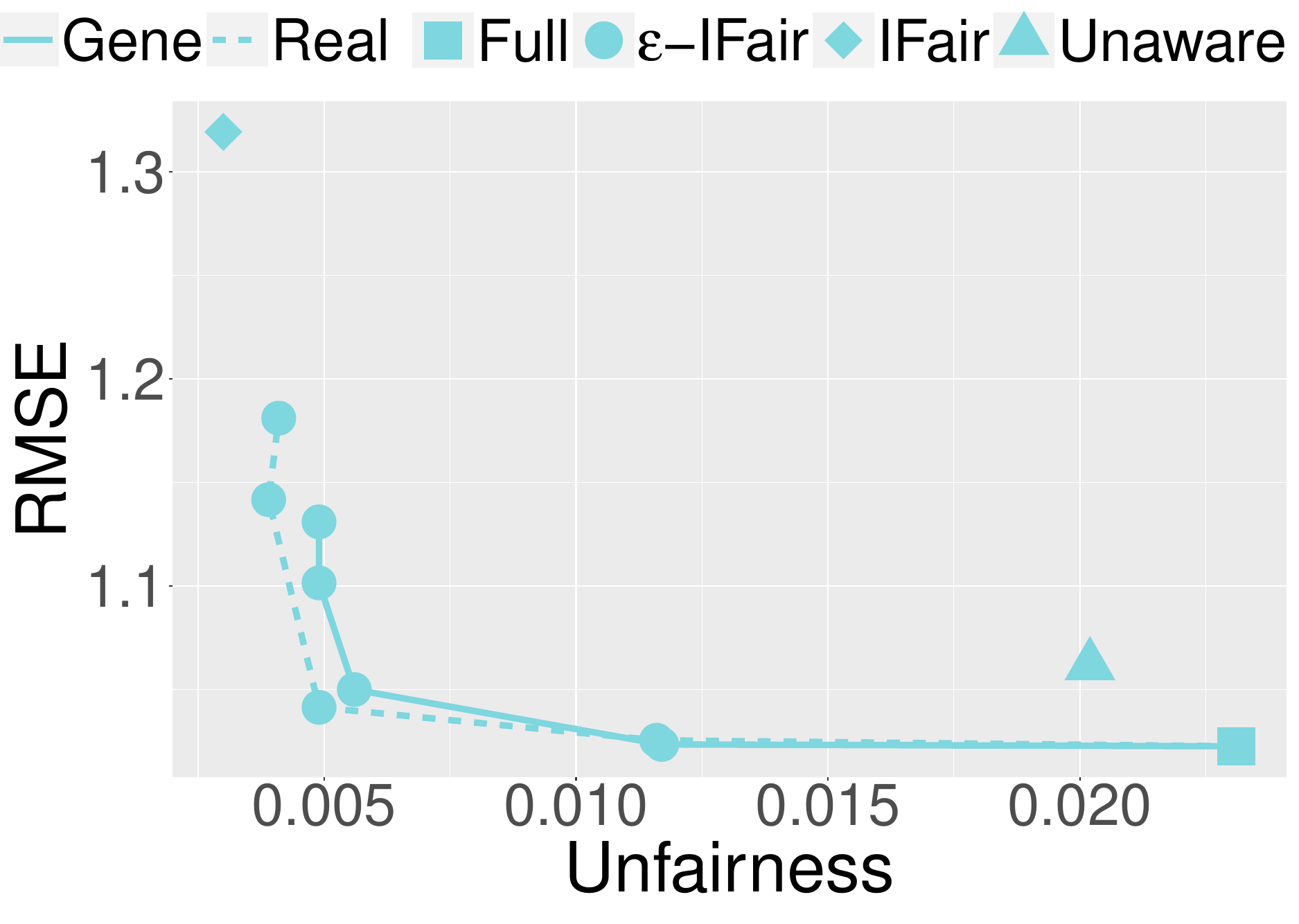}
}
\hspace{0.02\columnwidth}
\subfloat[30nodes60edges.]{
\label{fig: Tradeoff_30nodes60edges_real}
\includegraphics[width=0.42\columnwidth,height=0.26\columnwidth]{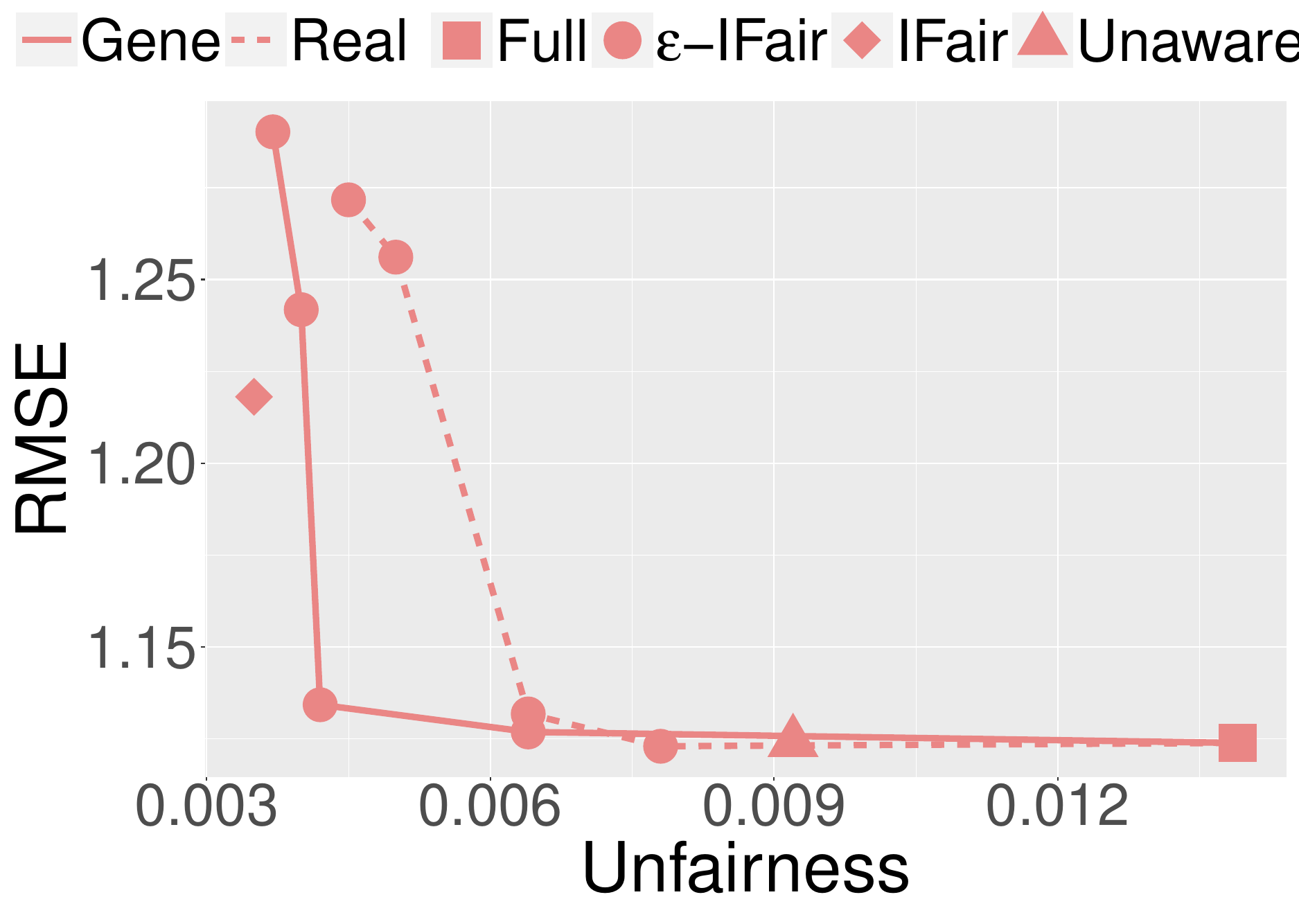}
}
\caption[]{Accuracy fairness trade-off plots analyzing model robustness on the approximation of conditional densities for the linear synthetic data. The solid line (`Gene') depicts the results when the interventional data is generated by the fitted conditional densities, while the dotted line (`Real') depicts the results when the interventional data is generated by the ground-truth structural equations.}
\label{fig: Tradeoff_linear}
\vskip 0.1in
\end{figure}

\subsection{Experimental analysis on model performance on unidentifiable cases}
\label{Appendix: Experimental analysis of model performance on unidentifiable cases}

Under the graph setting `10nodes20edges', utilizing the same data generation process as in \cref{sec: Experiment/Synthetic Data}, we conduct experiments under a scenario where no explicit background knowledge is added for fairness identification. In cases where the MPDAG is unidentifiable, we apply the methodology proposed in \cref{sec: identification}. We select four exemplar unidentifiable MPDAGs, encompassing 2, 4, 2 and 9 valid MPDAGs separately. \cref{fig: Tradeoff_unidentifiable} shows the prediction and fairness results for each of them, which is denoted as 'Unidentifiable'. Additionally, we compare these results with ones trained on the ground-truth MPDAGs, wherein the causal effect from $A$ to $\hat{Y}$ is identifiable, which is denoted by 'Identifiable'. As depicted in \cref{fig: Tradeoff_unidentifiable}, we can see that the trade-off between accuracy and fairness persists in these unidentifiable cases. Interestingly, compared with the prediction learnt solely on the ground-truth identifiable MPDAG, the prediction learnt on all possible MPDAGs does not necessarily deteriorate when achieving equivalent levels of unfairness. This could be attributed to the fact that they are penalizing different measures on fairness.




\begin{figure}[ht]
\centering
\subfloat[Example 1.]{
\label{fig: unidentifiable_1}
\includegraphics[width=0.23\columnwidth,height=0.15\columnwidth]{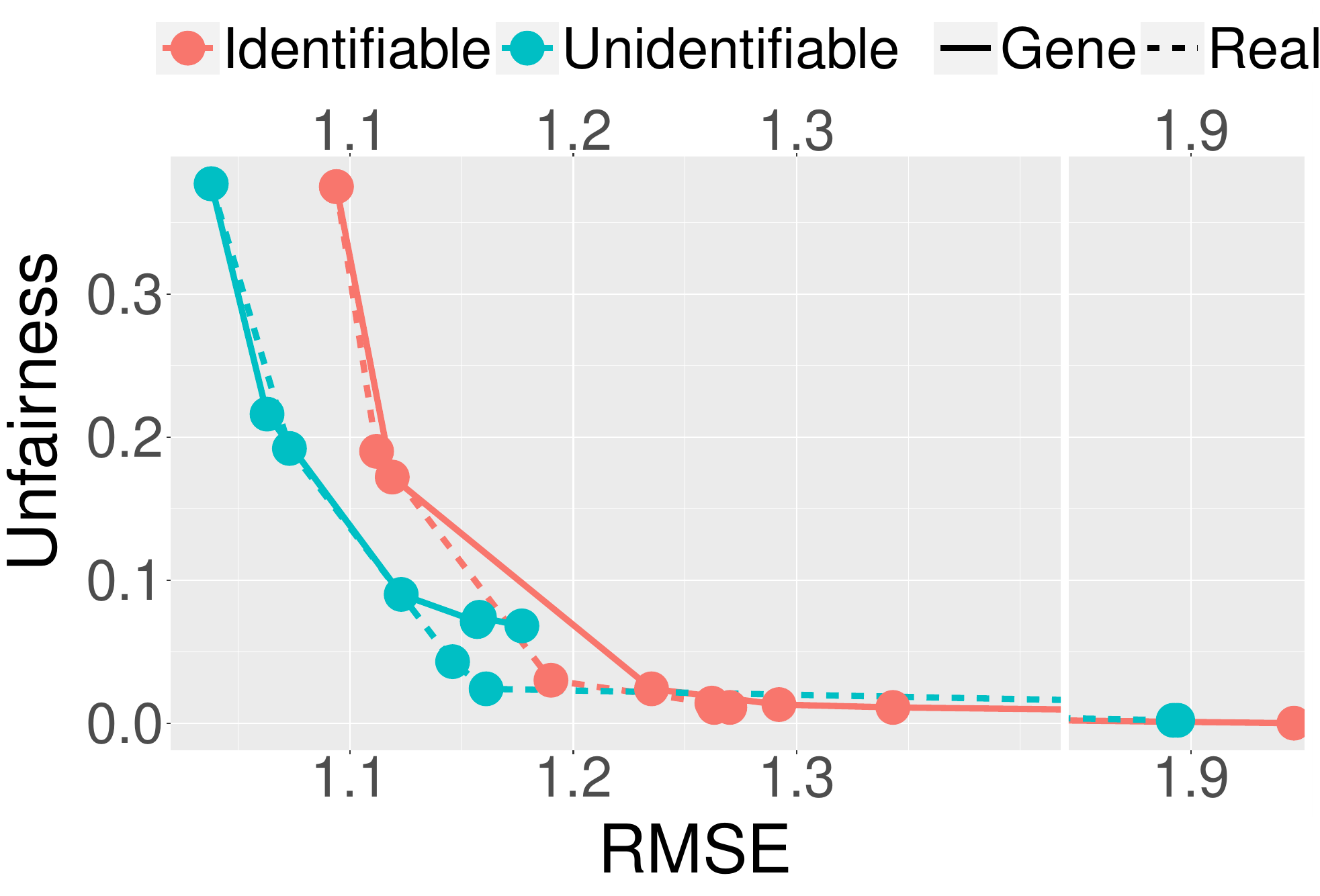}
}
\subfloat[Example 2.]{
\label{fig: unidentifiable_2}
\includegraphics[width=0.23\columnwidth,height=0.15\columnwidth]{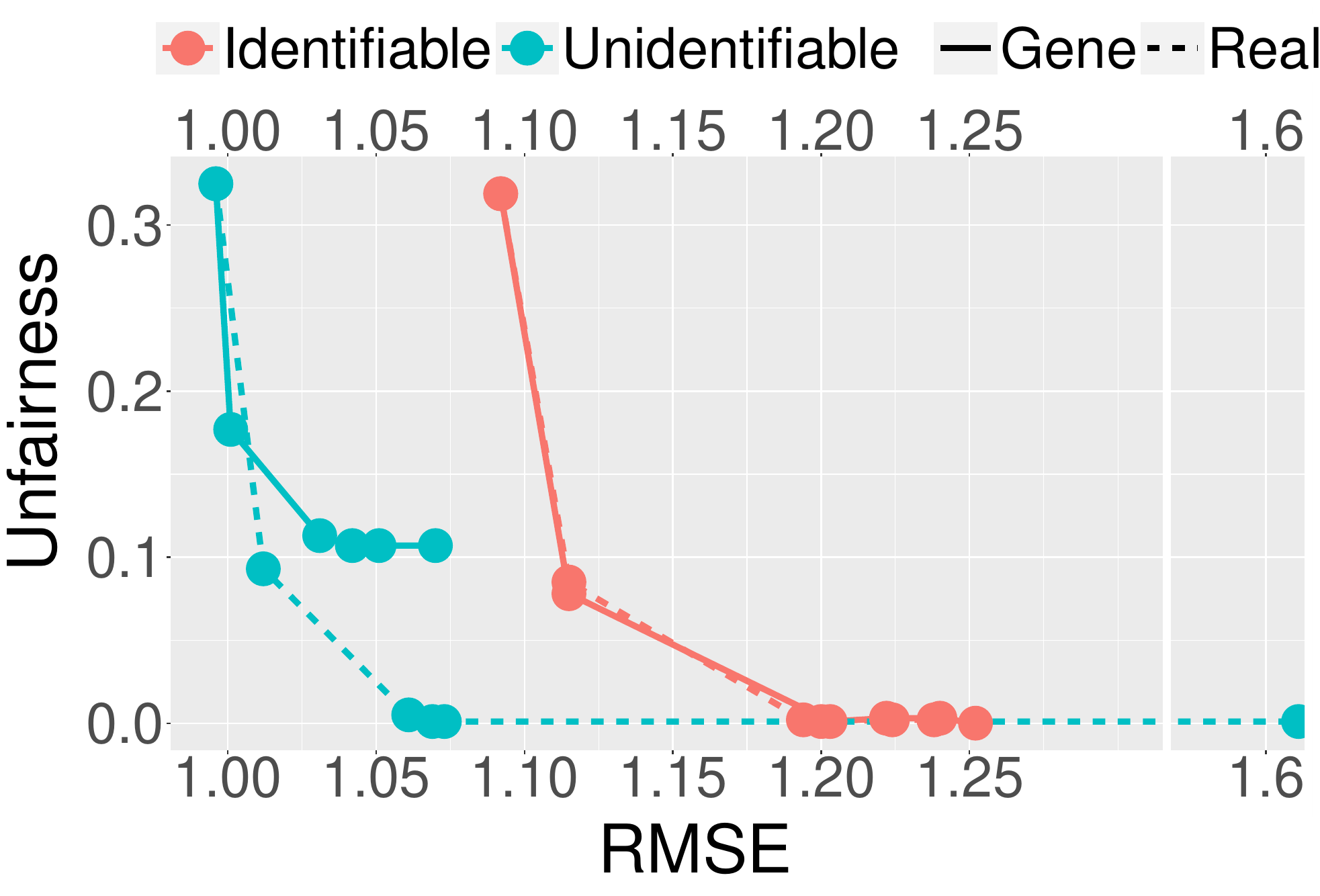}
}
\subfloat[Example 3.]{
\label{fig: unidentifiable_3}
\includegraphics[width=0.23\columnwidth,height=0.15\columnwidth]{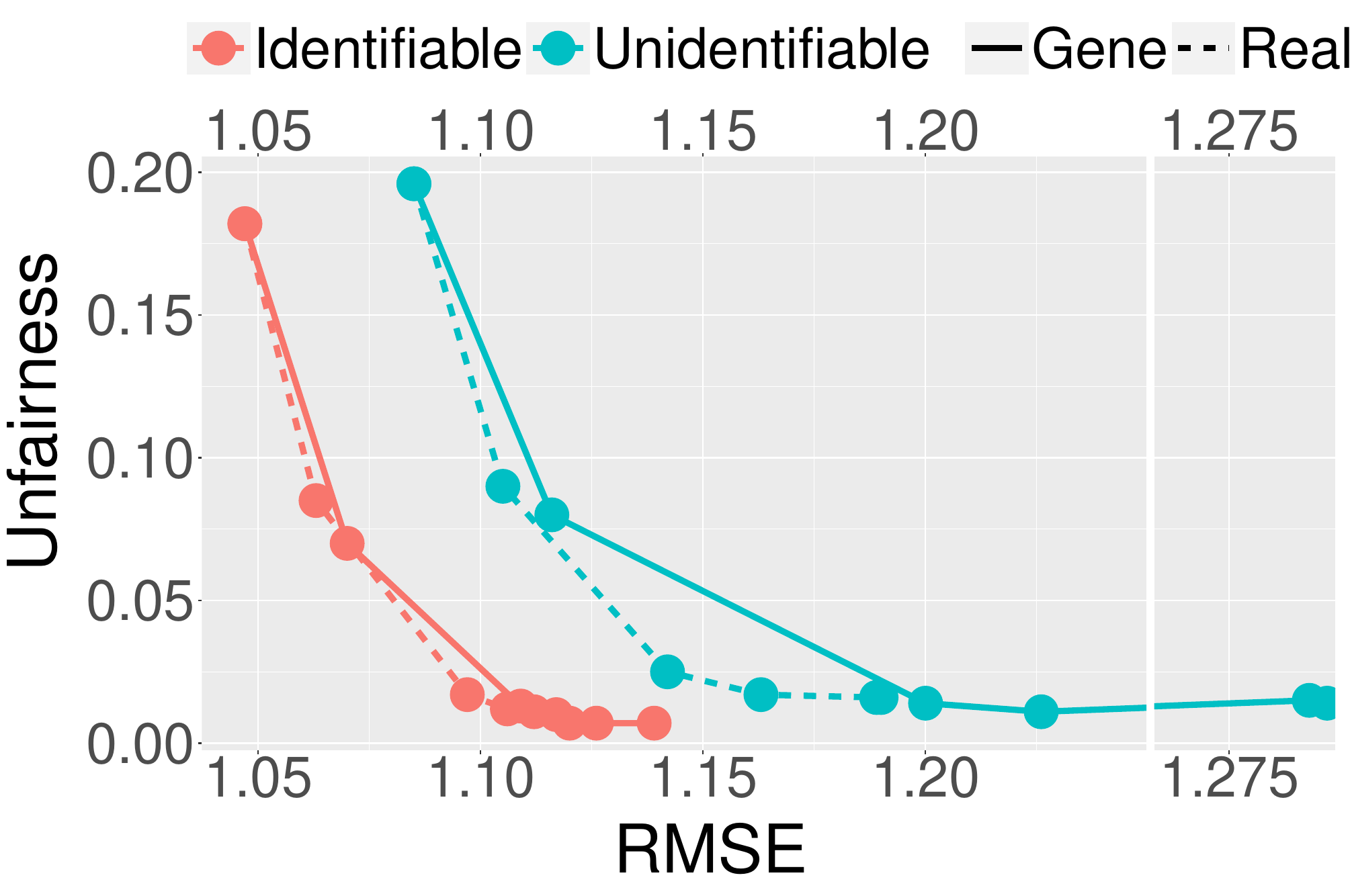}
}
\subfloat[Example 4.]{
\label{fig: unidentifiable_4}
\includegraphics[width=0.23\columnwidth,height=0.15\columnwidth]{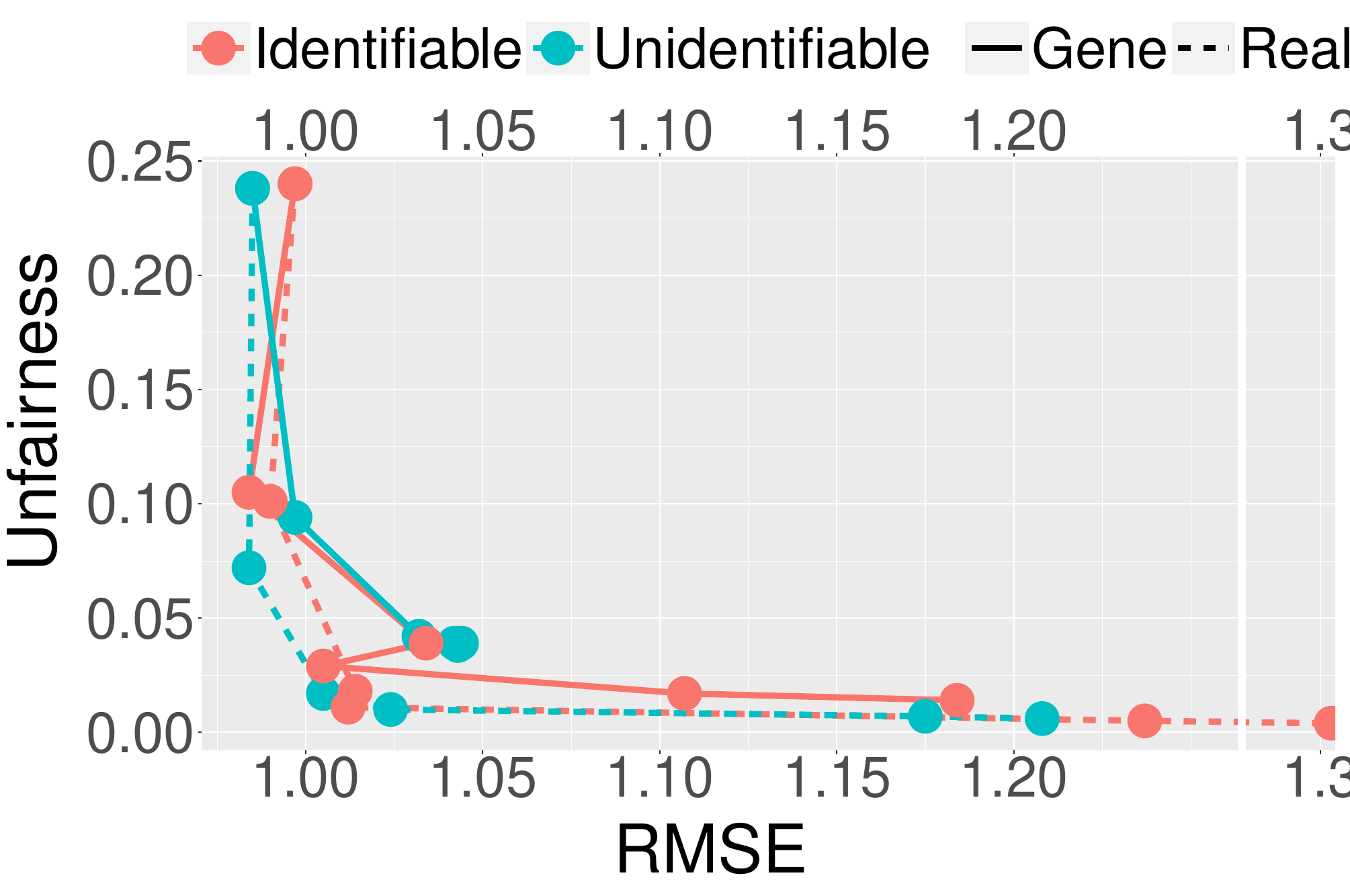}
}
\caption[]{Accuracy fairness trade-off plots analyzing model performance on four exemplar unidentifiable cases, which is denoted as 'Unidentifiable'. 'Identifiable' denotes the results learnt solely from the ground-truth MPDAG with identifiable results. The solid line (`Gene') depicts the results when the interventional data is generated by the fitted conditional densities, while the dotted line (`Real') depicts the results when the interventional data is generated by the ground-truth structural equations.}
\label{fig: Tradeoff_unidentifiable}
\end{figure}
\section{Additional related work on causal effect identification on MPDAGs} \label{Appendix: Additional related work on causal effect identification on MPDAGs}

Identifying causal effects from a causal graph that represents observational data under the assumption of causal sufficiency is a fundamental problem. When the causal DAG is known, it is possible to identify and estimate all causal effects from the observational data (see e.g. \cite{pearl1995causal, pearl1995probabilistic, robins1986new, galles1995testing}. In this study, we focus on identifying causal effects in the context of an MPDAG $\mathcal{G}$. We consider a causal effect to be identifiable in an MPDAG $\mathcal{G}$ if the interventional density of the response can be uniquely computed from $\mathcal{G}$. The precise definition of identifiability of causal effects is provided in \cref{def: Identifiability of Causal Effects}.

\begin{definition}[Identifiability of Causal Effects]\cite[Definition 3.1]{perkovic2020identifying} \label{def: Identifiability of Causal Effects}
    In an MPDAG $\mathcal{G}=(\mathbf{V}, \mathbf{E})$, let $\mathbf{X}$ and $\mathbf{Y}$ be disjoint node sets. The causal effect of $\mathbf{X}$ on $\mathbf{Y}$ is identifiable in $\mathcal{G}$ if $f(\mathbf{y}|do(\mathbf{x}))$ is uniquely computable from any observational density consistent with $\mathcal{G}$.

    Hence, there exist no two DAGs $\mathcal{D}1$ and $\mathcal{D}2$ in $[\mathcal{G}]$ such that 
    \begin{enumerate}
        \item $f_1(\mathbf{v})=f_2(\mathbf{v})=f(\mathbf{v})$, where $f$ is an observational density consistent with $\mathcal{G}$, and
        \item $f_1(\mathbf{y}|do(\mathbf{x})) \ne f_2(\mathbf{y}|do(\mathbf{x}))$, where $f_1(\cdot|do(\mathbf{x}))$ and $f_2(\cdot|do(\mathbf{x}))$ are interventional densityies consistent with $\mathcal{D}1$ and $\mathcal{D}2$ respectively.
    \end{enumerate}
\end{definition}

In recent years, there has been extensive research focused on the identification of causal effects in MPDAGs. One notable contribution in this field is the generalized adjustment criterion proposed by \cite{perkovic2015complete, perkovic2017interpreting, perkovi2018complete}, which provides a sufficient condition for identifying causal effects. However, it is important to note that this criterion is not necessary for causal effect identification. To address this limitation, \citet{perkovic2020identifying} introduces a graphical criterion that is both necessary and sufficient for the identification of causal effects in MPDAGs. In addition to the above developments, IDA algorithms and joint-IDA \cite{maathuis2009estimating, nandy2017estimating, perkovic2017interpreting, witte2020efficient, fang2020ida, liu2020collapsible} have been proposed for identifying the total effect of a variable $A$ on the response $Y$ in an MPDAG $\mathcal{G}$. These algorithms consider the orientation configurations of edges connected to variable $A$ and enumerate a set of MPDAGs where the total effect is identified. Specifically, \citet{guo2021minimal} provide a characterization of the minimal additional edge orientations required to identify a given total effect.
\section{Discussion on accuracy-fairness trade-off} \label{Appendix: Discussion on Accuracy-fairness trade-off}

The trade-off between accuracy and fairness has been highlighted in numerous studies on algorithmic fairness \citep{martinez2019fairness, zhao2019inherent, menon2018cost, zliobaite2015relation, chen2018my, wei2022}. These studies highlight that improving fairness measures often comes at the cost of reduced accuracy in machine learning models. However, it is important to note that the existence of an accuracy-fairness trade-off is not universal and depends on the specific data setting. Several recent works have challenged the notion that accuracy must always decrease as fairness increases \citep{dutta2020there,  friedler2021possibility, Yeom2018DiscriminativeBN}. For instance, in the synthetic dataset discussed in \cref{sec: Experiment/Synthetic Data}, we observed an accuracy fairness trade-off, where improving fairness measures resulted in a decrease in accuracy. However, such a trade-off was not apparent in \cref{fig: Tradeoff_30nodes60edges_3adm}. The authors in \citep{wick2019unlocking, inproceedings, dutta2020there} provide theoretical and empirical insights into when the trade-off exists and when it does not. These findings suggest that the trade-off relationship between accuracy and fairness is context-dependent. It highlights the need for further research to better understand the conditions under which the accuracy fairness trade-off arises and identify strategies to mitigate or overcome it.

\end{document}